%% file: main.tex
\documentclass{article}

\PassOptionsToPackage{numbers, compress, sort}{natbib}

\usepackage[preprint]{neurips_2026}

\usepackage[utf8]{inputenc} 
\usepackage[T1]{fontenc}    
\usepackage{hyperref}       
\usepackage{url}            
\usepackage{booktabs}       
\usepackage{amsfonts}       
\usepackage{nicefrac}       
\usepackage{microtype}      
\usepackage{xcolor}         

\usepackage{graphicx}
\usepackage{booktabs}
\usepackage{ulem}

\usepackage{amsmath}
\usepackage{amssymb}
\usepackage{mathtools}
\usepackage{amsthm}

\usepackage{subcaption}
\usepackage{placeins}
\usepackage[most]{tcolorbox}

\usepackage{dsfont}
\usepackage{thmtools}
\usepackage{thm-restate}

\usepackage{algorithm}
\usepackage{algorithmic}
\usepackage{wrapfig}

\usepackage{xstring}

\newtheorem{proposition}{Proposition}

\newtheorem{corollary}{Corollary}

\declaretheorem[name=Definition]{definition}

\newcommand{\defer}{\mathrm{defer}}

\usepackage{xparse}
\usepackage{letltxmacro}
\usepackage{etoolbox}

\newif\ifappendixtocactive
\appendixtocactivefalse

\newwrite\apptocfile

\makeatletter

\newcommand{\apptocline}[3]{%
  \ifstrequal{#1}{section}{%
    \par\medskip\noindent\textbf{#2}\dotfill #3\par
  }{%
  \ifstrequal{#1}{subsection}{%
    \par\noindent\hspace{1.5em}#2\dotfill #3\par
  }{%
    \par\noindent\hspace{3em}{\small #2\dotfill #3}\par
  }}%
}

\newcommand{\printappendixcontents}{%
  \section*{Appendix Contents}
  \begingroup
  \normalsize
  \InputIfFileExists{\jobname.apc}{}{}%
  \endgroup
  \immediate\openout\apptocfile=\jobname.apc\relax
}

\AtEndDocument{%
  \immediate\closeout\apptocfile
}

\newcommand{\addtoappendixcontents}[2]{%
  \ifappendixtocactive
    \begingroup
    \protected@write\apptocfile{}%
      {\string\apptocline{#1}{#2}{\thepage}}%
    \endgroup
  \fi
}

\LetLtxMacro{\oldsection}{\section}
\RenewDocumentCommand{\section}{s o m}{%
  \IfBooleanTF{#1}
    {%
      \oldsection*{#3}%
      \addtoappendixcontents{section}{#3}%
    }
    {%
      \IfNoValueTF{#2}
        {\oldsection{#3}\addtoappendixcontents{section}{\thesection\quad #3}}
        {\oldsection[#2]{#3}\addtoappendixcontents{section}{\thesection\quad #2}}%
    }%
}

\LetLtxMacro{\oldsubsection}{\subsection}
\RenewDocumentCommand{\subsection}{s o m}{%
  \IfBooleanTF{#1}
    {%
      \oldsubsection*{#3}%
      \addtoappendixcontents{subsection}{#3}%
    }
    {%
      \IfNoValueTF{#2}
        {\oldsubsection{#3}\addtoappendixcontents{subsection}{\thesubsection\quad #3}}
        {\oldsubsection[#2]{#3}\addtoappendixcontents{subsection}{\thesubsection\quad #2}}%
    }%
}

\let\oldthmhead\thmhead \renewcommand{\thmhead}[3]{%
\ifappendixtocactive \begingroup \def\thmname##1{##1}%
\def\thmnumber##1{##1}%
\def\thmnote##1{##1}%
\protected@edef\apptocentry{#1~#2: #3}%
\addtoappendixcontents{item}{\apptocentry}%
\endgroup 
\fi 
\oldthmhead{#1}{#2}{#3}%
}

\def\apptoc@figure{figure}
\def\apptoc@table{table}

\newcommand{\apptocshorten}[1]{%
  \StrLeft{#1}{30}[\apptocshort]%
  \StrLen{#1}[\apptoclen]%
  \ifnum\apptoclen>30
    \apptocshort\ldots
  \else
    \apptocshort
  \fi
}

\newcommand{\apptocaddcaption}[1]{%
  \ifappendixtocactive
    \ifx\@captype\apptoc@figure
      \addtoappendixcontents{item}{Figure~\thefigure:\quad\apptocshorten{#1}}%
    \else
      \ifx\@captype\apptoc@table
        \addtoappendixcontents{item}{Table~\thetable:\quad\apptocshorten{#1}}%
      \fi
    \fi
  \fi
}

\LetLtxMacro{\oldcaption}{\caption}

\RenewDocumentCommand{\caption}{o m}{%
  \IfNoValueTF{#1}
    {\oldcaption{#2}\apptocaddcaption{#2}}%
    {\oldcaption[#1]{#2}\apptocaddcaption{#1}}%
}

\makeatother

\title{Multistage Defer Trees for Hybrid Interpretability: If at First You Can't Succeed, Tree Again}

\author{%
  Zakk Heile\thanks{Equal contribution. Corresponding authors: \texttt{zakk.heile@duke.edu} and \texttt{hayden.mctavish@duke.edu}.} \\
  Department of Computer Science\\
  Duke University\\
  Durham, USA \\
  \texttt{zakk.heile@duke.edu} \\
  \And
  Hayden McTavish\footnotemark[1] \\
  Department of Computer Science\\
  Duke University\\
  Durham, USA \\
  \texttt{hayden.mctavish@duke.edu} \\
  \And
  Margo Seltzer \\
  Department of Computer Science\\
  University of British Columbia\\
  Vancouver, Canada \\
  \texttt{mseltzer@cs.ubc.ca} \\
  \And
  Cynthia Rudin \\
  Department of Computer Science\\
  Duke University\\
  Durham, USA \\
  \texttt{cynthia@cs.duke.edu} \\
}

\begin{document}

\maketitle

\begin{abstract}
\input{sections/abstract}

\end{abstract}

\section{Introduction}

\input{sections/introduction}

\section{Related Work}

\input{sections/related-work}

\section{Methodology}

\input{sections/methods-mdt}

\label{sec:compression}

\input{sections/methods-compression}

\section{Experiments}\label{sec:experiments}

\input{sections/experiments}

\section{Conclusion}

\input{sections/conclusion}

\bibliographystyle{plainnat}
\bibliography{references}

\newpage

\appendix

\printappendixcontents 

\appendixtocactivetrue 
\newpage

\section{Proofs}\label{appendix:theory}
\input{sections/appendix/proofs}
\FloatBarrier

\newpage
\section{Compression Details} \label{appendix:compression}
\input{sections/appendix/extra-compression-details}
\FloatBarrier

\newpage
\section{Distance to Deferred Region}
\label{appendix:geometry}
\input{sections/appendix/geometry}
\FloatBarrier

\newpage
\section{Full Algorithm}
\label{appendix:fullalgorithm}

\input{sections/appendix/algorithm}
\FloatBarrier

\newpage
\section{Training Objectives}\label{appendix:objectives}
\input{sections/appendix/objectives}
\FloatBarrier

\newpage
\section{Experiment Setup}\label{appendix:experiment-setup}
\input{sections/appendix/experimentsetup}
\FloatBarrier

\newpage
\section{Experimental Results}\label{appendix:more}
\input{sections/appendix/misc}
\FloatBarrier
\newpage
\newpage
\section{Examples of MDTs}
\input{notes/old-methods}
\FloatBarrier

\newpage


\newpage

\end{document}

%% file: sections/abstract.tex
Recent work has shown that well-optimized individual decision trees can match complex black box models in some settings, primarily in noisy domains. For the remaining settings, however, complex ensembled compositions of trees often achieve higher accuracy at the cost of interpretability, leaving practitioners with difficult modeling decisions along an accuracy-interpretability tradeoff. Ideally, we would like to classify as much of the data as possible with one or a small number of trees, achieving interpretability for most samples while maintaining state-of-the-art accuracy. We introduce Multistage Defer Trees: a sequence of sparse decision trees that each make predictions for most samples, while deferring a small proportion to the next tree in the sequence or, ultimately, to a black box. We demonstrate that we can train this model class to match the performance of complex tree-based ensembles while routing most samples through only one or a small number of sparse decision trees. We discuss a range of techniques for training these models while maintaining simplicity. Our method expands the accuracy--interpretability frontier in settings where single-tree methods remain insufficient, demonstrating that even when complex models are necessary, they need not be fully opaque.

%% file: sections/introduction.tex
A central challenge in interpretable machine learning is to construct interpretable models that match state-of-the-art predictive performance. Well-optimized simple models, such as optimal sparse trees, can achieve this goal in certain settings, such as when the data 
is generated by noisy processes \citep{semenova2022existence, semenova2023path, boner2024using}. In such cases, the accuracy-interpretability gap is not observed. Still, there remains such a gap in datasets falling outside of this regime.

One proposed direction to navigate the accuracy-interpretability gap is to use a hybrid- or partially-interpretable model \citep{wang2021hybrid, wang2019gaining, pan2020interpretable, ferry2023learning, frost2024partially, Li2026}. The goal is to learn a simple interpretable model that, for each sample, either makes a prediction for that sample or defers to a black box's prediction. By specifying the proportion of samples classified by a black box (the deferral rate), users can navigate a tradeoff between matching black box accuracy and keeping decisions interpretable for more samples. While hybrid-interpretable models provide a smooth way to navigate compromises between accuracy and interpretability, existing methods often struggle to maintain black box accuracy without deferring the vast majority of samples.

\begin{figure}
    \centering
    \includegraphics[width=1.0\linewidth]{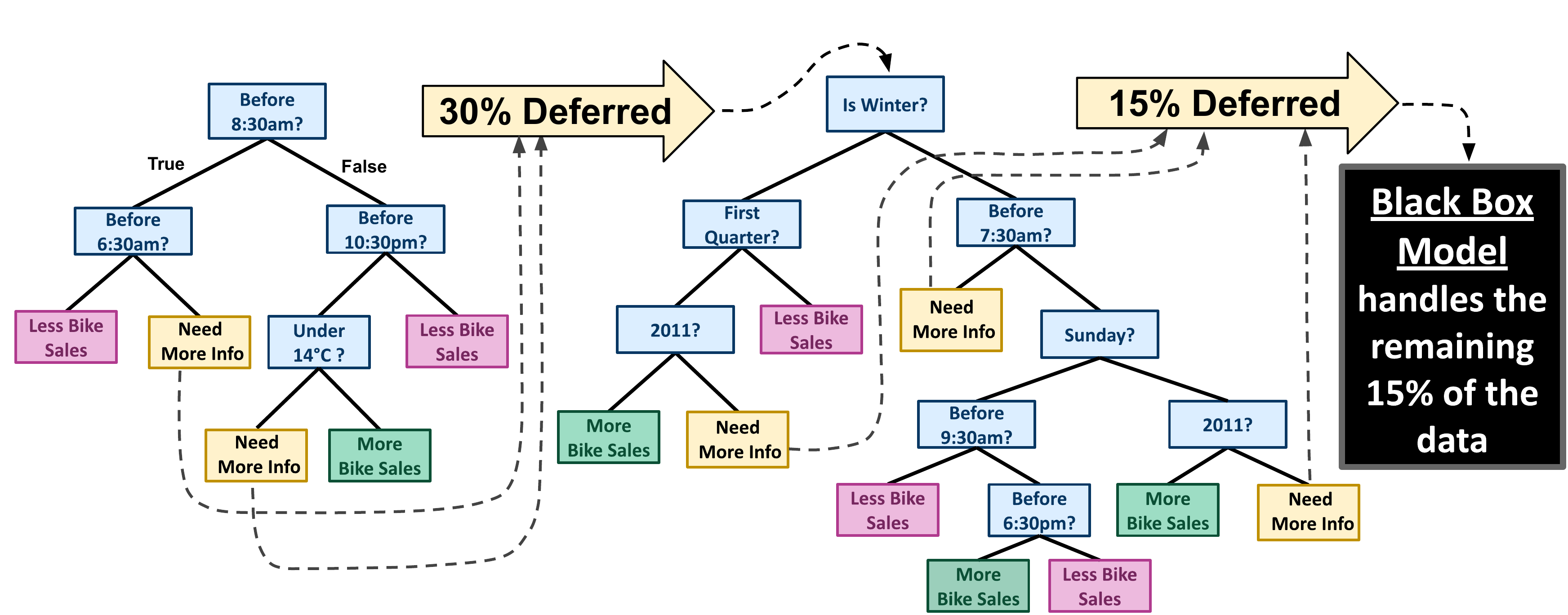}
    \caption{Example of an MDT on the Bike dataset. The model consists of two stages, each with bounded depth. Two leaves in the first stage defer to the second stage, and three leaves in the second stage defer to a black box (here, an XGBoost model). The interpretable component has 13 leaves.
    }
    \label{fig:bike_mdt_example}
\end{figure}

We present a tree-based hybrid-interpretable model that minimizes deferral rate while matching the performance of more complex ensembles. To learn this model, we iteratively focus on subsets of the data where simple models provably cannot perform as well as a black box model. This leads to a new model class, \emph{Multistage Defer Trees (MDTs)}, where each tree can either make a prediction or defer to the next stage; the final stage may then defer some samples to a black box model if required. 

To train MDTs, we introduce an alternating optimization procedure in which the deferred-to model evolves with each stage of the MDT. As this process unfolds, the set of deferred points decreases monotonically, and the later stages and fallback model become increasingly specialized to those points. This allows the model to defer only on a small fraction of inputs while maintaining high accuracy. When the fallback is a sparse model (i.e., a single decision tree or EBM \cite{Lou2013AccurateIM}), even these deferral cases remain interpretable; when the fallback is a black box, performance can be matched with minimal reliance on that model. Crucially, because deferred points are confined to a tightly constrained region of the feature space, the fallback model can be compressed, producing simpler but equivalent representations of even complex ensembles.

Our contributions are as follows.
\begin{itemize}
    \item We introduce Multistage Defer Trees (MDTs), a new model class that allocates complexity adaptively, based on the difficulty of different regions of the feature space.
    \item We develop an iterative algorithm for training MDTs that successively narrows the deferral region, while borrowing strength from surrounding regions, to learn specialized trees that generalize well.
    \item We provide algorithms to compress MDTs into sparse single-tree or rule-list representations and extend this framework to compress the fallback model when it is a tree or tree ensemble.
    \item We demonstrate improved accuracy–deferral–sparsity trade-offs where such trade-offs exist.
\end{itemize}

%% file: sections/related-work.tex
\paragraph{Hybrid Approaches}
\citet{wang2019gaining} introduces hybrid models for interpretability: an interpretable rule set or linear model makes predictions on some of the data, while deferring to fixed black box predictions on the remaining data. This allows users to navigate a tradeoff between the model accuracy and the amount of samples using interpretable model predictions. This framework, in turn, connects to work in deferral models \cite{madras2018predict, mozannar2020consistent}, rejection learning \cite{chow2003optimum, SANTOSPEREIRA2005943}, selective classification \cite{el2010foundations}, and abstention-based learning \cite{schapire, freund1997using}. Hybrid models have primarily been applied to simple rule set models or linear models \cite{frost2024partially, ferry2023learning, wang2019gaining, pan2020interpretable, wang2021hybrid}. One recent work explores a gradient-based method to learn a complex (100-1000 leaves) decision tree, which defers to a black box based on computing the proximity to the decision boundary \cite{Li2026}; by contrast, our work learns a sparser sequence of trees that directly defer where the black box is most helpful, and iteratively reduces this deferral region.

\paragraph{Decision Tree Optimization}
Decision trees are classically optimized with highly scalable greedy algorithms \citep{breiman1984classification, quinlan1986induction}. Such algorithms can be ideal for popular ensembles of many trees \citep{breiman2001random, chen2016xgboost}. 
More recent work, however, has revisited the use of less scalable, higher quality individual tree algorithms, to reduce the need for this level of model complexity and to allow use of a single well-performing sparse tree classifier for domains where interpretability is needed. In recent years, global optimization of individual trees has become reasonably tractable for bounded depth and sparsity \citep{dl85, osdt, murtree, gosdt, quantbnb, brița2025optimal, bertsimas2017optimal, verwer2019learning}. 
This problem is nevertheless NP-hard and does struggle to scale to cases with many features. Recent work sacrifices global optimality in exchange for improved scalability and runtime while maintaining performance close to optimal \citep{babbar2025near, topk, kiossou2024efficient}. Other approaches add anytime behavior, allowing early termination with high-quality solutions \citep{kiossou2022time, demirovic2023blossom, kiossou2025generic}.

We fit a small sequence of trees that nearly matches the performance of a significantly more complex ensemble. Because MDTs rely on only a few sparse trees, we can afford to optimize each component tree more carefully than in standard greedy tree induction while still requiring the overall procedure to scale. Given this need for both higher-quality component trees and scalability, we use a version of the polynomial-time LicketySPLIT algorithm from \citet{babbar2025near}, which we modify to support sample weights and deferral; our theory provides guarantees achievable when each component tree is exactly optimal with respect to misclassification error, deferral rate, and tree complexity. Many near-optimal decision tree algorithms, including LicketySPLIT, operate on binarized features, which are often constructed using ThresholdGuessing \citep{mctavish2022fast}; we use the same binarization method.

\paragraph{Compressing Tree Ensembles} \citet{vidal2020born} present an algorithm to find the sparsest single decision tree with identical 0-1 predictive behaviour to a given tree-based ensemble (though not necessarily identical predicted probabilities). They additionally provide practical relaxations to allow for faster computation (removing the certificate that the tree is the sparsest equivalent form) or sparser representations (providing extra pruning in areas with no support in the training data). \citet{sagi2021approximating} translate an XGBoost tree ensemble \citep{chen2016xgboost} to a single tree, with some approximation loss.
\citet{mctavish2022fast} take a slightly different approach to learn a single tree when provided with a tree-based ensemble, using the ensemble's predictions to provide heuristic pruning of the search space for an otherwise globally optimal tree. All three methods afford more interpretable single tree alternatives when provided with an ensemble model, though to guarantee matching performance with that ensemble, these approaches often find trees with many more leaves than is ideal for interpretability.

In the hybrid component of our own algorithm, we incorporate information from a black box model in a fundamentally different way than the above approaches. Our goal is not to match the black box model's predictions exactly, nor to accelerate an optimal tree search; instead our goal is to use the black box to resolve subspaces of the data for which interpretable algorithms cannot find an accurate simple model; this allows us to find models that are transparent on much of the data, without sacrificing meaningful accuracy relative to the black box. While we discuss several useful properties when the black box is a tree-based ensemble, our approach inherits the more general-purpose properties of hybrid models and can work with any black box model, since all we need for training is to retrain the model and obtain its prediction vector on the dataset.

Once we have identified the set of points for which we defer to a black box, if our black box is a tree ensemble it is possible to apply any of these compression methods, focusing on the subset of data for which our model defers (or even to apply approximate compression methods, such as the method proposed by \citet{devos2025compressing}). This can further improve the interpretability of the tree, while simplifying the scope of the task to only those subspaces for which we know we need to use compression rather than learning a simple model.

%% file: sections/methods-mdt.tex
\paragraph{Notation.} 
Let $\mathcal X \subseteq \mathbb{R}^p$ denote an input space with $p$ features and let $\mathcal Y=\{0,1\}$ denote the label space. 
Training data is denoted as $\mathcal D = \{(x_i,y_i)\}_{i=1}^N \subseteq \mathcal X \times \mathcal Y$. When this data is weighted, we denote the dataset as $\mathcal D^w$ for weight vector $w \in \mathbb R_{\ge 0}^N$. We write $\mathds{1}\{\cdot\}$ for the indicator function. For a binary decision tree $T$, let $|T|$ denote its number of leaves, and let $T(x)$ denote the label predicted by $T$ for input $x$. 

A standard binary decision tree recursively partitions $\mathcal X$ by internal boolean split nodes and assigns a prediction at each leaf. 
We generalize this object by allowing a leaf to output either a class label or a special \textit{defer} action, indicating that prediction should be deferred to a subsequent model.

\begin{definition}[Defer tree]
\label{def:defer_tree}
A \emph{defer tree} $T$ is a binary tree whose leaves are labeled by elements of $\mathcal Y \cup \{\defer\}$. For some fallback model B, the prediction of a defer tree is: 
\[
\hat y_{T,B}(x)
=
\begin{cases}
T(x), & \text{if } T(x) \neq \defer ,\\[4pt]
B(x), & \text{otherwise}.
\end{cases}
\]
\end{definition}

For a fixed fallback model $B$, we can train a single defer tree $T$ by optimizing the objective $\mathcal L_{T, B}$ defined by
\[
\mathcal L_{T, B}(\mathcal D^w,  \tau, \eta)
=
\tau\,(|T|-1)
+
\sum_{i=1}^N w_i\,(\mathds{1}\{\hat{y}_{T,B}(x_i) \neq y_i\} + \eta \mathds{1}\{{{T}(x_i) = \defer}\}),
\]
where the hyperparameter $\tau$ penalizes the number of splits in the tree, $\eta$ penalizes deferrals to a black box model $B$, and $w$ denotes the sample weights for dataset $\mathcal D$. We optimize this objective using a modified version of LicketySPLIT \cite{babbar2025near}; details are given in Appendix \ref{appendix:licketysplitsection}.

A single defer tree can improve accuracy over a fully interpretable model by deferring difficult samples to a black box. However, those deferred samples are exactly where the original problem remains unresolved. On that subset, we are still relying on a black box, and we would like to recover interpretability there as well. It may seem natural to simply grow the tree further on that region, but if the original defer tree is already optimal, this is no longer possible. An optimal defer tree comes with a certificate that rules out further improvement: any attempt to replace a deferral with an interpretable prediction must increase the objective. In particular, matching the black box’s accuracy on the deferred subproblems would require a complexity increase whose leaf penalty outweighs the savings from eliminating deferral (formalized in Proposition \ref{thm:defer-leaf-tradeoff} in the appendix). 

To overcome this limitation, we move beyond a single tree and introduce a sequential model that allows further improvement on the deferred region of a single defer tree. 

\begin{definition}[Multistage defer tree]
\label{def:mdt}
A \emph{multistage defer tree} (MDT) of length $K$ is an ordered sequence
\[
T_{1:k} = (T_1,\dots,T_K),
\]
where each $T_i:\mathcal X\to \mathcal Y\cup\{\defer\}$ for $i=1,\dots,K$ is a defer tree.
For some fallback model $B:\mathcal X \to \mathcal Y$, the prediction of the MDT is
\[
\hat y_{T_{1:k};B}(x)
=
\begin{cases}
T_{1}(x), & \text{if } T_1(x) \neq \defer ,\\[4pt]
\hat{y}_{T_{2:k}, B}(x), & \text{otherwise}.
\end{cases}
\]
\end{definition}

An MDT routes each input through a sequence of defer trees, where each stage either predicts immediately or defers to the next stage. It also affords interpretability benefits relative to other tree-based ensembles. As discussed in \autoref{thm:rule-list} below, our  multistage defer trees can be represented compactly as a rule list. This is significantly smaller than the rule list representations of other ensembles, such as random forests or boosted trees, where even the number of literals in a single antecedent can grow with the sum of depths across all trees (see  \autoref{appendix:complexity_proof} in the appendix).

\begin{restatable}[Rule-list representation of MDTs]{theorem}{rulelistmdt}\label{thm:rule-list}
An MDT with $K$ stages admits a rule-list representation,
in which each non-deferral leaf in any stage contributes exactly one antecedent. Consequently, if stage $j$ has $n_j$ non-deferral leaves, the total number of antecedents is $\sum_{j=1}^K n_j$.
\end{restatable}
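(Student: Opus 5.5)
We build the rule list explicitly and check by induction on the number of stages $K$ that it computes the MDT prediction $\hat y_{T_{1:K};B}$ of Definition~\ref{def:mdt}. The key fact is that in any binary decision tree the leaves partition $\mathcal X$. Each leaf $\ell$ of $T_j$ is reached exactly when its root-to-leaf path condition holds, namely the conjunction $C_\ell(x)$ of the split literals on that path. So the conditions $C_\ell$ for the leaves of one tree are mutually exclusive and exhaustive.

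The construction goes stage by stage. For $j=1,\dots,K$, list the non-deferral leaves of $T_j$ in any order. Each such leaf $\ell$ with label $c_\ell\in\mathcal Y$ contributes one rule ``if $C_\ell(x)$ then predict $c_\ell$''. The rules of stage $j$ are placed after all rules of stages $1,\dots,j-1$. The rule list closes with a default rule ``else predict $B(x)$''. The key design choice is that we never add the deferral conditions of earlier stages to later antecedents. The first-match semantics of a rule list already encodes ``the earlier stages deferred'', so each antecedent is a single root-to-leaf path. The number of antecedents is therefore exactly $\sum_j n_j$, and the default rule is not counted.

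Correctness follows by induction on $K$, and the claim to prove is as follows. For $x$ with $T_1(x)=\cdots=T_{j-1}(x)=\defer$, no antecedent from stages $1,\dots,j-1$ fires. By exclusivity, at most one stage-$j$ antecedent fires. It fires if and only if $T_j(x)\neq\defer$, and then it outputs $T_j(x)$. Exhaustiveness is what gives the first part: under the hypothesis, $x$ lands in a deferral leaf of each earlier tree, so it satisfies no non-deferral antecedent there. Exclusivity within a stage means the order of rules inside a stage does not matter. Consequently the first matching rule is the stage-$j$ leaf of the first stage $j$ with $T_j(x)\neq\defer$, and if no such stage exists the default fires and outputs $B(x)$. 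This is exactly the recursive unrolling of Definition~\ref{def:mdt}, where $\hat y_{T_{1:K};B}(x)=\hat y_{T_{2:K};B}(x)$ whenever $T_1(x)=\defer$, with base case $\hat y_{T_K,B}$ from Definition~\ref{def:defer_tree}.

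The main obstacle is making precise that earlier deferral conditions can be dropped from later antecedents. This is where exhaustiveness of each tree's leaf partition is essential. It guarantees that ``no earlier non-deferral rule fired'' is equivalent to ``every earlier stage deferred''. Everything else is bookkeeping.
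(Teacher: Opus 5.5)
Your proposal is correct and follows essentially the same route as the paper. Both build the list section by section with one rule per non-deferral root-to-leaf path, then use the fact that each tree's leaves partition the input space, together with first-match semantics, to show that the first firing rule is the leaf of the first non-deferring stage, which gives $\sum_{j} n_j$ antecedents. One small correction: the step ``every earlier stage deferred $\Rightarrow$ no earlier rule fires'' relies on mutual exclusivity of the path conditions, not on exhaustiveness as you attribute it; since you establish both properties, the argument is unaffected.
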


Because of Theorem \ref{thm:rule-list}, we can define a simple objective for MDTs that corresponds to an upper bound on the rule list deferral objective in \citet{ferry2023learning} applied to the rule list form of the MDT (we discuss our relation to existing objectives in Appendix \ref{appendix:objectives}).

For a given MDT $(T_1, \ldots, T_k)$ and fallback model $B$, let its objective $\mathcal L_{T_{1:k}, B}$ be: 
\[
\mathcal L_{T_{1:k}, B}(\mathcal D^w, \tau, \eta)
=
\tau \sum_{j=1}^k \bigl(|T_j|-1\bigr)
+
\sum_{i=1}^N w_i\,(\mathds{1}\{\hat{y}_{T_{1:k,B}(x_i) \neq y_i}\} + \eta \mathds{1}\{{{T_{1:k}}(x_i) = \defer}\}),
\]

where ${T}_{1:k}(x)$ denotes the output of the sequence of trees $(T_1,\dots,T_k)$.

\subsection{Training Multistage Defer Trees}

When optimizing our MDT objective $\mathcal L_{T_{1:k}, B}$, we can expand the structure from the top down, training a defer tree at each stage on the remaining deferred region with our single defer tree objective $\mathcal L_{T,B}$. When we do so, Proposition \ref{prop:monotone-topdown} shows that we monotonically decrease the objective defined above.

\begin{restatable}[Top-down improvement]{proposition}{topdownimprovement}
\label{prop:monotone-topdown}
Consider an iterative procedure where we train a single defer tree on the samples that have not been in previous stages. Define weights for stage $j$ that zero out all samples that are not deferred by that stage:
$$w^{(j)}_i := w_i^{(j-1)}\mathds{1}\{T_{j-1}(x_i)=\mathrm{defer}\},
\quad
w^{(1)}_i := w_i.$$

Let $\Delta_j$ be how much we improve on the objective of the deferred dataset $\mathcal D^{w^{(j)}}$ with defer tree $T_j$, relative to deferring to a black box $B$: 

$$\Delta_j = L_{\defer, B}(\mathcal D^{w^{(j)}}, \tau, \eta) - L_{T_j, B}(\mathcal D^{w^{(j)}}, \tau, \eta). $$

With this iterative procedure, the total improvement of the model relative to the black box is the sum of improvements of each single defer tree relative to deferral.
$$\mathcal L_{T_{1:k}, B}(\mathcal D^w, \tau, \eta) = L_{\defer, B}(\mathcal D^{w^{}}, \tau, \eta) - \sum_{j=1}^k \Delta_j.$$
\end{restatable}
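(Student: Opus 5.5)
The identity is a telescoping sum, so the plan is to prove, by induction on $k$, the one-step relation
\[
\mathcal L_{T_{1:k},B}(\mathcal D^w,\tau,\eta) \;=\; \mathcal L_{T_{1:k-1},B}(\mathcal D^w,\tau,\eta) \;-\; \Delta_k ,
\]
with the base case $\mathcal L_{T_{1:0},B} := L_{\defer,B}(\mathcal D^w,\tau,\eta)$. Here $L_{\defer,B}$ is the objective of the trivial one-leaf tree that defers everything. It has $|T|-1=0$, so it carries no complexity penalty, and its value on $\mathcal D^{v}$ is $\sum_i v_i\bigl(\mathds{1}\{B(x_i)\neq y_i\}+\eta\bigr)$. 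Summing the one-step relation over $j=1,\dots,k$ then gives the claim.

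First I will record an elementary fact: since indicators are idempotent, $w^{(j)}_i = w_i\,\mathds{1}\{x_i \text{ reaches stage } j\}$. Equivalently, $w^{(j)}_i=w_i$ if $T_1,\dots,T_{j-1}$ all defer on $x_i$, and $w^{(j)}_i=0$ otherwise. I also use the convention that the MDT "defers" on $x$ exactly when every stage defers on $x$. I will also read the monotonicity claim from the text preceding the statement: if each $T_j$ is trained optimally, then $\Delta_j\ge 0$, because the all-defer tree is feasible. That reading is a remark, not part of the identity.

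For the one-step relation, I split the per-sample term of $\mathcal L_{T_{1:k},B}$ according to where $x_i$ exits. Samples that exit at some stage $j<k$ contribute identically to $\mathcal L_{T_{1:k},B}$ and $\mathcal L_{T_{1:k-1},B}$. Their prediction is $T_j(x_i)$ in both models, and they incur no deferral penalty in either. The remaining samples are exactly those with $w^{(k)}_i=w_i$. In $\mathcal L_{T_{1:k-1},B}$ each of them contributes $w_i\bigl(\mathds{1}\{B(x_i)\neq y_i\}+\eta\bigr)$, and summed these give precisely $L_{\defer,B}(\mathcal D^{w^{(k)}})$. In $\mathcal L_{T_{1:k},B}$ each of them contributes $w_i\bigl(\mathds{1}\{\hat y_{T_k,B}(x_i)\neq y_i\}+\eta\mathds{1}\{T_k(x_i)=\defer\}\bigr)$, because their prediction is $\hat y_{T_k,B}(x_i)$ and they are deferred by the MDT iff $T_k$ defers. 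Summed, these equal the data term of $L_{T_k,B}(\mathcal D^{w^{(k)}})$.

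The complexity terms differ by exactly $\tau(|T_k|-1)$, which is the complexity term of $L_{T_k,B}$. Adding the pieces gives
\[
\mathcal L_{T_{1:k},B} - \mathcal L_{T_{1:k-1},B} \;=\; L_{T_k,B}(\mathcal D^{w^{(k)}}) - L_{\defer,B}(\mathcal D^{w^{(k)}}) \;=\; -\Delta_k .
\]
Zero-weight samples contribute nothing to either side, which justifies evaluating the single-tree objectives on the full weighted dataset.

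The main obstacle is bookkeeping rather than mathematics. I must make sure that the MDT deferral indicator and the prediction $\hat y$ for samples reaching stage $k$ coincide with those of the single defer tree $T_k$ against $B$; this follows directly from the recursive Definition~\ref{def:mdt}. I must also fix the conventions that the all-defer tree has zero split penalty and that $\mathcal L_{T_{1:0},B}=L_{\defer,B}(\mathcal D^w)$. Once those are pinned down, the telescoping is immediate.
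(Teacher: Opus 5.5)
Your proof is correct, but it organizes the bookkeeping differently from the paper. The paper never forms the intermediate objectives $\mathcal L_{T_{1:k-1},B}$. Instead it writes out each $\Delta_j$ explicitly, sums over $j$, interchanges the two sums, and then fixes a sample $i$ with first predicting stage $r_i$. For $j<r_i$ the stage-$j$ bracket vanishes, because the sample still defers and so the single-tree loss coincides with the all-defer loss. The weight $w^{(j)}_i$ kills every term with $j>r_i$. The lone surviving term at $j=r_i$ (none if $r_i=k+1$) equals the all-defer loss minus the sample's MDT loss, and collecting the complexity terms gives the identity in one pass.

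You instead fix the stage and compare consecutive truncations. The same case split then reappears as ``samples that exited before stage $k$ are unaffected; samples reaching stage $k$ trade the all-defer loss for the $T_k$ loss.'' What your route buys is the per-stage statement $\Delta_k=\mathcal L_{T_{1:k-1},B}-\mathcal L_{T_{1:k},B}$. Once $\Delta_k\ge 0$, this is precisely the monotone-decrease claim the main text attributes to this proposition. The paper's version yields it only after subtracting the identity at $k-1$ from the identity at $k$. The paper's route, in turn, needs no convention for $\mathcal L_{T_{1:0},B}$, and it makes explicit that each sample's improvement is credited to exactly one stage.

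Your base case agrees with the paper's observation that the $k=1$ MDT objective is the single defer-tree objective. Your remark that $\Delta_j\ge 0$ follows from the feasibility of the one-leaf defer tree is also correct.
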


While Proposition \ref{prop:monotone-topdown} motivates our approach, there is a risk of a hybrid model deferring to the black box primarily in regions where the black box has overfit. Consider subregions of the space where a black box's performance is especially high due to sample bias. These are also regions where it will be difficult for a simple model to match that performance, so a hybrid model is likely to defer in this case, leading to high test error in deferral cases. 

Even for a single defer tree, the black-box component generalization can be a substantial issue. Table \ref{tab:defer_tree_error_split} in the appendix shows that the test error of a black box in the deferred region can be more than an order of magnitude higher than the test error of the simple model in the non-deferred region. To address this, we also need to leverage the rest of the data to improve generalization.

We can continue to refine an MDT without inducing overfitting by reweighting the training distribution: non-deferred points are downweighted based on their distance to the deferred region, rather than assigning them no weight. At a high level, we will accomplish this via an alternating optimization procedure that iteratively 1) trains both a black box and a defer tree on a dataset and 2) trains a black box and a defer tree on a weighted dataset that places more importance on deferred points. 

Let $\mathcal{D}_{k-1} \subseteq \mathcal{X}$ denote the set of deferred points after stage $k-1$. For any sample $x_i \not \in \mathcal{D}_{k-1}$, let $\textrm{defer\_dist}_{T_{1:k-1}}(x_i)$ denote the $\ell_1$-norm (taken in quantile space, more information in Appendix \ref{appendix:geometry}) of the minimal change to $x_i$ such that it would be deferred by $T_{\textrm{1:k-1}}$. Using this distance, we define training weights $w_i$ for the next iteration (on the first iteration, all weights are 1).
$$w_i \;=\;
\begin{cases}
1, & \text{if } x_i \in \mathcal{D}_{k-1} \\[6pt]
(1 - \mu)\,(1 +\textrm{defer\_dist}_{T_\textrm{{1:k-1}}}(x_i))^{-\gamma}, & \text{if } x_i \notin \mathcal{D}_{k-1}.
\end{cases}$$
Here, $\mu \in [0,1]$ controls how much we uniformly scale down the weights of the non-deferred region, while $\gamma \geq 0$ controls how strongly the weights decay with distance from the deferred region.
When $\gamma=0$, this reduces to assigning weight $1-\mu$ to each non-deferred point. $\mu=1$ places weight solely on the deferred points.

\begin{algorithm}[h]
\caption{Top-down MDT training}
\label{alg:weighted_top_down_mdt}
\begin{algorithmic}[1]
\REQUIRE Data $(X, X_{\textrm{binarized}},y)$ with $X \in \mathbb{R}^{N \times p}$ and $X_{\textrm{binarized}} \in \{0,1\}^{N \times q}$, depth $d$, $\tau$, $\eta$, $\mu \ge 0$, boolean flag \textrm{rescale\_tau}, maximum number of tree stages $K_{\max}$, $\gamma \ge 0$

\STATE $\mathcal{T}_\textrm{interp} \leftarrow [\ ]$
\STATE $D \leftarrow \{1,\dots,N\}$
\STATE \texttt{terminated\_early} $\leftarrow$ \textbf{false}
\STATE $w \gets \mathbf{1}$
\FOR{$k=1,\dots,K_{\max}$}
    \STATE \COMMENT{Iteratively construct the MDT, one stage at a time}
    \STATE $B \leftarrow \textsc{FitFallbackModel}(X,y,w)$ \COMMENT{Train fallback model for current stage}
        \STATE $T_k \leftarrow \textsc{FitDeferTree}(X_{\textrm{binarized}},y,B(X),w,\tau, \eta, d)$
        \STATE \COMMENT{Train defer tree for current stage; method described in Algorithms~\ref{alg:fit_defer_tree} and \ref{alg:build_defer_tree} }

    \STATE $D_{\mathrm{new}} \leftarrow \{i \in D : T_k(x_i)=\defer\}$

    \IF{$D_{\mathrm{new}} = D$ \textbf{or} $D_{\mathrm{new}} = \varnothing$}
        \STATE \texttt{terminated\_early} $\leftarrow$ \textbf{true}
        \STATE \textbf{break} \COMMENT{Stop if deferred set does not shrink}
    \ENDIF

    \STATE $D \leftarrow D_{\mathrm{new}}$
    \STATE $\mathcal{T}_\textrm{interp} \leftarrow \mathcal{T}_\textrm{interp} + [T_k]$

    \STATE $X_{\textrm{binarized}} \leftarrow \textsc{Filtersplits}(X_{\textrm{binarized}}, \mathcal{T}_\textrm{interp})$
        \STATE \COMMENT{Remove binary splits that do not differentiate deferred regions, discussed in \autoref{subsec:feasible-region-tracking}}

    \STATE $w \leftarrow \textsc{GetWeights}(X,D,\gamma,\mu, \mathcal{T}_\textrm{interp})$
        \STATE \COMMENT{Distance-based weights, down-weighting non-deferred points, discussed in \autoref{subsec:feasible-region-tracking}}
        \STATE $\tau \leftarrow
        \tau \cdot \dfrac{\sum_{i=1}^N w_i}{N}$ \textbf{if} \textrm{rescale\_tau} \textbf{else} $\tau$

\ENDFOR

\IF{\textbf{not} \texttt{terminated\_early} \textbf{and} $D \neq \varnothing$}
    \STATE \COMMENT{Final reweighting and retraining of fallback}
    \STATE $w \leftarrow \textsc{GetWeights}(X,D,\gamma,\mu)$
    \STATE $B \leftarrow \textsc{FitFallbackModel}(X,y,w)$
\ENDIF

\STATE \textbf{return} $\mathcal{T_{\textrm{interp}}} + [B]$
\end{algorithmic}
\end{algorithm}
Algorithm \ref{alg:weighted_top_down_mdt} describes our approach to allow the fallback model and the next defer tree to iteratively update together. At each stage, we fit a fallback model $B$ and a defer tree on the full dataset using the current weights (lines 7-9), where the weights are all 1 for the first pass (line 4). We then update core information for future iterations.   We then update the active deferred set $D$ to the subset of points that are still deferred (lines 10-15). If we decreased the size of the deferred set, we then add the tree to the sequence (line 16) and move to the next stage. To prepare for the next stage, we first filter out any splits that are provably irrelevant to our remaining deferral region (line 17).  We then compute weights based on distances to the feasible deferred region (line 19), and optionally update the leaf penalty $\tau$ (line 21). 

The procedure terminates when $D$ becomes empty or stops decreasing in size (lines 10-15). If the loop reaches $K_{\max}$ without early termination and $D$ is still nonempty (an iteration limit), we perform one final update of the fallback model using the latest weights (lines 23-27), since the previous fallback model was trained on the previous set of weights. In Appendix \ref{appendix:fullalgorithm}, we also describe early stopping criteria based on model complexity (e.g., per-stage size, equivalent rule list size, or equivalent single-tree size), along with efficient online updates to enforce these constraints.

For brevity, we defer additional theoretical results to Appendix \ref{appendix:theory}. For instance, we show that the distance-based weights arise as a Gibbs distribution minimizing an entropy-regularized objective.

\subsection{Feasible Region Tracking}
\label{subsec:feasible-region-tracking}
Algorithm \ref{alg:weighted_top_down_mdt} specifies the high-level training procedure, but efficiently implementing lines 
10, 12, and 25 requires viewing the deferral region geometrically as a union of axis-aligned hyperrectangles.

Let $\Lambda_k^{\text{defer}}$ denote the set of defer leaves of tree $k$. Each defer leaf $\lambda \in \Lambda_k^{\text{defer}}$ corresponds to an axis-aligned hyperrectangle in feature space (a defer subregion), denoted $R(\lambda)$. The total deferral region after stage $k$ can be written as a set of defer subregions in the following way: 

$$
\mathcal{S}_k
:=
\left(
\left\{
\bigcap_{t=1}^k R(\lambda_t)
\;\middle|\;
(\lambda_1,\dots,\lambda_k)\in \Lambda_1^{\mathrm{defer}}\times\cdots\times\Lambda_k^{\mathrm{defer}}
\right\}
\right)
\setminus \{\emptyset\}.
$$

We refer to each element $\bigcap_{t=1}^k R(\lambda_t)$ in the set $\mathcal{S}_{k}$  as a \emph{defer subregion}. Each such subregion remains an axis-aligned hyperrectangle, since intersections of axis-aligned hyperrectangles preserve this structure. Many such subregions may be empty (infeasible), e.g., $age \leq 20$ in one stage and $age > 25$ in another; we detect and discard these cases.

Because our iterative procedure refines only the set of subregions that defer, we can update this set iteratively. Suppose $\mathcal{S}_{k-1}$ is the set of defer subregions for $T_1 \cdots T_{k-1}$; then we can update it to be the set of defer subregions for $T_1 \cdots T_{k}$ via pairwise intersections between the existing defer subregions and the defer subregions of $T_{k}$:

$$S_k \leftarrow
\left\{
\, s \cap R(\lambda_k)
\;\middle|\;
s \in S_{k-1},\ \lambda_k \in \Lambda_k^{\mathrm{defer}}
\right\}
\setminus \{\emptyset\}.$$

Once we have $\mathcal S_k$, we can iterate over its elements to compute the distance from each point to each deferred subregion, and use the minimum such distance to measure how far the point is from being deferred. Likewise, for each column $j$ of $X_{\text{binarized}}$, we can discard it if its value is fixed across all feasible subregions; that is, the corresponding binarized predicate has the same truth value for every possible point in every subregion in $\mathcal S_k$. Concretely, we map column $j$ to its corresponding split $(f \le \nu)$ via its metadata, and discard it if either $\forall s \in \mathcal{S}_k,\ \textsc{ImpliesTrue}(s,f,\nu)$ or $\forall s \in \mathcal{S}_k,\ \textsc{ImpliesFalse}(s,f,\nu)$ (methods provided in \ref{appendix:compression}). Appendix \ref{appendix:fullalgorithm} presents a version of Algorithm \ref{alg:weighted_top_down_mdt} with these changes to improve efficiency.

%% file: sections/methods-compression.tex
Feasible region tracking can also be used to compress the resulting MDT. 
In Appendix \ref{appendix:compression}, we provide an algorithm to compress the MDT into a single defer tree. We recursively expand each defer leaf by substituting the next-stage tree, while tracking the corresponding subregion of the feature space that we are in. If a split is guaranteed to evaluate the same way throughout this region (i.e., always True or always False), we can replace the node with the corresponding child. We also provide algorithms to further simplify the MDT when redundant splits remain; this has the effect of shortening the rules used in a MDT's rule-list representation. We can apply either of these techniques to each tree in a tree ensemble, using the set of subregions in which that tree may be evaluated.

%% file: sections/experiments.tex
We compare our MDTs and the defer tree method to state-of-the-art tree ensembles (XGBoost \cite{chen2016xgboost}, Random Forest \cite{breiman2001random}), sparse additive tree models (FIGS \cite{tan2025fast}), and existing hybrid interpretable baselines (HyRS \cite{wang2019gaining, wang2021hybrid}, Pre- and Post- CORELS \cite{ferry2023learning}) on a wide range of datasets. We also implemented the hybrid interpretable model from \citet{frost2024partially}, and refer to the method as FLMM. Lastly, we add a logistic regression baseline, which defers when the estimated probability is close to 0.5. 
Our key questions are as follows.
(1) Can MDTs provide black-box comparable accuracy for a small deferral rate? (2) How do MDT deferral rates and accuracy compare to existing deferral methods? (3) How complex are state-of-the-art MDTs?

\begin{table}[!h]
\centering
\small
\setlength{\tabcolsep}{4pt}
\renewcommand{\arraystretch}{0.88}
\resizebox{\textwidth}{!}{
\begin{tabular}{lccccc}
\toprule
Dataset 
& Non-Deferral Tree 
& FIGS
& Random Forest 
& MDT+XGB $\le$25\% (Ours)
& XGBoost \\
\midrule

Abalone & $0.6314 \pm 0.0060$ & $\mathbf{0.6381 \pm 0.0030}$ & $0.6314 \pm 0.0035$ & $\underline{0.6333 \pm 0.0069}$ & $\underline{0.6323 \pm 0.0070}$ \\

Adult & $0.8598 \pm 0.0017$ & $0.8589 \pm 0.0021$ & $0.8619 \pm 0.0017$ & $0.8667 \pm 0.0021$ & $\mathbf{0.8701 \pm 0.0013}$ \\

Aging & $\mathbf{0.8028 \pm 0.0034}$ & $0.7944 \pm 0.0047$ & $\underline{0.8014 \pm 0.0036}$ & $\mathbf{0.8028 \pm 0.0034}$ & $\underline{0.8000 \pm 0.0028}$ \\

Bank & $0.9051 \pm 0.0008$ & $0.9052 \pm 0.0017$ & $0.9069 \pm 0.0008$ & $\underline{0.9087 \pm 0.0008}$ & $\mathbf{0.9090 \pm 0.0007}$ \\

Bike & $0.9346 \pm 0.0021$ & $0.9113 \pm 0.0039$ & $0.9426 \pm 0.0017$ & $\underline{0.9486 \pm 0.0024}$ & $\mathbf{0.9500 \pm 0.0023}$ \\

California & $0.8893 \pm 0.0024$ & $0.8774 \pm 0.0015$ & $0.9019 \pm 0.0032$ & $\underline{0.9099 \pm 0.0020}$ & $\mathbf{0.9101 \pm 0.0022}$ \\

Churn & $0.9424 \pm 0.0036$ & $0.9434 \pm 0.0057$ & $0.9506 \pm 0.0032$ & $\underline{0.9566 \pm 0.0028}$ & $\mathbf{0.9574 \pm 0.0028}$ \\

Droid & $0.9627 \pm 0.0011$ & $0.9601 \pm 0.0005$ & $\underline{0.9715 \pm 0.0006}$ & $0.9709 \pm 0.0005$ & $\mathbf{0.9718 \pm 0.0004}$ \\

Heloc & $\underline{0.7052 \pm 0.0107}$ & $0.6840 \pm 0.0074$ & $\underline{0.7080 \pm 0.0102}$ & $\underline{0.7056 \pm 0.0107}$ & $\mathbf{0.7116 \pm 0.0108}$ \\

Jasmine & $0.8037 \pm 0.0077$ & $0.7983 \pm 0.0054$ & $\mathbf{0.8218 \pm 0.0085}$ & $\underline{0.8161 \pm 0.0100}$ & $\underline{0.8214 \pm 0.0071}$ \\

Phishing & $0.9528 \pm 0.0017$ & $0.9392 \pm 0.0019$ & $\underline{0.9721 \pm 0.0013}$ & $\underline{0.9728 \pm 0.0017}$ & $\mathbf{0.9733 \pm 0.0015}$ \\

Pol & $0.9723 \pm 0.0012$ & $0.9631 \pm 0.0018$ & $0.9823 \pm 0.0007$ & $\mathbf{0.9849 \pm 0.0003}$ & $\underline{0.9845 \pm 0.0007}$ \\

Rl & $0.7614 \pm 0.0052$ & $0.7163 \pm 0.0094$ & $0.8074 \pm 0.0058$ & $0.7968 \pm 0.0089$ & $\mathbf{0.8278 \pm 0.0034}$ \\

Shopping & $0.9010 \pm 0.0025$ & $0.8988 \pm 0.0021$ & $\underline{0.9046 \pm 0.0012}$ & $\underline{0.9047 \pm 0.0016}$ & $\mathbf{0.9057 \pm 0.0015}$ \\

Spambase & $0.9254 \pm 0.0016$ & $0.9222 \pm 0.0034$ & $\underline{0.9517 \pm 0.0014}$ & $\underline{0.9500 \pm 0.0024}$ & $\mathbf{0.9528 \pm 0.0023}$ \\

Wine & $0.8340 \pm 0.0015$ & $0.8391 \pm 0.0039$ & $\underline{0.8848 \pm 0.0042}$ & $0.8788 \pm 0.0052$ & $\mathbf{0.8864 \pm 0.0022}$ \\

\bottomrule
\end{tabular}
}
\caption{Test accuracy (mean $\pm$ standard error across five train/test splits) across representative datasets. Bold indicates the best mean accuracy in each row. Underlining indicates methods whose mean accuracy is within $1.96$ standard errors of the best method, corresponding to overlap with the best method's 95\% confidence interval.}
\label{tab:clean_results}
\end{table}

To answer these questions, we need to determine the best configurations for hybrid interpretable models subject to a particular constraint on test deferral rate. To do so, we adopt a semi-supervised approach: 
for each train/test split and maximum deferral rate $c$, we choose the hyperparameter that maximizes the average validation accuracy, while deferring on at most $c$ proportion of the test set. Note that deferral is independent of labels, so this is fully semi-supervised. 

In Appendix \ref{appendix:more} (Tables \ref{tab:train_test_defer_rate_stability_10} and \ref{tab:train_test_defer_rate_stability_25}), we show that the gap between train and test deferral rates is always small, suggesting that a fully supervised selection procedure would yield nearly identical results. Indeed, we show this is the case in the appendix (\autoref{fig:train_deferral_comparison}). 
Additional information on our experiment setup is in Appendix \ref{appendix:experiment-setup}; all results are averaged across five train/test splits.

To answer question (1), Table~\ref{tab:clean_results} 
reports test accuracy across datasets. MDT+XGB (with at most 25\% deferral) is consistently competitive even with the strongest black box baselines: it is frequently closer to state-of-the-art performance than random forests and almost always within the 95\% confidence interval of the best method. Furthermore, it outperforms the sparse ensemble model FIGS on nearly all datasets. MDT also outperforms a single tree, non-deferral ablation, demonstrating that the multistage training and deferral is core to MDT's success.

\begin{figure}[t]
    \centering

    \begin{subfigure}{0.495\linewidth}
        \centering
        \includegraphics[width=\linewidth]{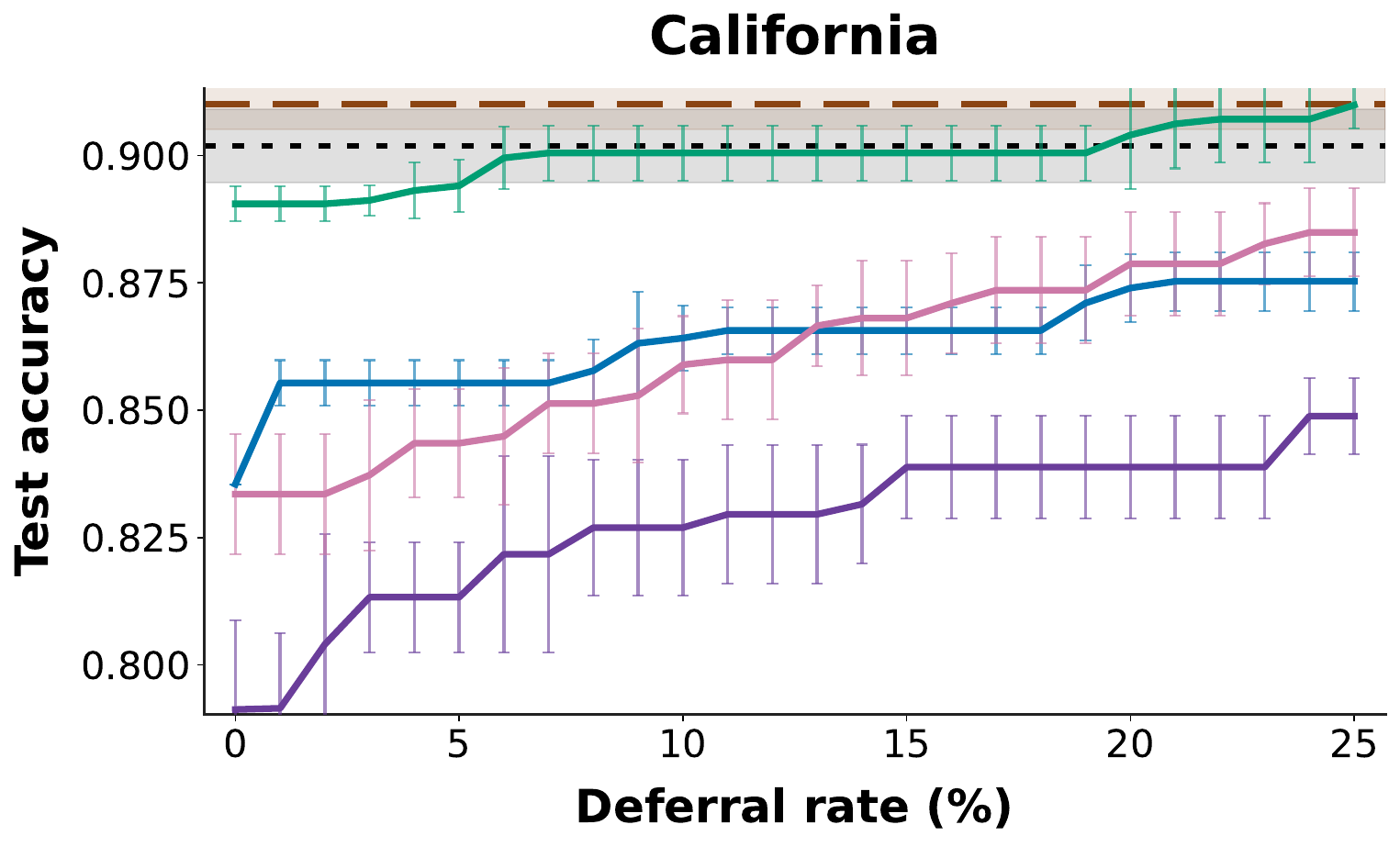}
    \end{subfigure}
    \hfill
    \begin{subfigure}{0.495\linewidth}
        \centering
        \includegraphics[width=\linewidth]{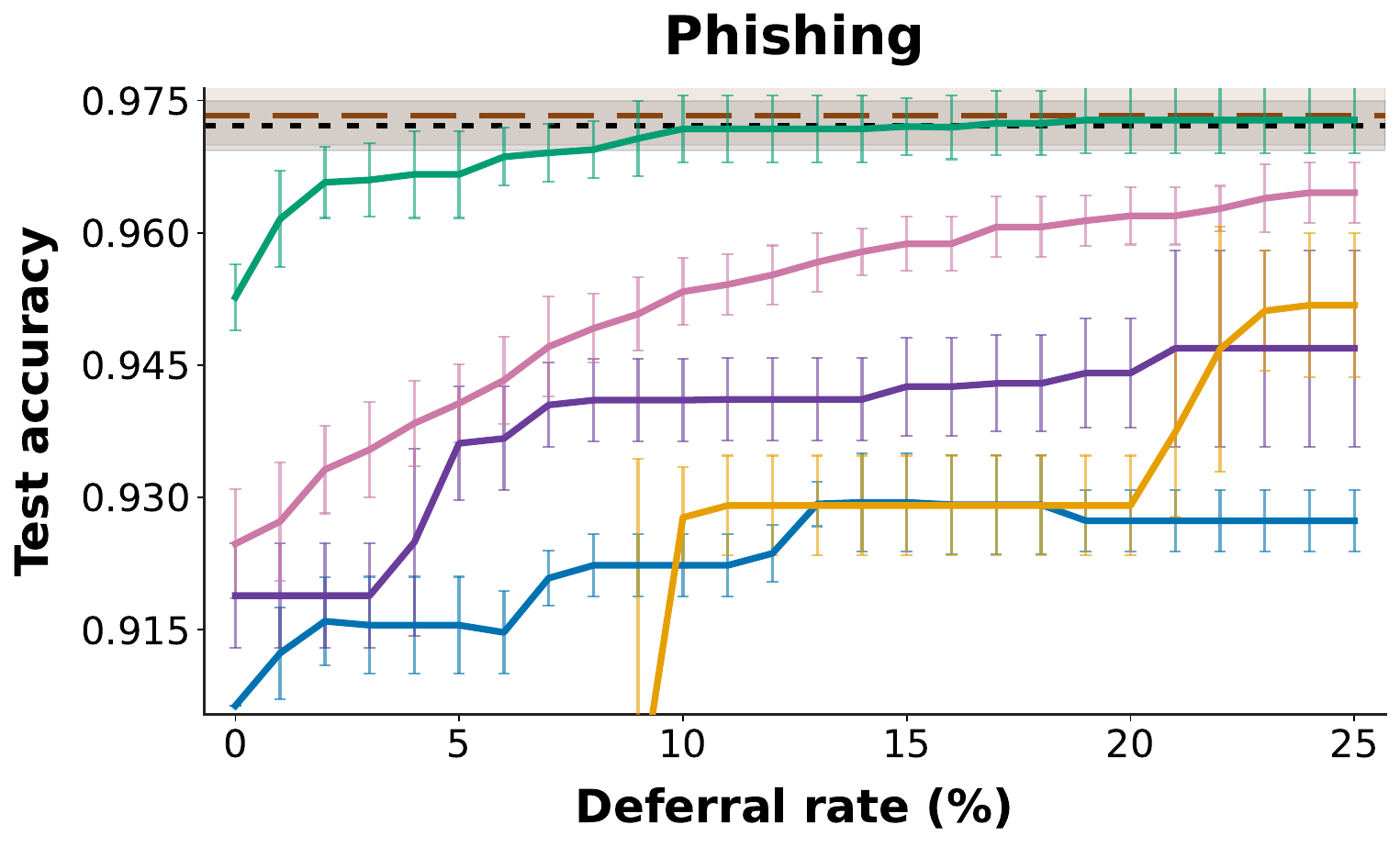}
    \end{subfigure}

    \vspace{0em}

    \begin{subfigure}{0.495\linewidth}
        \centering
        \includegraphics[width=\linewidth]{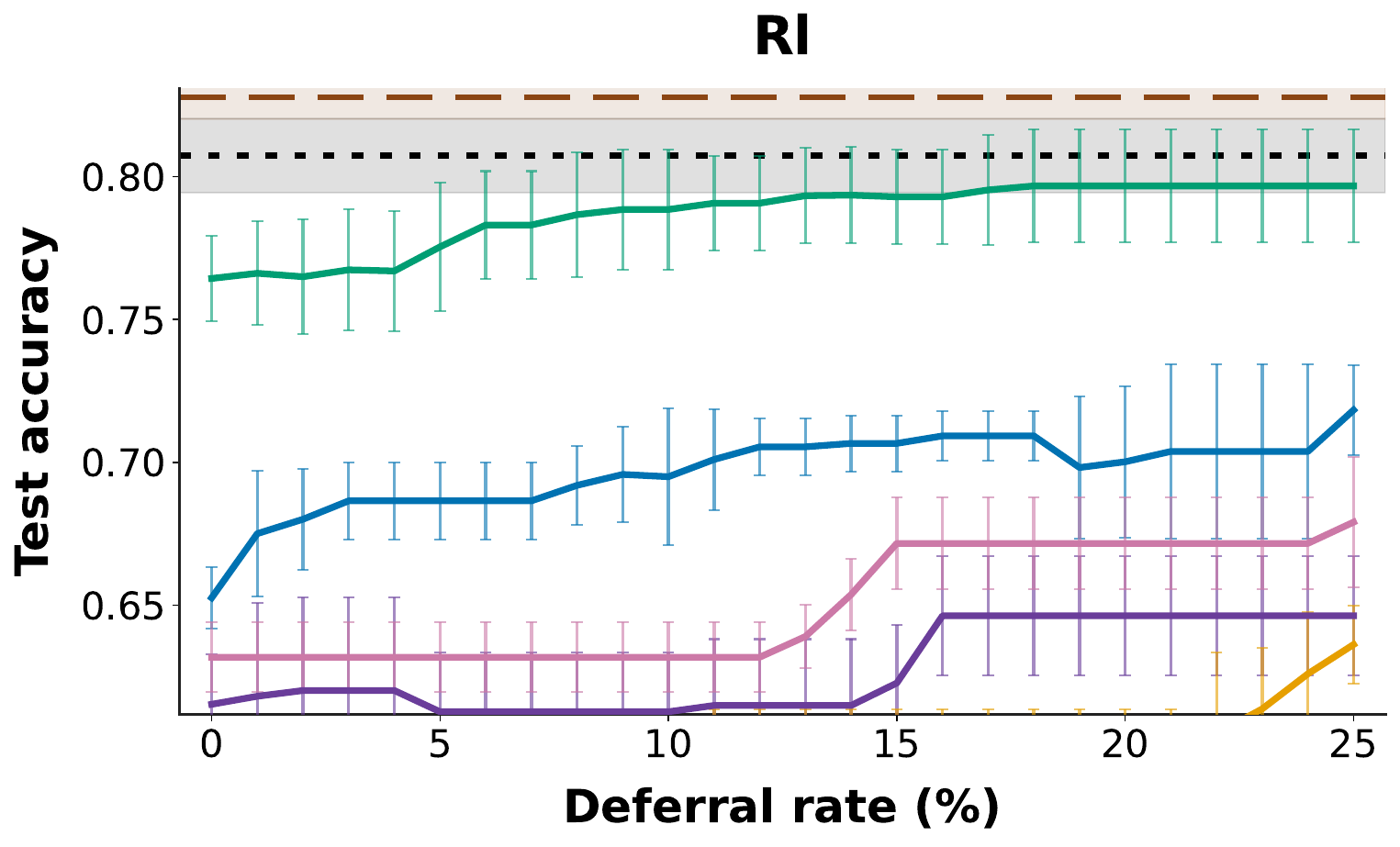}
    \end{subfigure}
    \hfill
    \begin{subfigure}{0.495\linewidth}
        \centering
        \includegraphics[width=\linewidth]{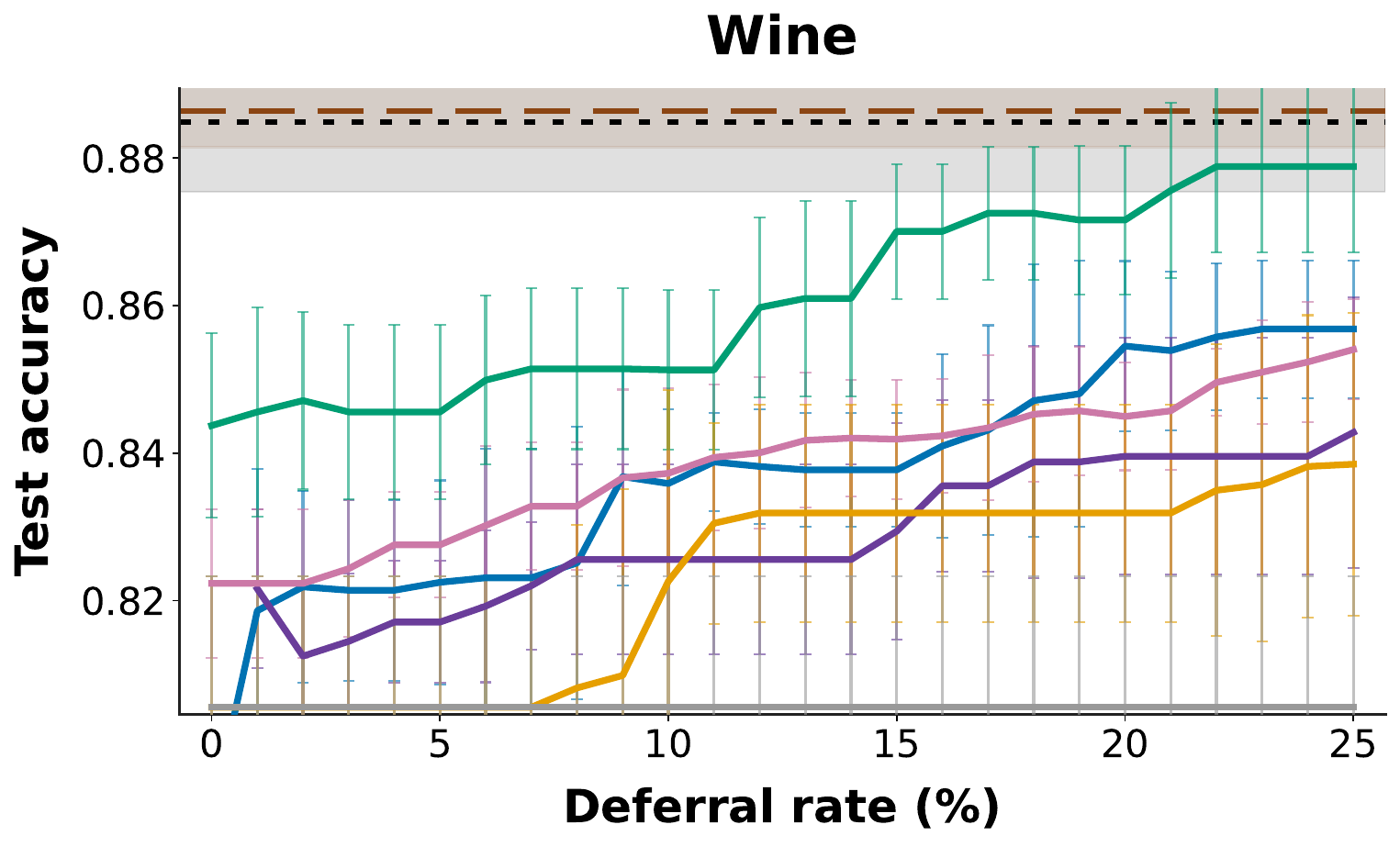}
    \end{subfigure}

    \vspace{-0em}

    \includegraphics[width=\linewidth, trim=0 25 0 25, clip]{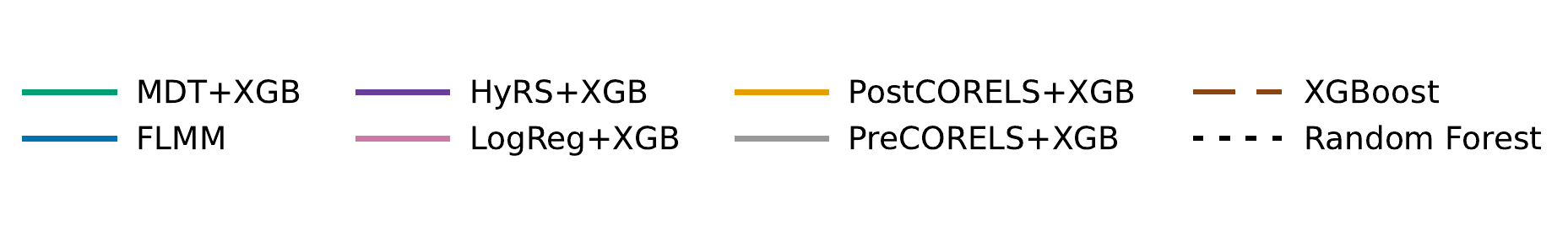}

    \vspace{-0.3em}

    \caption{Test accuracy (mean $\pm$ standard deviation) as a function of deferral rate across datasets.}
    \label{fig:deferral_accuracy_selected}
\end{figure}

\begin{figure}[!t]
    \centering
    \includegraphics[width=0.8\linewidth]{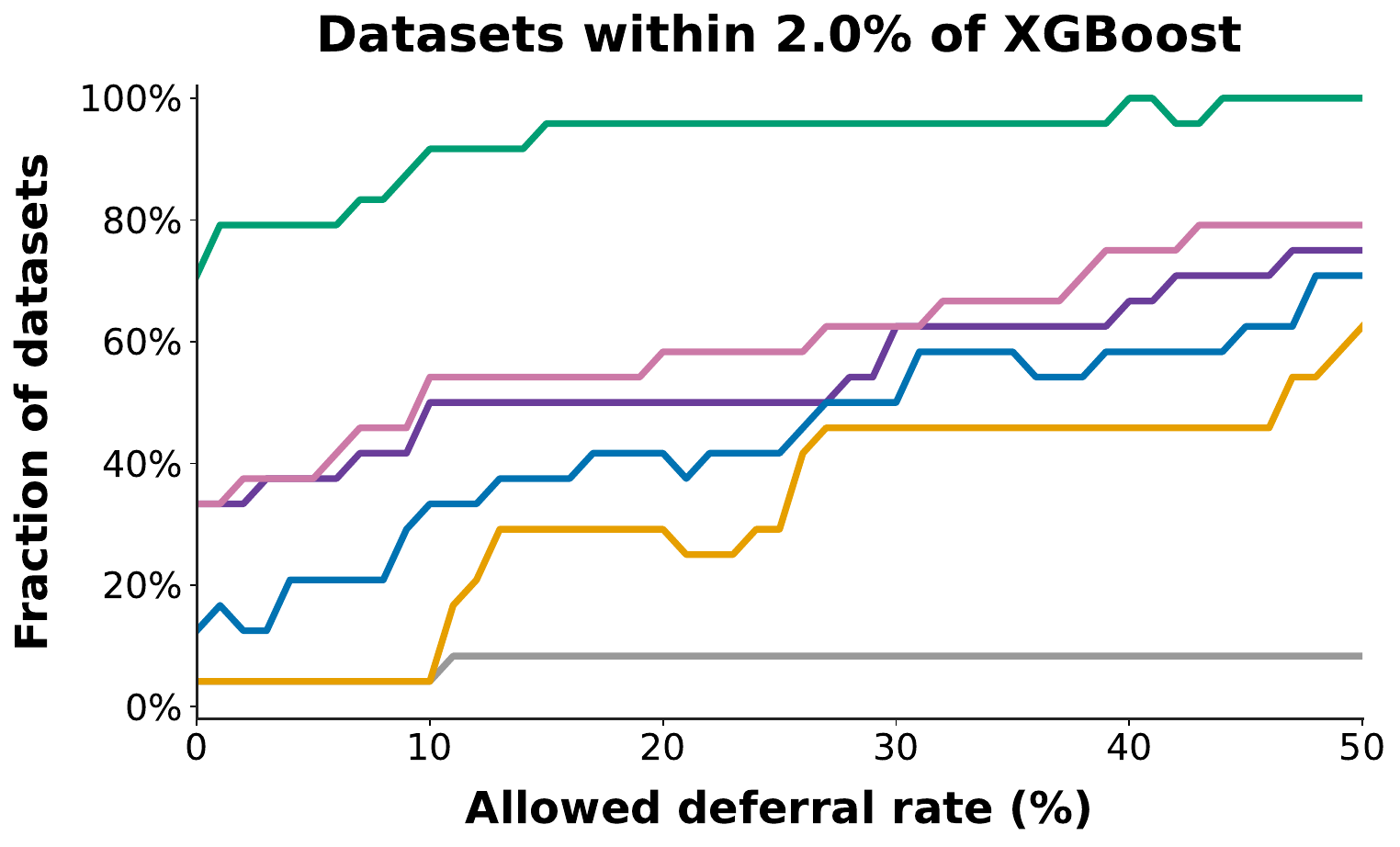}
    \includegraphics[width=0.7\linewidth, trim=0 25 225 25, clip]{new_main_figs/legend_main.pdf}
    \caption{CDF of deferral thresholds required to have accuracy within $2.0\%$ of XGBoost. Each curve shows the fraction of datasets for which a method achieves this accuracy within a given deferral rate.}
    \label{fig:deferral_cdf_eps0025}
    
\end{figure}

To answer question (2), we compare MDTs to other hybrid-interpretable models in Figures \ref{fig:deferral_accuracy_selected} and \ref{fig:deferral_cdf_eps0025}. 
In Figure~\ref{fig:deferral_accuracy_selected}, we compare hybrid-interpretable models on datasets where the accuracy-interpretability tradeoff is high (a single tree is $> 2 \%$ less accurate than a black box).

On all four datasets, MDT+XGB consistently outperforms all other baselines. On California, MDT+XGB matches the mean accuracy of XGBoost at 25\% deferral, a level that no competing method achieves within the same deferral budget. 
Figure~\ref{fig:deferral_cdf_eps0025} shows the deferral rate required to achieve accuracy within $2.0\%$ of XGBoost across all 24 datasets (the full list of datasets is in Appendix \ref{appendix:experiment-setup}). At around $40\%$ deferral, our MDT+XGB method can come within this amount of XGBoost for all datasets. No existing method can match this without deferring the majority of the time.

\begin{figure}[!h]
    \centering

    \begin{subfigure}[h]{0.6\linewidth}
    \vspace{0pt}
        \centering
        \includegraphics[width=\linewidth]{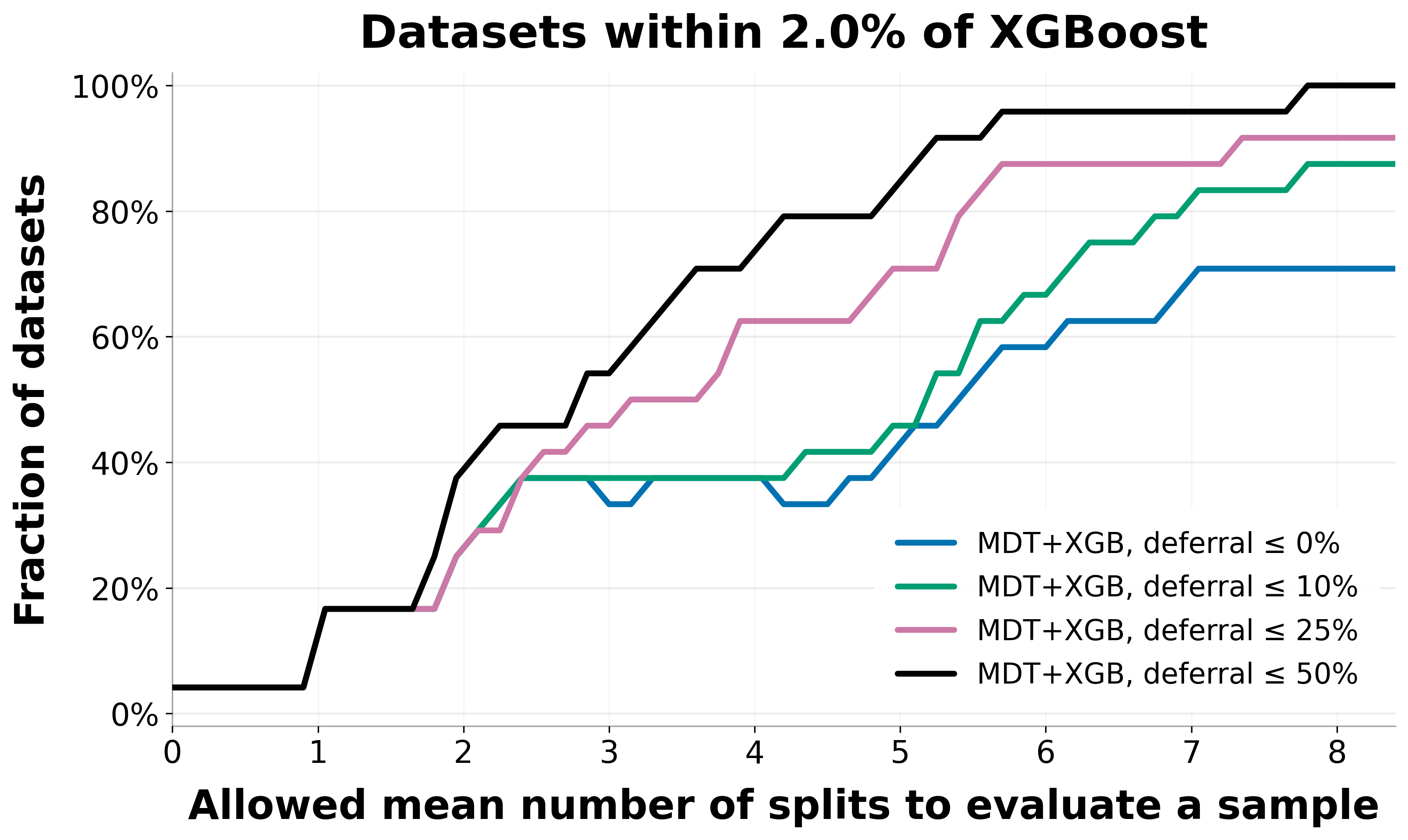}
        \caption{Fraction of datasets within $2.0\%$ of XGBoost as a function of the average number of split decisions encountered by a test sample in the interpretable portion of MDT+XGB.}
        \label{fig:sparsity_cdf_eps0025}
    \end{subfigure}
    \hfill
    \hspace{-0.02\linewidth}%
    \begin{subfigure}[h]{0.38\linewidth}
    \vspace{0pt}
        \centering
        \scriptsize
        \setlength{\tabcolsep}{2.2pt}
        \renewcommand{\arraystretch}{0.83}
        \resizebox{\linewidth}{!}{
        \begin{tabular}{lccc}
        \toprule
        Dataset & Splits & Leaves & $\Delta$ Acc. 95\% CI \\
        \midrule
        Adult    & 1.021 & 1.082  & $[-0.0048,\, 0.0012]$ \\
        Bike
        & 1.064
        & 1.088
        & $[-0.0060,\, 0.0045]$ \\
        Compas   & 1.756 & 1.545  & $[-0.0132,\, 0.0164]$ \\
        Diamonds & 1.112 & 1.458  & $[-0.0066,\, 0.0003]$ \\
        Jasmine  & 1.847 & 1.361  & $[-0.0734,\, 0.0395]$ \\
        Madeline & 1.061 & 11.956 & $[-0.0491,\, 0.0202]$ \\
        Phishing & 1.808 & 1.048  & $[-0.0029,\, 0.0024]$ \\
        Pol      & 1.002 & 1.043  & $[-0.0047,\, 0.0075]$ \\
        Rl       & 1.223 & 1.327  & $[-0.0112,\, 0.0196]$ \\
        Shopping & 1.105 & 1.302  & $[-0.0121,\, 0.0052]$ \\
        \bottomrule
        \end{tabular}
        }
        \caption{Compression ratios (mean across splits) for splits to evaluate a deferred test sample and leaves of the final MDT XGBoost relative to the original XGBoost fallback. A 95\% confidence interval on the difference ($\text{MDT} - \text{XGB}$) in test deferred region accuracy shows they are indistinguishable. Full table in Table \ref{tab:appendix-compressed_fallback_results} (Appendix).}
        \label{fig:dist_sparsity_sub}
    \end{subfigure}

    \caption{}
    \vspace{-0.8em}
   
    \label{fig:dist_sparsity}
\end{figure}

To answer question (3), Figure~\ref{fig:sparsity_cdf_eps0025} shows the fraction of datasets that achieve accuracy within $2.0\%$ of XGBoost as a function of the average number of split decisions encountered by a test sample while passing through the interpretable portion of the MDT. This average path-length constraint is enforced analogously to the deferral constraint. At 50\% deferral, no dataset requires more than an average of 7.5 split decisions per sample to reach near-XGBoost performance, and the vast majority reach this threshold with at most an average of 5.5 split decisions per sample. 
Since MDTs only need to use a black-box component for a small proportion of samples, the black-box representations they learn are naturally more compressed. Figure~\ref{fig:dist_sparsity_sub} compares the distribution of the number of splits needed to evaluate samples that are deferred, either by (a) using the final fallback XGBoost model or (b) using the best XGBoost model for the whole dataset. Both (a) and (b) are trained with the same hyperparameter configuration, but approach (a) is trained using a weighting that emphasizes deferred points.  
The number of splits required to classify a sample is greatly reduced.

%% file: sections/conclusion.tex
We introduce Multistage Defer Trees, a model class that adaptively allocates complexity by routing inputs through a sequence of sparse decision trees before deferring to a black-box model. Our training procedure progressively refines the defer region, combining distance-based weighting, which allows the model to borrow strength from nearby points, with compression techniques that track and simplify defer subregions. MDTs demonstrate that accuracy and interpretability need not be traded off globally, substantially improving the accuracy-deferral frontier. Future work could explore using multiple fallback models, possibly from different model classes.

%% file: sections/appendix/proofs.tex
\topdownimprovement*
\begin{proof}

For each stage $j$, define the set of samples that reach stage $j$ as
\[
D_j := \{i : T_1(x_i)=\cdots=T_{j-1}(x_i)=\defer\},
\]
with $D_1=\{1,\ldots,N\}$. 

Making the weight update
\[
w_i^{(j)} = w_i \mathds{1}\{i \in D_j\}.
\]

The all-defer baseline has decision tree objective
\[
L_{\defer,B}(\mathcal D^w,\tau,\eta)
=
\sum_{i=1}^N
w_i\Bigl(
\mathds{1}\{B(x_i)\neq y_i\}+\eta
\Bigr),
\]

Now consider the contribution of a single stage $j$ on the samples that reach it. On the weighted dataset $\mathcal D^{w^{(j)}}$, the single-tree defer objective is
\[
L_{T_j,B}(\mathcal D^{w^{(j)}},\tau,\eta)
=
\tau (|T_j|-1)
+
\sum_{i=1}^N w_i^{(j)}
\Bigl(
\mathds{1}\{\hat y_{T_j,B}(x_i)\neq y_i\}
+
\eta \mathds{1}\{T_j(x_i)=\defer\}
\Bigr).
\]
The all-defer tree (a single defer leaf) on the same weighted dataset has objective
\[
L_{\defer,B}(\mathcal D^{w^{(j)}},\tau,\eta)
=
\sum_{i=1}^N w_i^{(j)}
\Bigl(
\mathds{1}\{B(x_i)\neq y_i\}+\eta
\Bigr).
\]
Therefore,
\[
\Delta_j
=
L_{\defer,B}(\mathcal D^{w^{(j)}},\tau,\eta)
-
L_{T_j,B}(\mathcal D^{w^{(j)}},\tau,\eta)
\]
equals
\[
-\tau(|T_j|-1)
+
\sum_{i=1}^N w_i^{(j)}
\Bigl[
\mathds{1}\{B(x_i)\neq y_i\}+\eta
-
\mathds{1}\{\hat y_{T_j,B}(x_i)\neq y_i\}
-
\eta\mathds{1}\{T_j(x_i)=\defer\}
\Bigr].
\]

Now sum the expression for $\Delta_j$ over $j=1,\ldots,k$:
\[
\sum_{j=1}^k \Delta_j
=
-\tau\sum_{j=1}^k (|T_j|-1)
+
\sum_{j=1}^k
\sum_{i=1}^N
w_i^{(j)}
\Bigl[
\mathds{1}\{B(x_i)\neq y_i\}+\eta
-
\mathds{1}\{\hat y_{T_j,B}(x_i)\neq y_i\}
-
\eta\mathds{1}\{T_j(x_i)=\defer\}
\Bigr].
\]

Using
\[
w_i^{(j)}
=
w_i
\mathds{1}\{T_1(x_i)=\cdots=T_{j-1}(x_i)=\defer\},
\]
we can rewrite the double sum sample-by-sample:
\[
\sum_{j=1}^k
\sum_{i=1}^N
w_i^{(j)}
\Bigl[
\mathds{1}\{B(x_i)\neq y_i\}+\eta
-
\mathds{1}\{\hat y_{T_j,B}(x_i)\neq y_i\}
-
\eta\mathds{1}\{T_j(x_i)=\defer\}
\Bigr]
\]
\[
\begin{aligned}
&=
\sum_{i=1}^N
w_i
\sum_{j=1}^k
\mathds{1}\{T_1(x_i)=\cdots=T_{j-1}(x_i)=\defer\}
\Bigl[
\mathds{1}\{B(x_i)\neq y_i\}+\eta \\
&\hspace{6.5em}
-
\mathds{1}\{\hat y_{T_j,B}(x_i)\neq y_i\}
-
\eta\mathds{1}\{T_j(x_i)=\defer\}
\Bigr].
\end{aligned}
\]

Fix a sample $i$. Let $r_i$ be the first stage that predicts a label:
\[
r_i := \min\{j : T_j(x_i)\neq \defer\},
\]
if such a stage exists. If no such stage exists, set $r_i=k+1$.

There are 3 cases: (1.1, 1.2, and 2.1). 

Case 1.1: $j < r_i \leq k$. Here, sample $i$ reaches stage $j$ but it still defers.

Therefore, $T_j(x_i)=\defer$, so the prediction used in the loss of training the single defer tree is $\hat y_{T_j,B}(x_i)=B(x_i)$. Also, $\mathds{1}\{T_j(x_i)=\defer\}=1$. So, the bracket becomes $$\mathds{1}\{B(x_i)\neq y_i\}+\eta
-
\mathds{1}\{B(x_i)\neq y_i\}
-
\eta = 0.$$

So every stage before the first predicting stage contributes nothing to the improvement in the objective for that sample. Intuitively, before $r_i$, the model is still doing exactly what the all defer tree does: defers to $B$ and pays the penalty.

Case 1.2: $j=r_i\le k$. The tree predicts. 

We know $T_{r_i}(x_i)\neq \defer$ and $\mathds{1}\{T_{r_i}(x_i)=\defer\}=0$. This is the first stage for which we make a prediction for this sample, so $\hat y_{T_{r_i},B}(x_i)
=
\hat y_{T_{1:k},B}(x_i)$.

Therefore, the bracket becomes

$$\mathds{1}\{B(x_i)\neq y_i\}+\eta
-
\mathds{1}\{\hat y_{T_{1:k},B}(x_i)\neq y_i\}.$$

Since the full MDT does not defer on this sample, we also know $\mathds{1}\{\hat T_{1:k}(x_i)=\defer\}=0.$

Appending this zero term will become useful later in the proof.

$$\mathds{1}\{B(x_i)\neq y_i\}+\eta
-
\Bigl(
\mathds{1}\{\hat y_{T_{1:k},B}(x_i)\neq y_i\}
+
\eta\mathds{1}\{\hat T_{1:k}(x_i)=\defer\}
\Bigr).$$

Case 2.1: $r_i=k+1$. Here, the MDT defers to the black box. No stage predicts. The sample is deferred by every stage.

Writing the consequences of this formally: $$T_1(x_i)=T_2(x_i)=\cdots=T_k(x_i)=\defer.$$

For every $j \leq k$

$$\hat y_{T_j,B}(x_i)=B(x_i)$$

and 

$$\mathds{1}\{T_j(x_i)=\defer\}=1.$$

Therefore, every bracket is 

$$\mathds{1}\{B(x_i)\neq y_i\}+\eta
-
\mathds{1}\{B(x_i)\neq y_i\}
-
\eta
=
0.$$

Now, with the three cases solved, we can look back at the double sum.

\[
\sum_{j=1}^k
\sum_{i=1}^N
w_i^{(j)}
\Bigl[
\mathds{1}\{B(x_i)\neq y_i\}+\eta
-
\mathds{1}\{\hat y_{T_j,B}(x_i)\neq y_i\}
-
\eta\mathds{1}\{T_j(x_i)=\defer\}
\Bigr]
\]
\[
\begin{aligned}
&=
\sum_{i=1}^N
w_i
\sum_{j=1}^k
\mathds{1}\{T_1(x_i)=\cdots=T_{j-1}(x_i)=\defer\}
\Bigl[
\mathds{1}\{B(x_i)\neq y_i\}+\eta \\
&\hspace{6.5em}
-
\mathds{1}\{\hat y_{T_j,B}(x_i)\neq y_i\}
-
\eta\mathds{1}\{T_j(x_i)=\defer\}
\Bigr].
\end{aligned}
\]

We now claim that $S_i$ is the entire inner sum over stages $j=1,\ldots,k$ for one sample $i$:

$$S_i
=
\mathds{1}\{B(x_i)\neq y_i\}+\eta
-
\Bigl(
\mathds{1}\{\hat y_{T_{1:k},B}(x_i)\neq y_i\}
+
\eta\mathds{1}\{\hat T_{1:k}(x_i)=\defer\}
\Bigr).$$

Recall we showed that at most one $j$ is nonzero for a fixed sample $i$. If $r_i=k+1$, every stage defers, so all terms are zero. We appended the zero term in Case 1.2 to give us this expression. The other two cases were evaluated to 0. $S_i$ is precisely the sum of all of them, which simplifies to just  Case 1.2.

Continuing, the double sum is

$$=
\sum_{i=1}^N w_i S_i,$$

$$
=
\sum_{i=1}^N w_i
\Bigl[
\mathds{1}\{B(x_i)\neq y_i\}+\eta
-
\mathds{1}\{\hat y_{T_{1:k},B}(x_i)\neq y_i\}
-
\eta\mathds{1}\{\hat T_{1:k}(x_i)=\defer\}
\Bigr].$$

Now plug this back into the sum over improvements,

$$\sum_{j=1}^k \Delta_j
=
-\tau\sum_{j=1}^k (|T_j|-1)
+
\sum_{i=1}^N w_i S_i.$$

$$\sum_{j=1}^k \Delta_j
=
-\tau\sum_{j=1}^k (|T_j|-1)
+
\sum_{i=1}^N w_i
\Bigl[
\mathds{1}\{B(x_i)\neq y_i\}+\eta
-
\mathds{1}\{\hat y_{T_{1:k},B}(x_i)\neq y_i\}
-
\eta\mathds{1}\{\hat T_{1:k}(x_i)=\defer\}
\Bigr].$$

Rearranging,
\[
\begin{aligned}
\sum_{j=1}^k \Delta_j
&=
\left[
\sum_{i=1}^N w_i
\bigl(
\mathds{1}\{B(x_i)\neq y_i\}+\eta
\bigr)
\right] \\
&\quad -
\left[
\tau\sum_{j=1}^k (|T_j|-1)
+
\sum_{i=1}^N w_i
\Bigl(
\mathds{1}\{\hat y_{T_{1:k},B}(x_i)\neq y_i\}
+
\eta\mathds{1}\{\hat T_{1:k}(x_i)=\defer\}
\Bigr)
\right].
\end{aligned}
\]
The second bracket is exactly the MDT objective:

$$\mathcal L_{T_{1:k},B}(\mathcal D^w,\tau,\eta)
=
\tau\sum_{j=1}^k (|T_j|-1)
+
\sum_{i=1}^N w_i
\Bigl(
\mathds{1}\{\hat y_{T_{1:k},B}(x_i)\neq y_i\}
+
\eta\mathds{1}\{\hat T_{1:k}(x_i)=\defer\}
\Bigr).$$

Likewise, the first term is the defer leaf objective.

$$L_{\defer,B}(\mathcal D^w,\tau,\eta)
=
\sum_i w_i(\mathds{1}\{B(x_i)\neq y_i\}+\eta).$$

These two facts establish what we wanted to show.

\end{proof}

Proposition \ref{thm:defer-leaf-tradeoff} establishes the relationship between sparsity, accuracy, and deferral for sparse decision trees of unbounded depth. Consequently, the same claim holds when the model class is restricted to decision trees of any fixed bounded depth $d$.

\begin{proposition}[Accuracy-deferral trade-off]
\label{thm:defer-leaf-tradeoff}
Let $T^*$ be an optimal defer tree and let $v$ be a defer leaf. Define $S_v = \{i : x_i \text{ reaches } v\}$.

For any decision tree $U$ used to replace $v$, define
\[
\Delta(U)
=
\sum_{i \in S_v} \mathds{1}\{U(x_i) \neq y_i\} - \sum_{i \in S_v} \mathds{1}\{B(x_i) \neq y_i\}.
\]
Then
\[
\Delta(U)
\;\ge\;
\eta\,|S_v|
\;-\;
\tau \bigl(|U| - 1\bigr).
\]
Let $U(x_i)$ In particular, if $U(x_i)=B(x_i)$  for all $i \in S_v$, then
\[
|U|
\;\ge\;
1 + \frac{\eta\,|S_v|}{\tau}.
\]
\end{proposition}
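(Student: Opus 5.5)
The plan is an exchange argument based on the optimality of $T^*$. Let $T'$ be the defer tree obtained from $T^*$ by replacing the defer leaf $v$ with the subtree $U$, where $U$ is an ordinary (non-deferring) decision tree. Since $T^*$ minimizes $\mathcal L_{T,B}$, we have $\mathcal L_{T^*,B} \le \mathcal L_{T',B}$. Throughout, I take unit weights, as the statement implicitly does through $|S_v|$; with weights, every $|S_v|$ below becomes $\sum_{i\in S_v} w_i$.

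First, I compare the two objectives term by term. Samples outside $S_v$ reach the same leaf in both trees, so their prediction and deferral terms cancel. The structural change is that one leaf is replaced by $|U|$ leaves, so the complexity term changes by exactly $\tau(|U|-1)$. On $S_v$, the tree $T^*$ defers every sample: it predicts $B(x_i)$ and pays the penalty $\eta$. In $T'$, these samples are predicted by $U$ and incur no deferral penalty. Hence
\[
\mathcal L_{T',B}-\mathcal L_{T^*,B}
=\tau(|U|-1)+\sum_{i\in S_v}\mathds{1}\{U(x_i)\neq y_i\}-\sum_{i\in S_v}\bigl(\mathds{1}\{B(x_i)\neq y_i\}+\eta\bigr)
=\tau(|U|-1)+\Delta(U)-\eta|S_v|.
\]
Requiring this to be nonnegative gives $\Delta(U)\ge \eta|S_v|-\tau(|U|-1)$.

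For the corollary, suppose $U(x_i)=B(x_i)$ on $S_v$. Then $\Delta(U)=0$, so $\tau(|U|-1)\ge \eta|S_v|$. Dividing by $\tau>0$ gives $|U|\ge 1+\eta|S_v|/\tau$.

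The main obstacle is bookkeeping about the model class. The replaced tree $T'$ must lie in the class over which $T^*$ is optimal. With unbounded depth this is immediate. Under a depth bound $d$, $U$ must have depth at most $d$ minus the depth of $v$; otherwise the comparison is not licensed. I would state this restriction explicitly, consistent with the remark preceding the proposition.

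A minor point is whether $U$ may itself contain defer leaves. If it does, those samples keep their $\eta$ penalty and their $B$ prediction, and the same computation gives a stronger bound, since only the non-deferred part of $S_v$ contributes. So I would simply assume $U$ has label leaves only, as the statement's definition of $\Delta(U)$ suggests.
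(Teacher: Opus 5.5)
Your argument is correct and is the same exchange argument the paper uses: form $T'$ by substituting $U$ for $v$, and observe that the complexity term rises by exactly $\tau(|U|-1)$, the losses off $S_v$ cancel, and on $S_v$ the cost $\mathds{1}\{B(x_i)\neq y_i\}+\eta$ becomes $\mathds{1}\{U(x_i)\neq y_i\}$; then $\mathcal L_{T',B}-\mathcal L_{T^*,B}\ge 0$ gives the bound and $\Delta(U)=0$ gives the corollary. Your caveat that under a depth budget only those $U$ keeping $T'$ within depth $d$ are admissible is a correct sharpening of the paper's remark. One correction to your final paragraph: if $U$ has defer leaves, the same computation gives a \emph{weaker} bound, not a stronger one, with $\eta|S_v|$ replaced by $\eta$ times the number of samples of $S_v$ that $U$ does not defer. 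For example, taking $U$ to be a single defer leaf gives $|U|=1$ with effective predictions $B(x_i)$, which contradicts $|U|\ge 1+\eta|S_v|/\tau$ whenever $\eta|S_v|>0$, so restricting $U$ to label leaves is necessary rather than merely convenient.
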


\begin{proof}

    We first note that whenever an optimal defer tree places a defer leaf $v$ at a subproblem, then this choice must be at least as good (in objective) as both replacing $v$ with a majority prediction and replacing $v$ with a subtree rooted at that subproblem, regardless of whether the subtree itself contains defer leaves.
    
    The proof is an exchange argument from the optimality of $T^*$. 

    The defer-tree objective when referencing a base model with predictions $B(x_i)$ is:

    $$L_{\mathrm{defer}}(T)
    =
    \tau \bigl(|T|-1\bigr)
    +
    \sum_{i=1}^N
    \ell_T(i),$$

    where 

    $$\ell_T(i)
    =
    \mathds{1}\{T(x_i)\in Y,\; T(x_i)\neq y_i\}
    +
    \mathds{1}\{T(x_i)=\mathrm{defer},\; B(x_i)\neq y_i\}
    +
    \eta\,\mathbf{\mathds{1}}\{T(x_i)=\mathrm{defer}\}.$$

    Take any decision tree $U$, and form a new defer tree $T'$ by replacing the defer leaf $v$ in $T^*$ with $U$.

    Then, one leaf is removed, and $|U|
    $ are added.

    $$|T'|
    =
    |T^*| - 1 + |U|,$$

    $$ |T'| - |T^*| = |U| - 1.$$

    For every $i \notin S_v$, the prediction path is unchanged, so the loss is unchanged.

    For every $i \in S_v$, it had previously faced a penalty of $\mathds{1}\{B(x_i)\neq y_i\}+\eta$, because it was deferred. Now, it faces a cost of $\mathds{1}\{U(x_i)\neq y_i\}$.

    Therefore,

    $$L_{\mathrm{defer}}(T')-L_{\mathrm{defer}}(T^*)
    =
    \tau \bigl(|U|-1\bigr)
    +
    \sum_{i\in S_v}\mathds{1}\{U(x_i)\neq y_i\}
    -
    \sum_{i\in S_v}\bigl(\mathds{1}\{B(x_i)\neq y_i\}+\eta\bigr).$$

    Because $L_{\mathrm{defer}}(T')-L_{\mathrm{defer}}(T^*) \ge 0$, we get that 

    $$\Delta(U)
    \ge
    \eta |S_v|
    -
    \tau \bigl(|U|-1\bigr).$$

    A corollary of this is if $U(x_i)=B(x_i)$, for all $i \in S_v$, then clearly 

    $$\sum_{i\in S_v}\mathds{1}\{U(x_i)\neq y_i\}
    =
    \sum_{i\in S_v}\mathds{1}\{B(x_i)\neq y_i\}.$$

    Given that $\Delta(U)=0$, we get $0
    \ge
    \eta |S_v|
    -
    \tau \bigl(|U|-1\bigr)$.

    Equivalently,

    $$|U|
    \ge
    1+\frac{\eta |S_v|}{\tau}.$$

\end{proof}

\rulelistmdt*

\begin{proof}
We will use either $|T_j|$ or $\ell_j$ interchangeably to denote the number of leaves in stage $j$, and $d_j$ to denote the number of deferral leaves, not to be confused with the depth budget of each tree, which is $d$.

In stage $j$, each leaf corresponds to a unique root-to-leaf path, and hence to a conjunction of split conditions. For each non-deferral leaf $\kappa$ in stage $j$, let $A_\kappa$ denote the conjunction of conditions along its path, and let $y_\kappa \in \{0,1\}$ be its prediction.

We construct a rule list by concatenating $K$ sections, one for each stage. In section $j$, we include one antecedent (and the corresponding prediction) for each non-deferral leaf $\kappa$ of stage $j$:
\[
\text{if } A_\kappa \text{ then predict } y_\kappa.
\]

Note that $A_\kappa$, the conjunction of conditions along a root-to-leaf path in a single-stage tree, contains at most $d$ conditions, since each tree has depth at most $d$.

Because the leaves of a decision tree partition the input space, and each later stage operates only on the deferred region, the stages cover disjoint parts of the input space. Within a stage, there is no interaction between antecedents, so they may be listed in any order without ambiguity. While the choice of ordering may simplify individual antecedent conditions, it does not change the total number of rules.

We claim this rule list is equivalent to the MDT. Consider any input $x$. In the MDT, $x$ is first evaluated at stage $1$. If it reaches a non-deferral leaf, the MDT outputs the corresponding prediction and halts. In the rule list, exactly the antecedent corresponding to that leaf is satisfied (and the consequent is the label of that leaf), so the first matching rule produces the same output.

If instead $x$ reaches a deferral leaf at stage $1$, then no antecedent from section $1$ is satisfied, since that section contains only non-deferral leaves. The evaluation proceeds to section $2$, mirroring the MDT’s transition to stage $2$. Repeating this argument inductively, the antecedent that is satisfied corresponds exactly to the non-deferral leaf reached by $x$ in the MDT, and outputs the same prediction.

Thus, the rule list is equivalent to the MDT.

Finally, in stage $j$, there are $\ell_j - d_j$ non-deferral leaves, each contributing exactly one antecedent. Summing over all stages yields the total number of antecedents:
\[
\sum_{j=1}^K (\ell_j - d_j).
\]

\end{proof}

\begin{corollary}[MDTs do not have wide rules]\label{mdtdepth}
    Let a rule list be defined over conjunctions of feature queries, where each satisfied rule must immediately output a leaf prediction. Then, for any MDT composed of $m$ defer trees each of depth $d$, every rule in the equivalent rule-list representation of the interpretable component has at most $d$ literals.
\end{corollary}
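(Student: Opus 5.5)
The natural route is to reuse the explicit construction from the proof of Theorem~\ref{thm:rule-list} and simply count literals in each antecedent. That construction concatenates $m$ sections, one per stage. In section $j$ it lists one rule ``if $A_\kappa$ then predict $y_\kappa$'' for each non-deferral leaf $\kappa$ of $T_j$, where $A_\kappa$ is the conjunction of split conditions along the root-to-leaf path of $\kappa$ \emph{within $T_j$ alone}. The point to stress is that $A_\kappa$ deliberately omits any conditions certifying that earlier stages deferred. The first-match semantics of the rule list handles this automatically: a sample reaching section $j$ has failed every antecedent in sections $1,\dots,j-1$, so it lies in the deferral region of $T_1,\dots,T_{j-1}$.

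The steps are as follows.

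\textbf{Equivalence.} We take the equivalence of this rule list with the interpretable component of the MDT directly from Theorem~\ref{thm:rule-list}. That argument is the inductive one: a sample reaching a non-deferral leaf of stage $j$ satisfies exactly one antecedent in section $j$ and none in earlier sections, while a sample deferred by every stage matches no rule and falls through to $B$. Since the corollary concerns only the interpretable component, the fallback $B$ never contributes a rule.

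\textbf{Counting literals.} The root-to-leaf path to $\kappa$ in a tree of depth at most $d$ passes through at most $d$ internal nodes. Each internal node contributes exactly one literal (a feature query or its negation), so $|A_\kappa|\le d$.

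\textbf{Conclusion.} Since every rule in the list is of the form $A_\kappa$ for some stage and leaf, the bound $d$ holds uniformly. It is independent of $m$, in contrast to the sum-of-depths growth noted for boosted ensembles.

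The main obstacle is to justify that the semantics assumed in the corollary, in which each satisfied rule immediately outputs a prediction, is exactly the ordered first-match semantics under which the shortened antecedents are correct. If rules were instead required to be mutually exclusive or order-independent, the antecedents for stage $j$ would need to conjoin negations of earlier paths, and their length could grow with $j$. I would therefore make the ordering explicit and point out that ``immediately output'' means that rules from earlier sections take precedence. Under that reading, no literals from earlier stages need to be added, and the depth-$d$ bound goes through.
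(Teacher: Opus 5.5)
Your proposal is correct and follows essentially the same route as the paper, whose one-line proof appeals to the construction in Theorem~\ref{thm:rule-list}: each antecedent is a single-tree root-to-leaf path and so has at most $d$ literals. Your explicit point that the ordered first-match semantics is what lets each stage's antecedents omit conditions certifying earlier deferrals spells out something the paper's argument leaves implicit.
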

\begin{proof}
    This follows direction from the construction discussed in  \autoref{thm:rule-list}, which uses paths in each tree as rules, each of which must be of depth at most $d$.
\end{proof}

\begin{proposition}[Majority-vote ensembles have wide rules]\label{appendix:complexity_proof}
Let a rule list be defined over any conjunction of feature queries in a dataset, such that the true branch must be classified immediately with a leaf. Then
there exist binary decision trees $T_1,\dots,T_m$, each of depth $d$, such that the classifier defined by their majority vote cannot be represented by an equivalent rule list unless some
rule queries $\Omega(md)$ literals.
\end{proposition}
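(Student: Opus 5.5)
Here is the plan. I will build an explicit family of trees whose majority vote is a function that forces long rules, and then lower-bound the length of the first satisfied rule. Two ingredients drive the construction. Each tree $T_t$, for $t=1,\dots,m$, is a path on its own disjoint block of $d$ binary features $x_{t,1},\dots,x_{t,d}$. The tree outputs $1$ iff all $d$ features in its block equal $1$, so $T_t(x)=\bigwedge_{s} x_{t,s}$ has depth $d$. To make the vote balanced I use $m=2k+1$ trees plus, if needed, padding trees that are constant. More cleanly, I take $m$ odd and let the majority threshold be $\lceil m/2\rceil$. The classifier is then $F(x)=\mathds{1}\{\#\{t: T_t(x)=1\}\ge \lceil m/2\rceil\}$.

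The key step is a certificate argument. In a rule list, for any input $x$, the first rule $r$ satisfied by $x$ outputs $F(x)$. Every input $x'$ that agrees with $x$ on the literals of $r$ also satisfies $r$. Such an $x'$ is therefore classified by $r$ or by an earlier rule. I would rather use the cleaner fact that the set of points satisfying $r$ but no earlier rule is nonempty only on points where $F$ equals the rule's label. So I pick an input with no slack and argue that the first rule it hits must be a certificate for $F$ on the whole subcube it defines. That requires care, because earlier rules can cut away parts of the subcube.

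To handle this, I reason about the last rule, or equivalently I use the standard observation that the subcube of any rule $r$ need only be constant on points not captured earlier. My workaround is adversarial: I choose $x$ to be a \emph{minimally positive} point. Exactly $\lceil m/2\rceil$ trees fire, and each non-firing tree has exactly one zero in its block. Flipping any single one of the $d\lceil m/2\rceil$ literals that make the firing trees fire, or the zero of any non-firing tree, changes $F$. I then claim the first rule $r$ hit by $x$ must mention all these sensitive coordinates. Suppose $r$ omits a sensitive coordinate $c$. Then $x^{\oplus c}$ satisfies $r$ and has $F(x^{\oplus c})\ne F(x)$, so $x^{\oplus c}$ must be caught by an earlier rule $r'$. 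That rule $r'$ is not satisfied by $x$, so $r'$ must contain the literal on $c$ with value $x^{\oplus c}_c$.

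The main obstacle is turning this into a contradiction, since earlier rules can legitimately capture neighbours. To resolve it, I would count over the whole family of minimally positive points rather than a single one, choosing $x$ to be the point whose first satisfied rule has the largest index among all sensitive inputs. With that choice, any neighbour $x^{\oplus c}$ is also sensitive, but it lies in the opposite class, so this doesn't close immediately. If the direct argument fails, I fall back on the known result that rule lists (decision lists) computing a function have some rule at least as long as the function's minimum certificate size. I would prove that by induction: the first rule's subcube is monochromatic for $F$, so it certifies $F$ there. Concretely, the first rule in the list, being satisfied by some input before anything else, is a certificate, since nothing precedes it. Restricting $F$ to the complement of that rule and recursing, some rule must certify some input of the restricted function.

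It then remains to show that every certificate for this threshold-of-ANDs function has size $\Omega(md)$, even on restrictions. A $1$-certificate must force $\lceil m/2\rceil$ ANDs true, which costs $d\lceil m/2\rceil$ literals. A $0$-certificate must force $\lfloor m/2\rfloor+1$ ANDs false, costing only $\Omega(m)$ literals. Because of that imbalance, I will make the $0$ side expensive too: I replace the conjunction in every other tree with a disjunction in negated form, so both sides require $\Omega(d)$ per tree. Alternatively, I note that a rule list must contain some positive-label rule, and the first positive rule is a $1$-certificate on the restricted domain. Showing that restriction by earlier zero-rules, each of which kills only a subcube, cannot make a short $1$-certificate suffice is the step I expect to be hardest. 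I would attack it by picking a positive input avoiding all earlier rules via a counting or dimension argument on subcubes.
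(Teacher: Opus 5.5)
\textbf{There is a genuine gap: the AND-tree family cannot give the bound, and neither of your repairs fixes it.} Your key observation is correct, and it is the one the paper relies on. The first rule with a satisfiable antecedent has nothing before it, so its whole subcube must be monochromatic for $F$, i.e.\ it is a certificate. The problem is the gadget. With AND trees the cheapest certificate is a $0$-certificate of only $\lfloor m/2\rfloor+1$ literals, so this argument yields only $\Omega(m)$.

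In fact the conclusion is false for your family. Take $m=2k+1$. List every rule $x_{t_1,s_1}=0\wedge\cdots\wedge x_{t_{k+1},s_{k+1}}=0\Rightarrow 0$ over distinct trees $t_1,\dots,t_{k+1}$, and end with a default $\Rightarrow 1$. Each such rule is monochromatic, every $0$-point is caught, and every remaining point has at least $k+1$ firing trees. So every rule has at most $\lceil m/2\rceil$ literals, independent of $d$.

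This refutes plan (b) directly. After these $0$-rules the first positive rule needs no literals at all, so no counting or dimension argument on subcubes can force it to be long. Plan (a) fails for the same structural reason. A certificate for the majority chooses \emph{which} trees to pin, and it pins the ones whose output is fixed by a single literal. For example, with $k+1$ AND trees and $k$ OR trees, ``one zero in each AND tree'' is a $0$-certificate of size $k+1$. Moreover, for $j=1,\dots,k+1$, alternate $0$-rules that place a zero in each of $k+2-j$ AND trees with $1$-rules that place a one in each of $k+1-j$ OR trees; the last $1$-rule is the empty default. This again gives a valid rule list of width $\lceil m/2\rceil$. ``Both sides cost $\Omega(d)$ per tree'' is not enough: you need every gadget to have no short certificate for either output value.

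The missing idea, which is the paper's construction, is to let each tree compute the XOR of its $d$ disjoint features; this is a depth-$d$ tree. Any conjunction that leaves some feature of a block unqueried leaves that block's parity free, independently across blocks. Suppose (for odd $m$) fewer than $\lceil m/2\rceil$ blocks are fully queried. Then setting all free parities to $1$, or all to $0$, produces both majority outcomes. Hence the first satisfiable rule, which cannot be the empty rule since $F$ is nonconstant, has at least $d\lceil m/2\rceil$ literals. With this substitution your first-rule argument finishes the proof immediately, and the sensitivity and restriction detours become unnecessary.
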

\begin{proof}
Let each tree compute an XOR over $d$ binary features, and suppose that the feature sets used by different trees are disjoint. Assume that all feature
value combinations are possible. For an XOR over $d$ features, no subset of features can determine the tree's predictions.

Now consider any rule in an equivalent rule list whose condition is satisfied by at least one input, and that assigns a label. Let us consider using that rule as the first rule. Since the rule must classify the affected points immediately, its conjunction must be sufficient to determine the majority-vote prediction for every completion of the unqueried features. 

To force the majority vote, the
rule must determine the outputs of at least $\lceil m/2 \rceil$ trees (for the even case, we assume ties can be broken arbitrarily). Because the trees use disjoint feature sets, determining the output of each such XOR tree requires querying all $d$ of its features. Hence, the rule must
contain at least
\[
d\left(\left\lceil \frac{m}{2}\right\rceil\right)
= \Omega(md)
\]
literals.

\end{proof}

As we show in the next theorem, the distance-based weighting rule admits an entropy-regularized variational interpretation. Among all distributions, the normalized weights best balance the expected distance to being deferred and the amount of deviation from uniform (according to $\gamma$).

\begin{proposition}[Gibbs form of distance-based weights]\label{gibbs}

For each training data point $i$, take $c_i := \log(1 + D_i)$ and let $u$ be the uniform distribution over the training data points.

The following objective (with $\gamma>0)$ has a unique minimizer (over all distributions $q$ on the training data points) given by $q_i \propto (1+D_i)^{-\gamma}$.

$$\mathcal{J}(q) = \mathbb{E}_q[c_i] + \frac{1}{\gamma}\mathrm{KL}(q\|u)$$

\end{proposition}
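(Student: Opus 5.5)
We prove this with the standard Gibbs variational principle: rewrite $\mathcal J$ as a single KL divergence to the candidate Gibbs distribution plus a constant, and then use Gibbs' inequality. Let $n$ be the number of training points, so $u_i = 1/n$. Define
\[
p_i := \frac{e^{-\gamma c_i}}{Z}, \qquad Z := \sum_{j=1}^n e^{-\gamma c_j}.
\]
Since $c_i = \log(1+D_i)$, we have $e^{-\gamma c_i} = (1+D_i)^{-\gamma}$, so $p_i \propto (1+D_i)^{-\gamma}$. This is exactly the claimed minimizer. We need $D_i \ge 0$, so that $c_i$ is finite and $Z \in (0,\infty)$; this holds because $D_i$ is a distance. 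We also note that deferred points (distance $0$) receive the largest weight.

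The key step is an exact identity. For any distribution $q$ on the training points, expand
\[
\mathrm{KL}(q\|p) = \sum_i q_i \log q_i - \sum_i q_i \log p_i = \sum_i q_i\log q_i + \gamma\,\mathbb E_q[c_i] + \log Z .
\]
Similarly, $\mathrm{KL}(q\|u) = \sum_i q_i \log q_i + \log n$. Substituting into $\mathcal J(q) = \mathbb E_q[c_i] + \frac{1}{\gamma}\mathrm{KL}(q\|u)$ and using $\gamma>0$ gives
\[
\mathcal J(q) = \frac{1}{\gamma}\,\mathrm{KL}(q\|p) + \frac{1}{\gamma}\bigl(\log n - \log Z\bigr).
\]
The second term does not depend on $q$. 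We adopt the convention $0\log 0 = 0$, so the identity holds on the whole simplex, including distributions with zero entries.

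To finish, apply Gibbs' inequality: $\mathrm{KL}(q\|p)\ge 0$, with equality if and only if $q=p$. This follows from strict convexity of $t\log t$ via Jensen's inequality, and it uses that $p$ has full support, since every $p_i>0$. Because $1/\gamma>0$, it follows that $\mathcal J(q)\ge \mathcal J(p)$, with equality only at $q=p$. So $p$ is the unique minimizer, and the minimum value is $\frac{1}{\gamma}\log(n/Z)$. The main point needing care is this uniqueness claim, specifically the boundary cases where some $q_i=0$. The KL-identity route handles these directly, which is why we prefer it to a Lagrange-multiplier derivation; that derivation would only certify an interior stationary point and would need a separate convexity argument to exclude boundary minimizers.

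Finally, we connect this to the weights in the paper. The normalized weights $w_i/\sum_j w_j$ with $\mu=0$ coincide with $p$, because deferred points have $D_i=0$ and weight $1=(1+0)^{-\gamma}$. For $\mu>0$ the normalized weights differ from $p$, so the correspondence holds only in the case $\mu=0$.
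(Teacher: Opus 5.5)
Your proof is correct, and it takes a different route from the paper's. The paper discards the constant $\frac{1}{\gamma}\log N$, forms a Lagrangian with a multiplier for $\sum_i q_i = 1$, and solves the stationarity condition $c_i + \frac{1}{\gamma}(\log q_i + 1) + \lambda = 0$ under the explicit assumption that every $q_i > 0$. This gives $q_i \propto e^{-\gamma c_i} = (1+D_i)^{-\gamma}$. The paper then invokes strict convexity of $\sum_i q_i c_i + \frac{1}{\gamma}\sum_i q_i \log q_i$ (a linear term plus strictly convex negative entropy) to show this interior critical point is the unique global minimizer on the simplex.

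You instead guess the Gibbs distribution $p$ and verify it through the exact identity $\mathcal J(q) = \frac{1}{\gamma}\mathrm{KL}(q\|p) + \frac{1}{\gamma}\log(n/Z)$, so Gibbs' inequality gives minimality and uniqueness in one step. Your route handles boundary points of the simplex directly, where $t\log t$ has infinite slope and the stationarity equation does not apply. It also gives the optimal value and the exact suboptimality gap $\mathcal J(q) - \mathcal J(p) = \frac{1}{\gamma}\mathrm{KL}(q\|p)$.

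The paper's route is derivational: it shows where the Gibbs form comes from rather than verifying a guess. The cost is that it needs the convexity step. The paper does supply that step, tersely, so your caveat about Lagrange derivations is accurate but does not expose a gap in the paper.

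Your closing remark goes beyond the proposition: the normalized algorithm weights coincide with $p$ only when $\mu = 0$ (taking $D_i = 0$ for deferred points). This is a correct qualification that the paper does not make.
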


\begin{proof}
    Plugging in definitions,

    $$\mathcal J(q)=\sum_{i=1}^N q_i c_i+\frac1\gamma \sum_{i=1}^N q_i \log\frac{q_i}{u_i}.$$

    Because $u_i=1/N$,

    $$\log\frac{q_i}{u_i}=\log q_i-\log(1/N)=\log q_i+\log N.$$

    This yields

    $$\mathcal J(q)=\sum_{i=1}^N q_i c_i+\frac1\gamma \sum_{i=1}^N q_i \log q_i+\frac1\gamma \sum_{i=1}^N q_i \log N = \sum_{i=1}^N q_i c_i+\frac1\gamma \sum_{i=1}^N q_i \log q_i+\frac1\gamma \log N.$$

    The third term is constant w.r.t $q$, so we just need to optimize

    $$\sum_{i=1}^N q_i c_i+\frac1\gamma \sum_{i=1}^N q_i \log q_i,$$ subject to $q$ being a probability distribution.

    We form the Lagrangian: 
    
    $$L(q,\lambda)=\sum_{i=1}^N q_i c_i+\frac1\gamma \sum_{i=1}^N q_i \log q_i+\lambda\left(\sum_{i=1}^N q_i-1\right).$$

    Because $\frac{d}{dq_i}(q_i\log q_i)=\log q_i+1$, we get $$\frac{\partial L}{\partial q_i}
    =
    c_i+\frac1\gamma(\log q_i+1)+\lambda.$$

    For an interior minimizer (assume no $q_i = 0$), this is 0.

    $$c_i+\frac1\gamma(\log q_i+1)+\lambda=0.$$

    Writing equivalent forms,

    $$c_i+\frac1\gamma(\log q_i+1)+\lambda=0.$$

    $$\log q_i = -\gamma c_i -1-\gamma\lambda.$$

    $$q_i = \exp(-1-\gamma\lambda)\exp(-\gamma c_i).$$

    Because $\exp(-1-\gamma\lambda)$ does not depend on $i$, we know $q_i \propto e^{-\gamma c_i}=e^{-\gamma c_i}=e^{-\gamma \log(1+D_i)}=(1+D_i)^{-\gamma}.$

    Therefore, $$q_i \propto (1+D_i)^{-\gamma}.$$

    Normalizing, as we know that $q$ is a probability distribution,

    $$q_i=\frac{(1+D_i)^{-\gamma}}{\sum_{j=1}^n (1+D_j)^{-\gamma}}.$$

    We have found the only critical point in the interior ($q_i > 0$). We can show that it is both a minimizer and unique by strict convexity. We have the sum of a linear function (which is convex) and the negative entropy component is strictly convex (with $0 \log 0 := 0$). Since $\gamma>0$, the sum is strictly convex. This suffices to show that the minimizer is unique and that it is an interior critical point.

\end{proof}

\begin{proposition}[Size of the uncompressed single tree representation]
Let an MDT chain be given by
\[
T_1 \to T_2 \to \cdots \to T_K \to W,
\]
where for each stage $j$, $T_j$ has $\ell_j$ leaves, of which $d_j$ are deferral leaves, and let $W$ be a fallback model (such as a single-tree representation of a black box) with $\ell_W$ leaves and no deferral leaves. Then the uncompressed single-tree representation of the chain has
\[
\sum_{j=1}^{K}
\left(\prod_{i=1}^{j-1} d_i\right)(\ell_j-d_j)
+
\left(\prod_{i=1}^{K} d_i\right)\ell_W
\]
leaves.
\end{proposition}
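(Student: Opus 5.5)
We prove the formula by induction on the number of stages $K$, after first fixing what ``uncompressed single-tree representation'' means. The chain $T_1 \to \cdots \to T_K \to W$ becomes one tree by grafting: every deferral leaf of $T_1$ is replaced by a full copy of the single-tree representation of the suffix chain $T_2 \to \cdots \to T_K \to W$. No splits are pruned and no infeasible branches are removed, since that is exactly what ``uncompressed'' rules out. Let $L(T_j \to \cdots \to T_K \to W)$ be the number of leaves of the grafted tree for the suffix chain starting at stage $j$. Grafting a tree with $m$ leaves in place of one leaf removes that leaf and adds $m$ new ones. This gives the recursion
\[
L(T_j \to \cdots \to T_K \to W) = (\ell_j - d_j) + d_j \, L(T_{j+1} \to \cdots \to T_K \to W).
\]
The non-deferral leaves of $T_j$ survive unchanged, and each of the $d_j$ deferral leaves is replaced by a disjoint copy of the suffix tree. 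The base case is $L(W) = \ell_W$, because $W$ has no deferral leaves.

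Next we unroll the recursion, or equivalently induct on $K$. For $K = 0$ the formula reduces to $\ell_W$, using the empty sum and the empty product. Assume the formula holds for the suffix chain of length $K-1$ starting at $T_2$. Then
\[
L = (\ell_1 - d_1) + d_1\left[\sum_{j=2}^{K}\Bigl(\prod_{i=2}^{j-1} d_i\Bigr)(\ell_j - d_j) + \Bigl(\prod_{i=2}^{K} d_i\Bigr)\ell_W\right].
\]
Distributing $d_1$ into each product gives $\prod_{i=1}^{j-1} d_i$ and $\prod_{i=1}^{K} d_i$. The leading term $(\ell_1 - d_1)$ is the $j = 1$ summand, since its prefix product is empty and equals $1$. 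This is exactly the claimed expression.

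The computation is routine; the main obstacle is pinning down the construction precisely. Two points need to be made explicit. First, the copies substituted at different deferral leaves are disjoint subtrees, so their leaves add rather than merge. Second, the count is purely combinatorial and does not depend on whether a grafted region is geometrically feasible. The same quantity has a direct reading: the leaves at nesting level $j$ correspond to tuples consisting of a deferral leaf from each of $T_1, \dots, T_{j-1}$ followed by a non-deferral leaf of $T_j$. That is a direct count of $\prod_{i<j} d_i \cdot (\ell_j - d_j)$ leaves, which confirms the formula without the recursion.
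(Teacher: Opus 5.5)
Your proof is correct and is essentially the paper's argument: the paper directly counts $\prod_{i=1}^{j-1} d_i$ copies of each $T_j$, each contributing $\ell_j - d_j$ final leaves, plus $\prod_{i=1}^{K} d_i$ copies of $W$, which is exactly your closing tuple count. Your suffix recursion $L_j = (\ell_j - d_j) + d_j L_{j+1}$ with $L_{K+1} = \ell_W$ (writing $L_j$ for the leaf count of the chain starting at $T_j$) packages the same count in nested form and makes the paper's ``continuing inductively'' step explicit.
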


\begin{proof}
A copy of $T_j$ appears only when the input has been deferred through all earlier stages $T_1,\dots,T_{j-1}$.

The tree $T_1$ appears once. Each deferral leaf of $T_1$ is replaced by a copy of $T_2$, so there are $d_1$ copies of $T_2$. Each deferral leaf of each copy of $T_2$ is replaced by a copy of $T_3$, so there are $d_1 d_2$ copies of $T_3$.

Continuing inductively, the number of copies of $T_j$ in the expanded tree is
\[
\prod_{i=1}^{j-1} d_i,
\]
with the convention that this product equals $1$ when $j=1$.

Each copy of $T_j$ contributes $\ell_j - d_j$ final (non-deferral) leaves, since its $d_j$ deferral leaves are replaced by the root of the next stage. Therefore, the total number of final leaves contributed by all copies of $T_j$ is
\[
\left(\prod_{i=1}^{j-1} d_i\right)(\ell_j - d_j).
\]

Summing over all stages $j=1,\dots,K$, the total number of final leaves contributed by the MDT stages is
\[
\sum_{j=1}^{K}
\left(\prod_{i=1}^{j-1} d_i\right)(\ell_j - d_j).
\]

If $d_K \neq 0$, the remaining deferral leaves of stage $K$ are replaced by copies of $W$. The number of copies of $T_K$ is $\prod_{i=1}^{K-1} d_i$, and each such copy has $d_K$ deferral leaves, so the total number of copies of $W$ is
\[
\left(\prod_{i=1}^{K-1} d_i\right)d_K
=
\prod_{i=1}^{K} d_i.
\]

Since $W$ has $\ell_W$ leaves and no deferral leaves, the total number of final leaves contributed by all copies of $W$ is
\[
\left(\prod_{i=1}^{K} d_i\right)\ell_W.
\]

Adding the disjoint contributions yields the total number of leaves in the uncompressed tree:
\[
\sum_{j=1}^{K}
\left(\prod_{i=1}^{j-1} d_i\right)(\ell_j - d_j)
+
\left(\prod_{i=1}^{K} d_i\right)\ell_W.
\]

\end{proof}

To exclude the contribution of $W$, one may set $d_K = 0$. Alternatively, one may set $\ell_W=1$; they are equivalent. 

\begin{proposition}[Online update for uncompressed subchain size]\label{prop:subchain}
Let an MDT subchain be given by
\[
T_{K_{\mathrm{lower}}} \to T_{K_{\mathrm{lower}}+1} \to \cdots \to T_{K_{\mathrm{upper}}} \to W,
\]
where for each stage $j$, $T_j$ has $\ell_j$ leaves, of which $d_j$ are deferral leaves, and let $W$ be a fallback model with $\ell_W$ leaves and no deferral leaves. Define $p_j := \ell_j - d_j$.

Then the number of leaves in the uncompressed single-tree representation of this subchain is
\[
G_W(a,b)
=
\sum_{j=a}^{b}
\left(\prod_{i=a}^{j-1} d_i\right)p_j
\;+\;
\left(\prod_{i=a}^{b} d_i\right)\ell_W,
\]
where $a = K_{\mathrm{lower}}$ and $b = K_{\mathrm{upper}}$.

Moreover, for any $b+1 \leq K_{\mathrm{upper}}$ (i.e. whenever stage $T_{b+1}$ exists with total leaf count $\ell_{b+1}$, deferral leaf count $d_{b+1}$, and $W'$ is the new fallback model), this quantity satisfies the following recurrence:
\begin{align*}
\text{(Top-down)}\quad
G_{W'}(a,b+1)
&=
G_W(a,b)
+
\left(\prod_{i=a}^{b} d_i\right)
\bigl[p_{b+1} + d_{b+1}\ell_{W'} - \ell_W\bigr],
\end{align*}
where $W'$ denotes the retrained fallback model after extending the chain.
\end{proposition}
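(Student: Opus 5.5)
The closed form for $G_W(a,b)$ follows from the preceding proposition on the size of the uncompressed single-tree representation, applied to a re-indexed chain. Relabel stages $a,\dots,b$ as $1,\dots,K$ with $K=b-a+1$. That proposition then gives
\[
\sum_{j=a}^{b}\Bigl(\prod_{i=a}^{j-1} d_i\Bigr)p_j+\Bigl(\prod_{i=a}^{b} d_i\Bigr)\ell_W .
\]
Its proof only counts copies of each stage produced by recursive substitution into deferral leaves, so it does not depend on the subchain starting at stage $1$. I will state this explicitly, together with the convention that an empty product equals $1$.

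For the recurrence, I will write out $G_{W'}(a,b+1)$ from the closed form and split off the last stage of the sum:
\[
G_{W'}(a,b+1)=\sum_{j=a}^{b}\Bigl(\prod_{i=a}^{j-1} d_i\Bigr)p_j+\Bigl(\prod_{i=a}^{b} d_i\Bigr)p_{b+1}+\Bigl(\prod_{i=a}^{b} d_i\Bigr)d_{b+1}\ell_{W'} .
\]
The first sum equals $G_W(a,b)-\bigl(\prod_{i=a}^{b} d_i\bigr)\ell_W$. Substituting this and factoring out the common $\prod_{i=a}^{b} d_i$ yields exactly
\[
G_W(a,b)+\Bigl(\prod_{i=a}^{b} d_i\Bigr)\bigl[p_{b+1}+d_{b+1}\ell_{W'}-\ell_W\bigr].
\]

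The interpretation is that the old fallback copies are removed and replaced by copies of $T_{b+1}$. Each copy of $T_{b+1}$ contributes $p_{b+1}$ final leaves plus $d_{b+1}$ copies of $W'$.

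The only real subtlety is bookkeeping. $G_W$ depends on the fallback $W$, and in the top-down algorithm $W$ is retrained, so $\ell_W$ and $\ell_{W'}$ differ. The recurrence must therefore subtract the old fallback contribution and not just append terms. I will also point out that the hypothesis ``$b+1\le K_{\mathrm{upper}}$'' is really meant as ``stage $b+1$ exists''. The algebra is valid whenever $T_{b+1}$ is defined, and the degenerate case $d_i=0$ for some $i$ is handled automatically, since the products then vanish.
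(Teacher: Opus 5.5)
Your proposal is correct and matches the paper's proof essentially step for step. The closed form comes from the same copy-counting argument as the full-chain proposition: you obtain it by reindexing, while the paper repeats the count over stages $a,\dots,b$. The recurrence is then obtained, as in the paper, by splitting off the $j=b+1$ term, substituting $G_W(a,b)-\bigl(\prod_{i=a}^{b} d_i\bigr)\ell_W$ for the remaining sum, and factoring out $\prod_{i=a}^{b} d_i$. Your remark that the condition $b+1\le K_{\mathrm{upper}}$ should be read as ``stage $T_{b+1}$ exists'' is a fair correction, since the statement also sets $b=K_{\mathrm{upper}}$.
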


\begin{proof}
The expression for $G_W(a,b)$ follows from the same counting argument as for the full chain. A copy of $T_j$ appears only when the input has been deferred through all earlier stages $T_a,\dots,T_{j-1}$, yielding $\prod_{i=a}^{j-1} d_i$ copies. Each copy contributes $p_j = \ell_j - d_j$ final leaves, since deferral leaves are replaced by the next stage. The final deferrals from stage $b$ produce $\prod_{i=a}^{b} d_i$ copies of $W$, each contributing $\ell_W$ leaves.

For the recurrence, extending the chain by appending $T_{b+1}$ and retraining the fallback model to $W'$ yields
\[
G_{W'}(a,b+1)
=
\sum_{j=a}^{b+1}
\left(\prod_{i=a}^{j-1} d_i\right)p_j
+
\left(\prod_{i=a}^{b+1} d_i\right)\ell_{W'}.
\]
Separating the $j=b+1$ term,
\[
=
\sum_{j=a}^{b}
\left(\prod_{i=a}^{j-1} d_i\right)p_j
+
\left(\prod_{i=a}^{b} d_i\right)p_{b+1}
+
\left(\prod_{i=a}^{b+1} d_i\right)\ell_{W'}.
\]
Using the definition of $G_W(a,b)$,
\[
G_W(a,b)
=
\sum_{j=a}^{b}
\left(\prod_{i=a}^{j-1} d_i\right)p_j
+
\left(\prod_{i=a}^{b} d_i\right)\ell_W,
\]
we subtract and add appropriately to obtain
\[
G_{W'}(a,b+1)
=
G_W(a,b)
-
\left(\prod_{i=a}^{b} d_i\right)\ell_W
+
\left(\prod_{i=a}^{b} d_i\right)p_{b+1}
+
\left(\prod_{i=a}^{b+1} d_i\right)\ell_{W'}.
\]
Factoring $\prod_{i=a}^{b} d_i$ and using
\[
\prod_{i=a}^{b+1} d_i = \left(\prod_{i=a}^{b} d_i\right)d_{b+1},
\]
gives
\[
G_{W'}(a,b+1)
=
G_W(a,b)
+
\left(\prod_{i=a}^{b} d_i\right)
\bigl[p_{b+1} + d_{b+1}\ell_{W'} - \ell_W\bigr].
\]

This establishes the desired recurrence.
\end{proof}

In our algorithm, taking $\ell_W=\ell_{W'}=1$ (counting the leaves of deferring to the black box, but not the black box complexity) is helpful in online updates for early stopping.

At each stage, we update this quantity using the top-down recurrence, treating each current deferral path as contributing one leaf. This yields an incremental update of the form
\[
G \leftarrow G + \left(\prod_{i=a}^{b} d_i\right)(\ell_{b+1}-1),
\]
which can be computed efficiently using running quantities that are already maintained for other early-stopping criteria based on the sequential model complexity. If this estimate exceeds a specified leaf budget, we terminate training early. 

%% file: sections/appendix/extra-compression-details.tex
\subsection{Subregion Representation}

A subregion \(s\) is represented by two kinds of constraints. For each numerical or ordinal feature \(f\), the subregion may store an interval
\[
    x_f \in (\ell_f,u_f],
\]
where absent bounds are interpreted as \((-\infty,\infty]\). For each one-hot categorical group \(g\) (of size $m_g$), the subregion stores a vector
\[
    a_g \in \{-1,0,1\}^{m_g},
\]
where \(a_{gh}=1\) means category \(h\) is forced to be active, \(a_{gh}=-1\) means category \(h\) is ruled out, and \(a_{gh}=0\) means its status is still unknown. We use these constraints to determine whether a split is already implied by the current subregion, and to refine the subregion when a new branch literal is added. For instance, we know that no category can be ruled out, and that there cannot be more than 1 active category.

\subsection{Basic Methods}

\paragraph{\textsc{AddLiteral}.}
The \textsc{AddLiteral} routine refines a subregion \(s\) by incorporating a new branching constraint \(\ell = (f,\nu,b)\). This is done in two steps. For continuous or ordinal variables, \textsc{TightenNumericalBounds} updates the interval for feature \(f\) by intersecting it with the constraint induced by the branch. For categorical variables, \textsc{DeduceCategoricalImplications} updates any categorical group constraints implied by this literal. Together, these operations produce a new subregion \(s'\) that represents the intersection of the original region with the branch condition.

\begin{algorithm}[H]
\caption{\textsc{AddLiteral}$(s,\ell)$}
\label{alg:add_literal}
\begin{algorithmic}[1]
\REQUIRE Subregion \(s\), literal \(\ell=(f,\nu,b)\), where \(b=\texttt{true}\) denotes \(f\le \nu\)
\ENSURE Refined subregion \(s'\)
\STATE \(s' \leftarrow \textsc{TightenNumericalBounds}(s,\ell)\)
\STATE \(s' \leftarrow \textsc{DeduceCategoricalImplications}(s',\ell)\)
\STATE \textbf{return} \(s'\)
\end{algorithmic}
\end{algorithm}

\paragraph{\textsc{TightenNumericalBounds}.}
This routine updates the interval constraint for a numerical (or ordinal) feature \(f\). If the literal corresponds to taking the True branch (\(f \le \nu\)), the upper bound is tightened to \(\min\{u_f,\nu\}\). If the literal corresponds to the False branch (\(f > \nu\)), the lower bound is tightened to \(\max\{\ell_f,\nu\}\). 

\begin{algorithm}[H]
\caption{\textsc{TightenNumericalBounds}$(s,(f,\nu,b))$}
\label{alg:tighten_numerical_bounds}
\begin{algorithmic}[1]
\REQUIRE Subregion \(s\), literal \((f,\nu,b)\)
\ENSURE Subregion with updated numerical interval for \(f\)
\STATE \(s' \leftarrow \textsc{Copy}(s)\)
\STATE \((\ell_f,u_f] \leftarrow s'.\mathrm{bounds}(f)\), using \((-\infty,\infty]\) if absent
\IF{\(b=\texttt{true}\)}
    \STATE \(u_f \leftarrow \min\{u_f,\nu\}\)
\ELSE
    \STATE \(\ell_f \leftarrow \max\{\ell_f,\nu\}\)
\ENDIF
\STATE \(s'.\mathrm{bounds}(f) \leftarrow (\ell_f,u_f]\)
\STATE \textbf{return} \(s'\)
\end{algorithmic}
\end{algorithm}

\paragraph{\textsc{DeduceCategoricalImplications}.}
This routine handles the case where features are part of a one-hot categorical group. For binary decision trees, split thresholds of the form \(f \le \nu\) force the feature to be \(0\), while \(f > \nu\) forces it to be \(1\). We encode this by setting the corresponding entry in the group vector to \(-1\) (inactive) or \(1\) (active), respectively.

 Multiple categories may be inactive (ruled out, set to \(-1\)), but at most one category can be active (set to \(1\)). If any entry is set to \(1\), all other entries in the group must be set to \(-1\). Conversely, if all but one entry are \(-1\), the remaining entry must be \(1\), since exactly one category must be active.

\begin{algorithm}[H]
\caption{\textsc{DeduceCategoricalImplications}$(s,(f,\nu,b))$}
\label{alg:deduce_categorical_implications}
\begin{algorithmic}[1]
\REQUIRE Subregion \(s\), literal \((f,\nu,b)\)
\ENSURE Subregion with updated categorical constraints
\STATE \(s' \leftarrow \textsc{Copy}(s)\)
\IF{\(f\) is not a one-hot categorical feature}
    \STATE \textbf{return} \(s'\)
\ENDIF
\IF{\(\nu\) is not a binary split threshold}
    \STATE \textbf{return} \(s'\)
\ENDIF
\STATE Let \(g\) be the categorical group containing feature \(f\)
\STATE Let \(h\) be the position of feature \(f\) within group \(g\)
\STATE Initialize \(a_g \in \{-1,0,1\}^{m_g}\) to all zeros if absent
\IF{\(b=\texttt{true}\)}
    \STATE \(a_{gh} \leftarrow -1\) \COMMENT{\(f\le \nu\) means this one-hot feature is inactive, consider $\nu=\frac{1}{2}$}
\ELSE
    \STATE \(a_{gh} \leftarrow 1\) \COMMENT{\(f>\nu\) means this one-hot feature is active, consider $\nu=\frac{1}{2}$}
\ENDIF
\IF{there exists \(h\) such that \(a_{gh}=1\)}
    \STATE Set \(a_{gr}\leftarrow -1\) for all \(r\neq h\)
\ELSIF{exactly one entry \(h\) has \(a_{gh}=0\)}
    \STATE \(a_{gh}\leftarrow 1\)
\ENDIF
\STATE \textbf{return} \(s'\)
\end{algorithmic}
\end{algorithm}

\paragraph{\textsc{ImpliesTrue}.}
The routine \textsc{ImpliesTrue} determines whether a split condition \(f \le \nu\) is guaranteed to hold for all possible points in the subregion \(s\). For numerical features, this occurs when the upper bound of the interval satisfies \(u_f \le \nu\), meaning every feasible point lies on the True side. For categorical features, \(f \le \nu\) corresponds to the feature being \(0\). Thus, if the categorical status is \(-1\) (meaning the feature is forced to be \(0\)), the condition is always true. Otherwise, it is either always forced to be $1$, or we do not know, so the outcome is not implied, and the routine returns false.

\begin{algorithm}[H]
\caption{\textsc{ImpliesTrue}$(s,f,\nu)$}
\label{alg:implies_true}
\begin{algorithmic}[1]
\REQUIRE Subregion \(s\), split \(f\le \nu\)
\ENSURE Whether any hypothetical point in \(s\) satisfies \(f\le \nu\)
\STATE \((\ell_f,u_f] \leftarrow s.\mathrm{bounds}(f)\), using \((-\infty,\infty]\) if absent
\IF{\(u_f \le \nu\)}
    \STATE \textbf{return} \texttt{true}
\ENDIF
\IF{\(f\) is a one-hot categorical feature and \(\nu\) is a binary split threshold}
    \STATE Let \(a_{gh}\) be the status of feature \(f\) in its categorical group
    \IF{\(a_{gh}=-1\)}
        \STATE \textbf{return} \texttt{true}
    \ENDIF
\ENDIF
\STATE \textbf{return} \texttt{false}
\end{algorithmic}
\end{algorithm}

\paragraph{\textsc{ImpliesFalse}.}
The routine \textsc{ImpliesFalse} determines whether a split condition \(f \le \nu\) is guaranteed to fail for all possible points in the subregion \(s\). For numerical features, this occurs when the lower bound satisfies \(\ell_f \ge \nu\), meaning all feasible points lie strictly on the False side. For categorical features, \(f \le \nu\) is always false precisely when the feature equals \(1\). 

\begin{algorithm}[H]
\caption{\textsc{ImpliesFalse}$(s,f,\nu)$}
\label{alg:implies_false}
\begin{algorithmic}[1]
\REQUIRE Subregion \(s\), split \(f\le \nu\)
\ENSURE Whether any hypothetical point in \(s\) satisfies \(f>\nu\)
\STATE \((\ell_f,u_f] \leftarrow s.\mathrm{bounds}(f)\), using \((-\infty,\infty]\) if absent
\IF{\(\ell_f \ge \nu\)}
    \STATE \textbf{return} \texttt{true}
\ENDIF
\IF{\(f\) is a one-hot categorical feature and \(\nu\) is a binary split threshold}
    \STATE Let \(a_{gh}\) be the status of feature \(f\) in its categorical group
    \IF{\(a_{gh}=1\)}
        \STATE \textbf{return} \texttt{true}
    \ENDIF
\ENDIF
\STATE \textbf{return} \texttt{false}
\end{algorithmic}
\end{algorithm}

\subsection{Compression into a Single Defer Tree}
A compressed single-tree representation can substantially reduce root-to-leaf path lengths by specializing later-stage trees to the particular subregion in which they are evaluated. For example, a later-stage split such as \( \texttt{age} \le 23 \) may help distinguish inputs in subregions with \(18 < \texttt{age} \le 25\) and \(\texttt{age} > 25\), but is redundant for any path already restricted to \(\texttt{age} \le 18\); when unrolling the MDT into a single tree, such splits can be pruned along the branches where they have no effect. On a high level, we will do this by putting the tree of stage $i+1$ in the defer leaves of stage $i$, simplifying it, and repeating once we reach the new defer leaves.

Algorithm \ref{alg:build_compressed_single_tree_subroutine} constructs a compressed \emph{single-tree representation} of the full MDT by recursively expanding the staged decision process into one tree.

\begin{algorithm}[H]
\caption{\textsc{BuildCompressedSingleTree}$(v,k,s,R_1,\dots,R_K)$}
\label{alg:build_compressed_single_tree_subroutine}
\begin{algorithmic}[1]
\REQUIRE Node $v$, stage index $k$, subregion $s$, stage roots $R_1,\dots,R_K$
\ENSURE Compressed subtree rooted at $v$ under subregion $s$
\IF{$v$ is a leaf}
    \IF{$v.action \neq \texttt{defer}$}
        \STATE \textbf{return} leaf with action $v.action$
    \ENDIF
    \STATE $k' \leftarrow k+1$
    \IF{$k' \le K$}
        \STATE \textbf{return} $\textsc{BuildCompressedSingleTree}(R_{k'},k',s,R_1,\dots,R_K)$
    \ENDIF
    \STATE \textbf{return} leaf with action \texttt{black\_box}
    
\ENDIF
\STATE $f \leftarrow v.feature$
\STATE $\nu \leftarrow v.threshold$
\IF{$\textsc{ImpliesTrue}(s,f,\nu)$}
    \STATE \textbf{return} $\textsc{BuildCompressedSingleTree}(v.left,k,s,R_1,\dots,R_K)$
\ENDIF
\IF{$\textsc{ImpliesFalse}(s,f,\nu)$}
    \STATE \textbf{return} $\textsc{BuildCompressedSingleTree}(v.right,k,s,R_1,\dots,R_K)$
\ENDIF
\STATE $s_L \leftarrow \textsc{AddLiteral}(s,(f,\nu,\texttt{true}))$
\STATE $s_R \leftarrow \textsc{AddLiteral}(s,(f,\nu,\texttt{false}))$
\STATE $u_L \leftarrow \textsc{BuildCompressedSingleTree}(v.left,k,s_L,R_1,\dots,R_K)$
\STATE $u_R \leftarrow \textsc{BuildCompressedSingleTree}(v.right,k,s_R,R_1,\dots,R_K)$
\STATE \textbf{return} internal node with split $(f,\nu)$, left child $u_L$, right child $u_R$
\end{algorithmic}
\end{algorithm}

The procedure takes as input a node $v$ from stage $k$, along with a subregion $s$ describing the set of inputs that may reach this node. The subregion $s$ encodes all constraints accumulated along the path taken so far, potentially across multiple stages.

If $v$ is a leaf, there are two cases. If the leaf corresponds to a terminal prediction (i.e., not \texttt{defer}), then the prediction is returned directly. If the leaf corresponds to \texttt{defer}, then we transition to the next stage: we recurse on the root of stage $k+1$ with the same subregion $s$. If no further stages remain, we return a \texttt{black\_box} leaf, representing the final fallback model.

If $v$ is an internal node with split $(f,\nu)$, we again leverage the subregion $s$ to simplify the structure. If $s$ provably implies that all possible inputs would satisfy the split (i.e., always take the True branch), then we recurse only on the left child. Similarly, if $s$ implies the False branch, we recurse only on the right child. In either case, the split can be removed because it does not affect any possible point in the subregion. 

Otherwise, the split is ambiguous under $s$, and both branches must be explored. We construct refined subregions $s_L$ and $s_R$ by adding the corresponding literal to $s$ (whether we are supposing we take the True or False branch), thereby capturing the additional constraint imposed by each branch. The algorithm then recursively builds the left and right subtrees under these refined subregions and returns a new internal node with these children.

\subsection{Sequential Compression}

In this section, we show how to (possibly) further compress a sequential representation of the MDT. Because the points outside the defer region are downweighted, it is somewhat unlikely that there exist splits in the MDT that provably do not separate any possible pair of points in the defer region; however, it is still possible. We note that compression of the sequential representation is exactly the compression done to reduce the number of literals in the rules of the rule list representation.

Algorithm \ref{alg:simplify_dag_tree} simplifies a tree relative to a set of possible incoming subregions $\mathcal{S}$ (i.e., multiple defer leaves may lead to it, not just one). Intuitively, compared to constructing a single tree from the MDT, we must evaluate each split with respect to all subregions simultaneously, rather than a single input region.

\begin{algorithm}[H]
\caption{\textsc{SimplifySequential}$(v,\mathcal{S})$}
\label{alg:simplify_dag_tree}
\begin{algorithmic}[1]
\REQUIRE Node $v$, incoming set of subregions $\mathcal{S}$
\ENSURE Compressed subtree rooted at $v$
\IF{$v$ is a leaf}
    \STATE \textbf{return} leaf with action $v.action$
\ENDIF

\STATE $f \leftarrow v.feature$
\STATE $\tau \leftarrow v.threshold$

\STATE $all\_true \leftarrow \texttt{true}$
\STATE $all\_false \leftarrow \texttt{true}$

\FORALL{$s \in \mathcal{S}$}
    \IF{not $\textsc{ImpliesTrue}(s,f,\tau)$}
        \STATE $all\_true \leftarrow \texttt{false}$
    \ENDIF
    \IF{not $\textsc{ImpliesFalse}(s,f,\tau)$}
        \STATE $all\_false \leftarrow \texttt{false}$
    \ENDIF
\ENDFOR

\IF{$all\_true$}
    \STATE \textbf{return} $\textsc{SimplifySequential}(v.left,\mathcal{S})$
\ENDIF

\IF{$all\_false$}
    \STATE \textbf{return} $\textsc{SimplifySequential}(v.right,\mathcal{S})$
\ENDIF

\STATE $\mathcal{S}_L \leftarrow [\ ]$
\STATE $\mathcal{S}_R \leftarrow [\ ]$

\FORALL{$s \in \mathcal{S}$}
    \IF{not $\textsc{ImpliesFalse}(s,f,\tau)$}
        \STATE $s' \leftarrow \textsc{TightenNumericalBounds}(s,(f,\tau,\texttt{true}))$
        \STATE $s' \leftarrow \textsc{DeduceCategoricalImplications}(s',(f,\tau,\texttt{true}))$
        \STATE append $s'$ to $\mathcal{S}_L$
    \ENDIF
    \IF{not $\textsc{ImpliesTrue}(s,f,\tau)$}
        \STATE $s' \leftarrow \textsc{TightenNumericalBounds}(s,(f,\tau,\texttt{false}))$
        \STATE $s' \leftarrow \textsc{DeduceCategoricalImplications}(s',(f,\tau,\texttt{false}))$
        \STATE append $s'$ to $\mathcal{S}_R$
    \ENDIF
\ENDFOR

\STATE $u_L \leftarrow \textsc{SimplifySequential}(v.left,\mathcal{S}_L)$
\STATE $u_R \leftarrow \textsc{SimplifySequential}(v.right,\mathcal{S}_R)$

\STATE \textbf{return} internal node with split $(f,\tau)$, left child $u_L$, right child $u_R$
\end{algorithmic}
\end{algorithm}

The procedure is recursive and begins at the root node. For each subregion $s \in \mathcal{S}$, we determine whether the split always routes every possible input consistent with $s$ (not just training samples, but any hypothetical point that satisfies the subregion constraints) to the True branch, the False branch, or whether the outcome is ambiguous (lines 8--15). If and only if every subregion is guaranteed to take the True branch, then $\textsc{all\_true}$ is set to True. Similarly, if and only if every subregion is guaranteed to take the False branch, then $\textsc{all\_false}$ is set to True. In either of these cases, the split can be pruned, and we recurse on the corresponding child. Otherwise, the split cannot be removed, and we must recurse on both branches.

In lines 24--34, we construct two (potentially overlapping) collections of subregions before recursing:
\[
\mathcal{S}_L
=
\left\{
\, s^{\le} \;:\;
s \in \mathcal{S},\;
s \text{ does not provably imply } f > t,\;
s^{\le} \text{ is } s \text{ refined with } f \le t
\right\},
\]
\[
\mathcal{S}_R
=
\left\{
\, s^{>} \;:\;
s \in \mathcal{S},\;
s \text{ does not provably imply } f \le t,\;
s^{>} \text{ is } s \text{ refined with } f > t
\right\}.
\]

Each collection is formed by selecting subregions that are compatible with the corresponding branch and refining them with the constraint induced by the split. Subregions that already contradict a branch are excluded from that side.

These two collections are not disjoint. Any subregion $s$ that does not imply either outcome will appear in both sets (after being refined by the respective branch on which we recurse).

Algorithm \ref{alg:simplify_dag_tree} simplifies a tree as much as we provably can in this traversal when given a set of subregions that the tree will be used on. However, we still need to shrink this set of subregions before simplifying the next tree in the MDT. 

\begin{algorithm}[H]
\caption{Build compressed stage trees}
\label{alg:build_compressed_dag_stage_trees}
\begin{algorithmic}[1]
\REQUIRE Stage roots $R_1,\dots,R_K$
\ENSURE Compressed stage roots $\widetilde{R}_1,\dots,\widetilde{R}_K$
\STATE $\mathcal{S} \leftarrow [\ ]$
\STATE $\widetilde{\mathcal{R}} \leftarrow [\ ]$
\FOR{$k=1,\dots,K$}
    \STATE $\widetilde{R}_k \leftarrow \textsc{SimplifySequential}(R_k,\mathcal{S})$
    \STATE append $\widetilde{R}_k$ to $\widetilde{\mathcal{R}}$
    \STATE $\mathcal{P} \leftarrow \textsc{CollectPathsToAction}(\widetilde{R}_k,\texttt{defer})$
    
    \STATE $\mathcal{S} \leftarrow \textsc{ExtendSubregionsWithFeasiblePaths}(\mathcal{S},\mathcal{P})$
\ENDFOR
\STATE \textbf{return} $\widetilde{\mathcal{R}}$
\end{algorithmic}
\end{algorithm}
Algorithm \ref{alg:build_compressed_dag_stage_trees} performs this update iteratively across stages. At stage $k$, after simplifying the tree $R_k$ into $\widetilde{R}_k$, we identify all root-to-leaf paths that result in a \texttt{defer} decision. Each such path $p \in \mathcal{P}$ corresponds to a conjunction of literals describing a region of the input space that will be deferred by the current stage.
To construct the subregions for the next stage, we combine the incoming subregions $\mathcal{S}$ with these defer paths. Concretely, Algorithm \ref{alg:extend_states_with_feasible_paths} forms a Cartesian product between $\mathcal{S}$ and $\mathcal{P}$: for each subregion $s \in \mathcal{S}$ and each defer path $p \in \mathcal{P}$, we attempt to extend $s$ by sequentially incorporating the literals along $p$. This process can be viewed as intersecting the region represented by $s$ with the region defined by the path $p$. By adding one literal at a time, we allow for early termination.  A contradiction occurs if a literal violates the existing numerical bounds of the subregion or is incompatible with previously deduced categorical constraints. If any such violation occurs, the extended subregion is discarded. Otherwise, we tighten the numerical bounds and propagate any implied categorical constraints, producing a refined subregion $c$.

The resulting set $\mathcal{S}_{next}$ therefore consists precisely of those subregions that are simultaneously:
(i) reachable under the previous stages (captured by $\mathcal{S}$), and
(ii) routed to deferral by the current stage (captured by $\mathcal{P}$).

\begin{algorithm}[H]
\caption{\textsc{ExtendSubregionsWithFeasiblePaths}$(\mathcal{S},\mathcal{P})$}
\label{alg:extend_states_with_feasible_paths}
\begin{algorithmic}[1]
\REQUIRE Incoming subregions $\mathcal{S}$, defer paths $\mathcal{P}$
\ENSURE Next-stage incoming subregions
\STATE $\mathcal{S}_{next} \leftarrow [\ ]$
\FORALL{$s \in \mathcal{S}$}
    \FORALL{$p \in \mathcal{P}$}
        \STATE $c \leftarrow s$
        \STATE $feasible \leftarrow \texttt{true}$
        \FORALL{literal $\ell \in p$}
            \IF{$\textsc{ViolatesNumericBounds}(c,\ell)$}
                \STATE $feasible \leftarrow \texttt{false}$
                \STATE \textbf{break}
            \ENDIF
                \IF{$\textsc{ViolatesCategoricalConstraints}(c,\ell)$}
                    \STATE $feasible \leftarrow \texttt{false}$
                    \STATE \textbf{break}
                \ENDIF
                \STATE $c \leftarrow \textsc{TightenNumericalBounds}(c,\ell)$
                \STATE $c \leftarrow \textsc{DeduceCategoricalImplications}(c,\ell)$
        \ENDFOR
        \IF{$feasible$}
            \STATE append $c$ to $\mathcal{S}_{next}$
        \ENDIF
    \ENDFOR
\ENDFOR
\STATE \textbf{return} $\mathcal{S}_{next}$
\end{algorithmic}
\end{algorithm}

\subsection{Trivial Extensions}
After recursively simplifying the left and right subtrees of a split, we may encounter a situation where both children reduce to identical leaf nodes with the same action. In this case, the split no longer affects the prediction for any input and can be safely removed. We refer to such splits as trivial extensions and perform this pruning as a post-order traversal at the end of any compression scheme.

\begin{algorithm}[H]
\caption{\textsc{PruneTrivialExtensions}$(v)$}
\label{alg:prune_trivial_extensions}
\begin{algorithmic}[1]
\REQUIRE Node \(v\)
\ENSURE Tree rooted at \(v\) with trivial extensions removed
\IF{\(v\) is a leaf}
    \STATE \textbf{return} leaf with action \(v.action\)
\ENDIF

\STATE \(u_L \leftarrow \textsc{PruneTrivialExtensions}(v.left)\)
\STATE \(u_R \leftarrow \textsc{PruneTrivialExtensions}(v.right)\)

\IF{\(u_L\) and \(u_R\) are leaves and \(u_L.action = u_R.action\)}
    \STATE \textbf{return} leaf with action \(u_L.action\)
\ENDIF

\STATE \textbf{return} internal node with split \((v.feature,v.threshold)\), left child \(u_L\), right child \(u_R\)
\end{algorithmic}
\end{algorithm}

%% file: sections/appendix/geometry.tex
Each feasible subregion $s \in \mathcal{S}_{k-1}$ stores the constraints induced by the current deferred region. Let $\mathcal{J}_{\mathrm{num}}(s)$ denote the set of continuous (or ordinal) features that have been constrained in subregion $s$, and let $\mathcal{J}_{\mathrm{cat}}(s)$ denote the set of categorical feature groups that have been constrained. For each $j \in \mathcal{J}_{\mathrm{num}}(s)$, let $(\ell_{sj}, u_{sj}]$ denote the interval induced by subregion $s$. For each $j \in \mathcal{J}_{\mathrm{cat}}(s)$, let $A_{sj}$ denote the set of categories consistent (allowed by) with subregion $s$. Additional details on handling categorical variables are provided in \autoref{appendix:compression}. At a high level, for a one-hot encoded categorical group, a subregion may rule out a category via a False split, leave it unconstrained, or identify it as active either through a True split or by eliminating all other categories. The categories which are consistent with the subregion are exactly the categories that are not ruled out by the subregion.

We first map each continuous or ordinal feature into its empirical quantile coordinate so that distances across coordinates are on a common scale. Since features may be discrete, we use the mid-rank empirical CDF, assigning each value $v$ the quantile
\[
z_j(v)
=
\frac{\#\{X_j < v\} + \tfrac{1}{2}\#\{X_j = v\}}{N}.
\]

Because interval endpoints may not coincide with observed feature values (e.g., midpoints), we evaluate endpoint quantiles by snapping to the nearest realized value that remains within the interval. Specifically, define
\[
\underline{z}_j(t)
=
\inf\{ z_j(v) : v > t,\ v \in \mathrm{supp}(X_j)\},
\qquad
\overline{z}_j(t)
=
\sup\{ z_j(v) : v \le t,\ v \in \mathrm{supp}(X_j)\}.
\]

Then, for each training point $x_i$ that is not deferred, define its distance to the current deferred region by
\[
\textrm{defer\_dist}_{T_\textrm{{1:k-1}}}(x_i)
=
\min_{s \in \mathcal{S}_{k-1}}
\left(
\sum_{j \in \mathcal{J}_{\mathrm{num}}(s)} d_j^{(s)}(x_i)
+
\sum_{j \in \mathcal{J}_{\mathrm{cat}}(s)} d_j^{(s)}(x_i)
\right).
\]

For continuous (or ordinal) features $j \in \mathcal{J}_{\mathrm{num}}(s)$, we measure distance in quantile space as
\[
d_j^{(s)}(x_i)
=
\max\{\underline{z}_j(\ell_{sj}) - z_j(x_{ij}), 0\}
+
\max\{z_j(x_{ij}) - \overline{z}_j(u_{sj}), 0\}.
\]

For categorical features $j \in \mathcal{J}_{\mathrm{cat}}(s)$, let $c_{ij}$ denote the category of point $x_i$ on feature $j$. We define
\[
d_j^{(s)}(x_i)
=
\begin{cases}
0, & c_{ij} \in A_{sj},\\[4pt]
\tfrac{1}{2}, & c_{ij} \notin A_{sj}.
\end{cases}
\]

This choice is consistent with treating a binary feature as continuous under the mid-rank empirical CDF, where a mismatch incurs a quantile distance of \(\tfrac{1}{2}\); we extend this to general categorical features.

Unconstrained features (i.e., $j \notin \mathcal{J}_{\mathrm{num}}(s) \cup \mathcal{J}_{\mathrm{cat}}(s)$) contribute zero distance as no value for them will change what leaf you are in.

Using these distances, we convert each $\textrm{defer\_dist}_{T_\textrm{{1:k-1}}}(x_i)$ into a similarity score
\[
(1+\textrm{defer\_dist}_{T_\textrm{{1:k-1}}}(x_i))^{-\gamma}.
\]

Let $\mathcal{D}_{k-1} \subseteq \mathcal{X}$ denote the set of deferred points after stage $k-1$. We define training weights $w_i$ for the next iteration (on the first iteration, all weights are 1).
$$w_i \;=\;
\begin{cases}
1, & \text{if } x_i \in \mathcal{D}_{k-1} \\[6pt]
(1 - \mu)\,(1 +\textrm{defer\_dist}_{T_\textrm{{1:k-1}}}(x_i))^{-\gamma}, & \text{if } x_i \notin \mathcal{D}_{k-1}
\end{cases}$$

When $\gamma=0$, this reduces to assigning a uniform weight of $1-\mu$ to each non-deferred point and a weight of $1$ to each deferred point.

%% file: sections/appendix/algorithm.tex
\subsection{MDT Algorithm}

Algorithm~\ref{alg:weighted_top_down_mdt_appendix_full} presents the full version of our algorithm. There are two additions relative to the algorithm presented in the main paper. The first is that we update the set of subregions $\mathcal{S}$ (lines 4, 21, 22, 28) and pass it into our \textsc{FilterSplits} and \textsc{GetWeights} so it does not have to be recomputed from scratch each iteration. The second is that we additionally support early termination of training if the model complexity becomes too large. This is measured in 3 ways: (i) a stage of the tree has too many leaves, (ii) the sum of leaves across stages becomes too large, or (iii) the number of leaves in the na\"{\i}ve representation of an MDT as a single tree becomes too large. (iii) takes advantage of a recurrence derived in Proposition \ref{prop:subchain}. Given that we have already computed that tree, we choose to add it to MDT anyway, stopping after that. 

\begin{algorithm}[H]
\caption{Top-down MDT training with feasible-region tracking}
\label{alg:weighted_top_down_mdt_appendix_full}
\begin{algorithmic}[1]
\REQUIRE Data $(X,X_{\mathrm{binarized}},y)$ with $X\in\mathbb{R}^{N\times p}$ and $X_{\mathrm{binarized}}\in\{0,1\}^{N\times q}$, depth $d$, $\tau$, $\eta$, $\mu\ge 0$, boolean flag $\mathrm{rescale\_tau}$, maximum number of stages $K_{\max}$, $\gamma\ge 0$, optional budgets $L_{\mathrm{stage}},L_{\mathrm{total}},L_{\mathrm{expanded}}$
\ENSURE MDT stages and fallback model

\STATE $B \leftarrow \textsc{FitFallbackModel}(X,y,\mathbf{1})$
\STATE $\mathcal{T}_{\mathrm{interp}} \leftarrow [\ ]$
\STATE $D \leftarrow \{1,\dots,N\}$
\STATE $\mathcal{S} \leftarrow \{s_{\emptyset}\}$ \COMMENT{Initial unconstrained subregion}
\STATE $\Pi \leftarrow 1$ \COMMENT{Running product of deferral leaves}
\STATE $G \leftarrow 1$ \COMMENT{Expanded-leaf count with fallback counted as one leaf}

\FOR{$k=1,\dots,K_{\max}$}
    \IF{$k=1$}
        \STATE $T_k \leftarrow \textsc{FitDeferTree}(X_{\mathrm{binarized}},y,B(X),\mathbf{1},\tau, \eta, d)$
    \ELSE
        \STATE $X_{\mathrm{filtered}} \leftarrow \textsc{FilterSplits}(X_{\mathrm{binarized}},\mathcal{S})$
        \STATE $w \leftarrow \textsc{GetWeights}(X,D,\mathcal{S},\gamma,\mu)$
        \STATE $B \leftarrow \textsc{FitFallbackModel}(X,y,w)$
        \STATE $\tau_k \leftarrow \tau\cdot \frac{\sum_{i=1}^N w_i}{N}$ \textbf{ if } $\mathrm{rescale\_tau}$ \textbf{ else } $\tau$
        \STATE $T_k \leftarrow \textsc{FitDeferTree}(X_{\mathrm{filtered}},y,B(X),w,\tau_k, \eta, d)$
    \ENDIF

    \STATE $\mathcal{T}_{\mathrm{interp}}\leftarrow \mathcal{T}_{\mathrm{interp}}+[T_k]$
    \STATE $\ell_k \leftarrow \textsc{NumLeaves}(T_k)$
    \STATE $d_k \leftarrow \textsc{NumDeferLeaves}(T_k)$
    \STATE $D_{\mathrm{new}} \leftarrow \{i\in D:T_k(x_i)=\texttt{defer}\}$

    \STATE $\mathcal{P}\leftarrow \textsc{CollectPathsToAction}(T_k,\texttt{defer})$
    \STATE $\mathcal{S}_{\mathrm{new}}\leftarrow \textsc{ExtendSubregionsWithFeasiblePaths}(\mathcal{S},\mathcal{P})$

    \STATE $(\texttt{should\_stop},G,\Pi) \leftarrow
    \textsc{ShouldStopTraining}(D,D_{\mathrm{new}},\ell_k,d_k,\mathcal{T}_{\mathrm{interp}},G,\Pi,L_{\mathrm{stage}},L_{\mathrm{total}},L_{\mathrm{expanded}})$

    \IF{$\texttt{should\_stop}$}
        \STATE \textbf{break}
    \ENDIF

    \STATE $D\leftarrow D_{\mathrm{new}}$
    \STATE $\mathcal{S}\leftarrow \mathcal{S}_{\mathrm{new}}$
\ENDFOR

\IF{$D\neq\varnothing$}
    \STATE $w \leftarrow \textsc{GetWeights}(X,D,\mathcal{S},\gamma,\mu)$
    \STATE $B \leftarrow \textsc{FitFallbackModel}(X,y,w)$
\ENDIF

\STATE \textbf{return} $\mathcal{T}_{\mathrm{interp}}+[B]$
\end{algorithmic}
\end{algorithm}

Algorithm~\ref{alg:should_stop_training} includes the aforementioned stopping criteria. It terminates training if the deferred set becomes empty, fails to shrink, or if any complexity budget is exceeded. The expanded tree size is tracked online using a recurrence, allowing early stopping without constructing a single tree representation.

\begin{algorithm}[H]
\caption{\textsc{ShouldStopTraining}$(D,D_{\mathrm{new}},\ell_k,d_k,\mathcal{T}_{\mathrm{interp}},G,\Pi,L_{\mathrm{stage}},L_{\mathrm{total}},L_{\mathrm{expanded}})$}
\label{alg:should_stop_training}
\begin{algorithmic}[1]
\REQUIRE Previous deferred set $D$, new deferred set $D_{\mathrm{new}}$, current tree leaf count $\ell_k$, current defer-leaf count $d_k$, current stage list $\mathcal{T}_{\mathrm{interp}}$, running expanded count $G$, running defer-product $\Pi$, optional budgets $L_{\mathrm{stage}},L_{\mathrm{total}},L_{\mathrm{expanded}}$
\ENSURE Boolean $\texttt{should\_stop}$, updated $G$, updated $\Pi$

\IF{$D_{\mathrm{new}}=\varnothing$}
    \STATE \textbf{return} $(\textbf{true},G,\Pi)$ \COMMENT{No points remain deferred}
\ENDIF

\IF{$D_{\mathrm{new}}=D$}
    \STATE \textbf{return} $(\textbf{true},G,\Pi)$ \COMMENT{Deferred set did not shrink}
\ENDIF

\IF{$L_{\mathrm{expanded}}$ is specified}
    \STATE $G \leftarrow G+\Pi(\ell_k-1)$
    \STATE $\Pi \leftarrow \Pi d_k$
\ENDIF

\IF{$L_{\mathrm{stage}}$ is specified and $\ell_k>L_{\mathrm{stage}}$}
    \STATE \textbf{return} $(\textbf{true},G,\Pi)$ \COMMENT{Current stage exceeds per-tree budget}
\ENDIF

\IF{$L_{\mathrm{total}}$ is specified and $\sum_{T\in\mathcal{T}_{\mathrm{interp}}}|T|>L_{\mathrm{total}}$}
    \STATE \textbf{return} $(\textbf{true},G,\Pi)$ \COMMENT{Sequential model exceeds total leaf budget}
\ENDIF

\IF{$L_{\mathrm{expanded}}$ is specified and $G>L_{\mathrm{expanded}}$}
    \STATE \textbf{return} $(\textbf{true},G,\Pi)$ \COMMENT{Expanded single-tree representation exceeds budget}
\ENDIF

\STATE \textbf{return} $(\textbf{false},G,\Pi)$
\end{algorithmic}
\end{algorithm}

Algorithm~\ref{alg:filter_splits} removes binary features that do not distinguish between feasible deferred subregions. A feature is discarded if it is provably always true or always false across all subregions (note that we can't discard a feature if it is provably true on half of the subregions and provably false on the others). This has the added bonus of reducing the search space, which is beneficial for near-optimal decision tree algorithms. We implement this by storing, for each column, a string representation of the split (e.g., \texttt{age<=25}), which we parse to recover the feature and threshold. The column itself in the binarized dataset includes which samples satisfy the split.
\begin{algorithm}[H]
\caption{\textsc{FilterSplits}$(X_{\mathrm{binarized}},\mathcal{S})$}
\label{alg:filter_splits}
\begin{algorithmic}[1]
\REQUIRE Binarized feature matrix $X_{\mathrm{binarized}}$, feasible deferred subregions $\mathcal{S}$
\ENSURE Filtered binarized feature matrix
\STATE $I\leftarrow [\ ]$
\FOR{$j=1,\dots,q$}
    \STATE Get the split $(f \le \nu)$ from the metadata associated with column $j$
    \STATE $\texttt{all\_true}\leftarrow \textbf{true}$
    \STATE $\texttt{all\_false}\leftarrow \textbf{true}$
    \FORALL{$s\in\mathcal{S}$}
        \IF{not $\textsc{ImpliesTrue}(s,f,\nu)$}
            \STATE $\texttt{all\_true}\leftarrow \textbf{false}$
        \ENDIF
        \IF{not $\textsc{ImpliesFalse}(s,f,\nu)$}
            \STATE $\texttt{all\_false}\leftarrow \textbf{false}$
        \ENDIF
    \ENDFOR
    \IF{not $\texttt{all\_true}$ and not $\texttt{all\_false}$}
        \STATE append $j$ to $I$
    \ENDIF
\ENDFOR
\STATE \textbf{return} $X_{\mathrm{binarized}}[:,I]$
\end{algorithmic}
\end{algorithm}

Algorithm~\ref{alg:get_weights} assigns weights based on distance to the deferred region. Deferred points receive full weight, while non-deferred points are down-weighted according to their proximity to the nearest feasible subregion. This emphasizes points near the deferral boundary during training.

\begin{algorithm}[H]
\caption{\textsc{GetWeights}$(X,D,\mathcal{S},\gamma,\mu)$}
\label{alg:get_weights}
\begin{algorithmic}[1]
\REQUIRE Data $X$, current deferred index set $D$, feasible deferred subregions $\mathcal{S}$, decay parameter $\gamma$, non-deferred downweighting parameter $\mu$
\ENSURE Weight vector $w\in\mathbb{R}_{\ge 0}^N$
\FOR{$i=1,\dots,N$}
    \IF{$i\in D$}
        \STATE $w_i\leftarrow 1$
    \ELSE
        \STATE $\Delta_i\leftarrow \min_{s\in\mathcal{S}}\textsc{DistanceToSubregion}(x_i,s)$
        \STATE $w_i\leftarrow (1-\mu)(1+\Delta_i)^{-\gamma}$
    \ENDIF
\ENDFOR
\STATE \textbf{return} $w$
\end{algorithmic}
\end{algorithm}

Algorithm~\ref{alg:distance_to_subregion} computes the distance from a point to a subregion by summing per-feature deviations. Numerical features are measured in quantile space, while categorical features incur a fixed penalty of $\frac{1}{2}$ if the subregion does not allow the category that the point is in (i.e, the path to the subregion took a split of false on that category). Unconstrained features contribute zero.

\begin{algorithm}[h]
\caption{\textsc{DistanceToSubregion}$(x_i,s)$}
\label{alg:distance_to_subregion}
\begin{algorithmic}[1]
\REQUIRE Point $x_i$, feasible subregion $s$
\ENSURE Distance from $x_i$ to $s$
\STATE $\Delta\leftarrow 0$
\FORALL{numerical or ordinal features $j\in\mathcal{J}_{\mathrm{num}}(s)$}
    \STATE Let $(\ell_{sj},u_{sj}]$ be the interval for feature $j$ in $s$
    \STATE $\Delta\leftarrow \Delta+
    \max\{\underline z_j(\ell_{sj})-z_j(x_{ij}),0\}
    +\max\{z_j(x_{ij})-\overline z_j(u_{sj}),0\}$
\ENDFOR
\FORALL{categorical groups $j\in\mathcal{J}_{\mathrm{cat}}(s)$}
    \STATE Let $A_{sj}$ be the set of categories allowed by $s$
    \STATE Let $c_{ij}$ be the category of $x_i$ in group $j$
    \IF{$c_{ij}\notin A_{sj}$}
        \STATE $\Delta\leftarrow \Delta+\frac12$
    \ENDIF
\ENDFOR
\STATE \textbf{return} $\Delta$
\end{algorithmic}
\end{algorithm}

\newpage
\subsection{Defer Tree Algorithm}
\label{appendix:licketysplitsection}

In Algorithm \ref{alg:fit_defer_tree} construct each defer tree using a recursive, top-down procedure that jointly optimizes over prediction and deferral decisions. This algorithm uses the same heuristics to select splits as are used in the LicketySPLIT algorithm of \cite{babbar2025near}. Our implementation extends the efficient C++ codebase of~\citet{heile2026}, which implements a modified version of LicketySPLIT. Their algorithm considers an unweighted dataset and builds standard decision trees (without deferral). Our changes lie in supporting weights and deferral, thus optimizing a different objective. We solve the optimization problem using a per-leaf penalty, which is equivalent to optimizing a per-split penalty, since the number of leaves in a decision tree is always exactly one more than the number of splits, differing only by an additive constant $\tau$.

At any node, we consider three possible leaf actions: predicting class $0$, predicting class $1$, or deferring to the fallback model. Each action is assigned an objective value that combines a fixed leaf penalty $\tau$ with a weighted misclassification cost. 

At a high level, we will recursively consider every possible split. We query a greedy method to build a subtree on the left and right child of every split, and choose the split that minimizes our objective under greedy completions. After choosing this split, we recurse on the left and right children of this split, and repeat.

\begin{algorithm}[H]
\caption{\textsc{FitDeferTree}$(X_{\mathrm{binarized}},y,r,w,\tau,\eta,d)$}
\label{alg:fit_defer_tree}
\begin{algorithmic}[1]
\REQUIRE Binarized data $X_{\mathrm{binarized}}$, labels $y$, fallback predictions $r$, weights $w$, leaf penalty $\tau$, defer penalty $\eta$, depth budget $d$
\ENSURE A defer tree $T$
\STATE $A \leftarrow \{1,\dots,N\}$
\STATE $(T,C) \leftarrow \textsc{BuildDeferTree}(A,d)$
\STATE \textbf{return} $T$
\end{algorithmic}
\end{algorithm}

\begin{algorithm}[h]
\caption{\textsc{BuildDeferTree}$(A,d)$}
\label{alg:build_defer_tree}
\begin{algorithmic}[1]
\REQUIRE Index set $A$, remaining depth $d$
\ENSURE A pair $(T,C)$, where $T$ is a defer subtree on $A$ and $C$ is its objective value
\STATE $W_A \leftarrow \sum_{i\in A} w_i$
\STATE $C_0 \leftarrow \tau+\sum_{i\in A} w_i\mathbf{1}\{y_i=1\}$
\STATE $C_1 \leftarrow \tau+\sum_{i\in A} w_i\mathbf{1}\{y_i=0\}$
\STATE $C_{\mathrm{defer}} \leftarrow \tau+\sum_{i\in A} w_i\mathbf{1}\{r_i\neq y_i\}+\eta W_A$
\STATE $(C_{\mathrm{leaf}},a_{\mathrm{leaf}})\leftarrow \min\{(C_0,0),(C_1,1),(C_{\mathrm{defer}},\texttt{defer})\}$

\IF{$d=0$ or $|A|\le 1$}
    \STATE \textbf{return} $(\text{leaf with action }a_{\mathrm{leaf}}, C_{\mathrm{leaf}})$
\ENDIF

\STATE $f^\star \leftarrow \bot$
\STATE $C^\star \leftarrow \infty$

\FOR{$f=1,\dots,q$}
    \STATE $A_L \leftarrow \{i\in A:X_{\mathrm{binarized},if}=1\}$
    \STATE $A_R \leftarrow \{i\in A:X_{\mathrm{binarized},if}=0\}$
    \IF{$A_L=\varnothing$ or $A_R=\varnothing$}
        \STATE \textbf{continue}
    \ENDIF
    \STATE $(\widetilde{T}_L,\widetilde{C}_L)\leftarrow \textsc{Greedy}(A_L,d-1)$
    \STATE $(\widetilde{T}_R,\widetilde{C}_R)\leftarrow \textsc{Greedy}(A_R,d-1)$
    \STATE $C_f \leftarrow \widetilde{C}_L+\widetilde{C}_R$
    \IF{$C_f<C^\star$}
        \STATE $C^\star \leftarrow C_f$
        \STATE $f^\star \leftarrow f$
    \ENDIF
\ENDFOR

\IF{$f^\star=\bot$}
    \STATE \textbf{return} $(\text{leaf with action }a_{\mathrm{leaf}}, C_{\mathrm{leaf}})$
\ENDIF

\STATE $A_L^\star \leftarrow \{i\in A:X_{\mathrm{binarized},if^\star}=1\}$
\STATE $A_R^\star \leftarrow \{i\in A:X_{\mathrm{binarized},if^\star}=0\}$

\STATE $(T_L,C_L) \leftarrow \textsc{BuildDeferTree}(A_L^\star,d-1)$
\STATE $(T_R,C_R) \leftarrow \textsc{BuildDeferTree}(A_R^\star,d-1)$
\STATE $C_{\mathrm{split}}\leftarrow C_L+C_R$

\IF{$C_{\mathrm{leaf}}\le C_{\mathrm{split}}$}
    \STATE \textbf{return} $(\text{leaf with action }a_{\mathrm{leaf}}, C_{\mathrm{leaf}})$
\ENDIF

\STATE Get the non-binarized feature $f_{\mathrm{orig}}$ and threshold $\nu$ (in $\leq \nu$) from the metadata of column $f^\star$
\STATE $T\leftarrow$ internal node with $T.\texttt{feature}=f_{\mathrm{orig}}$, $T.\texttt{threshold}=\nu$, left child $T_L$, and right child $T_R$
\STATE \textbf{return} $(T,C_{\mathrm{split}})$
\end{algorithmic}
\end{algorithm}

The greedy subroutine (Algorithm \ref{alg:greedy_defer_tree}) follows a very similar structure: it considers all splits and chooses the one that minimizes the weighted label entropy of the resulting partition (we minimize this quantity in Algorithm \ref{alg:best_entropy_split}).

\begin{algorithm}[!t]
\caption{\textsc{Greedy}$(A,d)$}
\label{alg:greedy_defer_tree}
\begin{algorithmic}[1]
\REQUIRE Index set $A$, remaining depth $d$
\ENSURE A pair $(T,C)$, where $T$ is a greedy defer subtree on $A$ and $C$ is its objective value
\STATE $W_A \leftarrow \sum_{i\in A} w_i$
\STATE $C_0 \leftarrow \tau+\sum_{i\in A} w_i\mathbf{1}\{y_i=1\}$
\STATE $C_1 \leftarrow \tau+\sum_{i\in A} w_i\mathbf{1}\{y_i=0\}$
\STATE $C_{\mathrm{defer}} \leftarrow \tau+\sum_{i\in A} w_i\mathbf{1}\{r_i\neq y_i\}+\eta W_A$
\STATE $(C_{\mathrm{leaf}},a_{\mathrm{leaf}})\leftarrow \min\{(C_0,0),(C_1,1),(C_{\mathrm{defer}},\texttt{defer})\}$

\IF{$d=0$ or $|A|\le 1$}
    \STATE \textbf{return} $(\text{leaf with action }a_{\mathrm{leaf}}, C_{\mathrm{leaf}})$
\ENDIF

\STATE $f^\star \leftarrow \textsc{BestEntropySplit}(A)$

\IF{$f^\star=\bot$}
    \STATE \textbf{return} $(\text{leaf with action }a_{\mathrm{leaf}}, C_{\mathrm{leaf}})$
\ENDIF

\STATE $A_L^\star \leftarrow \{i\in A:X_{\mathrm{binarized},if^\star}=1\}$
\STATE $A_R^\star \leftarrow \{i\in A:X_{\mathrm{binarized},if^\star}=0\}$

\IF{$A_L^\star=\varnothing$ or $A_R^\star=\varnothing$}
    \STATE \textbf{return} $(\text{leaf with action }a_{\mathrm{leaf}}, C_{\mathrm{leaf}})$
\ENDIF

\STATE $(T_L,C_L)\leftarrow \textsc{Greedy}(A_L^\star,d-1)$
\STATE $(T_R,C_R)\leftarrow \textsc{Greedy}(A_R^\star,d-1)$
\STATE $C_{\mathrm{split}}\leftarrow C_L+C_R$

\IF{$C_{\mathrm{leaf}}\le C_{\mathrm{split}}$}
    \STATE \textbf{return} $(\text{leaf with action }a_{\mathrm{leaf}}, C_{\mathrm{leaf}})$
\ENDIF

\STATE $T\leftarrow$ internal node splitting on $X_{\mathrm{binarized},f^\star}=1$, with left child $T_L$ and right child $T_R$
\STATE \textbf{return} $(T,C_{\mathrm{split}})$
\end{algorithmic}
\end{algorithm}

Algorithm~\ref{alg:best_entropy_split} selects splits by minimizing the weighted label entropy of the resulting partition. This focuses on separating the labels rather than directly optimizing the defer-aware objective. As this is the standard heuristic in greedy decision tree algorithms, we adopt it here.
\begin{algorithm}[!t]
\caption{\textsc{BestEntropySplit}$(A)$}
\label{alg:best_entropy_split}
\begin{algorithmic}[1]
\REQUIRE Index set $A$
\ENSURE Feature index $f^\star$ minimizing weighted entropy (or $\bot$ if none valid)

\STATE $f^\star \leftarrow \bot$
\STATE $H^\star \leftarrow \infty$

\FOR{$f=1,\dots,q$}
    \STATE $A_L \leftarrow \{i\in A : X_{\mathrm{binarized},if}=1\}$
    \STATE $A_R \leftarrow \{i\in A : X_{\mathrm{binarized},if}=0\}$

    \IF{$A_L=\varnothing$ or $A_R=\varnothing$}
        \STATE \textbf{continue}
    \ENDIF

    \STATE $W_L \leftarrow \sum_{i\in A_L} w_i$
    \STATE $W_R \leftarrow \sum_{i\in A_R} w_i$
    \STATE $W \leftarrow W_L + W_R$

    \IF{$W=0$}
        \STATE \textbf{continue}
    \ENDIF

    \STATE $p_L \leftarrow 
        \begin{cases}
            \dfrac{\sum_{i\in A_L} w_i y_i}{W_L}, & W_L>0\\
            0, & \text{otherwise}
        \end{cases}$
    \STATE $p_R \leftarrow 
        \begin{cases}
            \dfrac{\sum_{i\in A_R} w_i y_i}{W_R}, & W_R>0\\
            0, & \text{otherwise}
        \end{cases}$

    \STATE $H_L \leftarrow 
        \begin{cases}
            -p_L\log p_L - (1-p_L)\log(1-p_L), & p_L\in(0,1)\\
            0, & \text{otherwise}
        \end{cases}$
    \STATE $H_R \leftarrow 
        \begin{cases}
            -p_R\log p_R - (1-p_R)\log(1-p_R), & p_R\in(0,1)\\
            0, & \text{otherwise}
        \end{cases}$

    \STATE $H_f \leftarrow \dfrac{W_L}{W} H_L + \dfrac{W_R}{W} H_R$

    \IF{$H_f < H^\star$}
        \STATE $H^\star \leftarrow H_f$
        \STATE $f^\star \leftarrow f$
    \ENDIF
\ENDFOR

\STATE \textbf{return} $f^\star$
\end{algorithmic}
\end{algorithm}

\FloatBarrier

\newpage

\subsection{Improving Optimization with Backfitting}

The MDT training procedure constructs the model as a sequence of defer trees that are trained with respect to a temporary black box fallback model (Algorithms \ref{alg:weighted_top_down_mdt} and \ref{alg:weighted_top_down_mdt_appendix_full}). At the first stage, we train $T_1$ to either predict or defer to an initial fallback model $B_1$. After this stage identifies a remaining deferred region, we reweight the data toward that region, train a new fallback $B_2$, and then train $T_2$ to either predict or defer to $B_2$. Continuing this process, we train $T_i$ to defer to $B_i$, but in the final model, $T_i$ does not necessarily defer to $B_i$. For instance, if there are 4 trees in the MDT, $T_1$ defers to another MDT, not a black box.

Therefore, once full MDT has been constructed, it may be beneficial to retrain stage $j$ to defer to the entire suffix $T_{j+1:K};B$. This motivates the following procedure: choose a component of the MDT, hold the rest of the model fixed, and retrain that component against the suffix that follows it. As we are optimizing using near-optimal algorithms, it will also be helpful to have an acceptance step to accept the new defer tree only if it is better in the objective that the algorithm tried to optimize.

The backward pass (Algorithm \ref{alg:backward_pass_mdt}) starts from the final fallback model and moves upstream through the MDT, retraining each stage conditioned on the rest of it. We then visit stages in reverse order (optionally starting with the fallback model) doing exactly that. In contrast, the forward pass (Algorithm \ref{alg:forward_pass_mdt}) does exactly the same thing but it starts from the first stage. We put these together in an alternating approach (Algorithm \ref{alg:alternating_backfitting_mdt}). It combines these two sweeps in the following update order: $B,\ T_K,T_{K-1},\ldots,T_1,\ T_2,T_3,\ldots,T_K,\ldots$. The results of using this approach are shown in \autoref{fig:backfit_mdt_xgb_selected}. In short, backfitting gives clear improvements on some datasets, such as Tictactoe, where revisiting stages may help capture higher-order interactions, but it degrades performance on several others.

\begin{algorithm}[H]
\caption{\textsc{BackwardPass}: Backfitting MDT stages in reverse order}
\label{alg:backward_pass_mdt}
\begin{algorithmic}[1]
\REQUIRE Data $(X,X_{\mathrm{binarized}},y)$, MDT stages $\mathcal{T}_{\mathrm{interp}}=[T_1,\dots,T_K]$, fallback model $B$, depth $d$, $\tau$, $\eta$, $\mu\ge 0$, $\gamma\ge 0$, boolean flag $\mathrm{rescale\_tau}$, endpoints $j_{\mathrm{start}}$ and $j_{\mathrm{end}}$
\ENSURE Updated MDT stages and fallback model

\FOR{$j=j_{\mathrm{start}},j_{\mathrm{start}}-1,\dots,j_{\mathrm{end}}$}
    \STATE $D_{j-1} \leftarrow \{i:T_1(x_i)=\cdots=T_{j-1}(x_i)=\texttt{defer}\}$
    \STATE $\mathcal{S}_{j-1} \leftarrow \textsc{GetDeferredSubregions}(T_{1:j-1})$
    \STATE $w \leftarrow \textsc{GetWeights}(X,D_{j-1},\mathcal{S}_{j-1},\gamma,\mu,T_{1:j-1})$
    \STATE $\widehat{y}^{\mathrm{suffix}} \leftarrow \widehat{y}_{T_{j+1:K},B}(X)$
    \STATE $\tau_j \leftarrow \tau\cdot \frac{\sum_{i=1}^N w_i}{N}$ \textbf{ if } $\mathrm{rescale\_tau}$ \textbf{ else } $\tau$
    \STATE $X_{\mathrm{filtered}} \leftarrow \textsc{FilterSplits}(X_{\mathrm{binarized}},\mathcal{S}_{j-1})$
    \STATE $\widetilde{T}_j \leftarrow \textsc{FitDeferTree}(X_{\mathrm{filtered}},y,\widehat{y}^{\mathrm{suffix}},w,\tau_j,\eta,d)$

    \STATE $L_{\mathrm{old}} \leftarrow
    L_{T_j,T_{j+1:K};B}(D^w,\tau_j,\eta)$
    \STATE $L_{\mathrm{new}} \leftarrow
    L_{\widetilde{T}_j,T_{j+1:K};B}(D^w,\tau_j,\eta)$

    \IF{$L_{\mathrm{new}}\le L_{\mathrm{old}}$}
        \STATE $T_j \leftarrow \widetilde{T}_j$
    \ENDIF
\ENDFOR

\STATE \textbf{return} $[T_1,\dots,T_K]+[B]$
\end{algorithmic}
\end{algorithm}

\begin{algorithm}[H]
\caption{\textsc{ForwardPass}: Backfitting MDT stages in forward order}
\label{alg:forward_pass_mdt}
\begin{algorithmic}[1]
\REQUIRE Data $(X,X_{\mathrm{binarized}},y)$, MDT stages $\mathcal{T}_{\mathrm{interp}}=[T_1,\dots,T_K]$, fallback model $B$, depth $d$, $\tau$, $\eta$, $\mu\ge 0$, $\gamma\ge 0$, boolean flag $\mathrm{rescale\_tau}$, endpoints $j_{\mathrm{start}}$ and $j_{\mathrm{end}}$
\ENSURE Updated MDT stages and fallback model

\FOR{$j=j_{\mathrm{start}},j_{\mathrm{start}}+1,\dots,j_{\mathrm{end}}$}
    \STATE $D_{j-1} \leftarrow \{i:T_1(x_i)=\cdots=T_{j-1}(x_i)=\texttt{defer}\}$
    \STATE $\mathcal{S}_{j-1} \leftarrow \textsc{GetDeferredSubregions}(T_{1:j-1})$
    \STATE $w \leftarrow \textsc{GetWeights}(X,D_{j-1},\mathcal{S}_{j-1},\gamma,\mu,T_{1:j-1})$
    \STATE $\widehat{y}^{\mathrm{suffix}} \leftarrow \widehat{y}_{T_{j+1:K},B}(X)$
    \STATE $\tau_j \leftarrow \tau\cdot \frac{\sum_{i=1}^N w_i}{N}$ \textbf{ if } $\mathrm{rescale\_tau}$ \textbf{ else } $\tau$
    \STATE $X_{\mathrm{filtered}} \leftarrow \textsc{FilterSplits}(X_{\mathrm{binarized}},\mathcal{S}_{j-1})$
    \STATE $\widetilde{T}_j \leftarrow \textsc{FitDeferTree}(X_{\mathrm{filtered}},y,\widehat{y}^{\mathrm{suffix}},w,\tau_j,\eta,d)$

    \STATE $L_{\mathrm{old}} \leftarrow
    L_{T_j,T_{j+1:K};B}(D^w,\tau_j,\eta)$
    \STATE $L_{\mathrm{new}} \leftarrow
    L_{\widetilde{T}_j,T_{j+1:K};B}(D^w,\tau_j,\eta)$

    \IF{$L_{\mathrm{new}}\le L_{\mathrm{old}}$}
        \STATE $T_j \leftarrow \widetilde{T}_j$
    \ENDIF
\ENDFOR

\STATE \textbf{return} $[T_1,\dots,T_K]+[B]$
\end{algorithmic}
\end{algorithm}

\begin{algorithm}[H]
\caption{Backfitting for MDTs}
\label{alg:alternating_backfitting_mdt}
\begin{algorithmic}[1]
\REQUIRE Data $(X,X_{\mathrm{binarized}},y)$, MDT stages $\mathcal{T}_{\mathrm{interp}}=[T_1,\dots,T_K]$, fallback model $B$, number of sweeps $M$, depth $d$, $\tau$, $\eta$, $\mu\ge 0$, $\gamma\ge 0$, boolean flag $\mathrm{rescale\_tau}$
\ENSURE Updated MDT stages and fallback model

\FOR{$m=1,\dots,M$}
    \STATE $D_K \leftarrow \{i:T_1(x_i)=\cdots=T_K(x_i)=\texttt{defer}\}$
    \STATE $\mathcal{S}_K \leftarrow \textsc{GetDeferredSubregions}(T_{1:K})$
    \STATE $w \leftarrow \textsc{GetWeights}(X,D_K,\mathcal{S}_K,\gamma,\mu,T_{1:K})$
    \STATE $B \leftarrow \textsc{FitFallbackModel}(X,y,w)$

    \STATE $\mathcal{T}_{\mathrm{interp}}+[B] \leftarrow
    \textsc{BackwardPass}(X,X_{\mathrm{binarized}},y,\mathcal{T}_{\mathrm{interp}},B,d,\tau,\eta,\mu,\gamma,\mathrm{rescale\_tau},K,1)$

    \STATE $\mathcal{T}_{\mathrm{interp}}+[B] \leftarrow
    \textsc{ForwardPass}(X,X_{\mathrm{binarized}},y,\mathcal{T}_{\mathrm{interp}},B,d,\tau,\eta,\mu,\gamma,\mathrm{rescale\_tau},2,K)$
\ENDFOR

\STATE \textbf{return} $\mathcal{T}_{\mathrm{interp}}+[B]$
\end{algorithmic}
\end{algorithm}
\newpage

%% file: sections/appendix/objectives.tex
Learning a sparse decision tree is commonly formulated as minimizing a regularized empirical
risk of the form

$$    \frac{1}{N}\sum_{i=1}^N \mathds{1}\{T(x_i) \neq y_i\}
    +
    \lambda |T|,$$

where $|T|$ is the number of leaves in the tree \citep{gosdt, mctavish2022fast, babbar2025near}. For binary trees, the number of internal
split nodes is always $|T|-1$. Therefore, minimizing a penalty on leaves is equivalent, up to
an additive constant $\lambda$, to minimizing a penalty on split nodes:

$$    \frac{1}{N}\sum_{i=1}^N \mathds{1}\{T(x_i) \neq y_i\}
    +
    \lambda |T|
    =
    \frac{1}{N}\sum_{i=1}^N \mathds{1}\{T(x_i) \neq y_i\}
    +
    \lambda (|T|-1)
    +
    \lambda.$$

Since the two objectives differ only by a constant term, they have the same set of minimizers. We choose to work with the number of splits for a reason that will become clear in the MDT objective.

Hybrid interpretable rule-list models instead optimize an objective that balances the error
of the full hybrid system, the complexity of the interpretable rule list, and the amount of
deferral to a black-box model. In the notation of \citet{ferry2023learning}, for a rule list
$r$ and black box $B$, this objective has the form
\[
    \frac{\widehat L_{\mathcal D}(B,r)}{N}
    +
    \lambda |r|
    +
    \beta \frac{|\mathcal D \setminus \mathcal D_r|}{N},
\]
where $\widehat L_{\mathcal D}(B,r)$ is the empirical error of the full hybrid model,
$|r|$ is the number of rules, and $\mathcal D_r$ is the subset of samples captured by the
rule list. Equivalently, $|\mathcal D \setminus \mathcal D_r|/N$ is the empirical deferral
rate.

Our single defer tree objective combines these two perspectives. It is a sparse decision tree
objective combined with a soft penalty for deferral:
\[
\mathcal L_{T,B}(\mathcal D^w,\tau,\eta)
=
\tau(|T|-1)
+
\sum_{i=1}^N
w_i
\left(
\mathds{1}\{\hat y_{T,B}(x_i)\neq y_i\}
+
\eta \mathds{1}\{T(x_i)=\defer\}
\right).
\]

When all weights
are $w_i=1$ and  $\eta=0$, we recover the same loss (except we do not normalize by the dataset size $N$) as \citep{gosdt, mctavish2022fast, babbar2025near}, with $\tau = \lambda N$.

The MDT objective is then the natural extension of the single defer-tree
objective: it penalizes the predictions and deferral rate after routing through all stages of the MDT.
\[
\mathcal L_{T_{1:k},B}(\mathcal D^w,\tau,\eta)
=
\tau \sum_{j=1}^k (|T_j|-1)
+
\sum_{i=1}^N
w_i
\left(
\mathds{1}\{\hat y_{T_{1:k},B}(x_i)\neq y_i\}
+
\eta \mathds{1}\{T_{1:k}(x_i)=\defer\}
\right).
\]
When $k=1$, this reduces exactly to the single defer-tree objective. For MDTs, it is more natural to regularize the number of splits rather than leaves because if there was a stage with a single defer leaf, regularizing by the number of leaves would increase the MDT objective, though nothing was changed about the predictions or the amount that the MDT deferred.

Finally, Theorem~\ref{thm:rule-list} shows that any MDT has an associated rule-list
representation with
\[
    |r_{\mathrm{MDT}}|
    =
    \sum_{j=1}^k (|T_j|-d_j),
\]
where $|T_j|$ is the number of leaves of stage $j$ and $d_j$ is the number of
deferral leaves in that stage. Assume the MDT has a non-zero deferral rate to the fallback model: this means that every stage defers. Since $d_j\geq 1$,
\[
    \sum_{j=1}^k (|T_j|-d_j)
    \leq
    \sum_{j=1}^k (|T_j|-1)
\]
Therefore, the MDT split penalty upper bounds the rule-list sparsity penalty of the
corresponding rule-list representation. 

%% file: sections/appendix/experimentsetup.tex
\subsection{Machines}
All experiments were performed on an institutional computing cluster.
Each experiment was executed on a single compute node equipped with an AMD EPYC 9554 processor (2.75 GHz), with 64 physical cores. Resources were restricted to 1 CPU core and 50 GB of RAM.

\subsection{Datasets}

We provide a description of all datasets used in this work and the binary classification task we chose.

For all datasets, we remove rows containing missing values and one-hot encode all categorical variables.

\paragraph{\textbf{Abalone \citep{openml_abalone_44956}} (\textit{4{,}177 samples})}
Predict whether an abalone is male based on its physical measurements.

\paragraph{\textbf{Adult \citep{adult_2}} (\textit{48{,}842 samples})}
Predict whether an individual earns more than \$50{,}000 per year based on demographic and occupational attributes.

\paragraph{\textbf{Aging \citep{national_poll_on_healthy_aging_(npha)_936, malani2017npha}} (\textit{714 samples})}
Predict whether an individual has visited at least two doctors.

\paragraph{\textbf{Bike \citep{FanaeeT2013EventLC, bike_sharing_275}} (\textit{17{,}379 samples})}
Predict whether bike rental demand exceeds the median.

\paragraph{\textbf{California \citep{openml_california_44090}} (\textit{20{,}634 samples})}
Predict whether the provided binary target variable \texttt{price} is true.

\paragraph{\textbf{Churn \citep{erickson2025tabarenalivingbenchmarkmachine, marcoulides2005data}} (\textit{5{,}000 samples})}
Predict whether a customer will churn.

\paragraph{\textbf{Compas \citep{bao2021compaslicated}} (\textit{4{,}966 samples})}
Predict whether a defendant will recidivate within two years.

\paragraph{\textbf{Coupon \citep{in-vehicle_coupon_recommendation_603}} (\textit{108 samples})}
Predict whether an individual accepts a recommended coupon.

\paragraph{\textbf{Credit \citep{default_of_credit_card_clients_350, Yeh2009TheCO}} (\textit{30{,}000 samples})}
Predict whether a client will default on their credit card payment in the following month.

\paragraph{\textbf{Diamonds \citep{openml_diamonds_42225}} (\textit{53{,}940 samples})}
Predict whether a diamond belongs to cut category 2 based on its physical attributes.

\paragraph{\textbf{Droid \citep{naticusdroid_(android_permissions)_722, Mathur2021PosterNA}} (\textit{29{,}332 samples})}
Predict whether an Android application is malicious.

\paragraph{\textbf{Heloc \citep{fico2018heloc}} (\textit{2{,}502 samples})}
Predict whether an individual is high- or low-risk for a home equity line of credit.

\paragraph{\textbf{Jasmine \citep{openml_jasmine_41143, automlchallenges}} (\textit{2{,}984 samples})}
Binary classification using the provided target column.

\paragraph{\textbf{Madeline \citep{openml_madeline_41144}} (\textit{3{,}140 samples})}
Binary classification using the provided target column.

\paragraph{\textbf{Magic \citep{magic_gamma_telescope_159}} (\textit{19{,}020 samples})}
Predict whether a Cherenkov telescope image corresponds to a gamma ray or background noise.

\paragraph{\textbf{Monk2 \citep{Thrun1991TheMP, monk's_problems_70}} (\textit{601 samples})}
Predict the logical rule where the label is 1 if exactly two of six attributes take value 1.

\paragraph{\textbf{Phishing \citep{phishing_websites_327, Mohammad2012AnAO}} (\textit{11{,}055 samples})}
Predict whether a website is phishing or legitimate.

\paragraph{\textbf{Pol \citep{openml_pol_44082}} (\textit{10{,}082 samples})}Binary classification using the provided target column.

\paragraph{\textbf{Rl \citep{openml_rl_43949}} (\textit{4{,}970 samples})}Binary classification using the provided target column.

\paragraph{\textbf{Shopping \citep{online_shoppers_purchasing_intention_dataset_468, Sakar2018RealtimePO}} (\textit{12{,}330 samples})}
Predict whether an online shopping session ends in a purchase.

\paragraph{\textbf{Spambase \citep{spambase_94}} (\textit{4{,}601 samples})}
Predict whether an email is spam.

\paragraph{\textbf{Student \citep{student_performance_320, Cortez2008UsingDM}} (\textit{649 samples})}
Predict whether a student passes a course (final grade $\geq 10$).

\paragraph{\textbf{Tic-Tac-Toe \citep{tic-tac-toe_endgame_101}} (\textit{958 samples})}
Predict whether a player has won given a terminal board configuration.

\paragraph{\textbf{Wine \citep{openml_wine_47041}} (\textit{6{,}497 samples})}
Predict whether wine quality is at least 7.

\subsection{Hyperparameter Selection}

All cross-validation was done with 3 folds. Any results reported are averaged across 5 train/test splits.

\subsubsection{Black Boxes:}

\paragraph{Random Forest.}
Random Forest hyperparameters were tuned with Optuna using a 12-hour timeout per train/test split:
\[
\begin{aligned}
n_{\text{estimators}} &\in \{100,200,\ldots,1200\},\\
\text{max\_depth} &\in [2,32],\\
\text{min\_samples\_split} &\in [2,40],\\
\text{min\_samples\_leaf} &\in [1,20],\\
\text{max\_features} &\in \{\sqrt{p}, \log_2(p), \alpha p:\alpha\in[0.1,1.0]\},\\
\text{bootstrap} &\in \{\text{True},\text{False}\},\\
\text{class\_weight} &\in \{\text{None},\text{balanced},\text{balanced\_subsample}\}.
\end{aligned}
\]

\paragraph{XGBoost.}
XGBoost hyperparameters were tuned with Optuna using a 12-hour timeout per train/test split:
\[
\begin{aligned}
n_{\text{estimators}} &\in \{100,200,\ldots,1200\},\\
\text{max\_depth} &\in [2,12],\\
\text{learning\_rate} &\in [10^{-3},3\cdot 10^{-1}] \text{ on a log scale},\\
\text{min\_child\_weight} &\in [10^{-2},20] \text{ on a log scale},\\
\text{subsample} &\in [0.5,1.0],\\
\text{colsample\_bytree} &\in [0.5,1.0],\\
\gamma &\in [10^{-8},10] \text{ on a log scale},\\
\alpha_{\text{reg}} &\in [10^{-8},10] \text{ on a log scale},\\
\lambda_{\text{reg}} &\in [10^{-8},10] \text{ on a log scale}.
\end{aligned}
\]

\subsubsection{Additive Models with Pairwise Interactions}

\paragraph{Explainable Boosting Machine (EBM).}

EBM hyperparameters were tuned with Optuna using a 12-hour timeout per train/test split and 5-fold cross-validation:
\[
\begin{aligned}
\text{interactions} &\in [0,50],\\
\text{learning\_rate} &\in [10^{-3},5\cdot 10^{-2}] \text{ on a log scale},\\
\text{max\_rounds} &\in \{200,400,\ldots,5000\},\\
\text{max\_bins} &\in \{64,128,256,512,1024\},\\
\text{max\_interaction\_bins} &\in \{16,32,64,128\},\\
\text{min\_samples\_leaf} &\in [2,50],\\
\text{max\_leaves} &\in [2,5],\\
\text{outer\_bags} &\in \{4,8,12,16\},\\
\text{validation\_size} &\in \{0.10,0.15,0.20\},\\
\text{early\_stopping\_rounds} &\in \{50,100,200\},\\
\text{early\_stopping\_tolerance} &\in [10^{-6},10^{-3}] \text{ on a log scale}.
\end{aligned}
\]

\subsubsection{Sparse Ensembles}

\paragraph{FIGS.}

FIGS hyperparameters were tuned with Optuna using a 12-hour timeout per train/test split. The following search space was used:
\[
\begin{aligned}
\text{max\_rules} &\in [4,64],\\
\text{max\_trees} &\in [2,20],\\
\text{max\_depth} &\in [2,12],\\
\text{max\_features} &\in \{\text{None}, \sqrt{p}, \log_2(p)\},\\
\text{min\_impurity\_decrease} &\in [10^{-8},10^{-4}] \text{ on a log scale}.
\end{aligned}
\]
The interpretability constraints $\text{max\_rules} \le 64$, $\text{max\_trees} \le 20$, and $\text{max\_depth} \le 12$ were always enforced.

\subsubsection{Interpretable Models}

\paragraph{SingleTree.}

For a single decision tree algorithm, we used our DeferTree algorithm with $\eta=10^9$ to ensure no deferral (we also verified this on all results reported). This corresponds to a version of the near-optimal decision tree algorithm LicketySPLIT \citep{babbar2025near}.

\[
\begin{aligned}
\lambda &\in \{10^{-5},5\cdot 10^{-5},10^{-4},5\cdot 10^{-4},10^{-3},5\cdot 10^{-3},10^{-2}\},\\
\end{aligned}
\]
We fixed a depth budget of $10$, using $\lambda$ to grow a sparser tree. This method requires a binarized dataset. We binarized the dataset with ThresholdGuessing \citep{mctavish2022fast}, using $n_{\text{estimators}} = 150$ and $\text{max\_depth} = 2$.

\subsubsection{Hybrid Models}

\paragraph{HyRS.}

For HyRS, we tuned
\[
\begin{aligned}
\text{nrules} &\in \{5000\},\\
\text{supp} &\in \{5,20\},\\
\text{maxlen} &\in \{2,3\},\\
\alpha &\in \{10^{-4},10^{-3},10^{-2}\},\\
\beta &\in \{0.01,0.05,0.10,0.15,0.25,0.50\},\\
\text{niter} &\in \{5000\}.
\end{aligned}
\]
Non-binary numeric features were binarized using $10$ quantile levels. We used their provided Random Forest rule mining approach.

\paragraph{FLMM.}

For FLMM, we tuned
\[
\begin{aligned}
r_{\max} &\in \{1,2,3\},\\
\ell &\in \{0.01,0.02,0.05,0.1,0.2\},\\
s &\in \{0.5,0.6,0.7,0.8,0.9,1.0\},\\
\text{min\_support\_frac} &\in \{0.005,0.01,0.02\}.
\end{aligned}
\]

We could not find an exact description of the rule mining done in \cite{frost2024partially}, except that $r_{\max}$ was tuned in $\{1,2,3\}$. Therefore, we use the following procedure.

Non-binary numeric features were binarized using $10$ quantile levels. Then, candidate rules were exhaustively enumerated as all conjunctions of between $1$ and $r_{\max}$ binary indicators. Any rule whose support was below
\[
\max\!\left(1,\left\lceil \text{min\_support\_frac}\cdot n \right\rceil\right)
\]
was discarded. For each remaining conjunction, the predicted label was chosen as the majority class among covered samples. Duplicate rules inducing identical coverage and prediction behavior were removed.

\paragraph{HybridCORELS-POST.}

For the HybridCORELS-POST variant, we tuned
\[
\begin{aligned}
c &\in \{0.001,0.01,0.1\},\\
\text{min\_coverage} &\in \{0.1,0.25,0.5,0.75,0.9\},\\
\text{max\_card} &\in \{1,2\},\\
\text{min\_support} &\in \{1,5\},\\
\text{n\_rules} &\in \{300\},\\
\end{aligned}
\]

\paragraph{HybridCORELS-PRE.}

For the HybridCORELS-PRE variant, we tuned
\[
\begin{aligned}
c &\in \{0.001,0.01,0.1\},\\
\text{min\_coverage} &\in \{0.1,0.25,0.5,0.75,0.9\},\\
\text{max\_card} &\in \{1,2\},\\
\text{min\_support} &\in \{5\},\\
\text{n\_rules} &\in \{300\},\\
\alpha &\in \{0,5,10\}.
\end{aligned}
\]

For both optimal methods (HybridCORELS PRE and POST), we set a time limit of 10 minutes per hyperparameter to fit the rule list. This does not include training the black box or performing rule mining.

For the HybridCORELS variants, we first binarized the dataset with ThresholdGuessing \citep{mctavish2022fast}, using $n_{\text{estimators}} = 150$ and $\text{max\_depth} = 2$. We then used their provided rule mining method on these binary features, which uses FPGrowth. Because the HybridCORELS implementation exposes prediction through a \texttt{predict} method that accepts only the already rule-mined dataset, the fallback XGBoost model was required to operate on this same mined feature space. As such, we retuned the XGBoost model for each hyperparameter combination for HybridCORELS, because the HybridCORELS hyperparameters can change the rule mining and thus what XGBoost is trained on. Aside from that, the method of choosing hyperparameters is discussed in \autoref{appendix:cv}.

\paragraph{DeferTree.}
For our DeferTree, we tuned
\[
\begin{aligned}
\lambda &\in \{10^{-5},5\cdot 10^{-5},10^{-4},5\cdot 10^{-4},10^{-3},5\cdot 10^{-3},10^{-2}\},\\
\eta &\in \{0.001,0.005,0.01,0.05,0.1,0.3,0.6,0.8,1.0,10^9\}.
\end{aligned}
\]

This method requires a binarized dataset. We binarized the dataset with ThresholdGuessing \citep{mctavish2022fast}, using $n_{\text{estimators}} = 150$ and $\text{max\_depth} = 2$.

\paragraph{MDT.}

For our MDT, we tuned
\[
\begin{aligned}
\lambda &\in \{10^{-5},5\cdot 10^{-5},10^{-4},5\cdot 10^{-4},10^{-3},5\cdot 10^{-3},10^{-2}\},\\
\eta &\in \{0.001,0.005,0.01,0.05,0.1,0.3,0.6,0.8,1.0\},\\
\text{rescale\_tau} &\in \{0,1\},\\
c &\in \{0,0.2,0.5,1.0\},\\
\gamma &\in \{0,1,2,4,8\},\\
\end{aligned}
\]
The fixed settings were depth budget $10$, maximum stages $6$, maximum total stage leaves $500$, maximum leaves in an uncompressed single tree $10^6$, and maximum leaves per tree $129$. In practice, we do not hit depth 10 or anywhere near these limits (shown in Figure \ref{fig:sparsity_cdf_eps0025}); these are upper bounds that would only prune very undesired hyperparameters. In our algorithm, the calls to fit a DeferTree require a binarized dataset. We binarized the dataset with ThresholdGuessing \citep{mctavish2022fast}, using $n_{\text{estimators}} = 150$ and $\text{max\_depth} = 2$.

\subsection{Cross Validation Procedure}\label{appendix:cv}

For hybrid interpretable models with fallback black boxes, hyperparameter selection was performed separately for each dataset train/test split. For a given train/test split, we constructed a table whose rows corresponded to hybrid-model hyperparameter combinations and whose columns stored the average validation accuracy, average validation deferral rate, test accuracy, and test deferral rate.

To populate this table, we first created three subtrain/validation partitions from the training set. For each subtrain partition, we independently tuned an XGBoost model over the ranges described earlier using an inner 3-fold cross-validation procedure restricted to that subtrain partition. After selecting the best XGBoost hyperparameters for that subtrain partition, we fixed that XGBoost model and evaluated every hyperparameter combination in the hybrid interpretable grid using the associated validation split. Repeating this process across the three subtrain/validation partitions yielded an average validation accuracy and average validation deferral rate for each hybrid hyperparameter configuration.

Next, we tuned XGBoost once more on the entire training set using the same XGBoost search procedure. The resulting XGBoost model was then used as the fallback model for every hybrid hyperparameter configuration trained on the full training set, producing a corresponding test accuracy and test deferral rate.

Given this matrix for a dataset train/test split, if we wished to select the best model subject to at most $x\%$ test deferral, we first filtered out all hyperparameter configurations whose test deferral exceeded the threshold. Among the remaining configurations, we selected the one with the highest average validation accuracy. We then reported the corresponding test accuracy and test deferral for that configuration. We note that the former (the test accuracy) had not been looked at before then, and was not used to choose any hyperparameters. This entire procedure was repeated independently across the five train/test splits for each dataset.

%% file: sections/appendix/misc.tex
\begin{table}[h]
\centering
\small
\begin{tabular}{lcc}
\toprule
Dataset 
& Defer Tree Non-Deferred Error 
& Defer Tree Deferred Error \\
\midrule
Bike 
& 1.57\% $\pm$ 0.42\% 
& 14.88\% $\pm$ 2.35\% \\

Churn 
& 3.04\% $\pm$ 0.33\% 
& 14.16\% $\pm$ 4.21\% \\

Compas 
& 31.59\% $\pm$ 2.11\% 
& 36.24\% $\pm$ 1.08\% \\

Heloc 
& 25.59\% $\pm$ 2.61\% 
& 36.43\% $\pm$ 6.78\% \\

Jasmine 
& 13.97\% $\pm$ 2.46\% 
& 32.39\% $\pm$ 4.45\% \\

Phishing 
& 0.68\% $\pm$ 0.24\% 
& 11.84\% $\pm$ 1.54\% \\

Pol 
& 0.07\% $\pm$ 0.08\% 
& 6.56\% $\pm$ 0.81\% \\

RL 
& 16.57\% $\pm$ 2.30\% 
& 24.34\% $\pm$ 2.99\% \\

Wine 
& 4.67\% $\pm$ 0.88\% 
& 19.49\% $\pm$ 1.51\% \\

\bottomrule
\end{tabular}
\caption{Test error comparison between the DeferTree non-deferred and deferred regions. The deferred regions consistently exhibit substantially higher test error, motivating that our recursive approach with adaptive weights could reduce the amount we rely on the black box with little change in accuracy. All values are mean $\pm$ standard deviation across 5 splits.}
\label{tab:defer_tree_error_split}
\end{table}

In \autoref{fig:train_deferral_comparison}, we compare the semi-supervised hyperparameter selection procedure to a variant that uses training deferral rate as a proxy for test deferral rate. We utilized this semi-supervised procedure for all methods, and here we compare this approach to using the train deferral rate for MDT+XGB. For each target deferral level $x$, we choose the hyperparameters with the highest average validation accuracy among those with deferral rate at most $x$, where deferral is measured on the test set for MDT+XGB and on the training set for MDT+XGB Train. After selecting hyperparameters separately for each of the five train/test splits, we plot the average test accuracy against the average test deferral rate of the selected models. Across datasets, the two curves are nearly indistinguishable: using training deferral is sometimes slightly better and sometimes slightly worse, but it usually selects models with essentially the same test accuracy--deferral tradeoff. The only clear degradation occurs on Coupon, which has only 108 total samples after removing missing values, so little weight should be placed on that outlier. Overall, these results suggest that training deferral rate is an acceptable practical proxy for test deferral rate when selecting MDT+XGB hyperparameters.

\clearpage
\thispagestyle{empty}

\begin{figure*}[p]
    \centering
    \vspace*{-1.2cm}

    \includegraphics[width=0.49\textwidth,height=0.145\textheight,keepaspectratio]{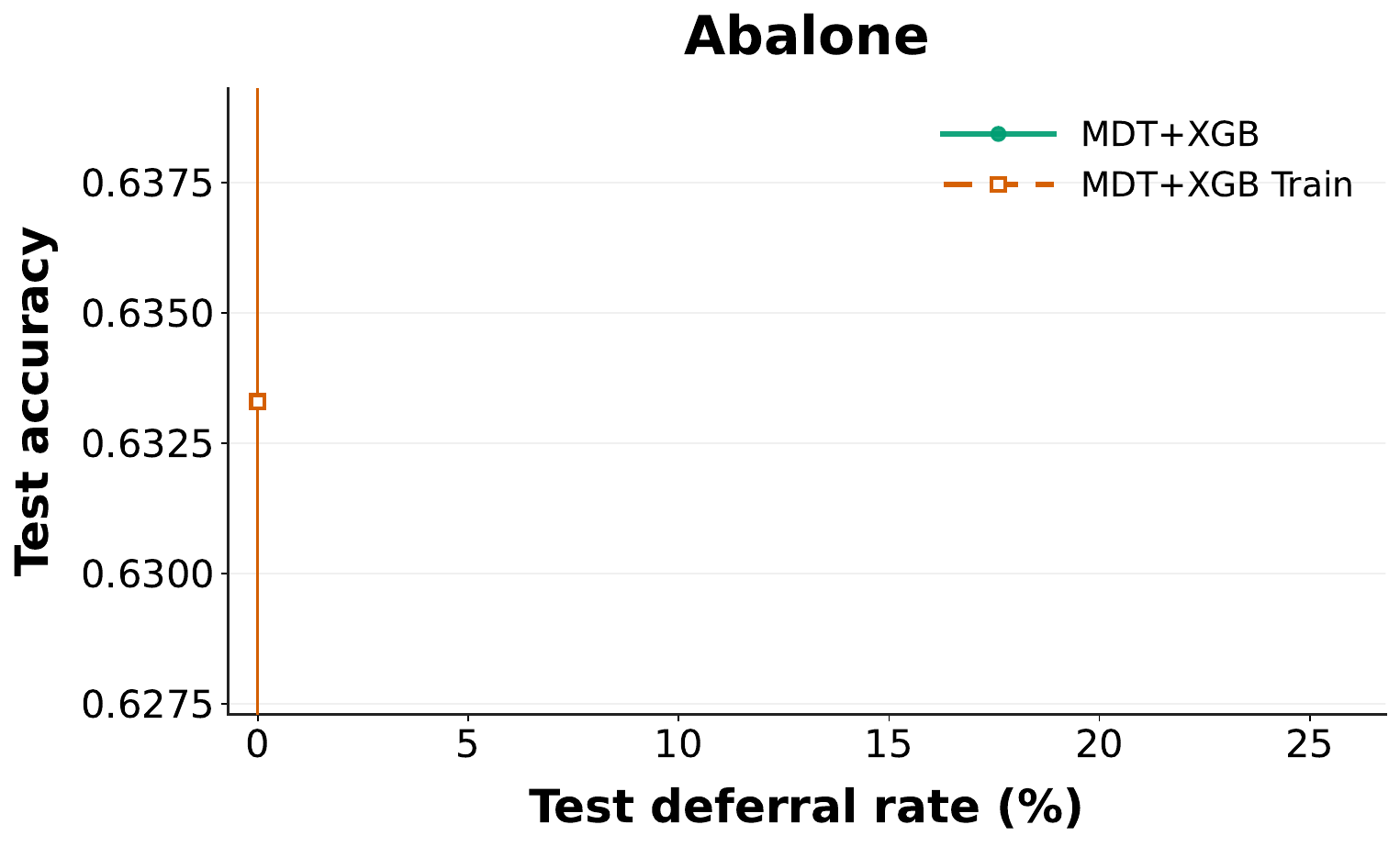}
    \hfill
    \includegraphics[width=0.49\textwidth,height=0.145\textheight,keepaspectratio]{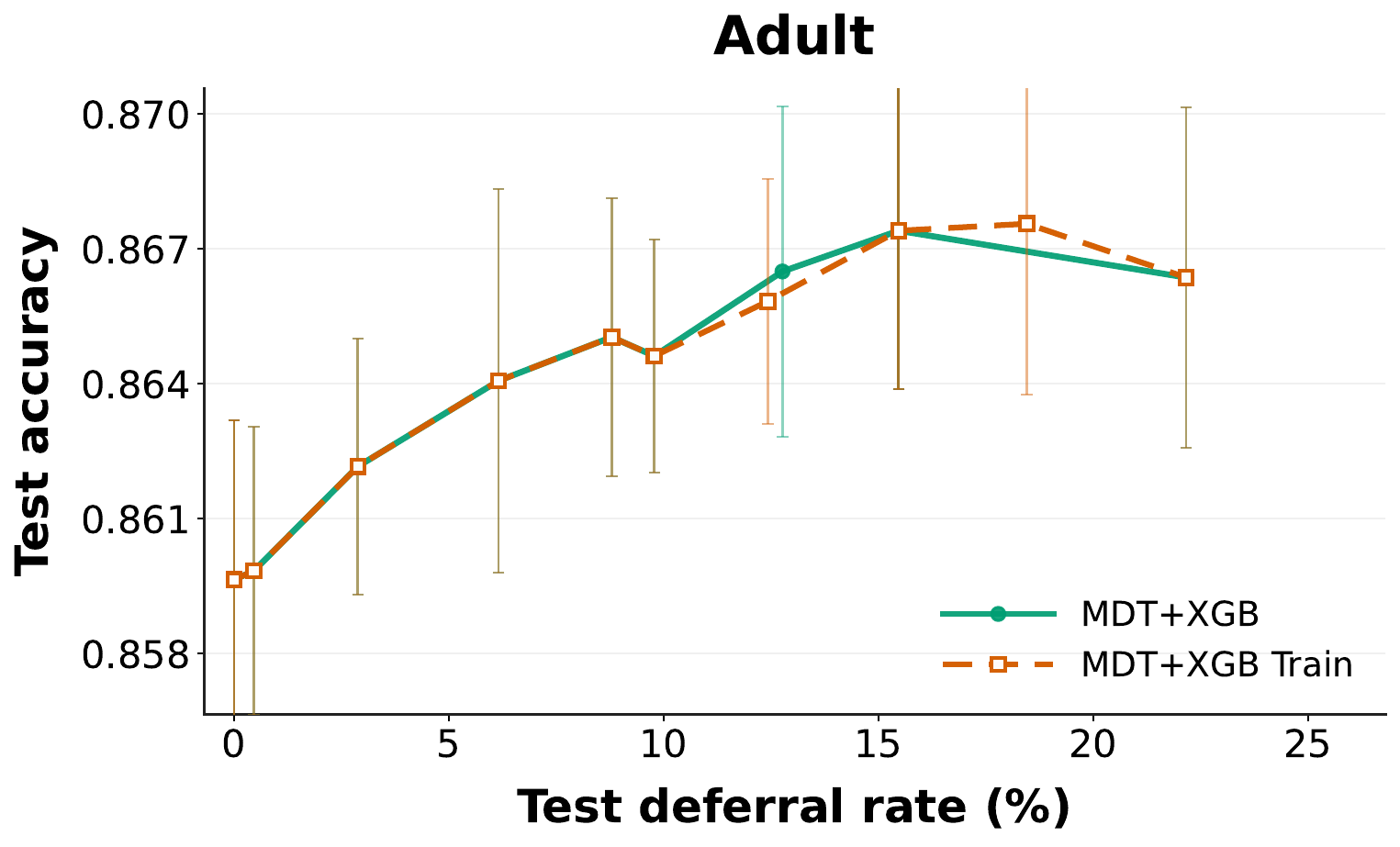}

    \vspace{0.25em}

    \includegraphics[width=0.49\textwidth,height=0.145\textheight,keepaspectratio]{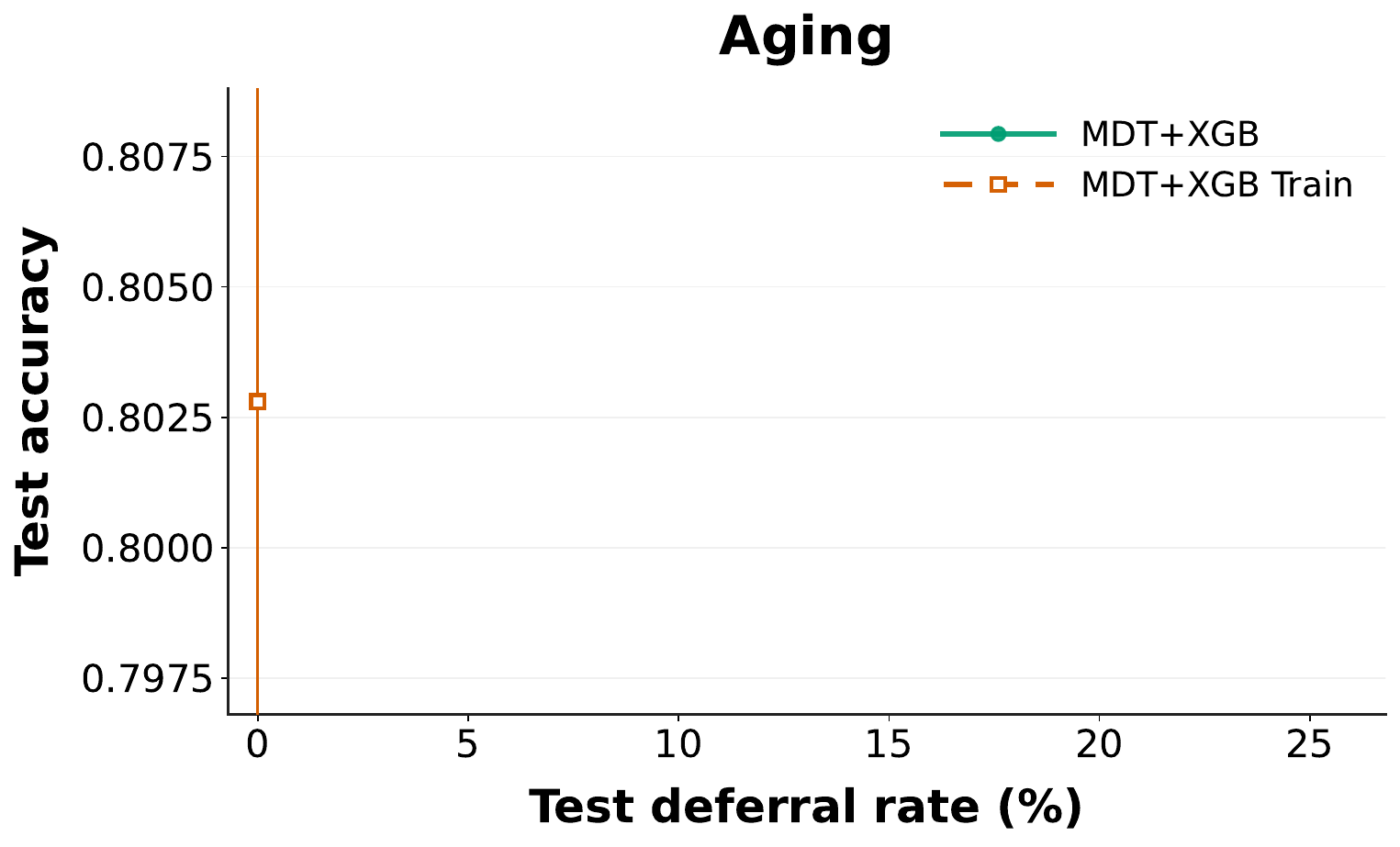}
    \hfill
    \includegraphics[width=0.49\textwidth,height=0.145\textheight,keepaspectratio]{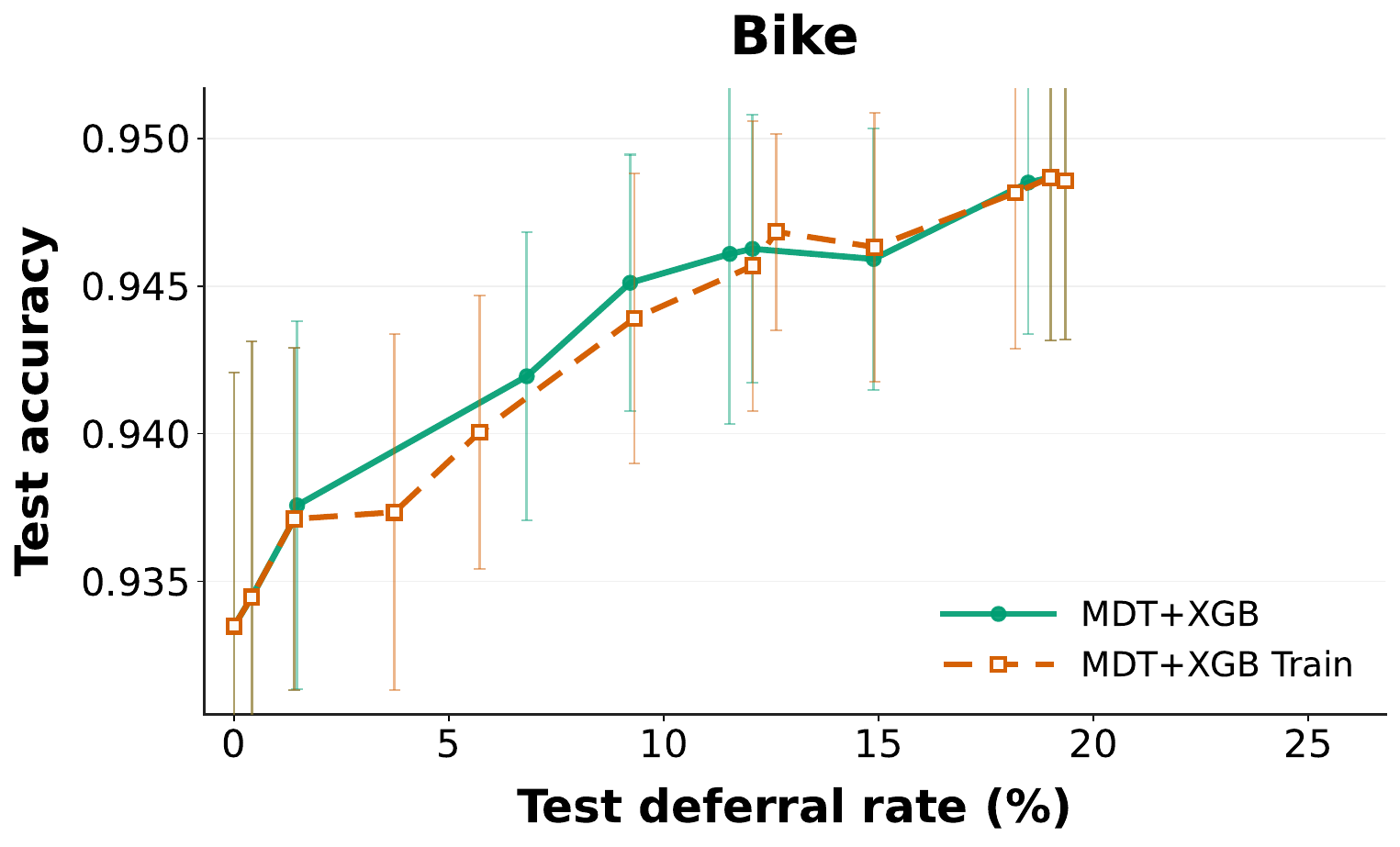}

    \vspace{0.25em}

    \includegraphics[width=0.49\textwidth,height=0.145\textheight,keepaspectratio]{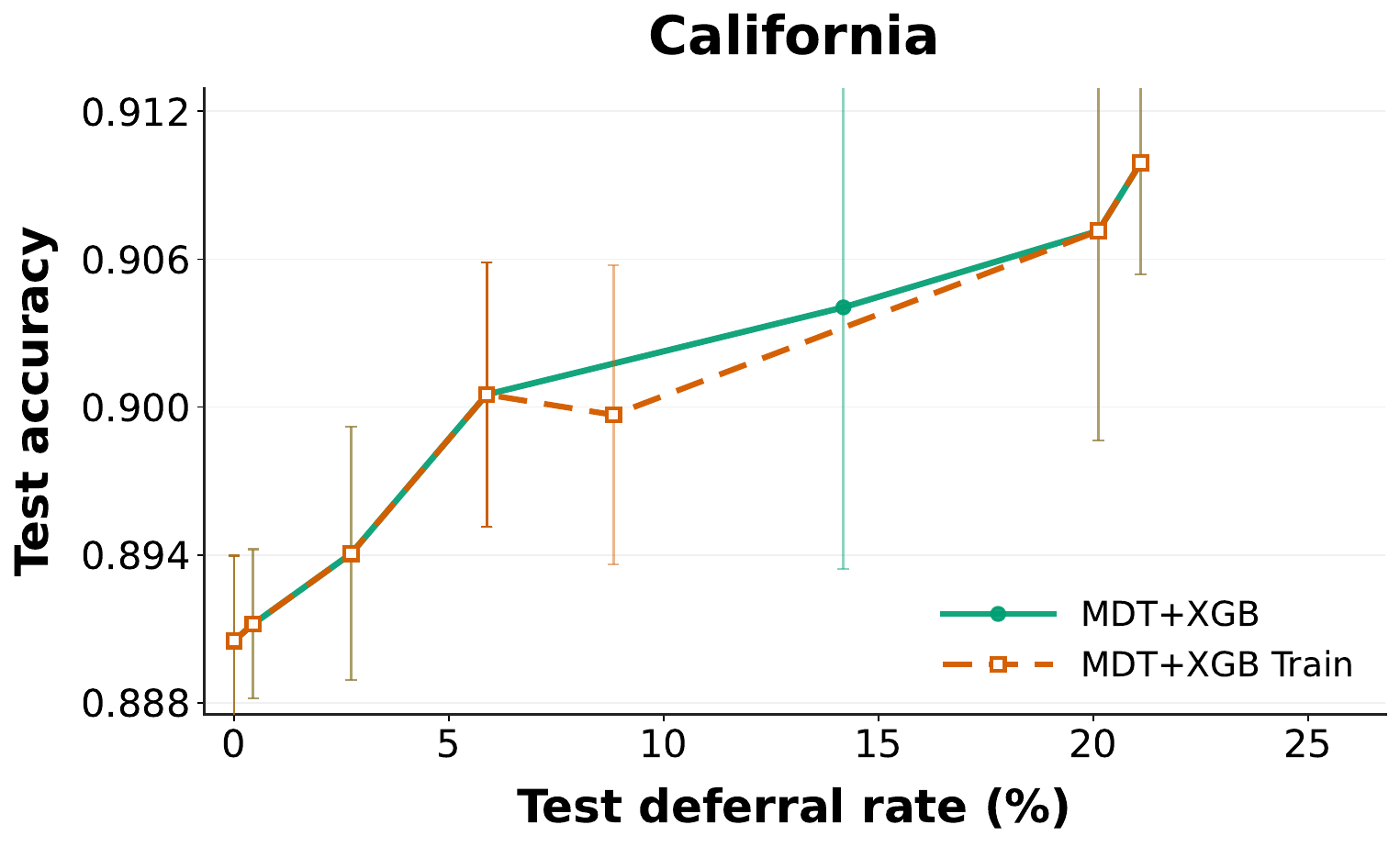}
    \hfill
    \includegraphics[width=0.49\textwidth,height=0.145\textheight,keepaspectratio]{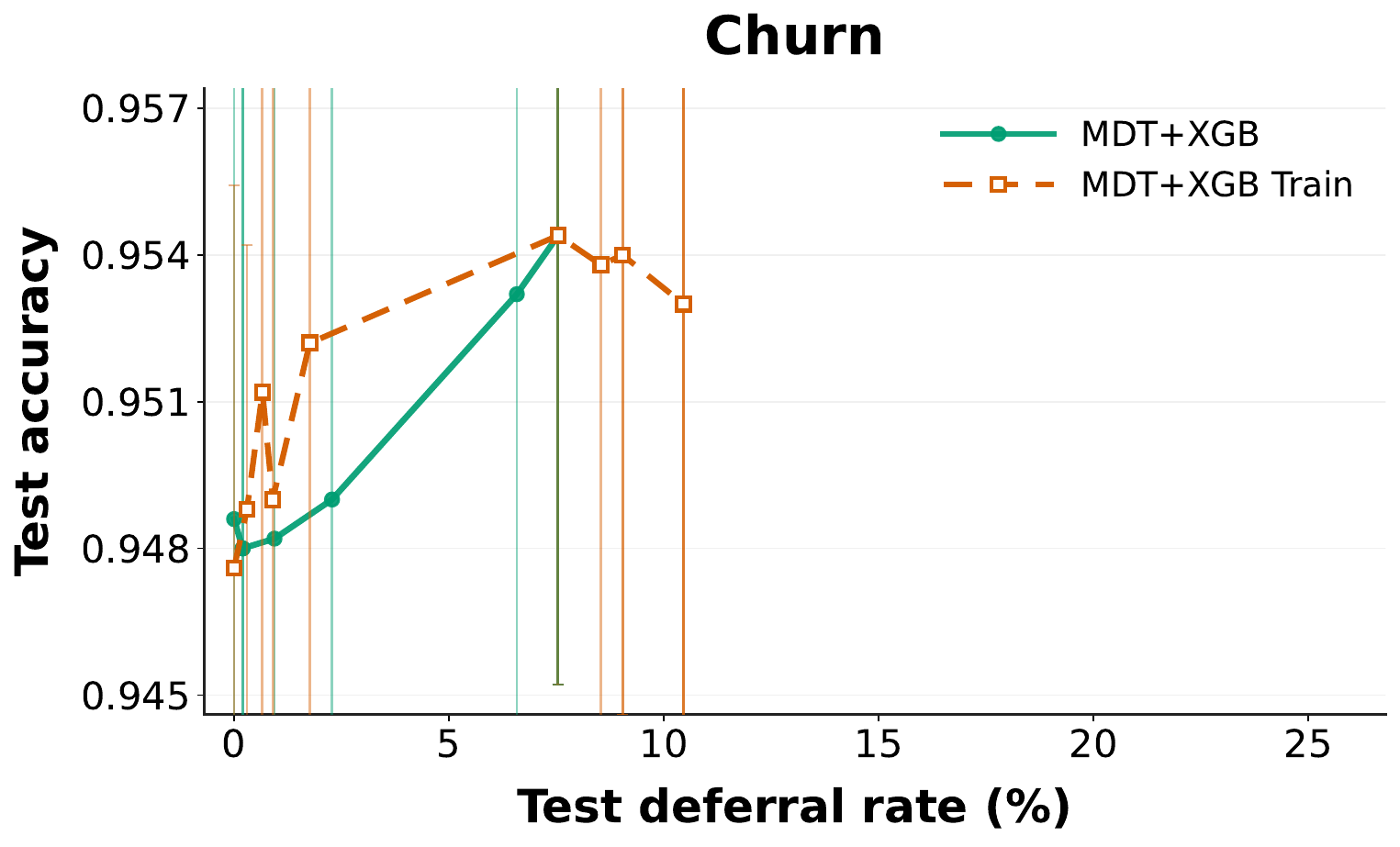}

    \vspace{0.25em}

    \includegraphics[width=0.49\textwidth,height=0.145\textheight,keepaspectratio]{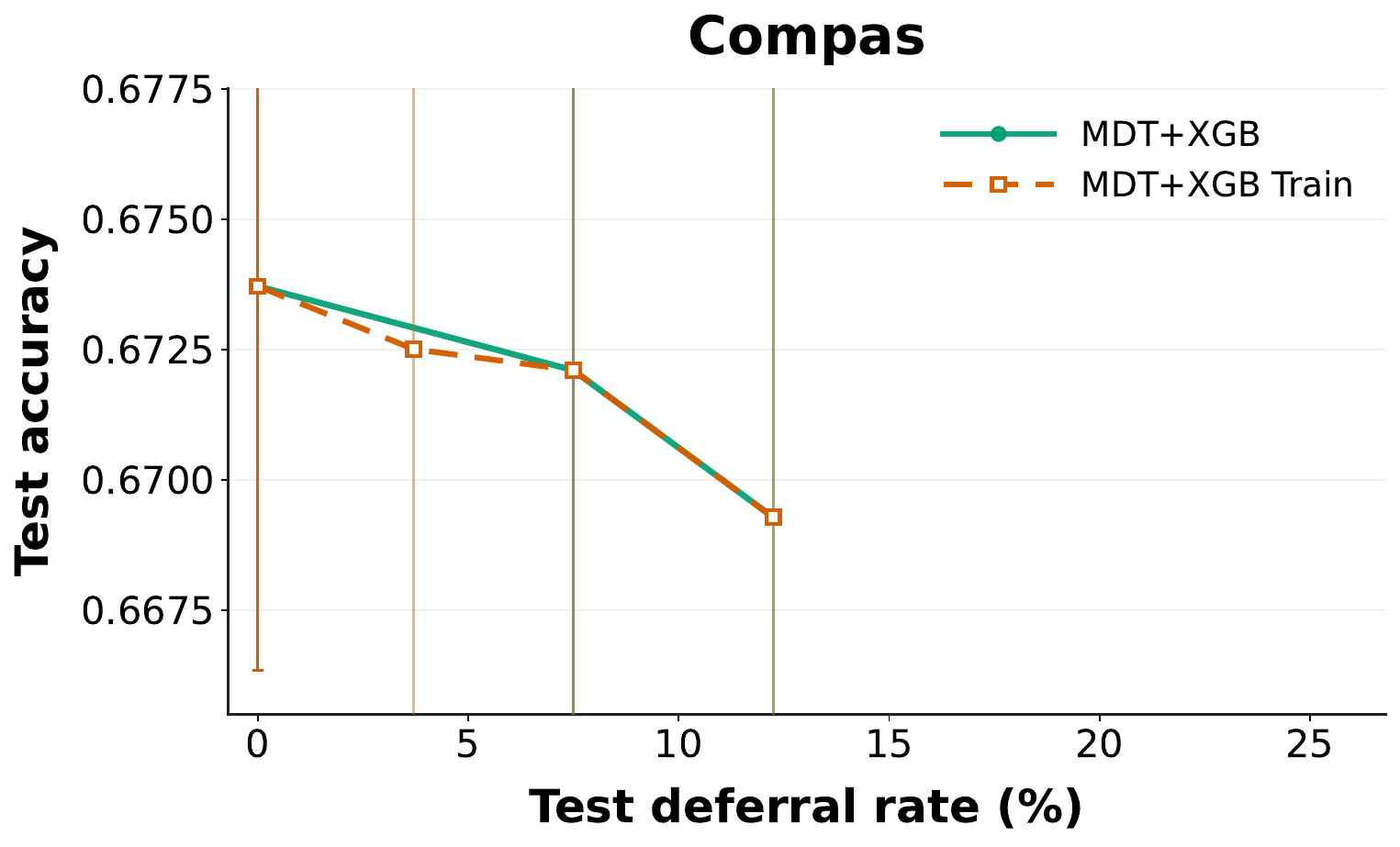}
    \hfill
    \includegraphics[width=0.49\textwidth,height=0.145\textheight,keepaspectratio]{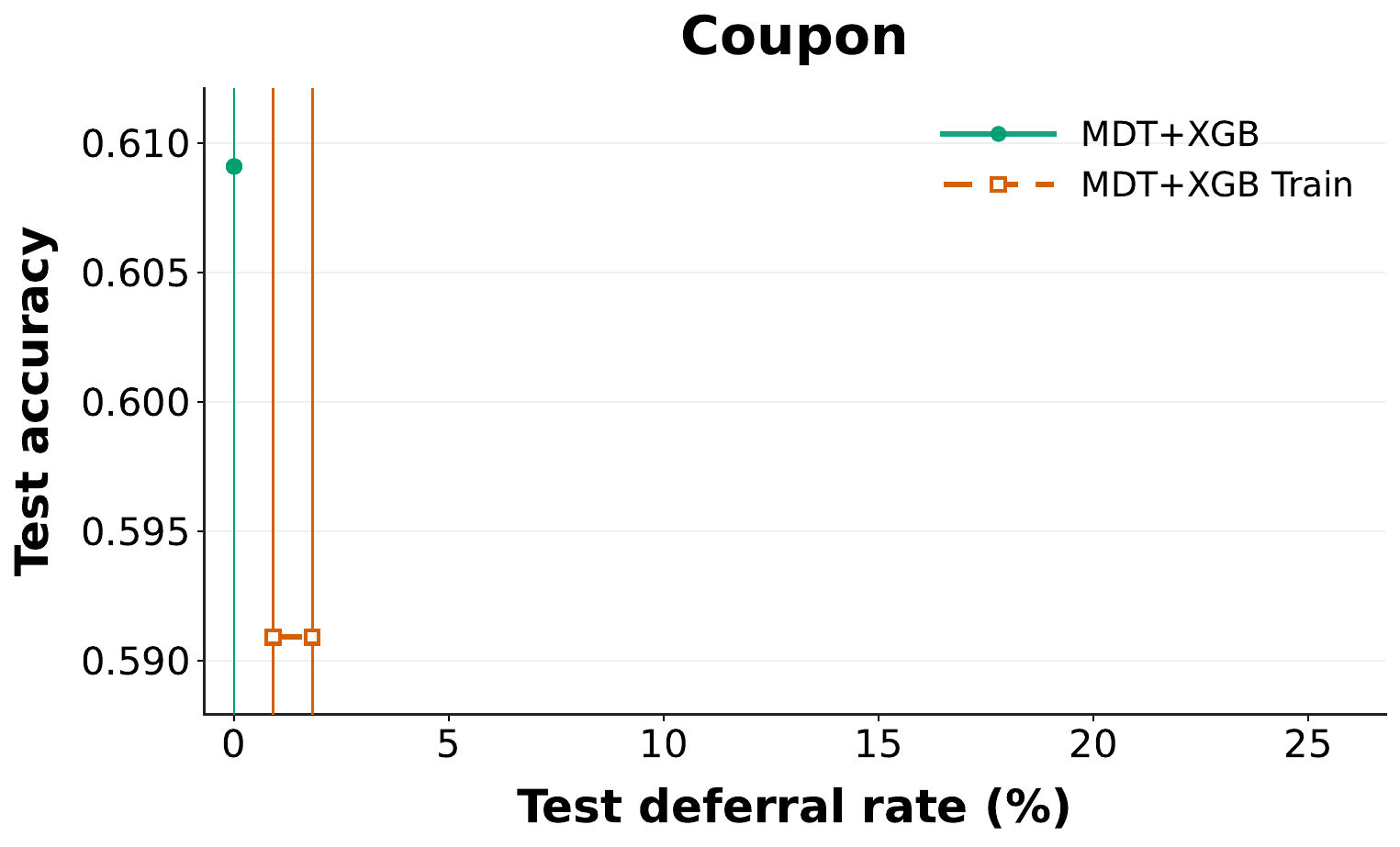}

    \vspace{0.25em}

    \includegraphics[width=0.49\textwidth,height=0.145\textheight,keepaspectratio]{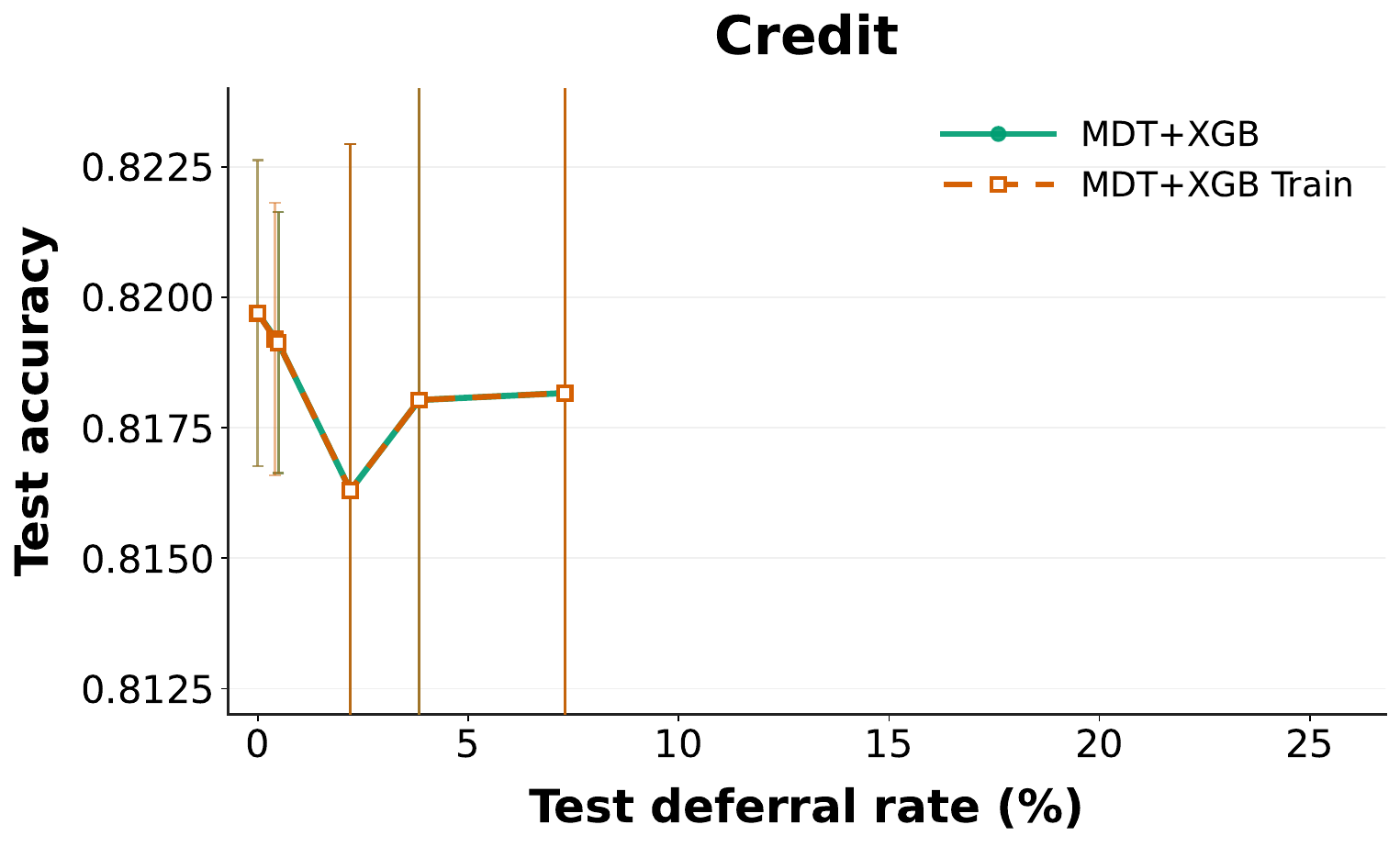}
    \hfill
    \includegraphics[width=0.49\textwidth,height=0.145\textheight,keepaspectratio]{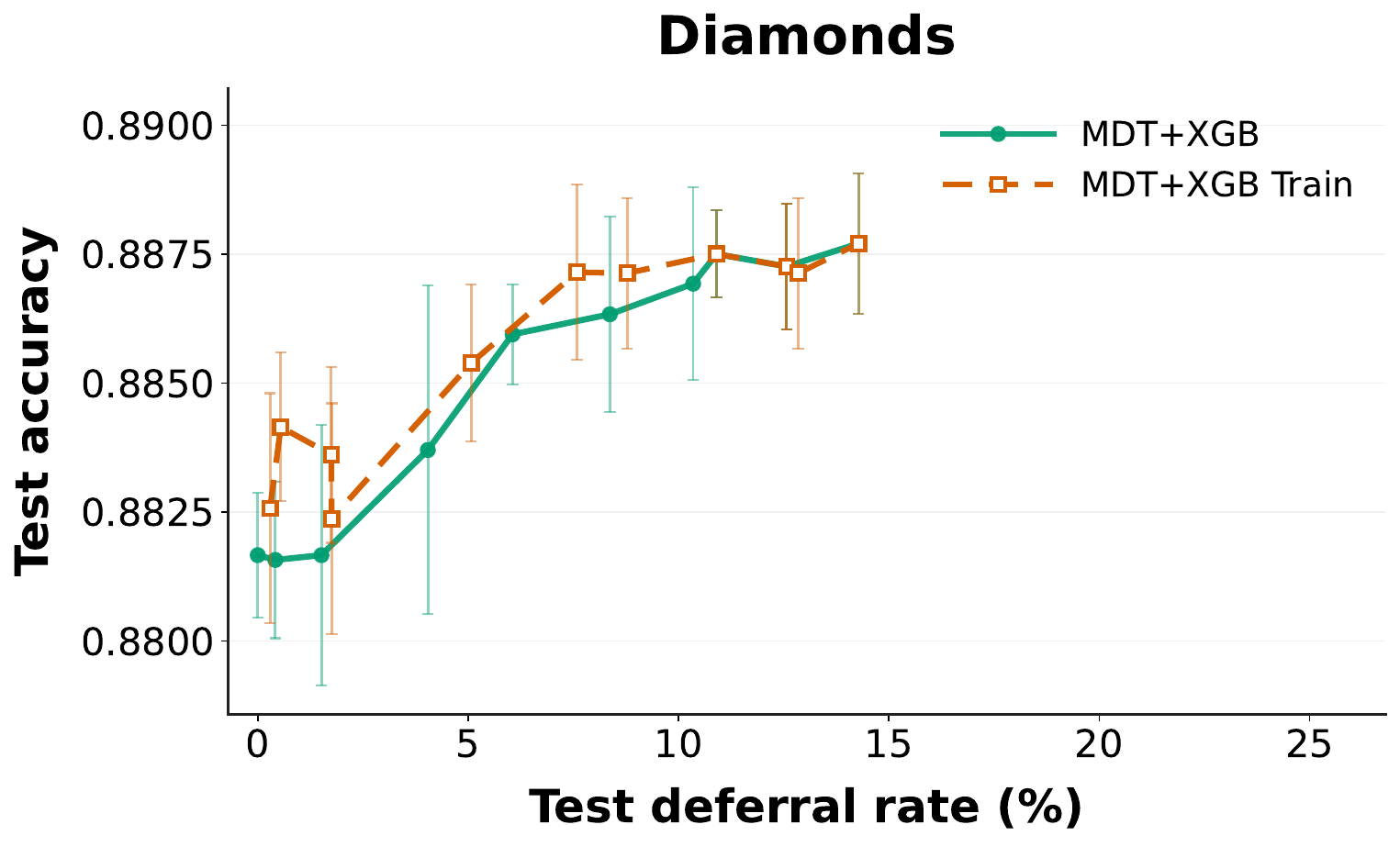}

    \vspace{0.25em}

    \includegraphics[width=0.49\textwidth,height=0.145\textheight,keepaspectratio]{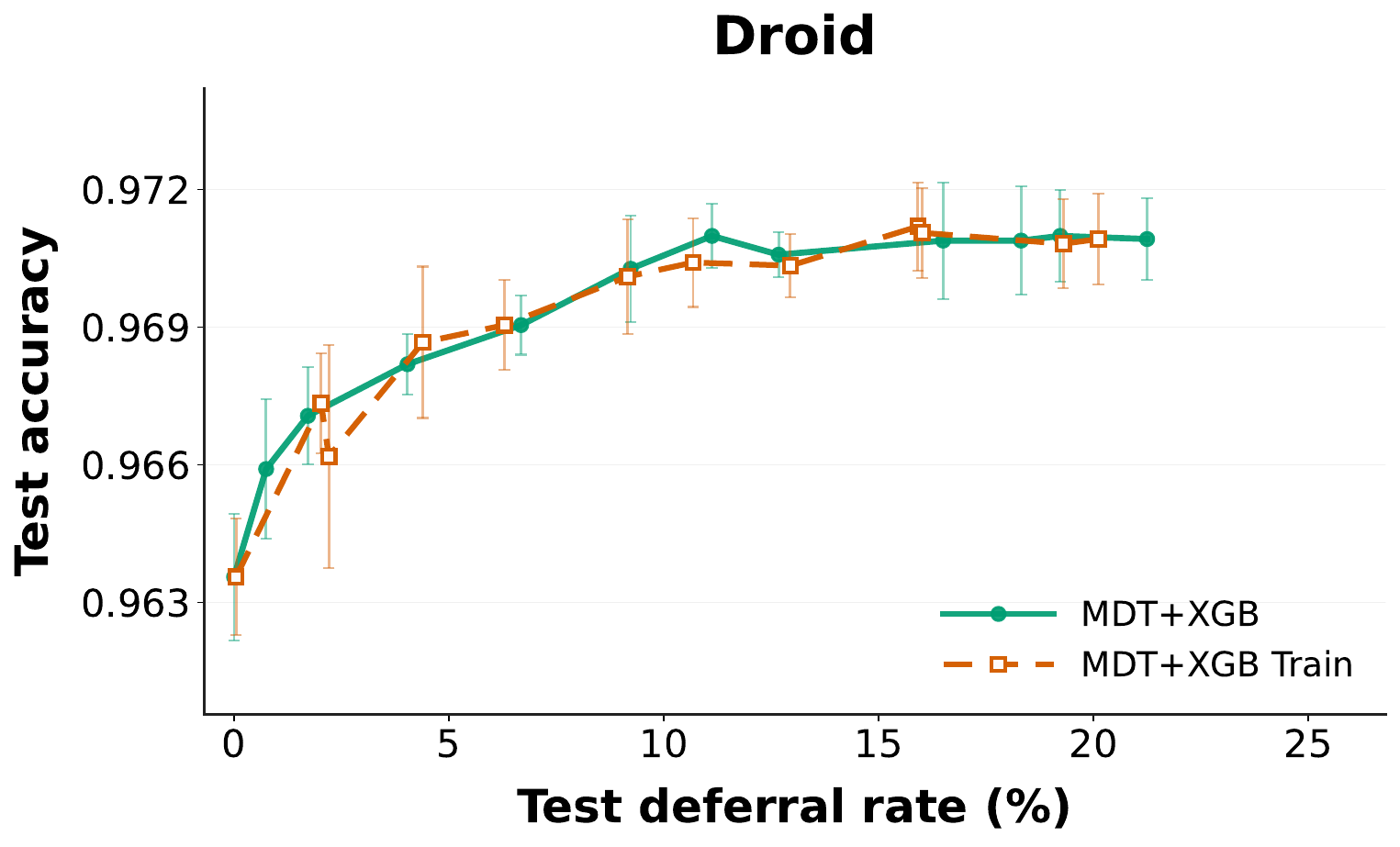}
    \hfill
    \includegraphics[width=0.49\textwidth,height=0.145\textheight,keepaspectratio]{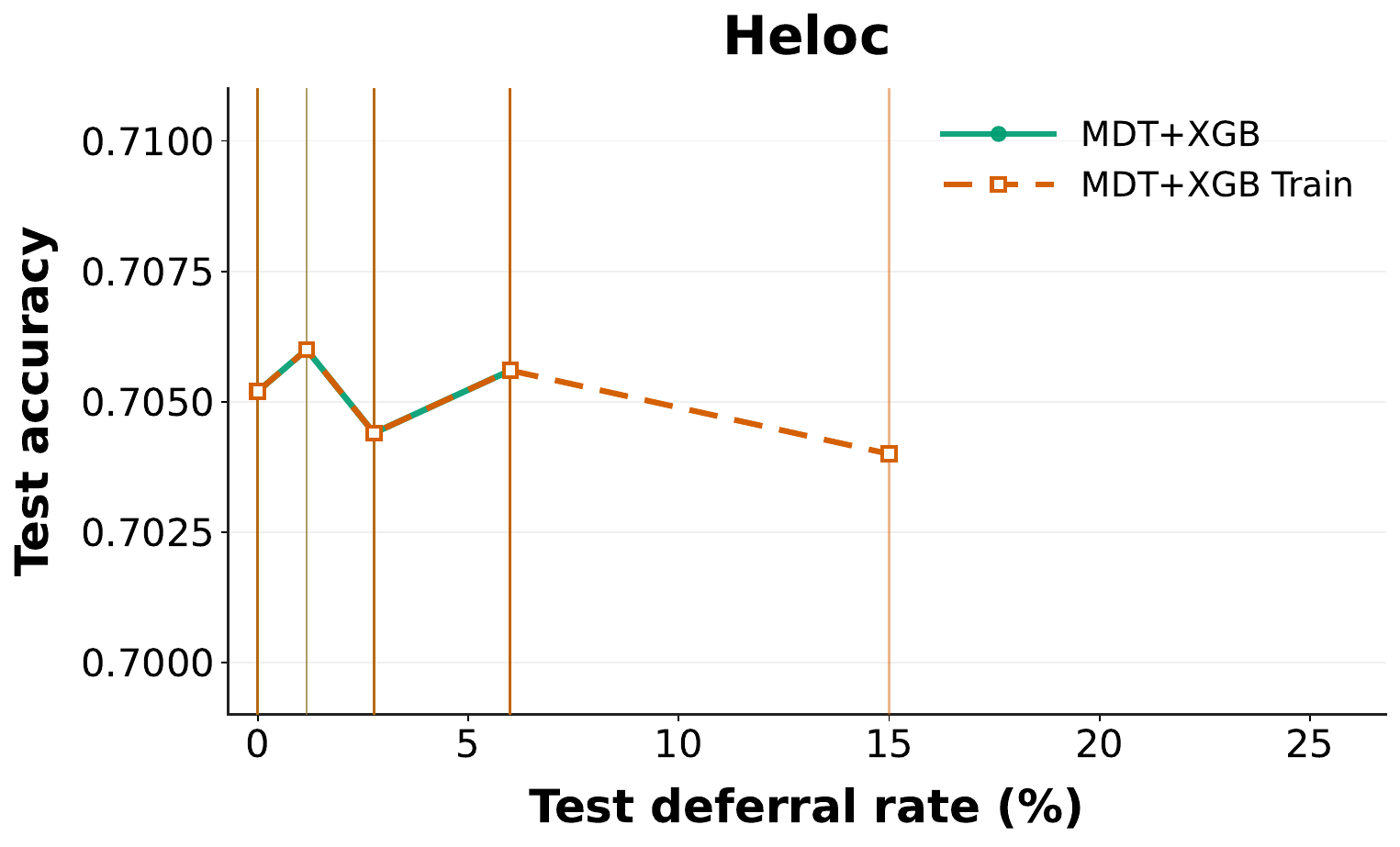}

    \vspace{-0.25em}

    \caption{
    Comparison between MDT+XGB hyperparameter selection using test deferral rate and selection using training deferral rate.
    }
    \label{fig:train_deferral_comparison}
\end{figure*}

\clearpage

\begin{table}[!t]
\centering
\small
\setlength{\tabcolsep}{4pt}
\renewcommand{\arraystretch}{0.92}
\resizebox{\textwidth}{!}{
\begin{tabular}{lccc}
\toprule
Dataset
& Difference in Train/Test Defer Rates
& Train Defer Rate
& Test Defer Rate\\
\midrule

Abalone
& $0.0000 \pm 0.0000$
& $0.0000 \pm 0.0000$
& $0.0000 \pm 0.0000$ \\

Adult
& $0.0052 \pm 0.0046$
& $0.2176 \pm 0.0307$
& $0.2216 \pm 0.0309$ \\

Aging
& $0.0000 \pm 0.0000$
& $0.0000 \pm 0.0000$
& $0.0000 \pm 0.0000$ \\

Bike
& $0.0055 \pm 0.0047$
& $0.1595 \pm 0.0399$
& $0.1539 \pm 0.0397$ \\

California
& $0.0105 \pm 0.0073$
& $0.2208 \pm 0.0142$
& $0.2110 \pm 0.0218$ \\

Churn
& $0.0037 \pm 0.0040$
& $0.0508 \pm 0.0556$
& $0.0532 \pm 0.0544$ \\

Compas
& $0.0084 \pm 0.0102$
& $0.1240 \pm 0.1148$
& $0.1225 \pm 0.1137$ \\

Coupon
& $0.0142 \pm 0.0317$
& $0.0233 \pm 0.0520$
& $0.0091 \pm 0.0203$ \\

Credit
& $0.0043 \pm 0.0029$
& $0.0731 \pm 0.0689$
& $0.0730 \pm 0.0721$ \\

Diamonds
& $0.0043 \pm 0.0029$
& $0.1658 \pm 0.0450$
& $0.1624 \pm 0.0479$ \\

Droid
& $0.0033 \pm 0.0021$
& $0.1841 \pm 0.0621$
& $0.1841 \pm 0.0648$ \\

Heloc
& $0.0105 \pm 0.0118$
& $0.0495 \pm 0.0555$
& $0.0600 \pm 0.0671$ \\

Jasmine
& $0.0067 \pm 0.0072$
& $0.0613 \pm 0.0729$
& $0.0593 \pm 0.0782$ \\

Madeline
& $0.0139 \pm 0.0105$
& $0.1150 \pm 0.0843$
& $0.1102 \pm 0.0811$ \\

Magic
& $0.0027 \pm 0.0023$
& $0.2299 \pm 0.0129$
& $0.2301 \pm 0.0128$ \\

Monk2
& $0.0000 \pm 0.0000$
& $0.0000 \pm 0.0000$
& $0.0000 \pm 0.0000$ \\

Phishing
& $0.0109 \pm 0.0061$
& $0.1807 \pm 0.0300$
& $0.1873 \pm 0.0376$ \\

Pol
& $0.0033 \pm 0.0036$
& $0.1474 \pm 0.0497$
& $0.1463 \pm 0.0524$ \\

Rl
& $0.0094 \pm 0.0079$
& $0.1520 \pm 0.0609$
& $0.1441 \pm 0.0617$ \\

Shopping
& $0.0057 \pm 0.0041$
& $0.2056 \pm 0.0277$
& $0.2015 \pm 0.0250$ \\

Spambase
& $0.0146 \pm 0.0124$
& $0.1717 \pm 0.0349$
& $0.1863 \pm 0.0301$ \\

Student
& $0.0084 \pm 0.0157$
& $0.0470 \pm 0.0880$
& $0.0554 \pm 0.1037$ \\

Tictactoe
& $0.0047 \pm 0.0065$
& $0.0350 \pm 0.0346$
& $0.0323 \pm 0.0358$ \\

Wine
& $0.0124 \pm 0.0102$
& $0.2101 \pm 0.0539$
& $0.1982 \pm 0.0465$ \\

\bottomrule
\end{tabular}
}
\caption{Mean and standard deviation across train/test splits of the absolute difference between train and test defer rates, along with the train and test  defer rates themselves, for MDT+XGB models selected under a 25\% test defer-rate cutoff.}
\label{tab:train_test_defer_rate_stability_25}
\end{table}

\begin{table}[t]
\centering
\small
\setlength{\tabcolsep}{2.8pt}
\renewcommand{\arraystretch}{0.88}
\resizebox{\textwidth}{!}{
\begin{tabular}{lccccc}
\toprule
Dataset 
& Split Compression 
& Leaf Compression
& MDT Deferred Accuracy 
& XGB Deferred Accuracy
& $\Delta$ Accuracy (MDT $-$ XGB) \\
\midrule

Adult
& $1.0214 \pm 0.0167$
& $1.0819 \pm 0.0556$
& $0.7049\ [0.6959, 0.7139]$
& $0.7067\ [0.6977, 0.7156]$
& $-0.0018\ [-0.0048, 0.0012]$ \\

Bike
& $1.0642 \pm 0.0277$
& $1.0875 \pm 0.0803$
& $0.8082\ [0.7933, 0.8228]$
& $0.8090\ [0.7940, 0.8236]$
& $-0.0007\ [-0.0060, 0.0045]$ \\

Compas
& $1.7564 \pm 0.1381$
& $1.5450 \pm 0.6991$
& $0.6349\ [0.5970, 0.6727]$
& $0.6332\ [0.5954, 0.6711]$
& $0.0016\ [-0.0132, 0.0164]$ \\

Diamonds
& $1.1116 \pm 0.1536$
& $1.4575 \pm 0.5455$
& $0.7221\ [0.7126, 0.7314]$
& $0.7252\ [0.7158, 0.7345]$
& $-0.0031\ [-0.0066, 0.0003]$ \\

Jasmine
& $1.8466 \pm 0.5427$
& $1.3607 \pm 0.8066$
& $0.6723\ [0.6045, 0.7401]$
& $0.6893\ [0.6215, 0.7571]$
& $-0.0169\ [-0.0734, 0.0395]$ \\

Madeline
& $1.0614 \pm 0.0752$
& $11.9562 \pm 24.2114$
& $0.6994\ [0.6503, 0.7486]$
& $0.7139\ [0.6647, 0.7601]$
& $-0.0145\ [-0.0491, 0.0202]$ \\

Phishing
& $1.8082 \pm 0.3270$
& $1.0475 \pm 0.0443$
& $0.8798\ [0.8658, 0.8938]$
& $0.8798\ [0.8658, 0.8938]$
& $0.0000\ [-0.0029, 0.0024]$ \\

Pol
& $1.0024 \pm 0.0221$
& $1.0434 \pm 0.0628$
& $0.9186\ [0.9044, 0.9322]$
& $0.9173\ [0.9031, 0.9308]$
& $0.0014\ [-0.0047, 0.0075]$ \\

Rl
& $1.2228 \pm 0.0952$
& $1.3269 \pm 0.2278$
& $0.7458\ [0.7137, 0.7779]$
& $0.7416\ [0.7095, 0.7737]$
& $0.0042\ [-0.0112, 0.0196]$ \\

Shopping
& $1.1049 \pm 0.0796$
& $1.3019 \pm 0.3199$
& $0.6973\ [0.6787, 0.7150]$
& $0.7009\ [0.6828, 0.7186]$
& $-0.0036\ [-0.0121, 0.0052]$ \\

\bottomrule
\end{tabular}
}
\caption{
Compression ratios (mean $\pm$ standard deviation across splits) for splits to evaluate a deferred test sample and leaves of the final compressed MDT XGBoost relative to the original XGBoost fallback. A 95\% confidence interval on the difference ($\text{MDT} - \text{XGB}$) in test deferred region accuracy shows they are indistinguishable. In some settings, the MDT's XGBoost has higher mean accuracy. In other cases, it has lower mean accuracy. However, the confidence intervals almost always contain 0. This shows that we can obtain meaningful compression on the black box without any loss in accuracy.  Hyperparameters were selected according to maximum validation accuracy without exceeding 25\% test deferral for a train-test split. To compute confidence intervals, we concatenate the deferred-sample correctness indicators across all train/test splits and apply bootstrap resampling to the pooled vector of deferred samples.
}
\label{tab:appendix-compressed_fallback_results}
\end{table}

\begin{table}[t]
\centering
\caption{Runtime statistics of the chosen hyperparameters (best validation accuracy with under 25\% test deferral) across datasets. Mean $\pm$ standard deviation across 5 train/test splits is shown.}
\label{appendix:timing}
\begin{tabular}{lc}
\toprule
Dataset & Mean $\pm$ Std (s) \\
\midrule
Abalone & $27.99 \pm 2.44$ \\
Adult & $367.56 \pm 106.81$ \\
Aging & $0.65 \pm 0.12$ \\
Bike & $102.39 \pm 48.71$ \\
California & $315.83 \pm 28.55$ \\
Churn & $57.19 \pm 20.99$ \\
Compas & $9.08 \pm 2.34$ \\
Coupon & $0.50 \pm 0.13$ \\
Credit & $459.45 \pm 82.95$ \\
Diamonds & $364.63 \pm 185.63$ \\
Droid & $53.54 \pm 29.69$ \\
Heloc & $25.59 \pm 2.00$ \\
Jasmine & $44.13 \pm 16.54$ \\
Madeline & $323.47 \pm 126.18$ \\
Magic & $427.09 \pm 15.86$ \\
Monk2 & $0.52 \pm 0.04$ \\
Phishing & $18.49 \pm 12.89$ \\
Pol & $159.86 \pm 37.14$ \\
Rl & $37.65 \pm 43.64$ \\
Shopping & $146.12 \pm 11.98$ \\
Spambase & $160.96 \pm 33.55$ \\
Student & $2.08 \pm 0.43$ \\
Tictactoe & $2.22 \pm 0.07$ \\
Wine & $63.69 \pm 5.84$ \\
\bottomrule
\end{tabular}
\end{table}

\begin{table}[!t]
\centering
\small
\setlength{\tabcolsep}{4pt}
\renewcommand{\arraystretch}{0.92}
\resizebox{\textwidth}{!}{
\begin{tabular}{lccc}
\toprule
Dataset
& Difference in Train/Test Defer Rates
& Train Defer Rate
& Test Defer Rate\\
\midrule

Abalone
& $0.0000 \pm 0.0000$
& $0.0000 \pm 0.0000$
& $0.0000 \pm 0.0000$ \\

Adult
& $0.0017 \pm 0.0016$
& $0.0890 \pm 0.0097$
& $0.0879 \pm 0.0099$ \\

Aging
& $0.0000 \pm 0.0000$
& $0.0000 \pm 0.0000$
& $0.0000 \pm 0.0000$ \\

Bike
& $0.0060 \pm 0.0050$
& $0.0851 \pm 0.0300$
& $0.0791 \pm 0.0295$ \\

California
& $0.0022 \pm 0.0016$
& $0.0608 \pm 0.0070$
& $0.0588 \pm 0.0059$ \\

Churn
& $0.0053 \pm 0.0044$
& $0.0437 \pm 0.0414$
& $0.0444 \pm 0.0380$ \\

Compas
& $0.0000 \pm 0.0000$
& $0.0000 \pm 0.0000$
& $0.0000 \pm 0.0000$ \\

Coupon
& $0.0142 \pm 0.0317$
& $0.0233 \pm 0.0520$
& $0.0091 \pm 0.0203$ \\

Credit
& $0.0021 \pm 0.0019$
& $0.0234 \pm 0.0139$
& $0.0220 \pm 0.0145$ \\

Diamonds
& $0.0030 \pm 0.0021$
& $0.0840 \pm 0.0186$
& $0.0810 \pm 0.0180$ \\

Droid
& $0.0048 \pm 0.0040$
& $0.0873 \pm 0.0127$
& $0.0832 \pm 0.0103$ \\

Heloc
& $0.0048 \pm 0.0066$
& $0.0228 \pm 0.0322$
& $0.0276 \pm 0.0386$ \\

Jasmine
& $0.0044 \pm 0.0071$
& $0.0258 \pm 0.0360$
& $0.0214 \pm 0.0312$ \\

Madeline
& $0.0094 \pm 0.0068$
& $0.0482 \pm 0.0289$
& $0.0576 \pm 0.0332$ \\

Magic
& $0.0072 \pm 0.0053$
& $0.0664 \pm 0.0298$
& $0.0669 \pm 0.0304$ \\

Monk2
& $0.0000 \pm 0.0000$
& $0.0000 \pm 0.0000$
& $0.0000 \pm 0.0000$ \\

Phishing
& $0.0048 \pm 0.0042$
& $0.0914 \pm 0.0039$
& $0.0903 \pm 0.0047$ \\

Pol
& $0.0059 \pm 0.0021$
& $0.0728 \pm 0.0242$
& $0.0719 \pm 0.0188$ \\

Rl
& $0.0046 \pm 0.0047$
& $0.0436 \pm 0.0463$
& $0.0390 \pm 0.0416$ \\

Shopping
& $0.0062 \pm 0.0046$
& $0.0691 \pm 0.0336$
& $0.0648 \pm 0.0289$ \\

Spambase
& $0.0046 \pm 0.0027$
& $0.0702 \pm 0.0220$
& $0.0739 \pm 0.0194$ \\

Student
& $0.0011 \pm 0.0026$
& $0.0066 \pm 0.0146$
& $0.0077 \pm 0.0172$ \\

Tictactoe
& $0.0047 \pm 0.0065$
& $0.0350 \pm 0.0346$
& $0.0323 \pm 0.0358$ \\

Wine
& $0.0069 \pm 0.0077$
& $0.0527 \pm 0.0412$
& $0.0594 \pm 0.0468$ \\

\bottomrule
\end{tabular}
}
\caption{Mean and standard deviation across train/test splits of the absolute difference between train and test defer rates, along with the train and test defer rates themselves, for MDT+XGB models selected under a 10\% test defer-rate cutoff.}
\label{tab:train_test_defer_rate_stability_10}
\end{table}

\begin{table}[!t]
\centering
\scriptsize
\setlength{\tabcolsep}{4pt}
\renewcommand{\arraystretch}{0.88}
\resizebox{\textwidth}{!}{
\begin{tabular}{lccccc}
\toprule
Dataset 
& Non-Deferral Tree 
& MDT+XGB $\le$25\% 
& FIGS
& Random Forest 
& XGBoost \\
\midrule

Abalone 
& $0.6314 \pm 0.0135$
& $0.6333 \pm 0.0155$
& $0.6381 \pm 0.0066$
& $0.6314 \pm 0.0079$
& $0.6323 \pm 0.0157$ \\

Adult
& $0.8598 \pm 0.0038$
& $0.8667 \pm 0.0047$
& $0.8589 \pm 0.0048$
& $0.8619 \pm 0.0038$
& $0.8701 \pm 0.0030$ \\

Aging
& $0.8028 \pm 0.0077$
& $0.8028 \pm 0.0077$
& $0.7944 \pm 0.0106$
& $0.8014 \pm 0.0080$
& $0.8000 \pm 0.0063$ \\

Bank & $0.9051 \pm 0.0008$ & $0.9052 \pm 0.0017$ & $0.9069 \pm 0.0008$ & $\underline{0.9087 \pm 0.0008}$ & $\mathbf{0.9090 \pm 0.0007}$ \\

Bike
& $0.9346 \pm 0.0046$
& $0.9486 \pm 0.0054$
& $0.9113 \pm 0.0087$
& $0.9426 \pm 0.0037$
& $0.9500 \pm 0.0052$ \\

California
& $0.8893 \pm 0.0053$
& $0.9099 \pm 0.0045$
& $0.8774 \pm 0.0034$
& $0.9019 \pm 0.0072$
& $0.9101 \pm 0.0050$ \\

Churn
& $0.9424 \pm 0.0081$
& $0.9566 \pm 0.0063$
& $0.9434 \pm 0.0127$
& $0.9506 \pm 0.0072$
& $0.9574 \pm 0.0063$ \\

Compas
& $0.6725 \pm 0.0076$
& $0.6693 \pm 0.0083$
& $0.6657 \pm 0.0112$
& $0.6741 \pm 0.0121$
& $0.6757 \pm 0.0072$ \\

Coupon
& $0.6091 \pm 0.0689$
& $0.6000 \pm 0.0674$
& $0.5727 \pm 0.0407$
& $0.5364 \pm 0.0498$
& $0.5545 \pm 0.0674$ \\

Credit
& $0.8196 \pm 0.0032$
& $0.8182 \pm 0.0065$
& $0.8200 \pm 0.0032$
& $0.8195 \pm 0.0029$
& $0.8202 \pm 0.0029$ \\

Diamonds
& $0.8815 \pm 0.0016$
& $0.8877 \pm 0.0014$
& $0.8829 \pm 0.0023$
& $0.8871 \pm 0.0009$
& $0.8883 \pm 0.0021$ \\

Droid
& $0.9627 \pm 0.0024$
& $0.9709 \pm 0.0012$
& $0.9601 \pm 0.0011$
& $0.9715 \pm 0.0014$
& $0.9718 \pm 0.0009$ \\

Heloc
& $0.7052 \pm 0.0239$
& $0.7056 \pm 0.0239$
& $0.6840 \pm 0.0165$
& $0.7080 \pm 0.0228$
& $0.7116 \pm 0.0241$ \\

Jasmine
& $0.8037 \pm 0.0173$
& $0.8161 \pm 0.0224$
& $0.7983 \pm 0.0120$
& $0.8218 \pm 0.0190$
& $0.8214 \pm 0.0158$ \\

Madeline
& $0.8408 \pm 0.0281$
& $0.8627 \pm 0.0122$
& $0.8099 \pm 0.0328$
& $0.8793 \pm 0.0158$
& $0.8697 \pm 0.0115$ \\

Magic
& $0.8541 \pm 0.0066$
& $0.8756 \pm 0.0111$
& $0.8539 \pm 0.0045$
& $0.8798 \pm 0.0059$
& $0.8849 \pm 0.0064$ \\

Monk2
& $1.0000 \pm 0.0000$
& $1.0000 \pm 0.0000$
& $0.8867 \pm 0.0402$
& $0.9933 \pm 0.0149$
& $1.0000 \pm 0.0000$ \\

Phishing
& $0.9528 \pm 0.0039$
& $0.9728 \pm 0.0038$
& $0.9392 \pm 0.0042$
& $0.9721 \pm 0.0029$
& $0.9733 \pm 0.0034$ \\

Pol
& $0.9723 \pm 0.0026$
& $0.9849 \pm 0.0006$
& $0.9631 \pm 0.0040$
& $0.9823 \pm 0.0016$
& $0.9845 \pm 0.0015$ \\

Rl
& $0.7614 \pm 0.0117$
& $0.7968 \pm 0.0198$
& $0.7163 \pm 0.0210$
& $0.8074 \pm 0.0129$
& $0.8278 \pm 0.0075$ \\

Shopping
& $0.9010 \pm 0.0055$
& $0.9047 \pm 0.0036$
& $0.8988 \pm 0.0048$
& $0.9046 \pm 0.0027$
& $0.9057 \pm 0.0034$ \\

Spambase
& $0.9254 \pm 0.0035$
& $0.9500 \pm 0.0053$
& $0.9222 \pm 0.0077$
& $0.9517 \pm 0.0031$
& $0.9528 \pm 0.0051$ \\

Student
& $0.8292 \pm 0.0305$
& $0.8154 \pm 0.0361$
& $0.8046 \pm 0.0241$
& $0.8215 \pm 0.0358$
& $0.8046 \pm 0.0301$ \\

Tictactoe
& $0.9375 \pm 0.0165$
& $0.9885 \pm 0.0113$
& $0.9552 \pm 0.0291$
& $0.9896 \pm 0.0037$
& $1.0000 \pm 0.0000$ \\

Wine
& $0.8340 \pm 0.0033$
& $0.8788 \pm 0.0116$
& $0.8391 \pm 0.0087$
& $0.8848 \pm 0.0095$
& $0.8864 \pm 0.0049$ \\

\bottomrule
\end{tabular}
}
\caption{Test accuracy (mean $\pm$ standard deviation) across all datasets.}
\label{tab:all_results_std}
\end{table}

\FloatBarrier

\begin{figure}[t]
    \centering
    \includegraphics[width=\linewidth]{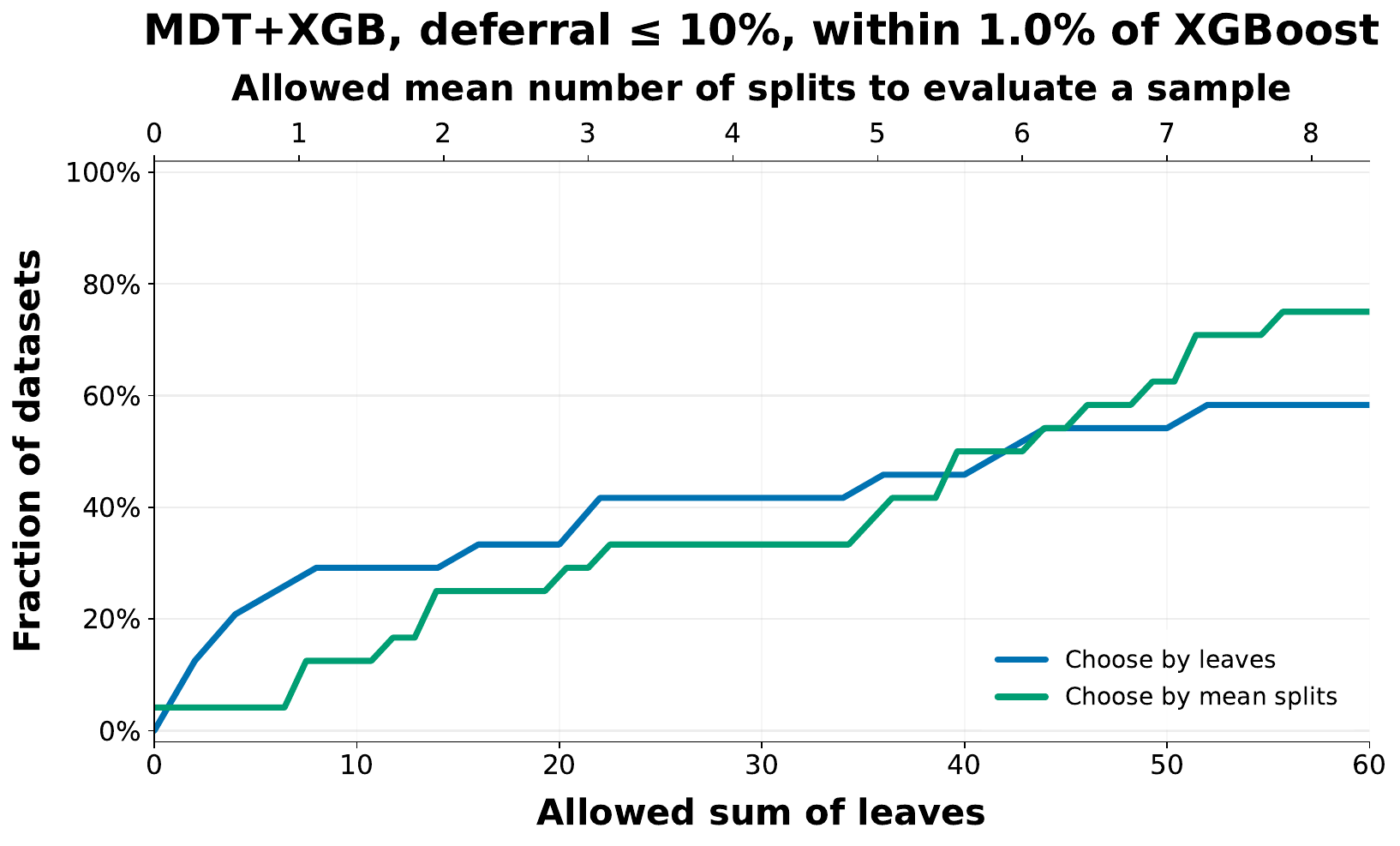}
    \caption{
    Comparison between selecting hyperparameters for MDTs using a budget on the total number of leaves versus a budget on the mean number of split decisions required to evaluate a sample. The curves are closely but not exactly aligned. A rule of thumb is that a total of 50 leaves corresponds to a mean of approximately 7 split evaluations per sample. This is much better than a full depth 7 tree, which has 128 leaves.
    }
    \label{fig:leaves_vs_mean_splits}
\end{figure}

\begin{figure}[!b]
    \centering
    \includegraphics[
        width=\linewidth,
        trim=0 25 0 25,
        clip
    ]{new_main_figs/legend_main.pdf}
    \caption{Legend for the accuracy-deferral trade-off plots on the following pages.}

\end{figure}

\begin{figure}[p]
    \thispagestyle{empty}
    \centering
    \captionsetup{aboveskip=2pt, belowskip=0pt}

    \begin{subfigure}{0.49\linewidth}
        \includegraphics[width=\linewidth]{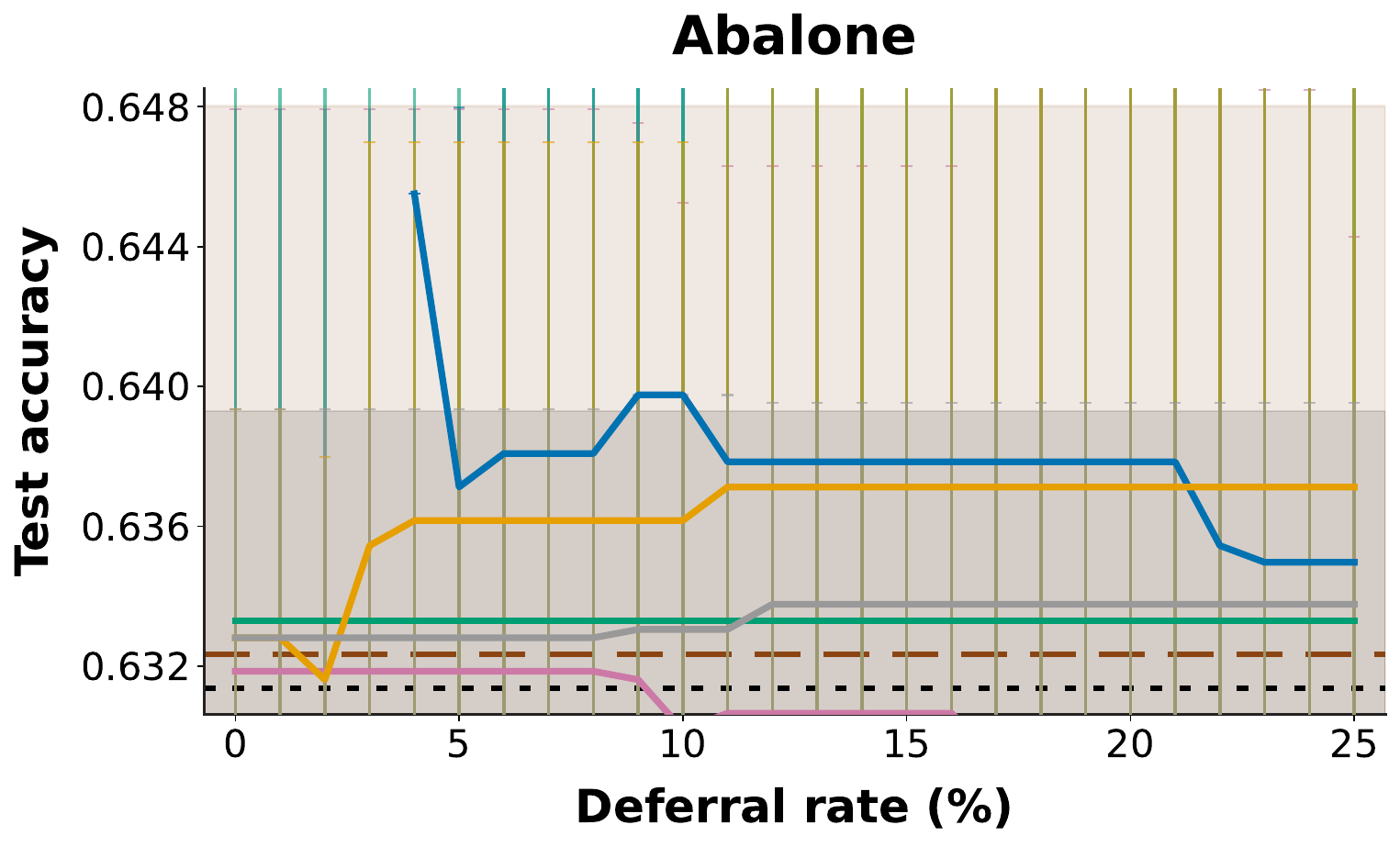}
    \end{subfigure}
    \hfill
    \begin{subfigure}{0.49\linewidth}
        \includegraphics[width=\linewidth]{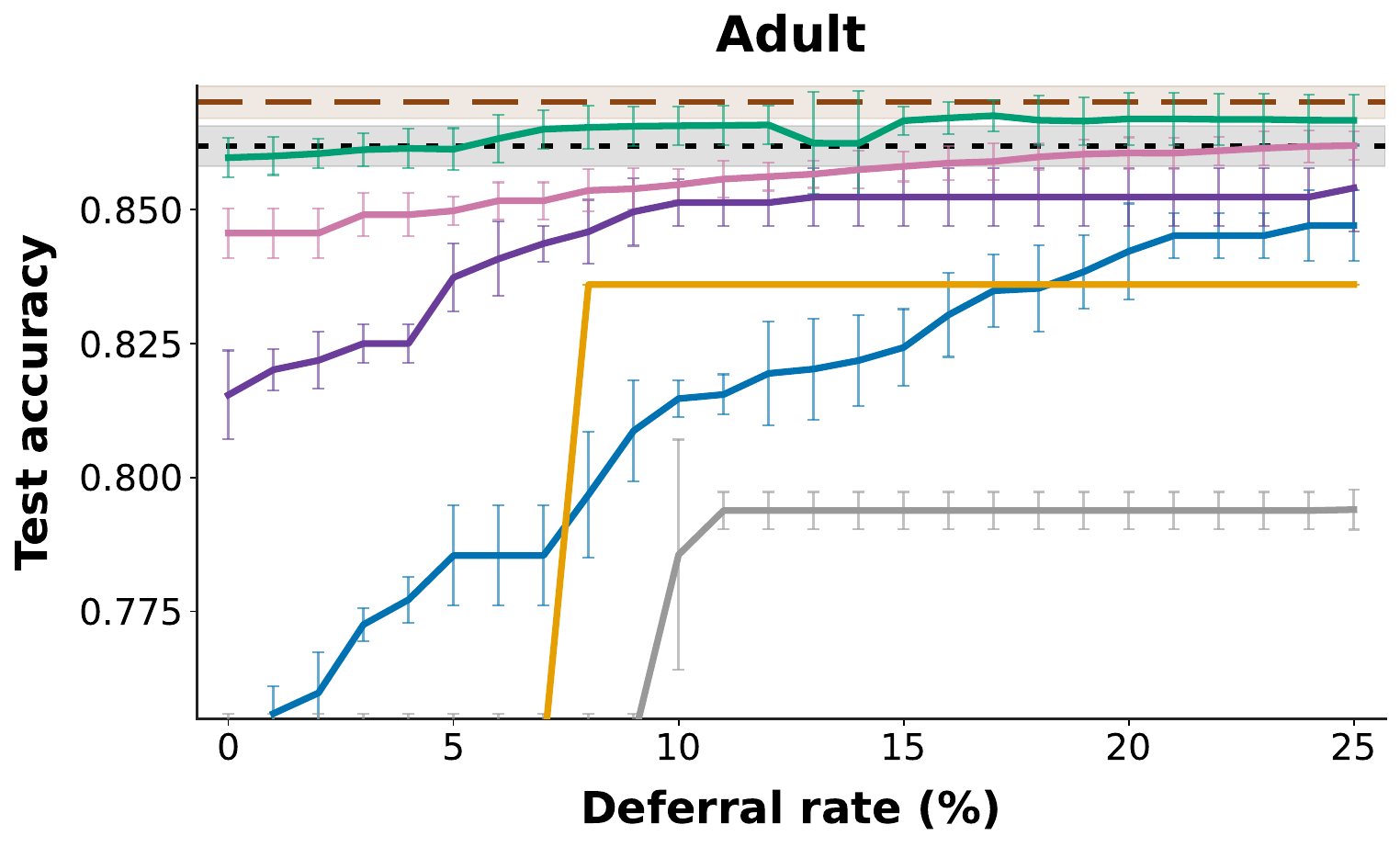}
    \end{subfigure}

    \vspace{-0.05em}

    \begin{subfigure}{0.49\linewidth}
        \includegraphics[width=\linewidth]{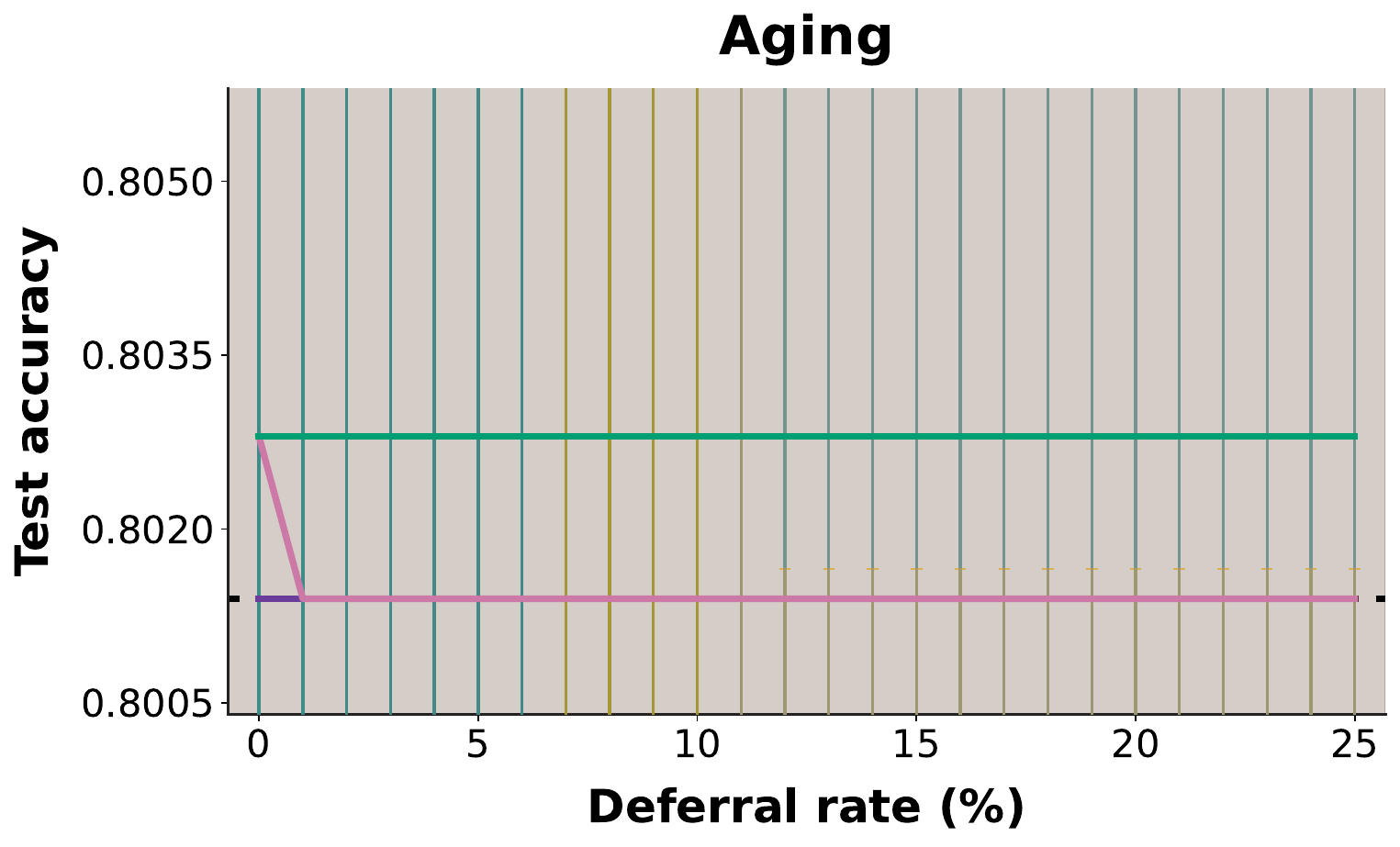}
    \end{subfigure}
    \hfill
    \begin{subfigure}{0.49\linewidth}
        \includegraphics[width=\linewidth]{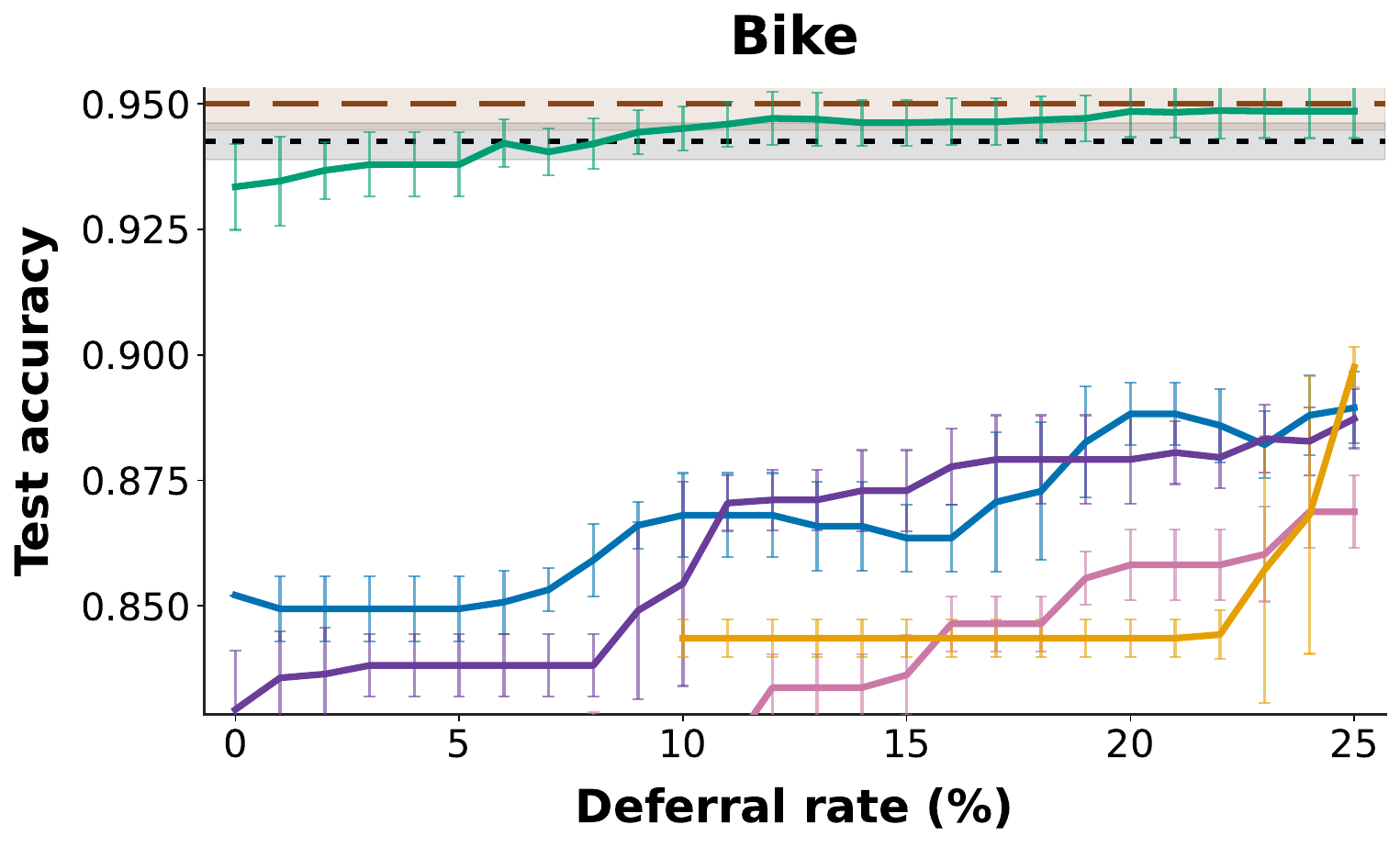}
    \end{subfigure}

    \vspace{-0.05em}

    \begin{subfigure}{0.49\linewidth}
        \includegraphics[width=\linewidth]{new_main_figs/california_main.pdf}
    \end{subfigure}
    \hfill
    \begin{subfigure}{0.49\linewidth}
        \includegraphics[width=\linewidth]{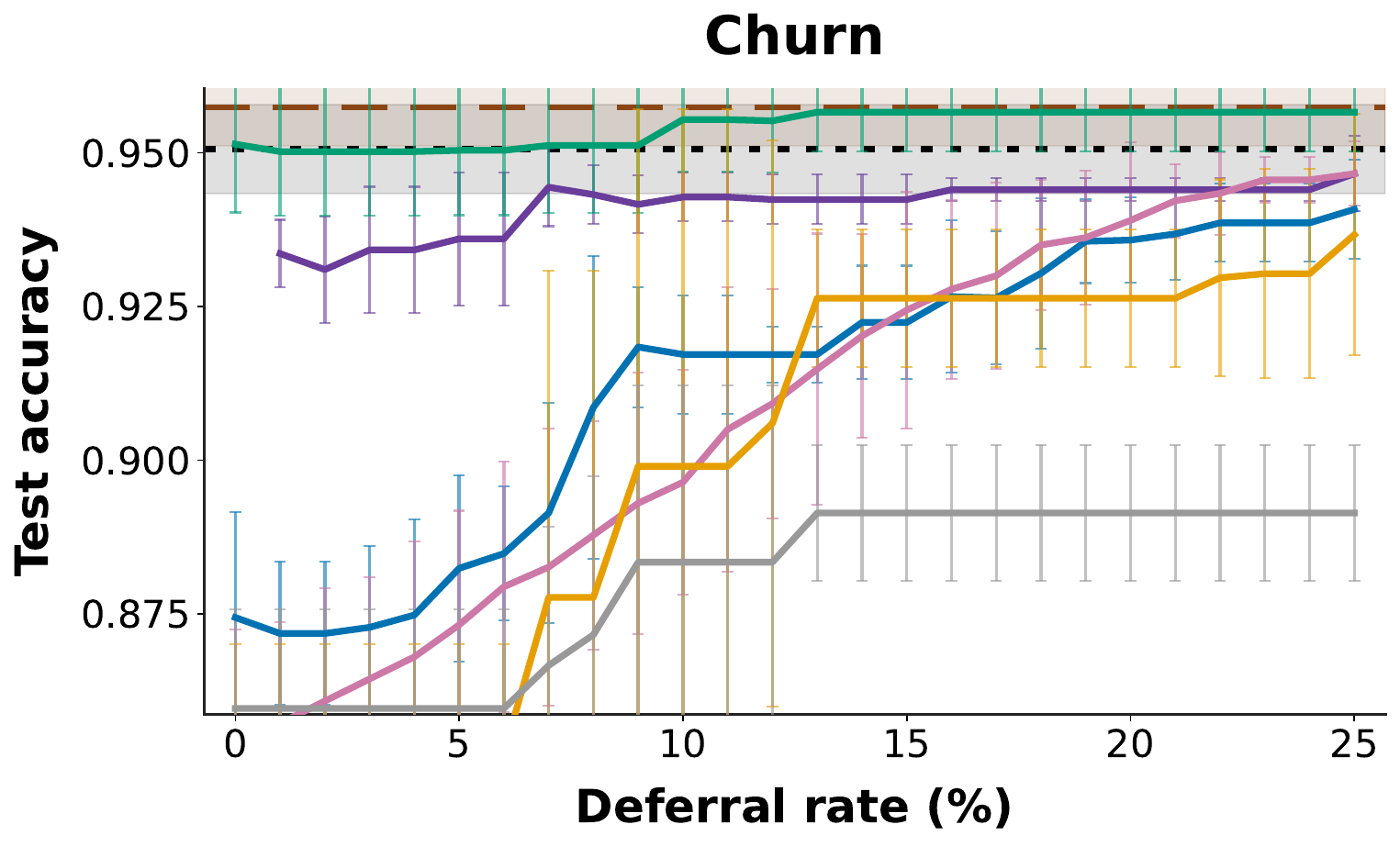}
    \end{subfigure}

    \vspace{-0.05em}

    \begin{subfigure}{0.49\linewidth}
        \includegraphics[width=\linewidth]{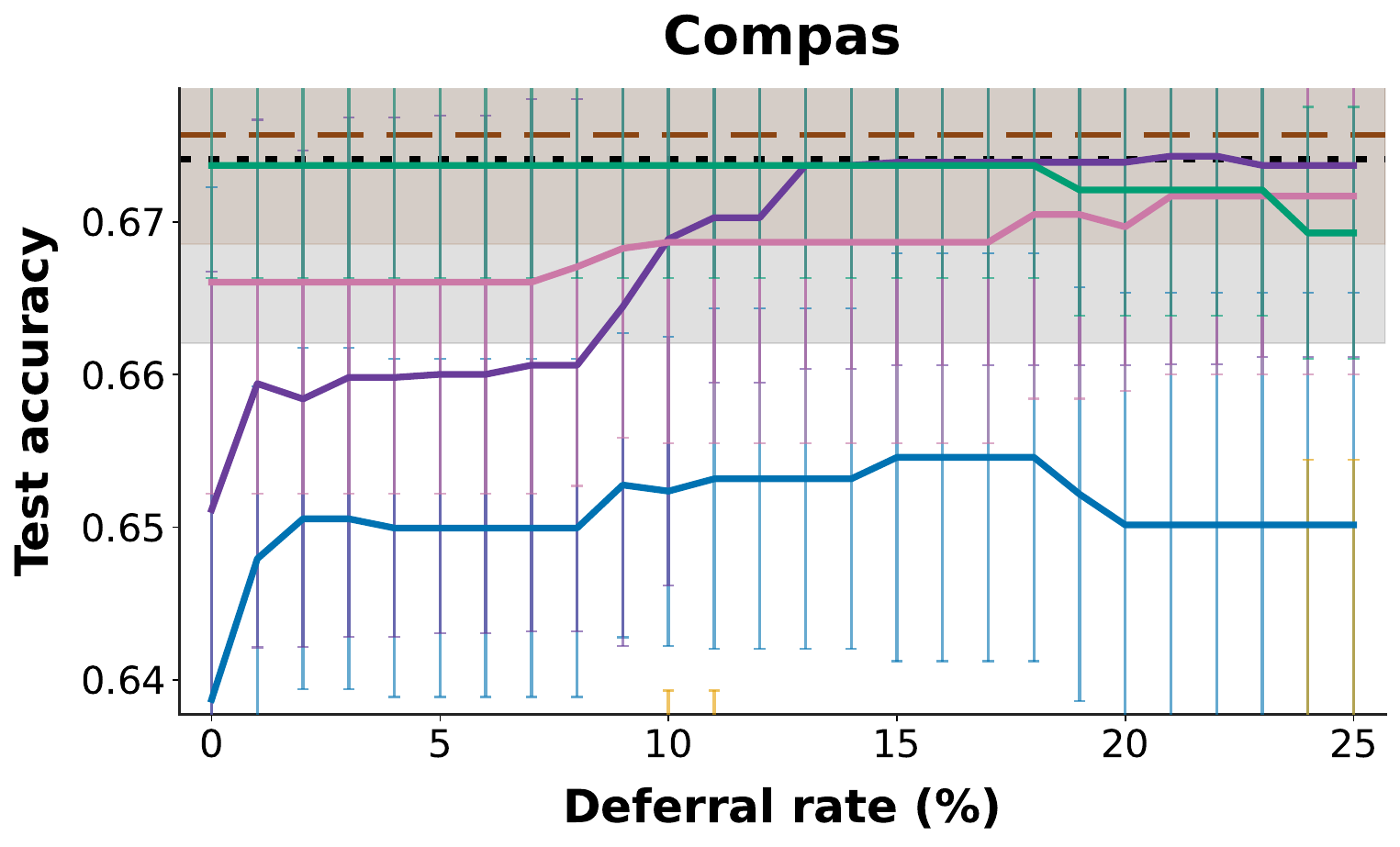}
    \end{subfigure}
    \hfill
    \begin{subfigure}{0.49\linewidth}
        \includegraphics[width=\linewidth]{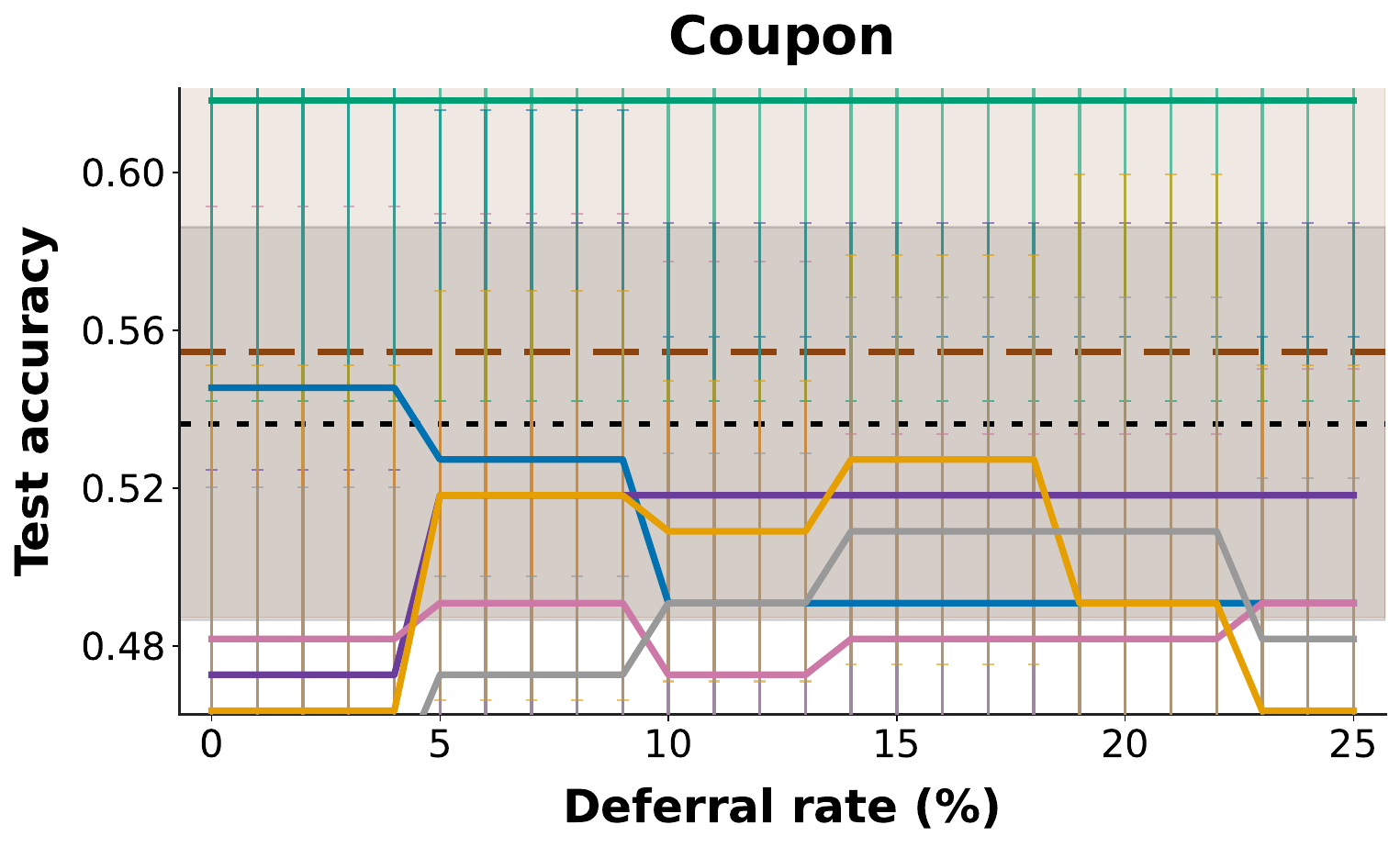}
    \end{subfigure}

    \vspace{-0.05em}

    \begin{subfigure}{0.49\linewidth}
        \includegraphics[width=\linewidth]{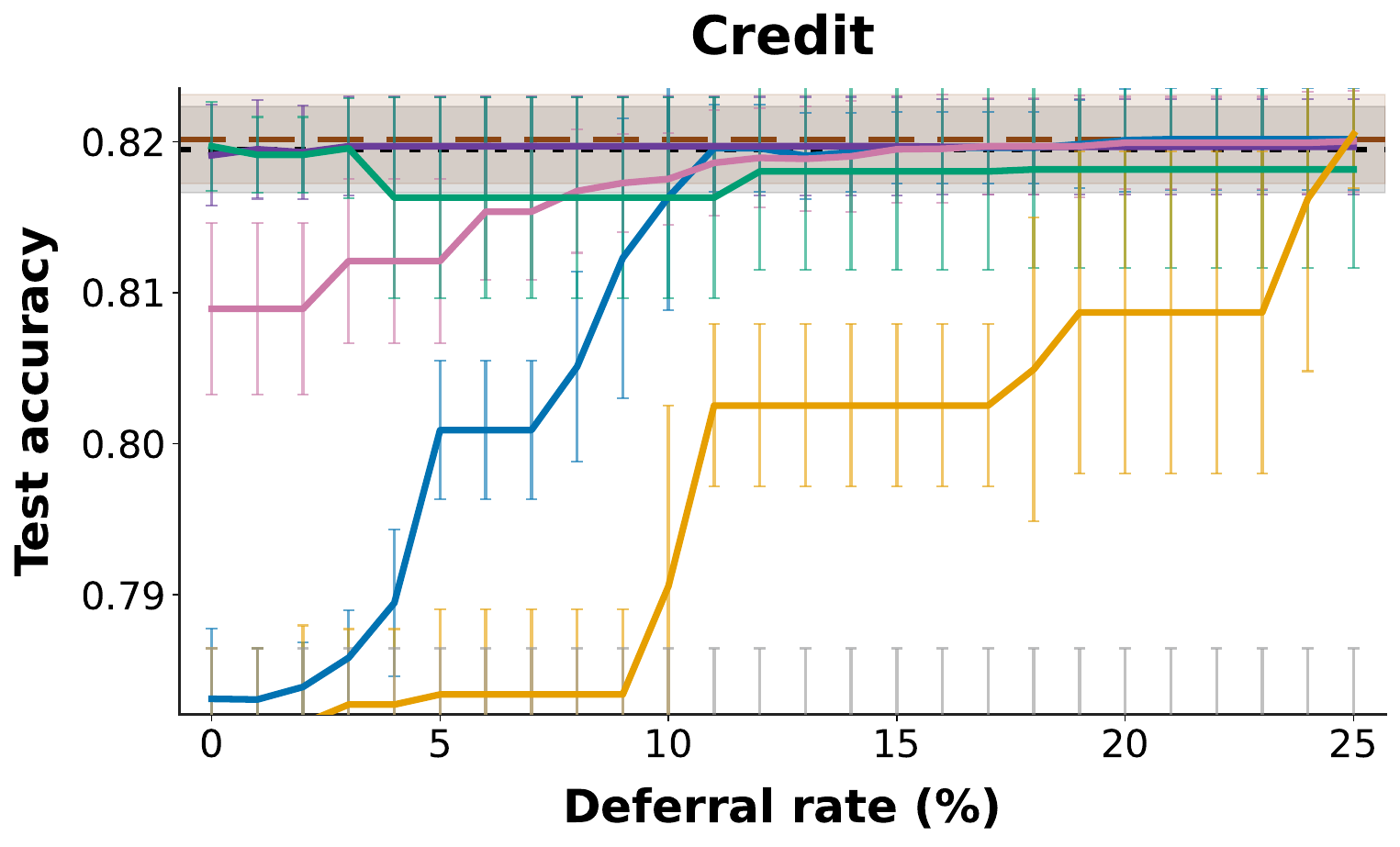}
    \end{subfigure}
    \hfill
    \begin{subfigure}{0.49\linewidth}
        \includegraphics[width=\linewidth]{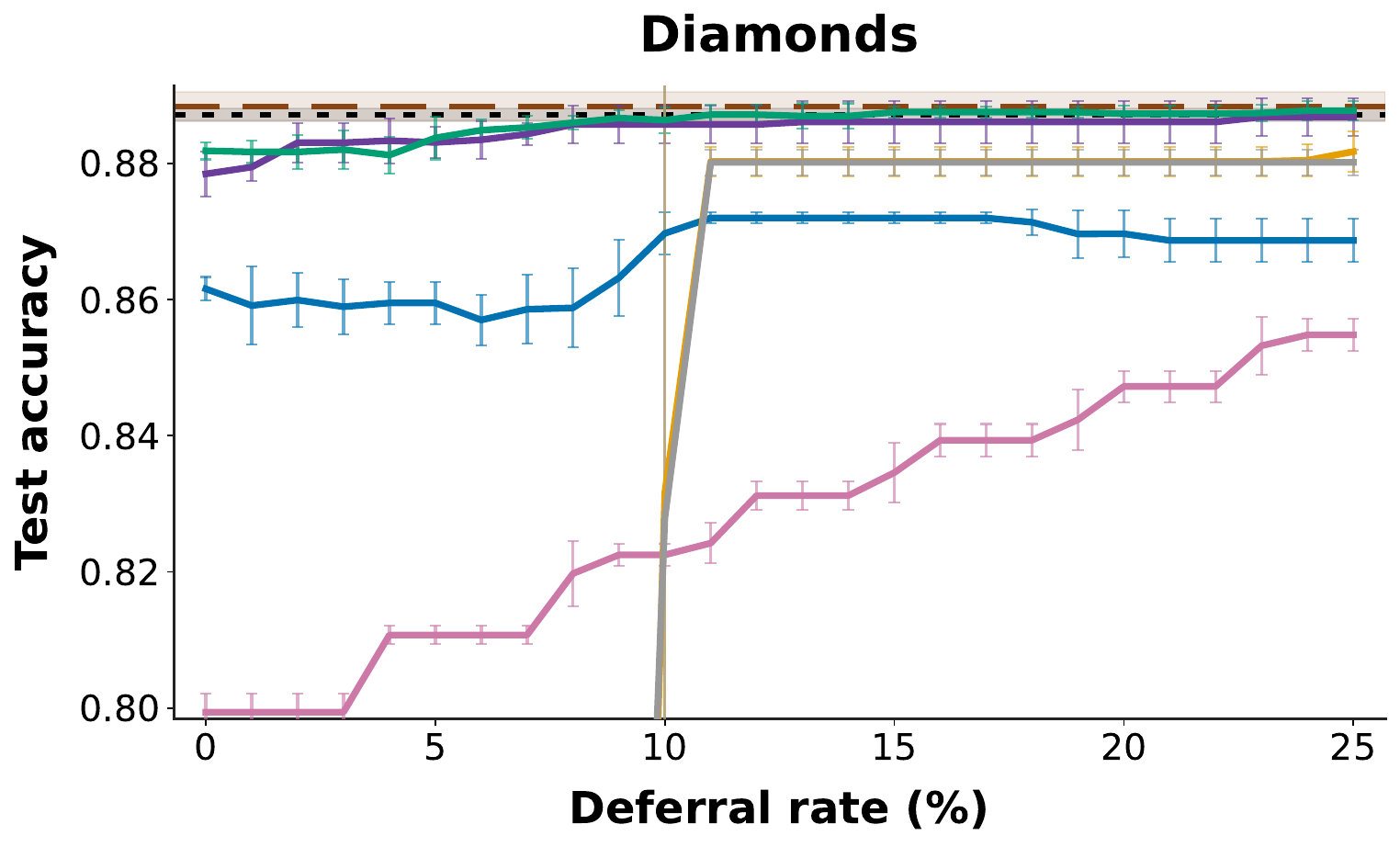}
    \end{subfigure}

    \vspace{-0.05em}

    \begin{subfigure}{0.49\linewidth}
        \includegraphics[width=\linewidth]{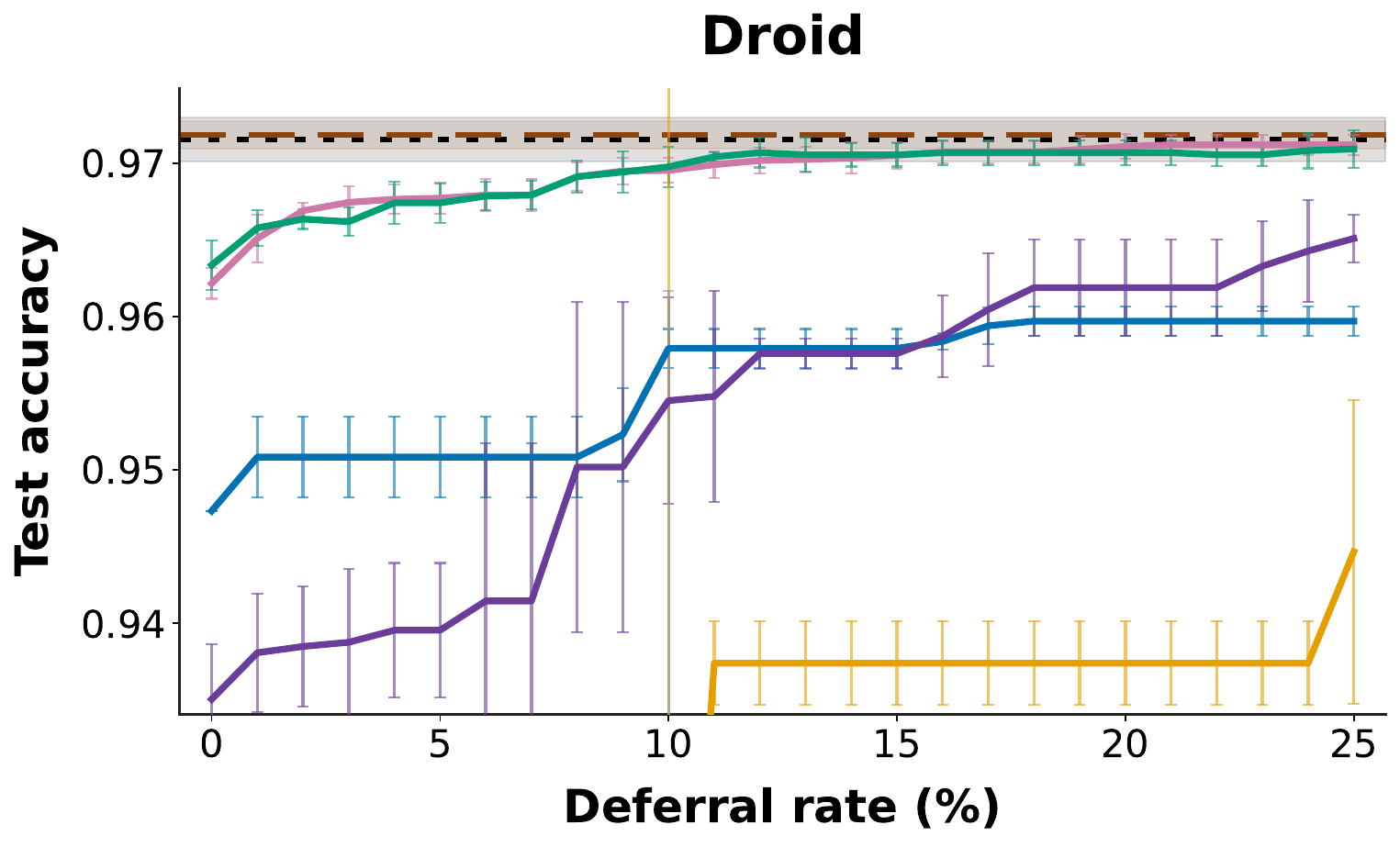}
    \end{subfigure}
    \hfill
    \begin{subfigure}{0.49\linewidth}
        \includegraphics[width=\linewidth]{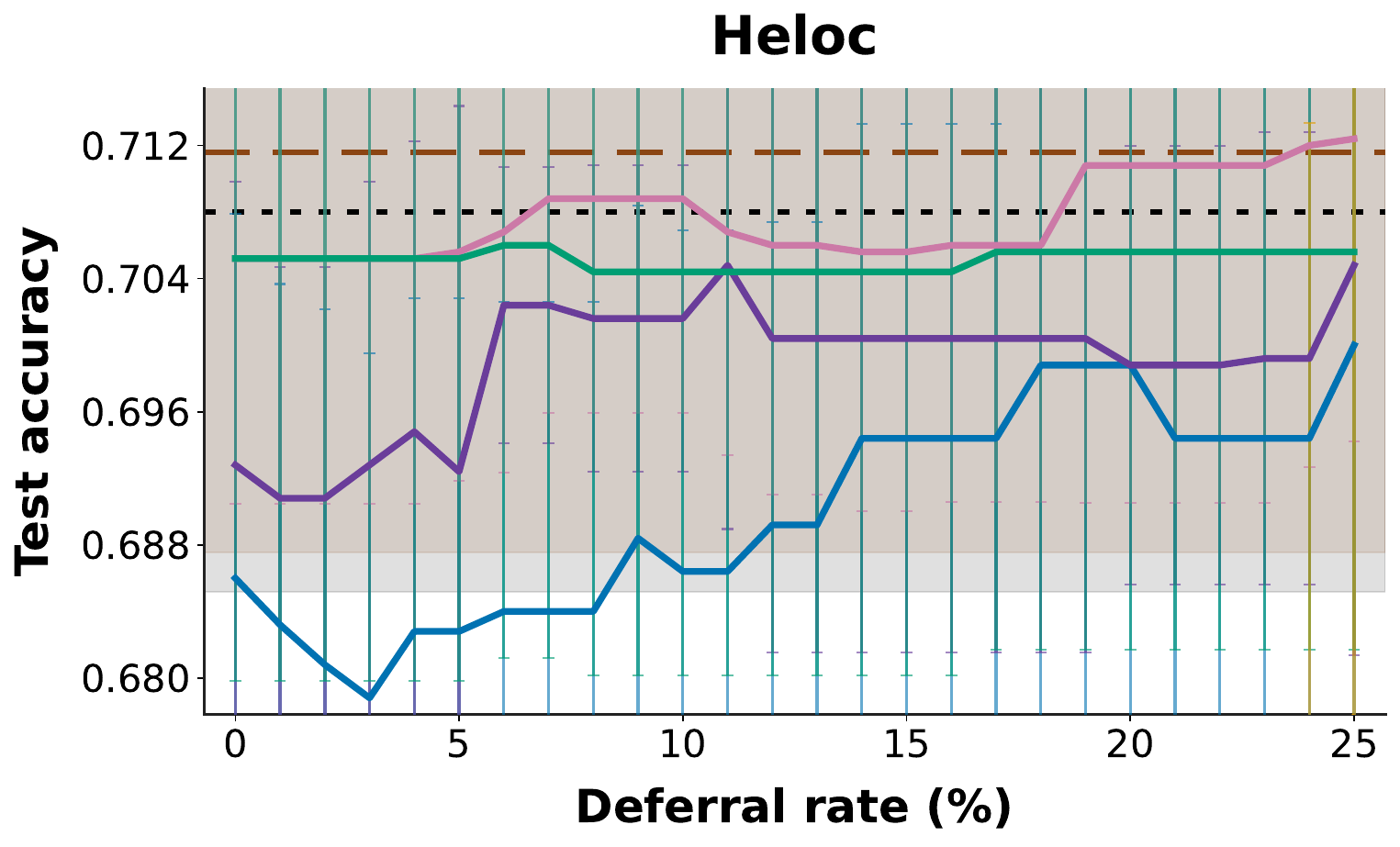}
    \end{subfigure}

    \vspace{-0.3em}
    \caption{Deferral-accuracy trade-offs for all datasets.}
    \label{fig:main_dataset_curves_part1}
\end{figure}

\begin{figure}[p]
    \thispagestyle{empty}
    \centering
    \captionsetup{aboveskip=2pt, belowskip=0pt}

    \begin{subfigure}{0.49\linewidth}
        \includegraphics[width=\linewidth]{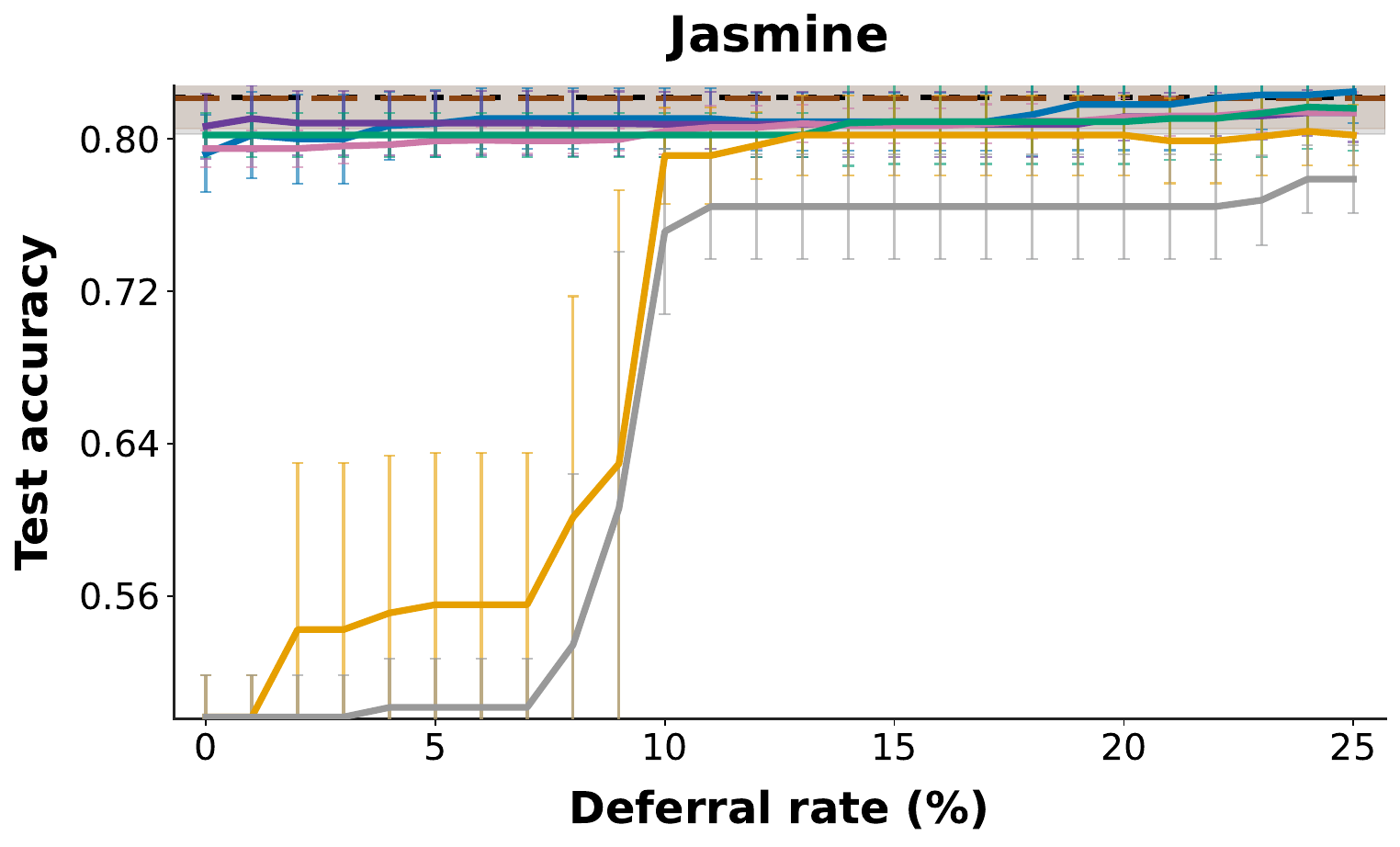}
    \end{subfigure}
    \hfill
    \begin{subfigure}{0.49\linewidth}
        \includegraphics[width=\linewidth]{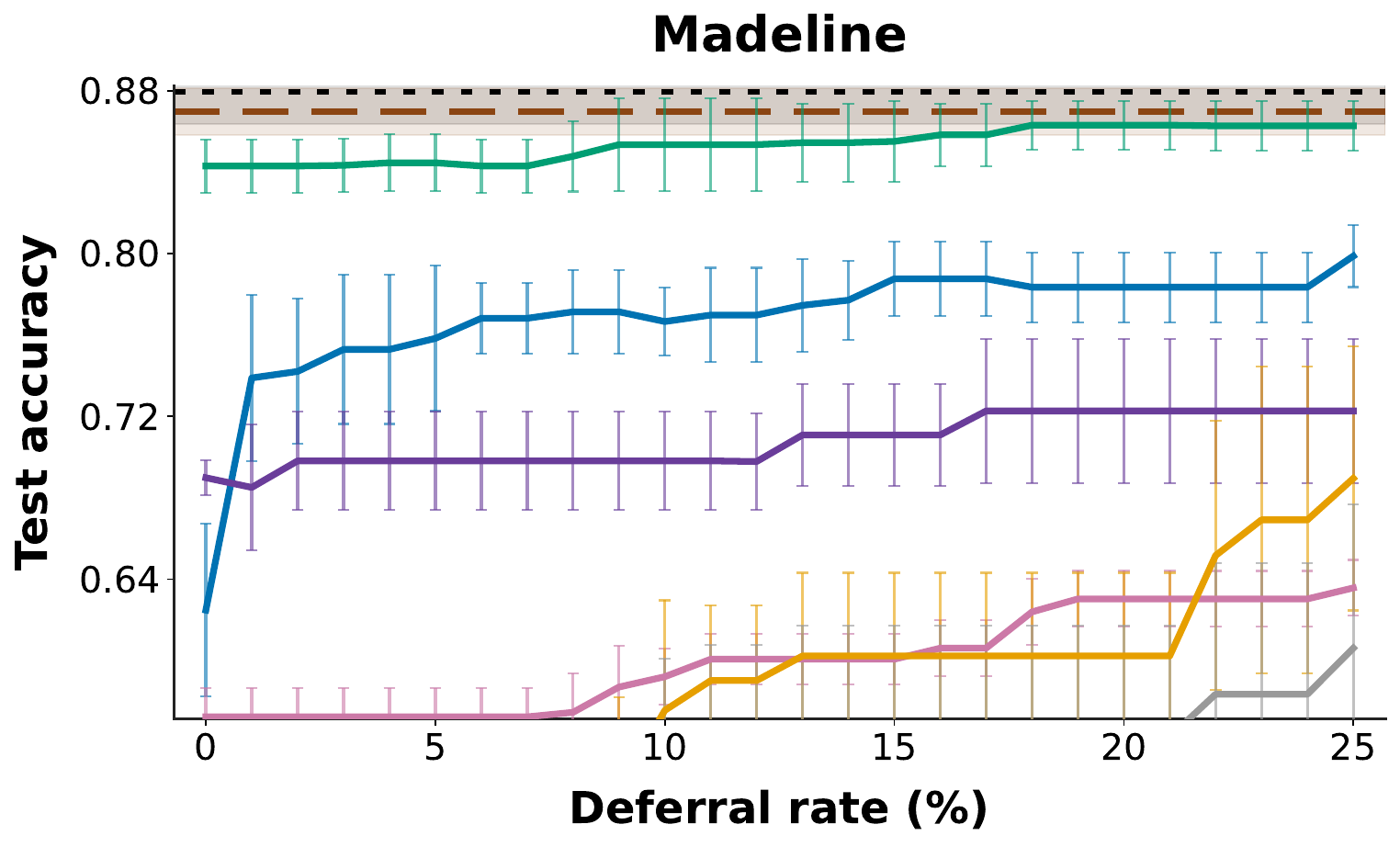}
    \end{subfigure}

    \vspace{-0.05em}

    \begin{subfigure}{0.49\linewidth}
        \includegraphics[width=\linewidth]{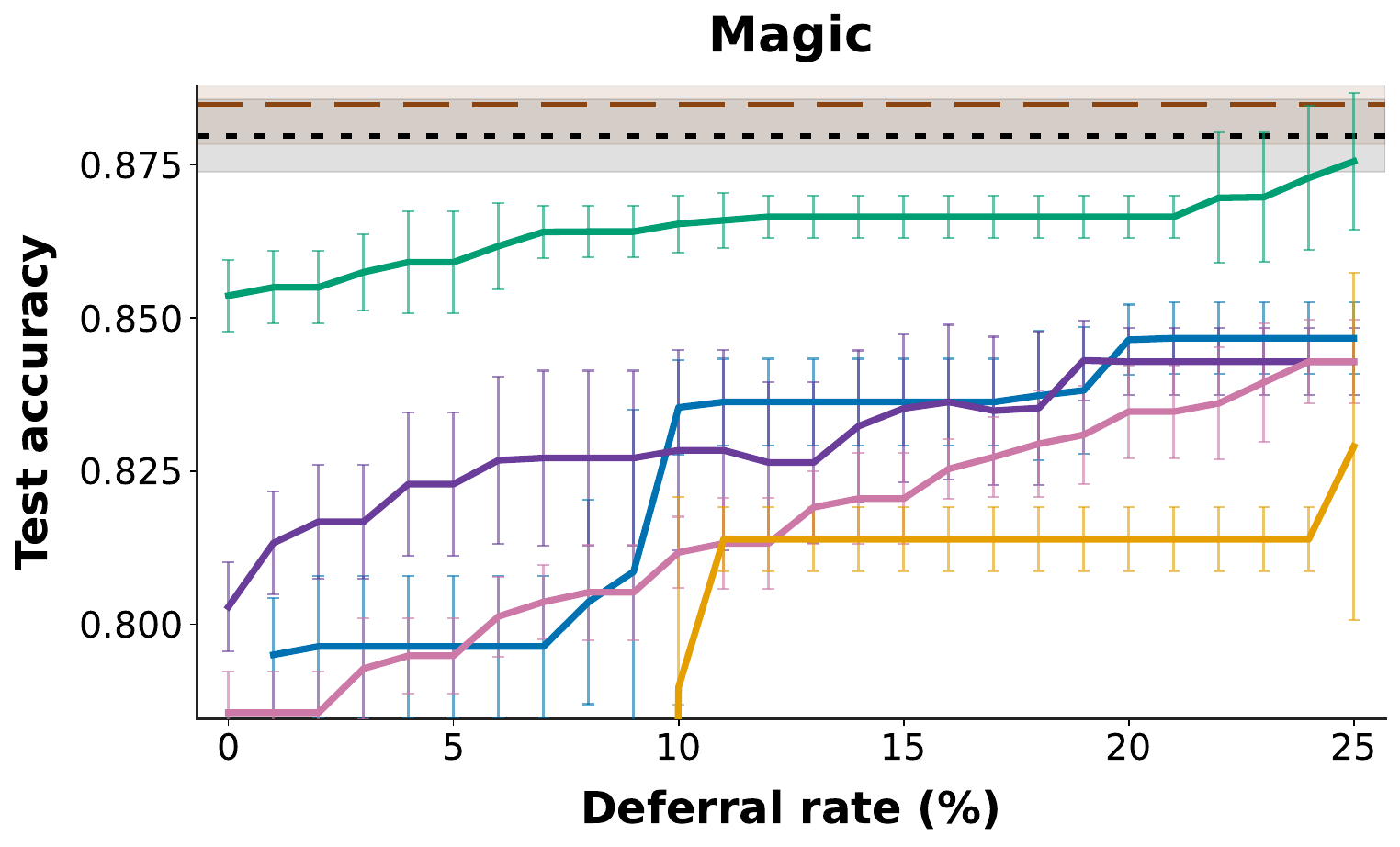}
    \end{subfigure}
    \hfill
    \begin{subfigure}{0.49\linewidth}
        \includegraphics[width=\linewidth]{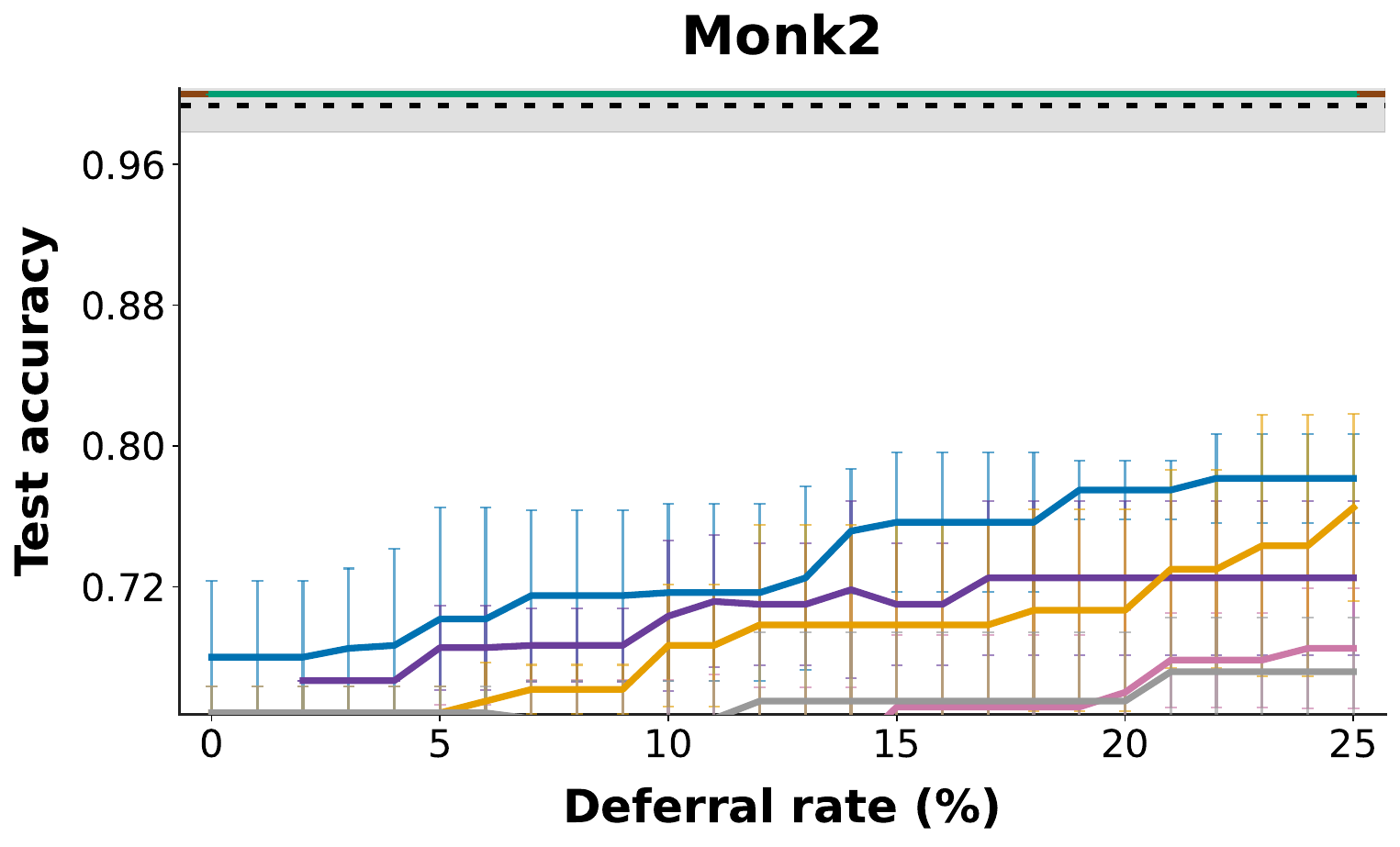}
    \end{subfigure}

    \vspace{-0.05em}

    \begin{subfigure}{0.49\linewidth}
        \includegraphics[width=\linewidth]{new_main_figs/phishing_main.pdf}
    \end{subfigure}
    \hfill
    \begin{subfigure}{0.49\linewidth}
        \includegraphics[width=\linewidth]{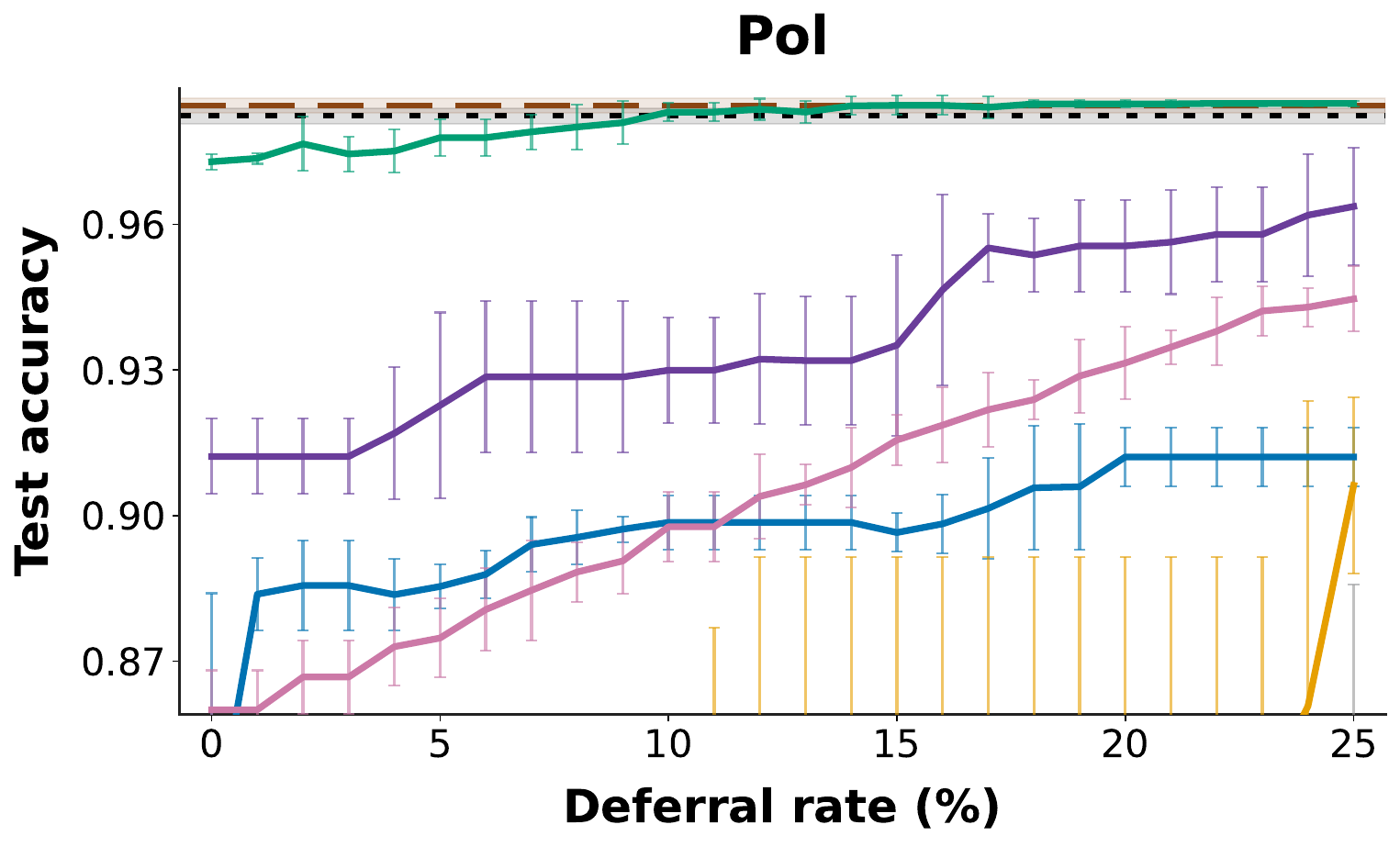}
    \end{subfigure}

    \vspace{-0.05em}

    \begin{subfigure}{0.49\linewidth}
        \includegraphics[width=\linewidth]{new_main_figs/rl_main.pdf}
    \end{subfigure}
    \hfill
    \begin{subfigure}{0.49\linewidth}
        \includegraphics[width=\linewidth]{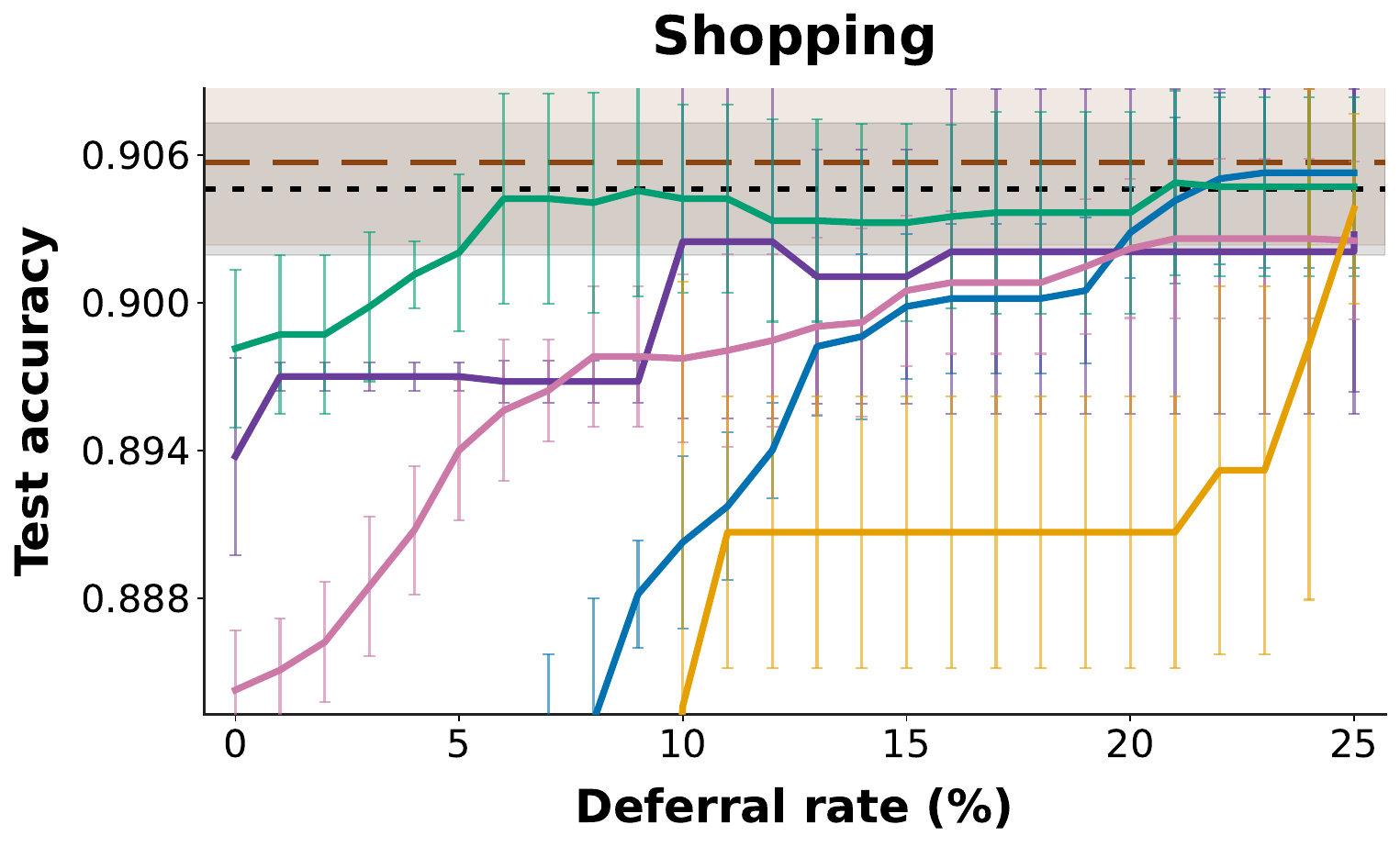}
    \end{subfigure}

    \vspace{-0.05em}

    \begin{subfigure}{0.49\linewidth}
        \includegraphics[width=\linewidth]{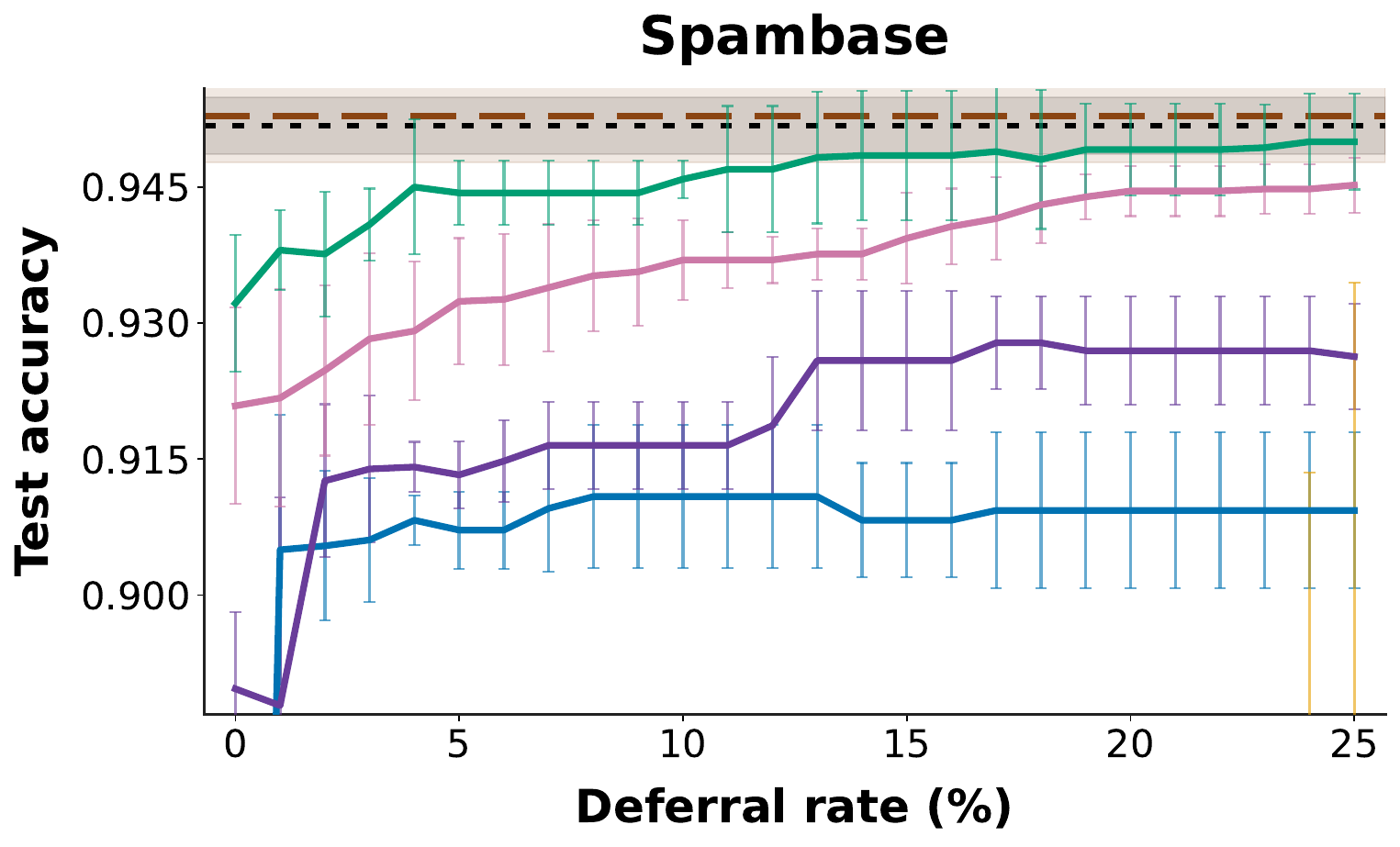}
    \end{subfigure}
    \hfill
    \begin{subfigure}{0.49\linewidth}
        \includegraphics[width=\linewidth]{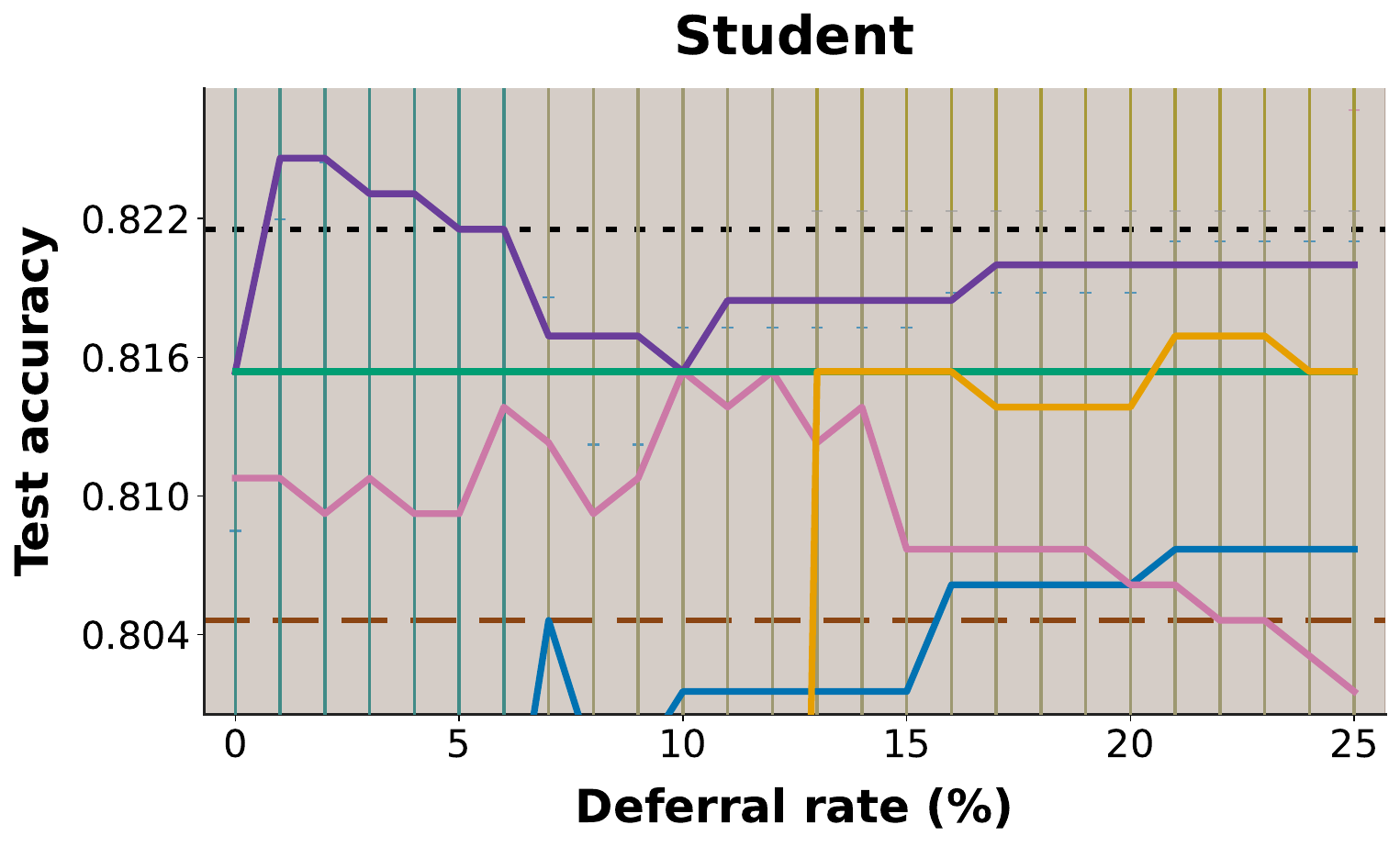}
    \end{subfigure}

    \vspace{-0.05em}

    \begin{subfigure}{0.49\linewidth}
        \includegraphics[width=\linewidth]{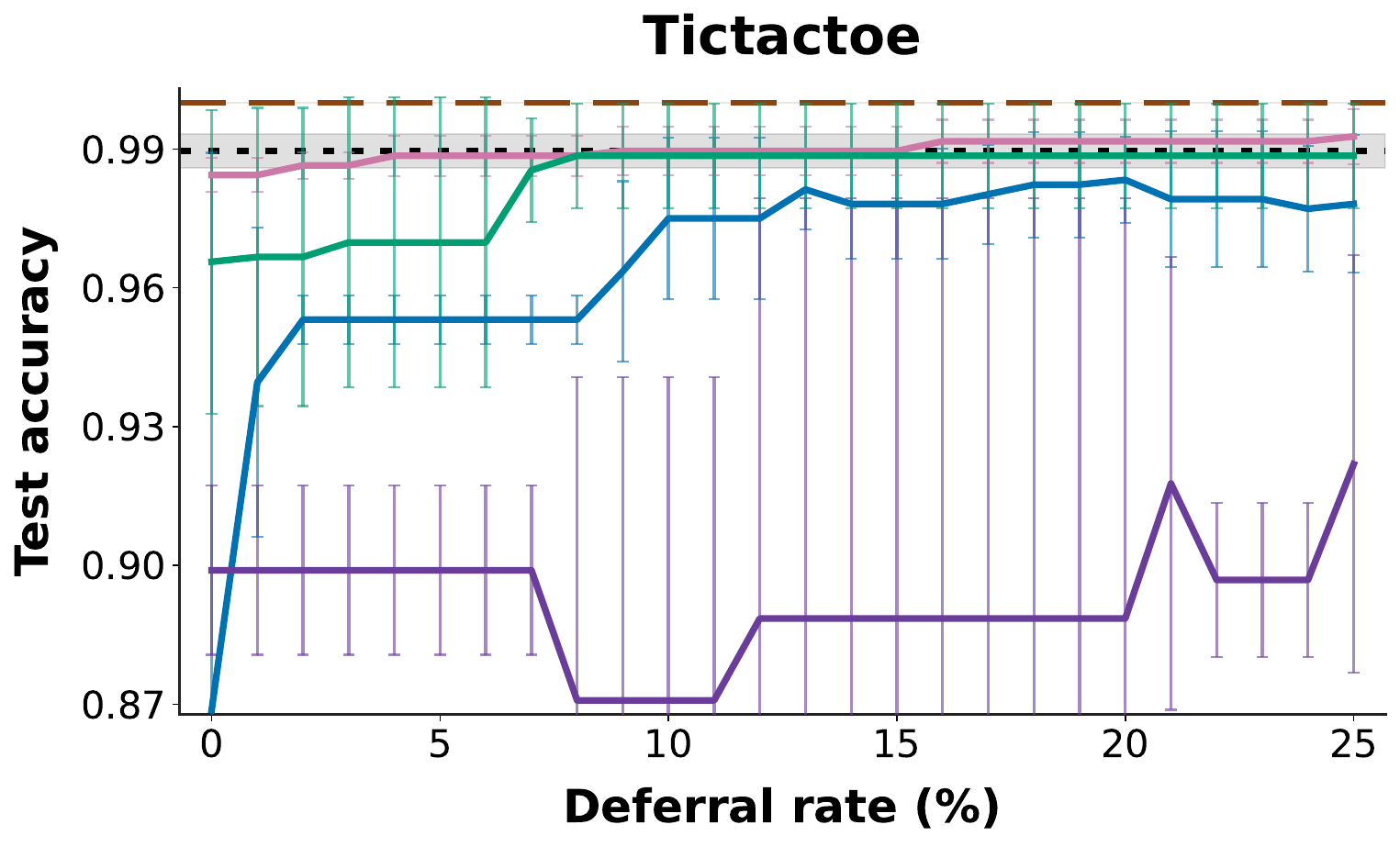}
    \end{subfigure}
    \hfill
    \begin{subfigure}{0.49\linewidth}
        \includegraphics[width=\linewidth]{new_main_figs/wine_main.pdf}
    \end{subfigure}

    \vspace{-0.3em}
    \caption{Deferral-accuracy trade-offs for all datasets, continued.}
    \label{fig:main_dataset_curves_part2}
\end{figure}

\FloatBarrier
\subsection{Ablations}

\clearpage
\thispagestyle{empty}

\begin{figure}[p]
\centering
\vspace*{-0.8cm}

\includegraphics[width=0.49\textwidth,height=0.24\textheight,keepaspectratio]{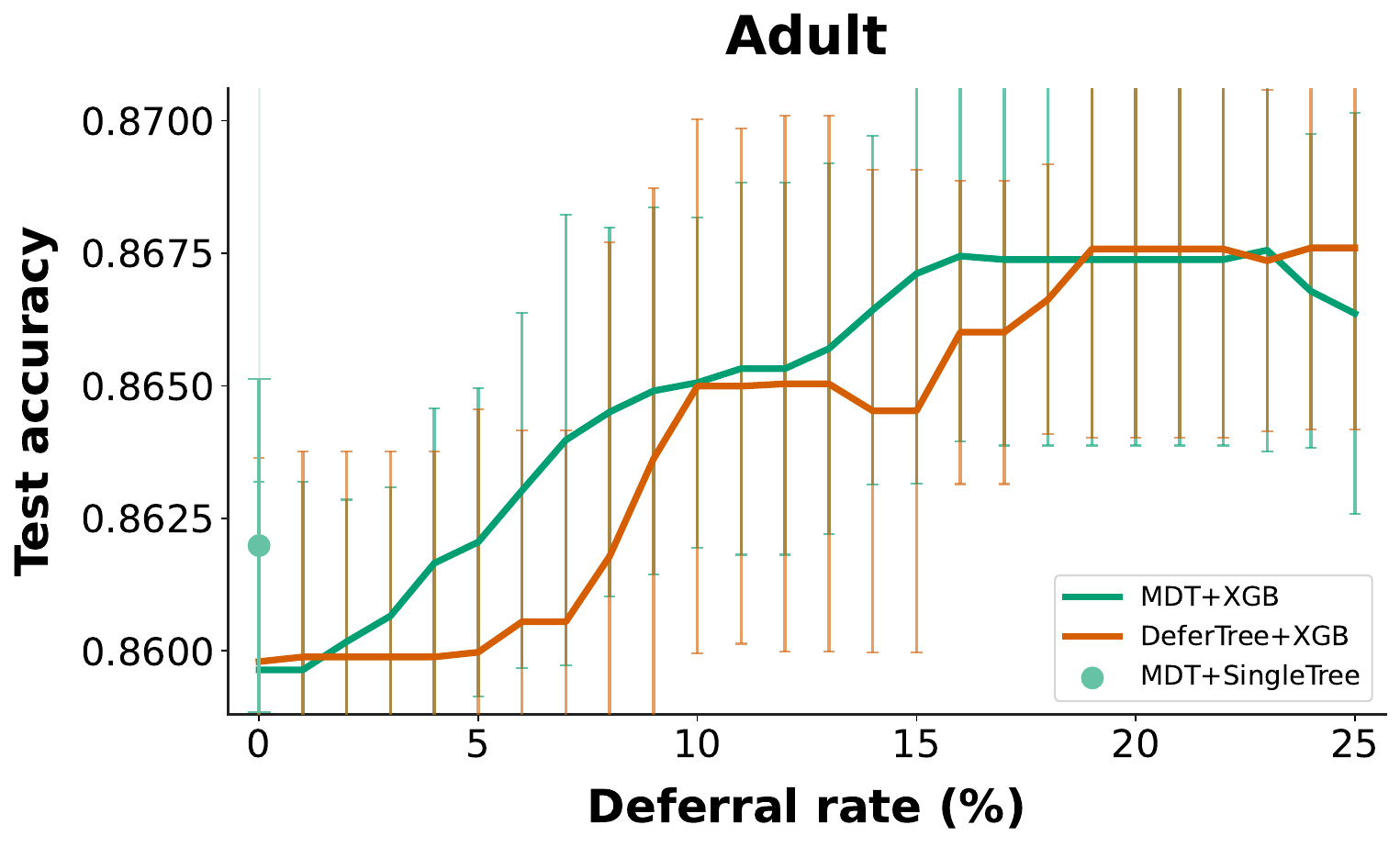}
\hfill
\includegraphics[width=0.49\textwidth,height=0.24\textheight,keepaspectratio]{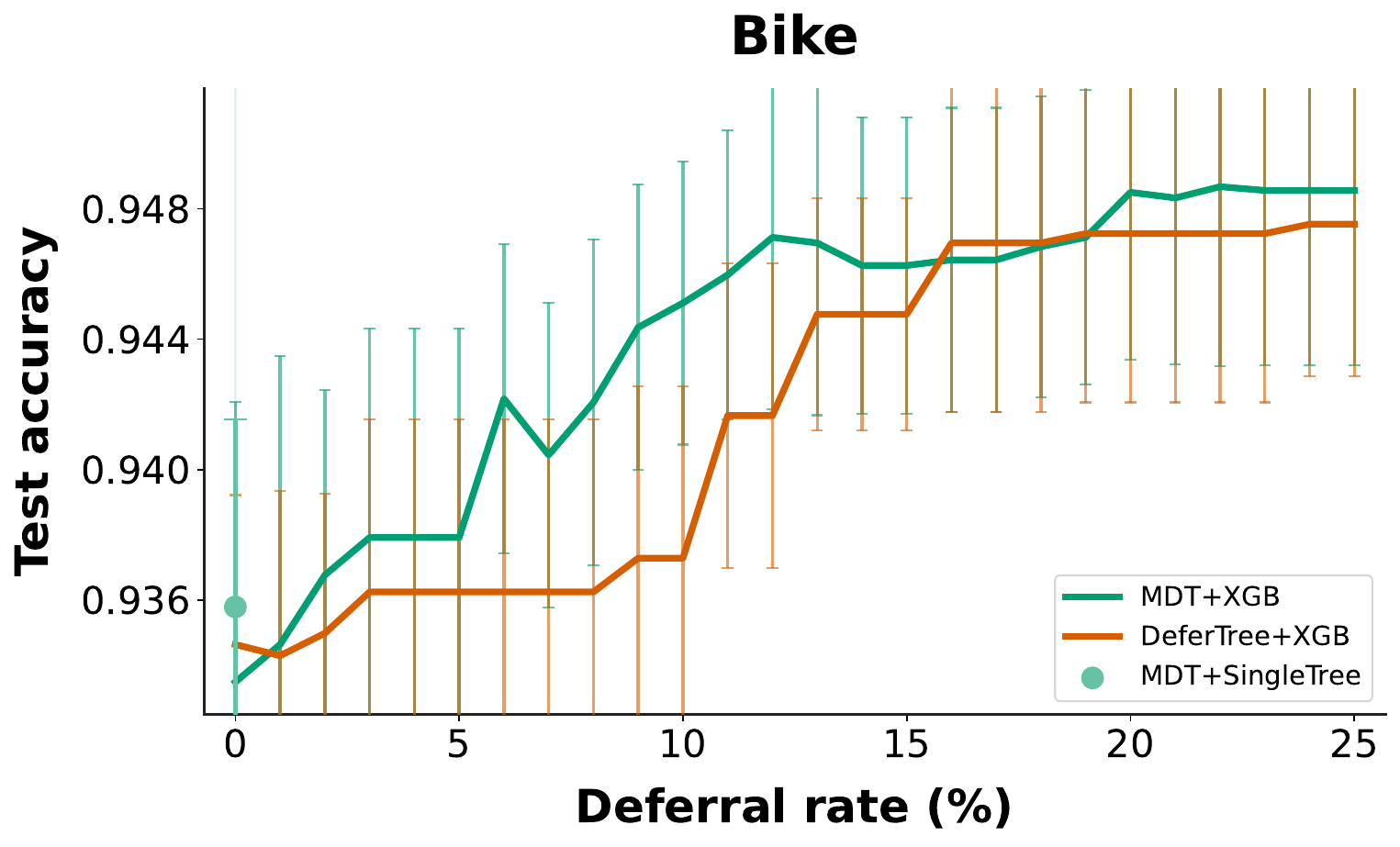}

\vspace{0.3em}

\includegraphics[width=0.49\textwidth,height=0.24\textheight,keepaspectratio]{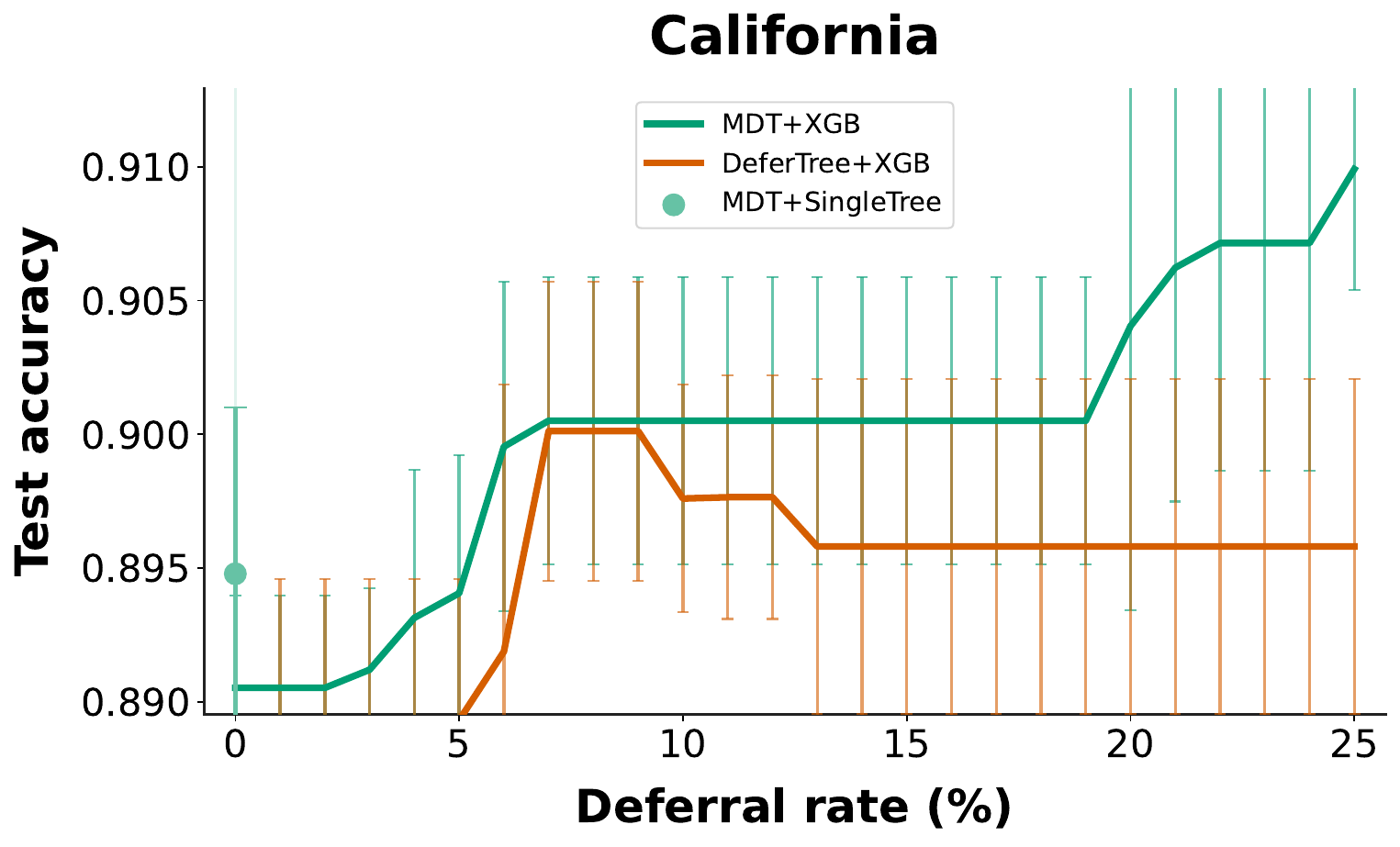}
\hfill
\includegraphics[width=0.49\textwidth,height=0.24\textheight,keepaspectratio]{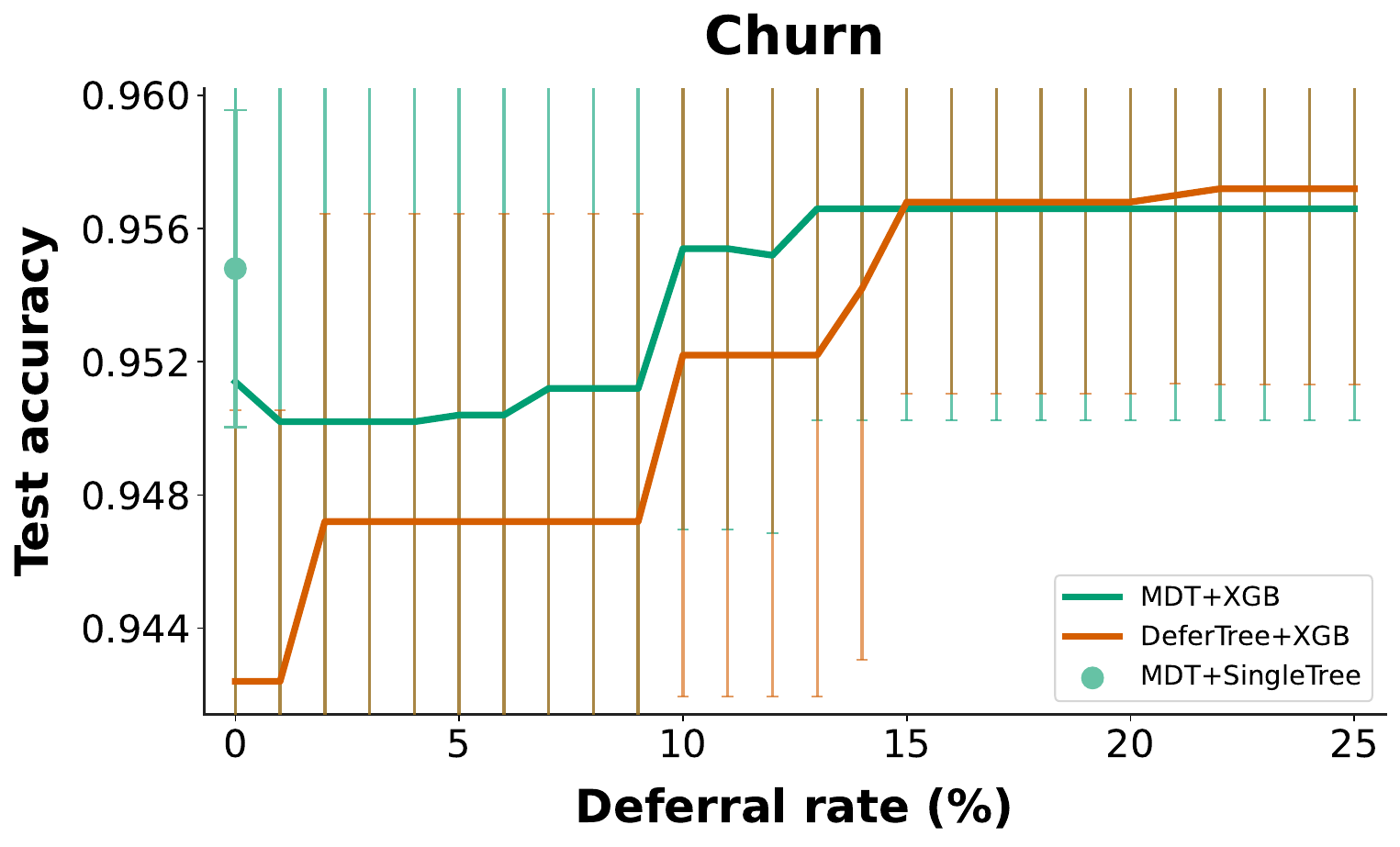}

\vspace{0.3em}

\includegraphics[width=0.49\textwidth,height=0.24\textheight,keepaspectratio]{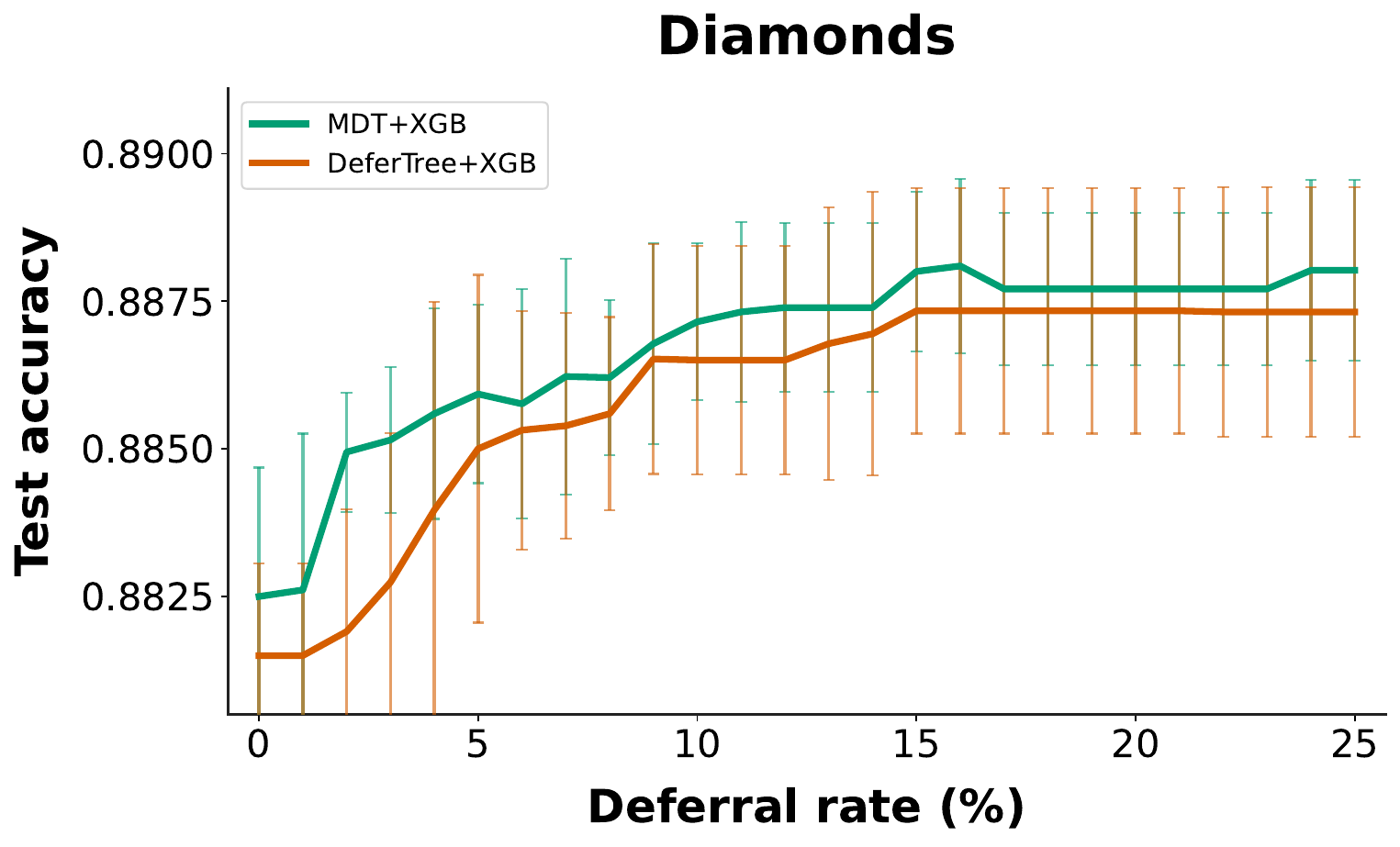}
\hfill
\includegraphics[width=0.49\textwidth,height=0.24\textheight,keepaspectratio]{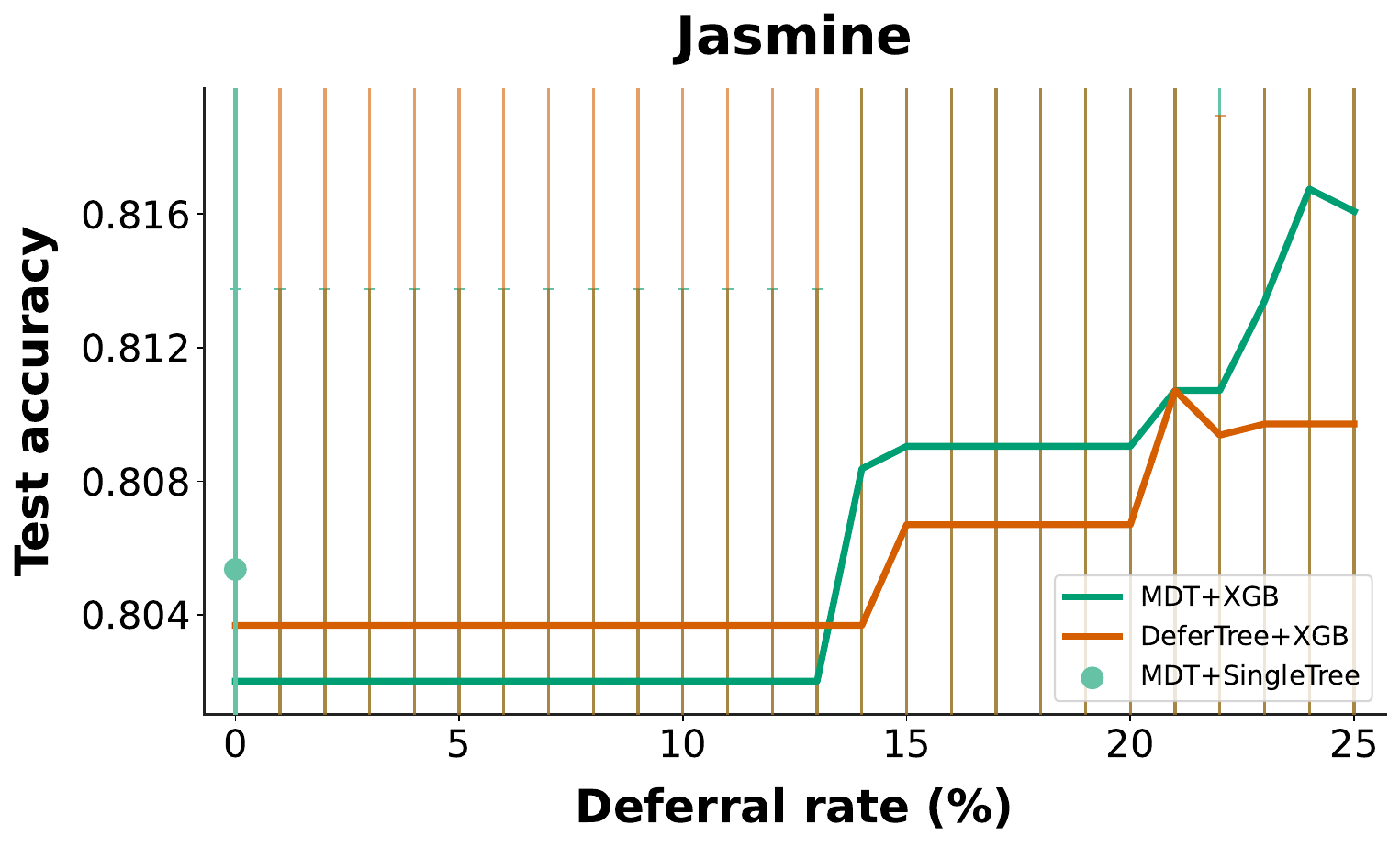}

\vspace{-0.2em}

\caption{Results of our MDT algorithm compared to just using our DeferTree algorithm. Here, we also show results for MDT+SingleTree, which uses a single decision tree algorithm as fallback instead of XGBoost. When the deferral rate is 0, DeferTree+XGB turns into a near-optimal decision tree algorithm. We see that MDT+SingleTree is able to provide a much better, fully interpretable model than a near-optimal tree in most cases.}
\label{fig:appendix-dt-1}
\end{figure}

\clearpage
\thispagestyle{empty}

\begin{figure}[p]
\centering
\vspace*{-0.8cm}

\includegraphics[width=0.49\textwidth,height=0.20\textheight,keepaspectratio]{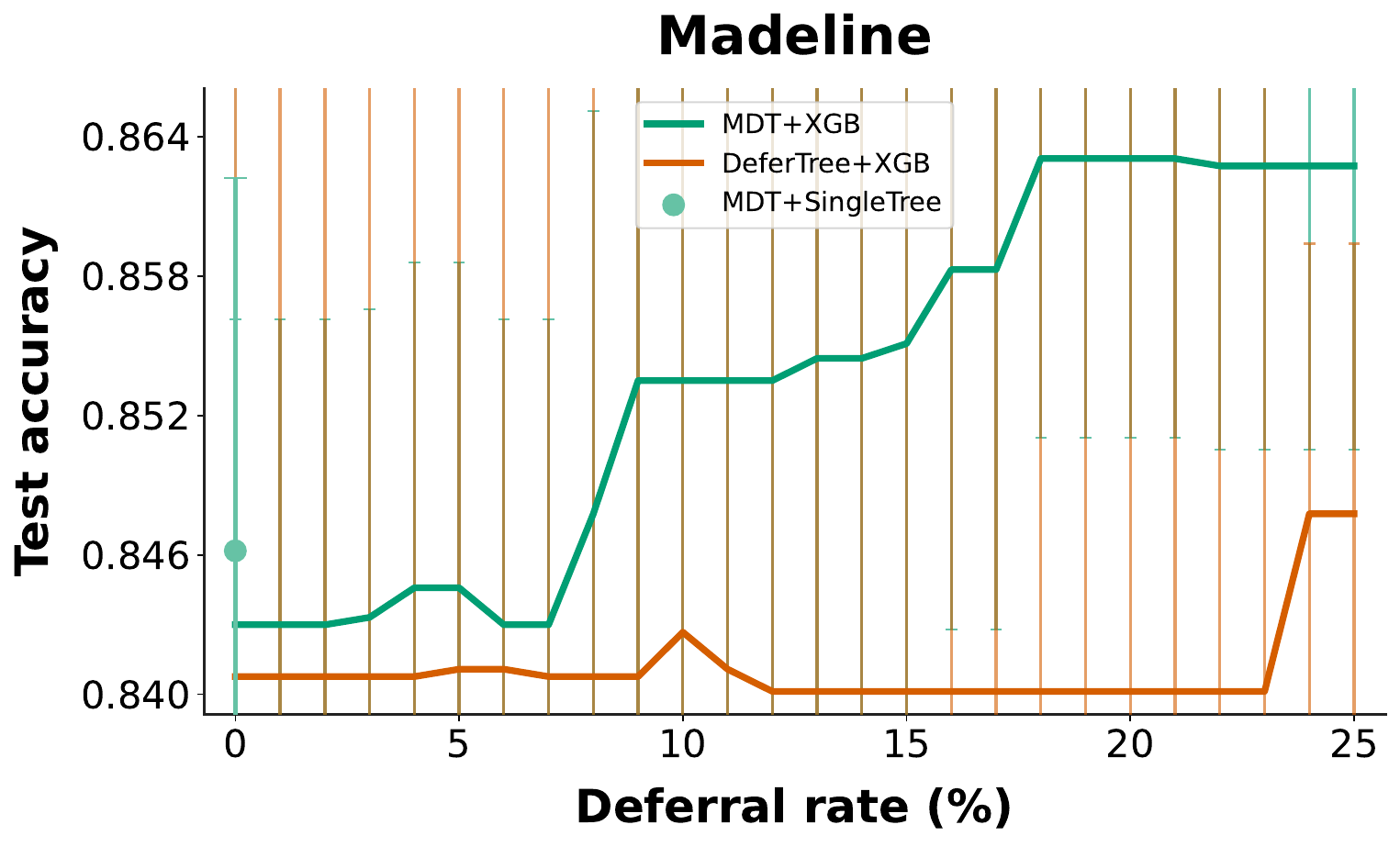}
\hfill
\includegraphics[width=0.49\textwidth,height=0.20\textheight,keepaspectratio]{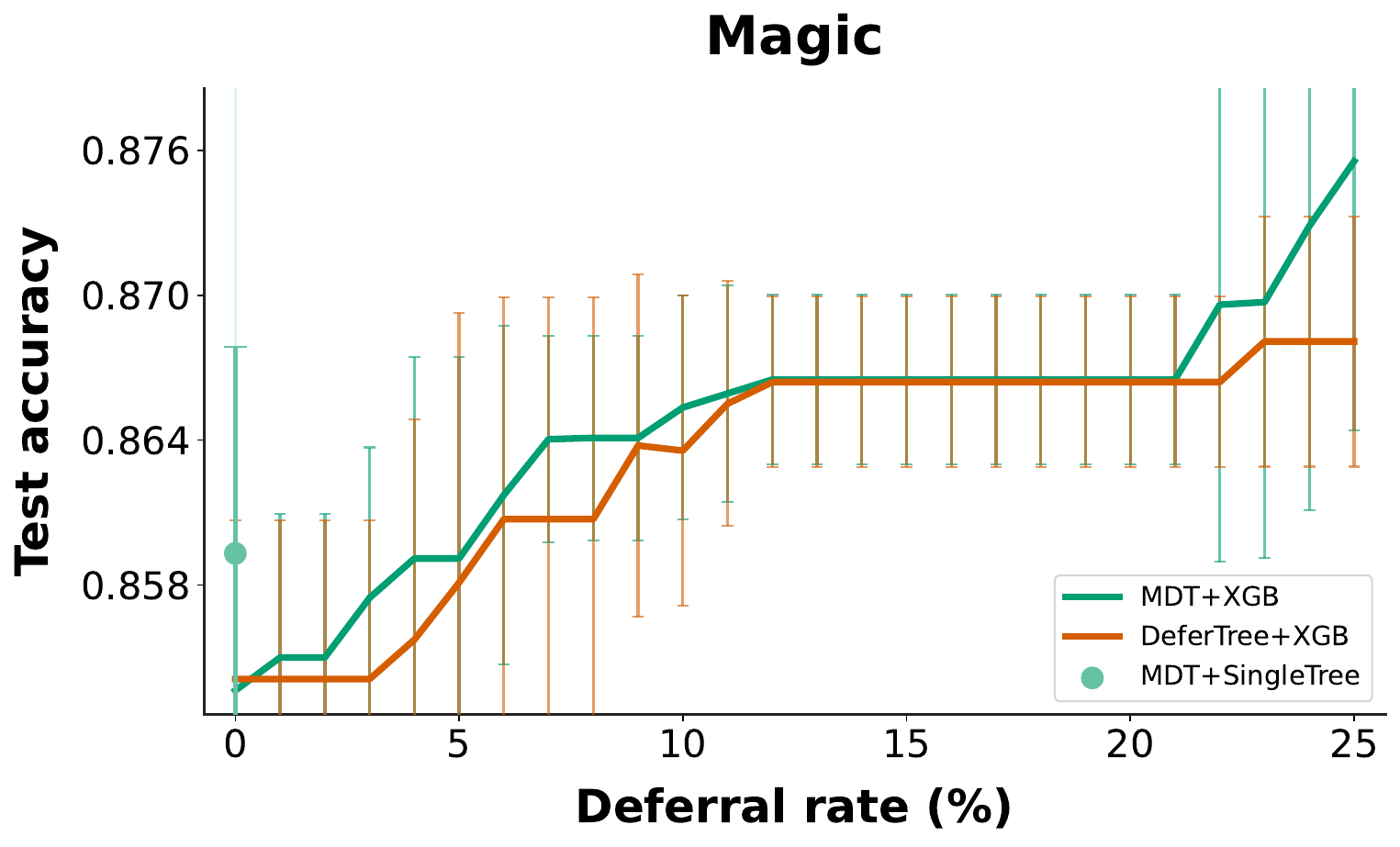}

\vspace{0.3em}

\includegraphics[width=0.49\textwidth,height=0.20\textheight,keepaspectratio]{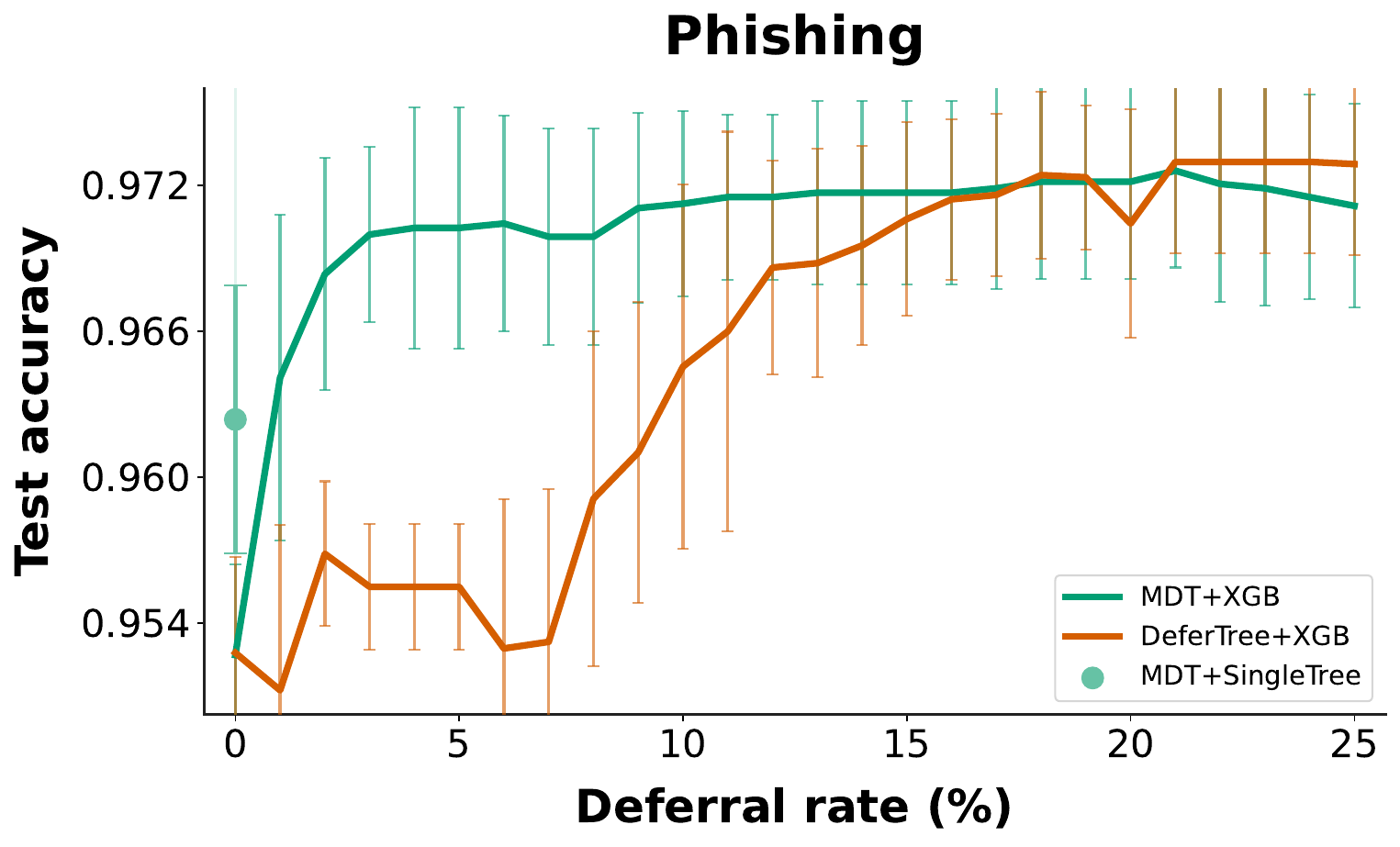}
\hfill
\includegraphics[width=0.49\textwidth,height=0.20\textheight,keepaspectratio]{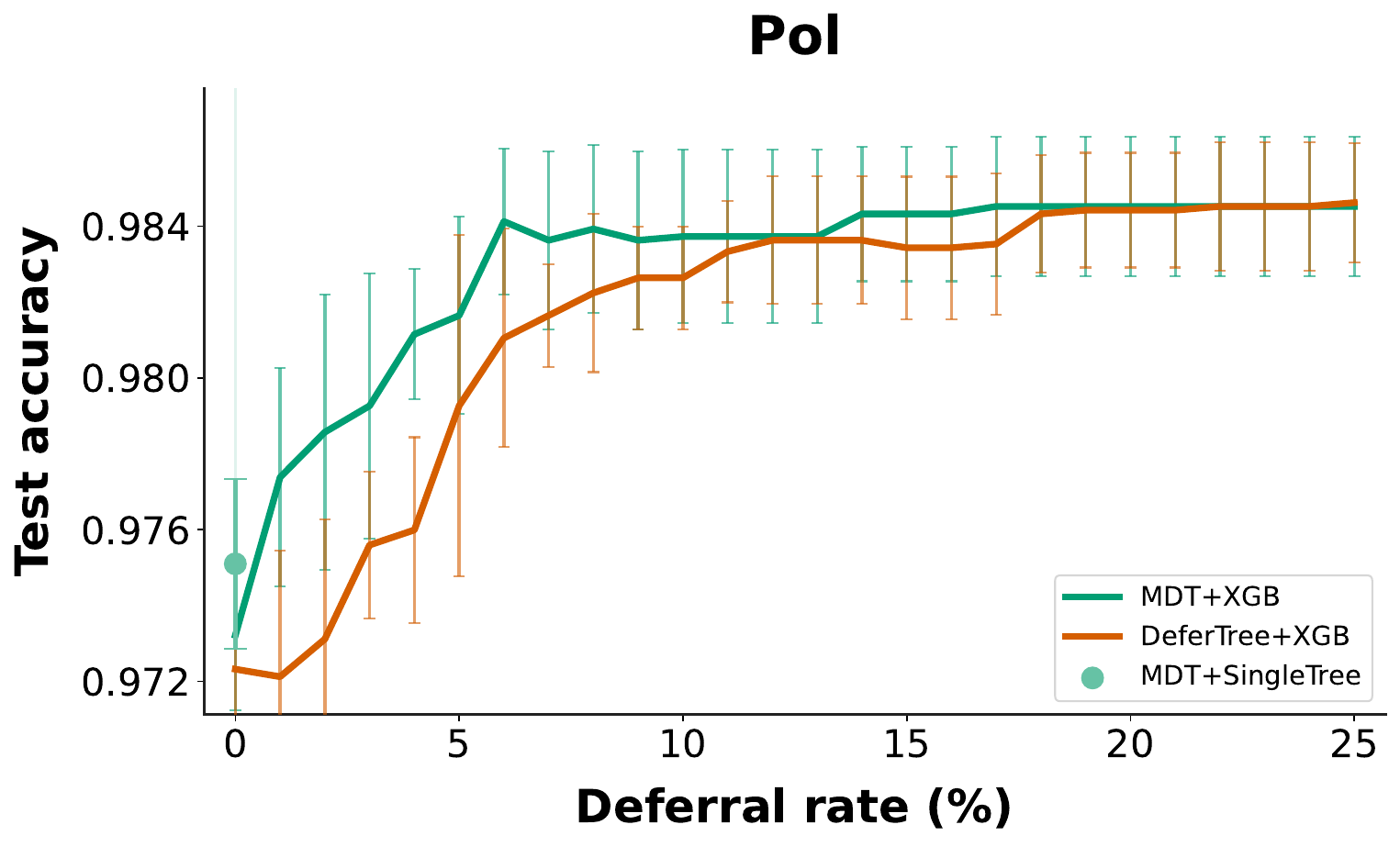}

\vspace{0.3em}

\includegraphics[width=0.49\textwidth,height=0.20\textheight,keepaspectratio]{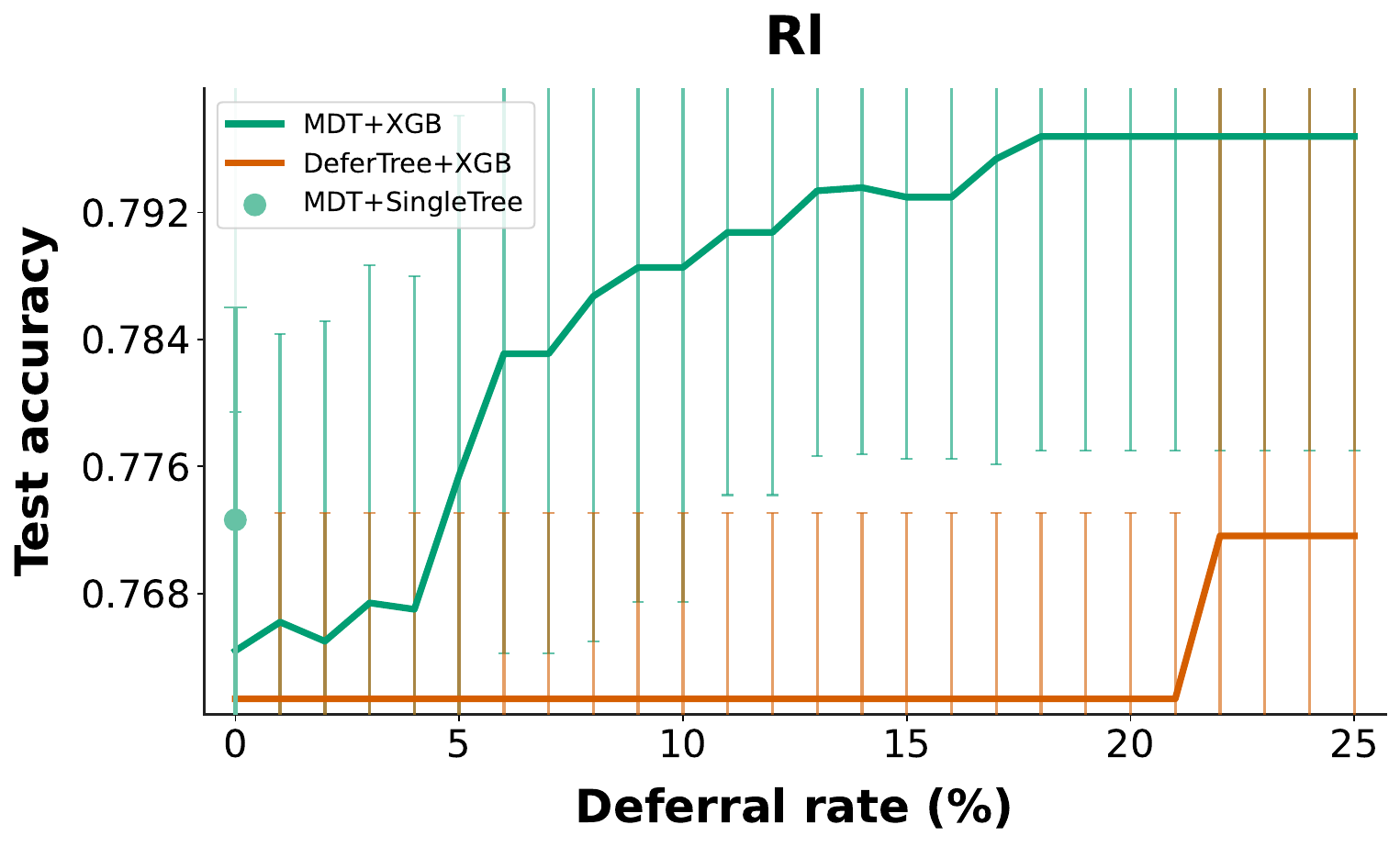}
\hfill
\includegraphics[width=0.49\textwidth,height=0.20\textheight,keepaspectratio]{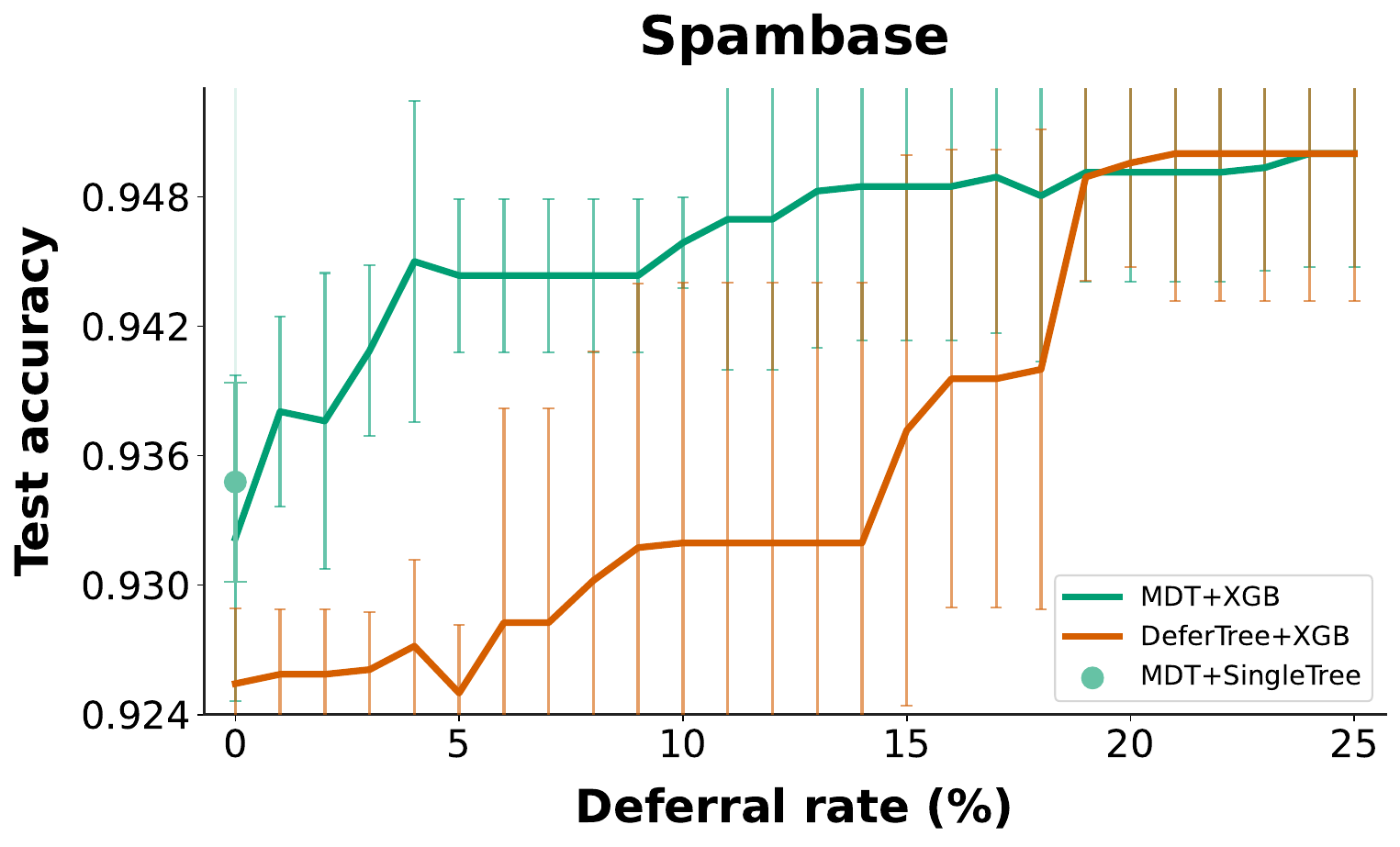}

\vspace{0.3em}

\includegraphics[width=0.62\textwidth,height=0.20\textheight,keepaspectratio]{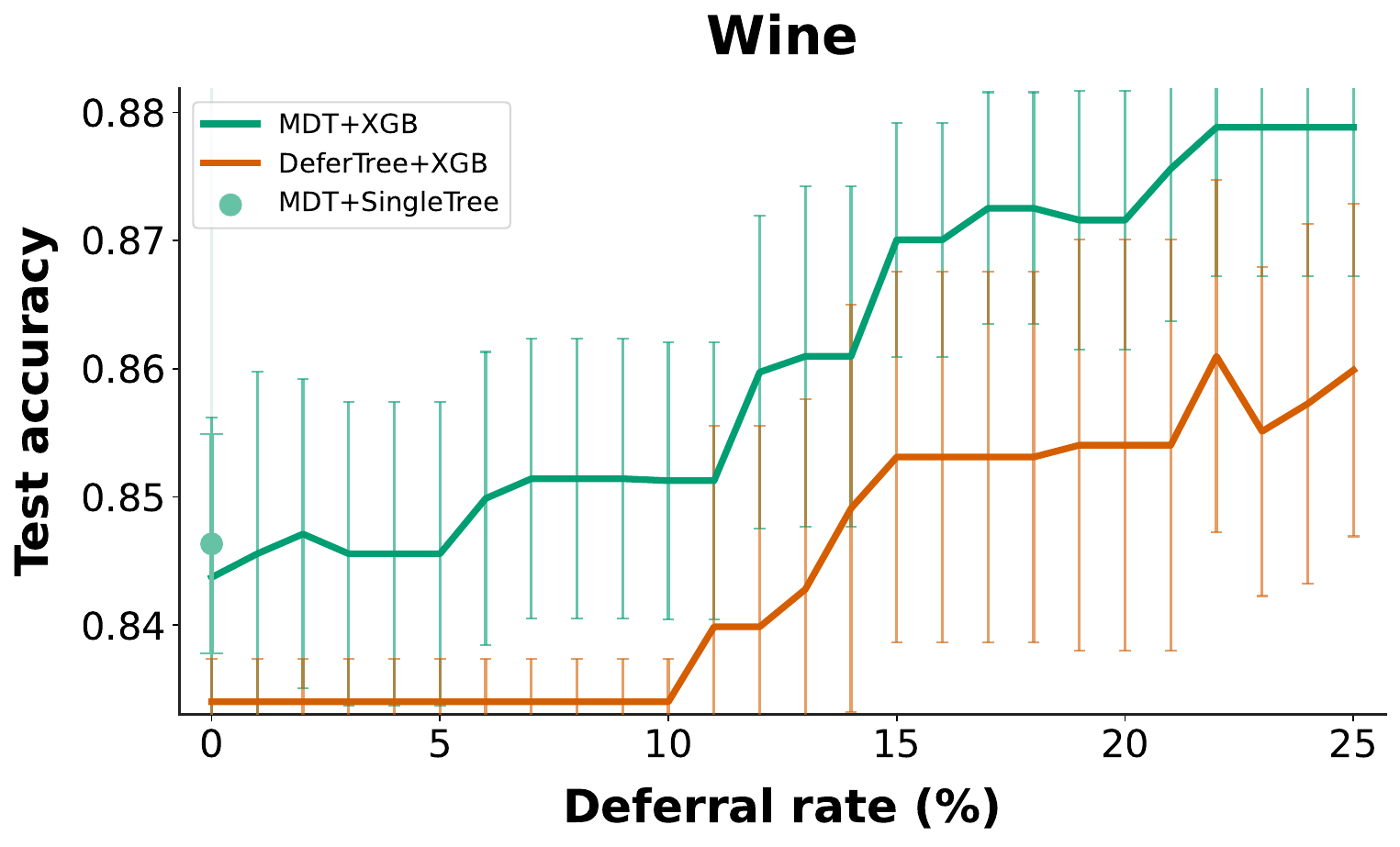}

\vspace{-0.2em}

\caption{Results of our MDT algorithm compared to just using our DeferTree algorithm (continued).}
\label{fig:appendix-dt-2}
\end{figure}

\clearpage

\clearpage
\thispagestyle{empty}

\begin{figure}[p]
\centering
\vspace*{-0.8cm}

\includegraphics[width=0.49\textwidth,height=0.21\textheight,keepaspectratio]{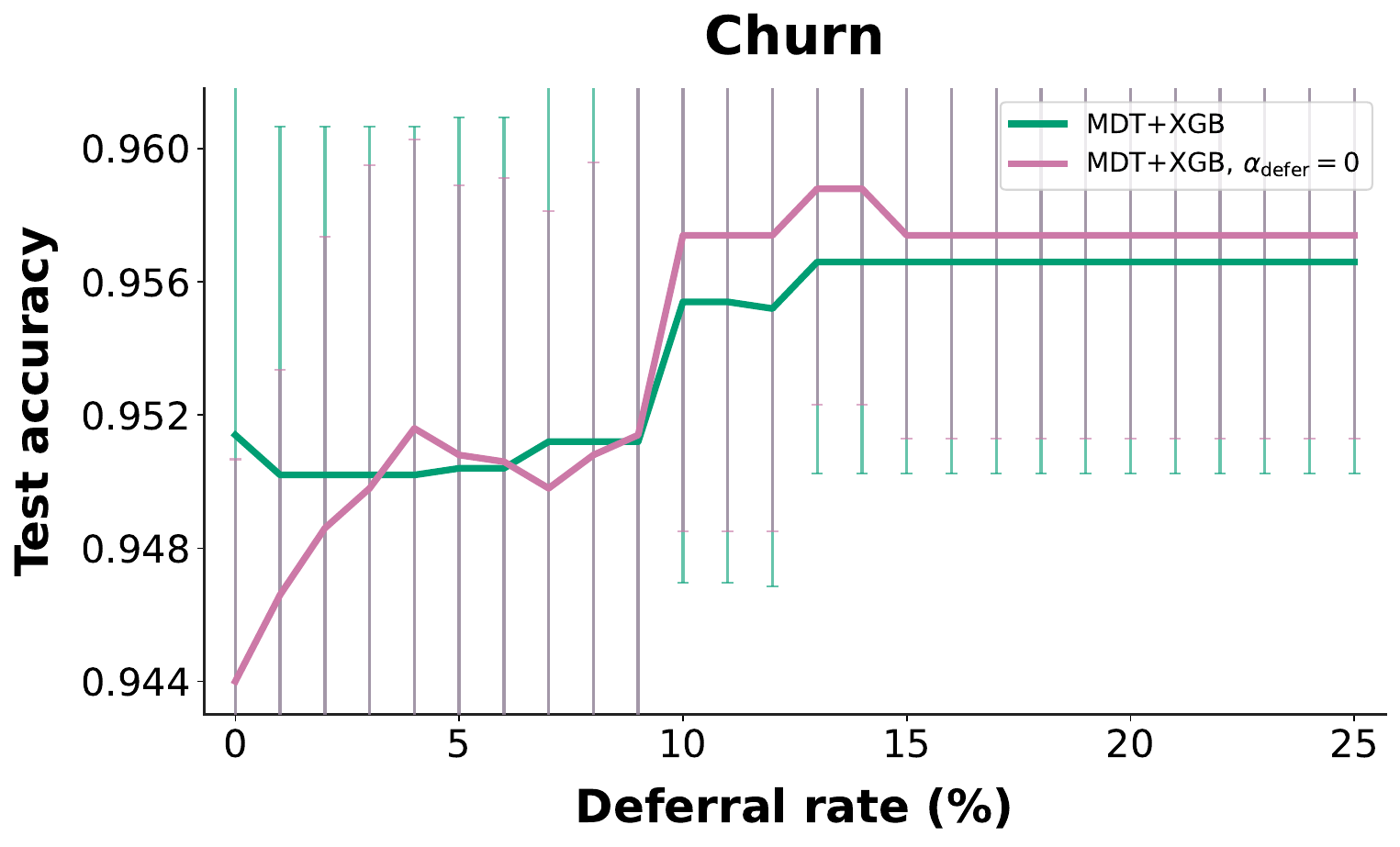}
\hfill
\includegraphics[width=0.49\textwidth,height=0.21\textheight,keepaspectratio]{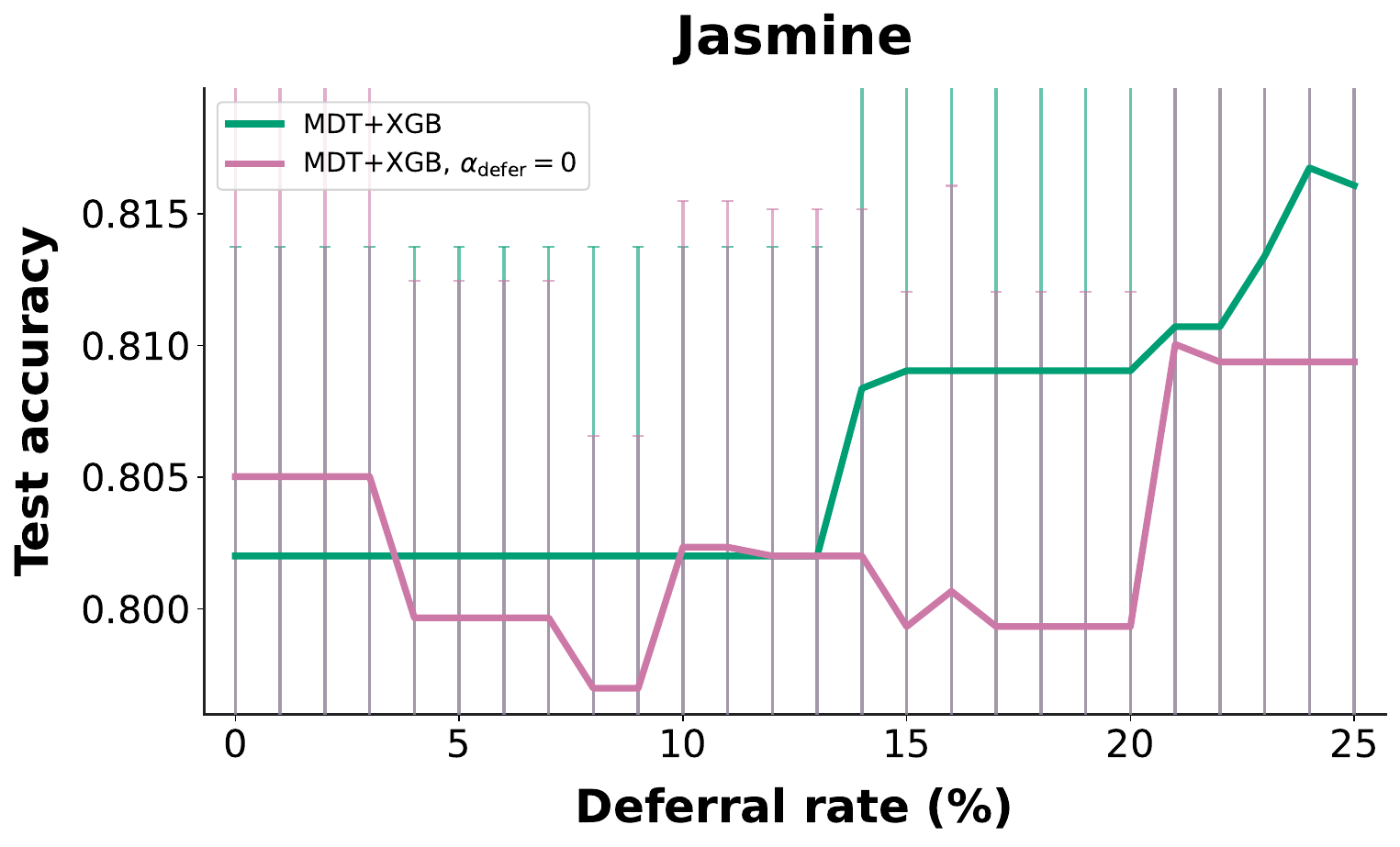}

\vspace{0.3em}

\includegraphics[width=0.49\textwidth,height=0.21\textheight,keepaspectratio]{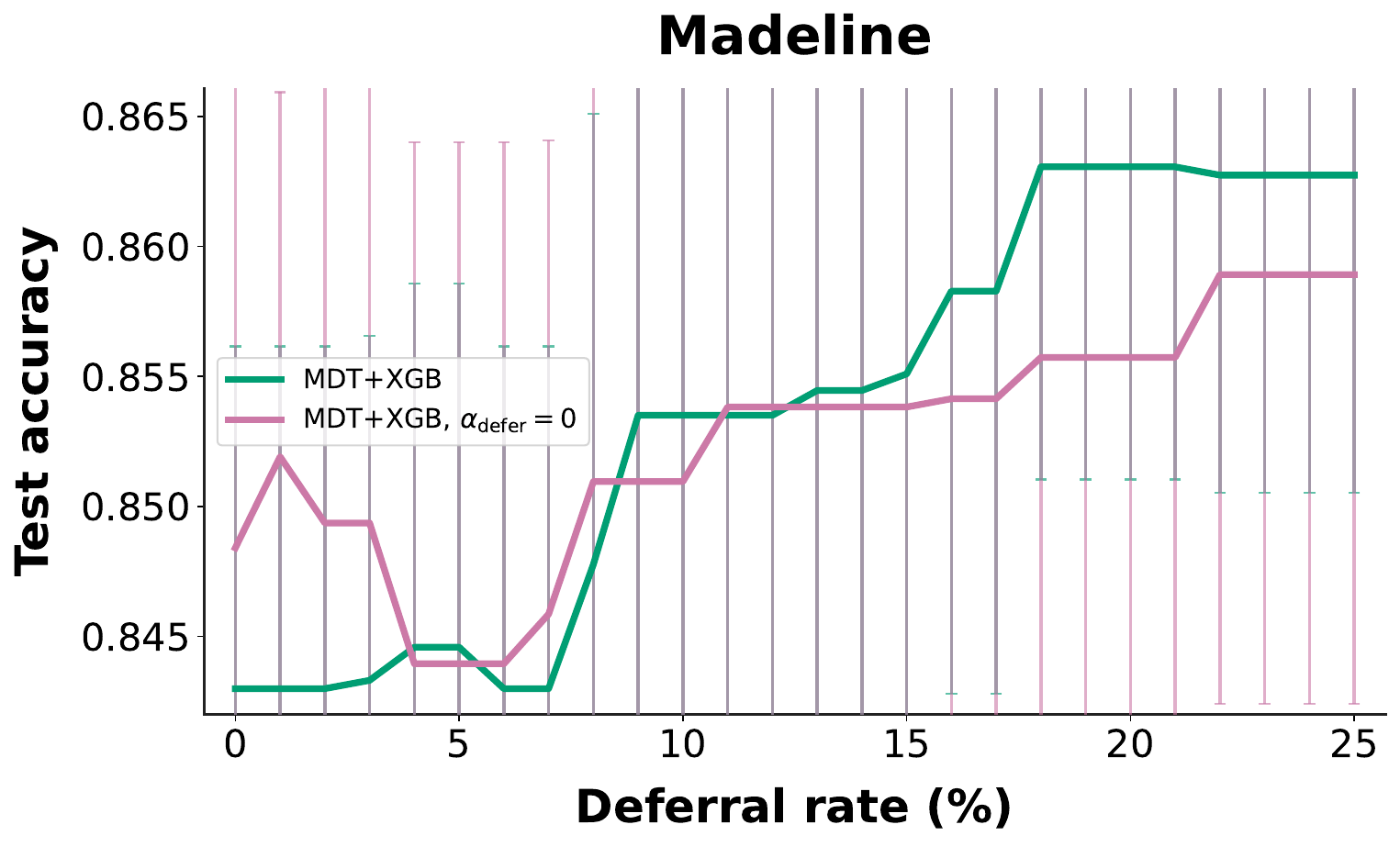}
\hfill
\includegraphics[width=0.49\textwidth,height=0.21\textheight,keepaspectratio]{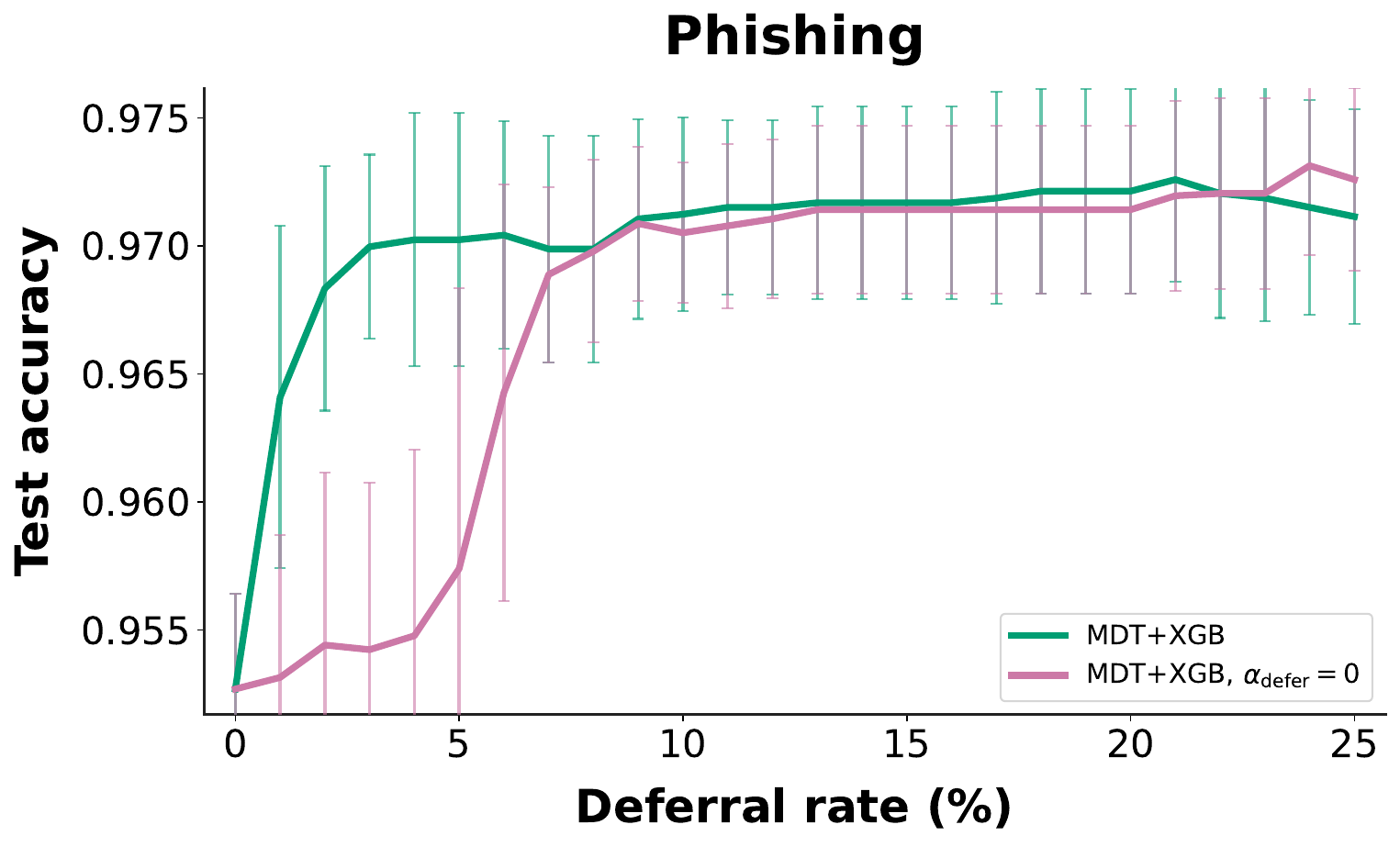}

\vspace{0.3em}

\includegraphics[width=0.49\textwidth,height=0.21\textheight,keepaspectratio]{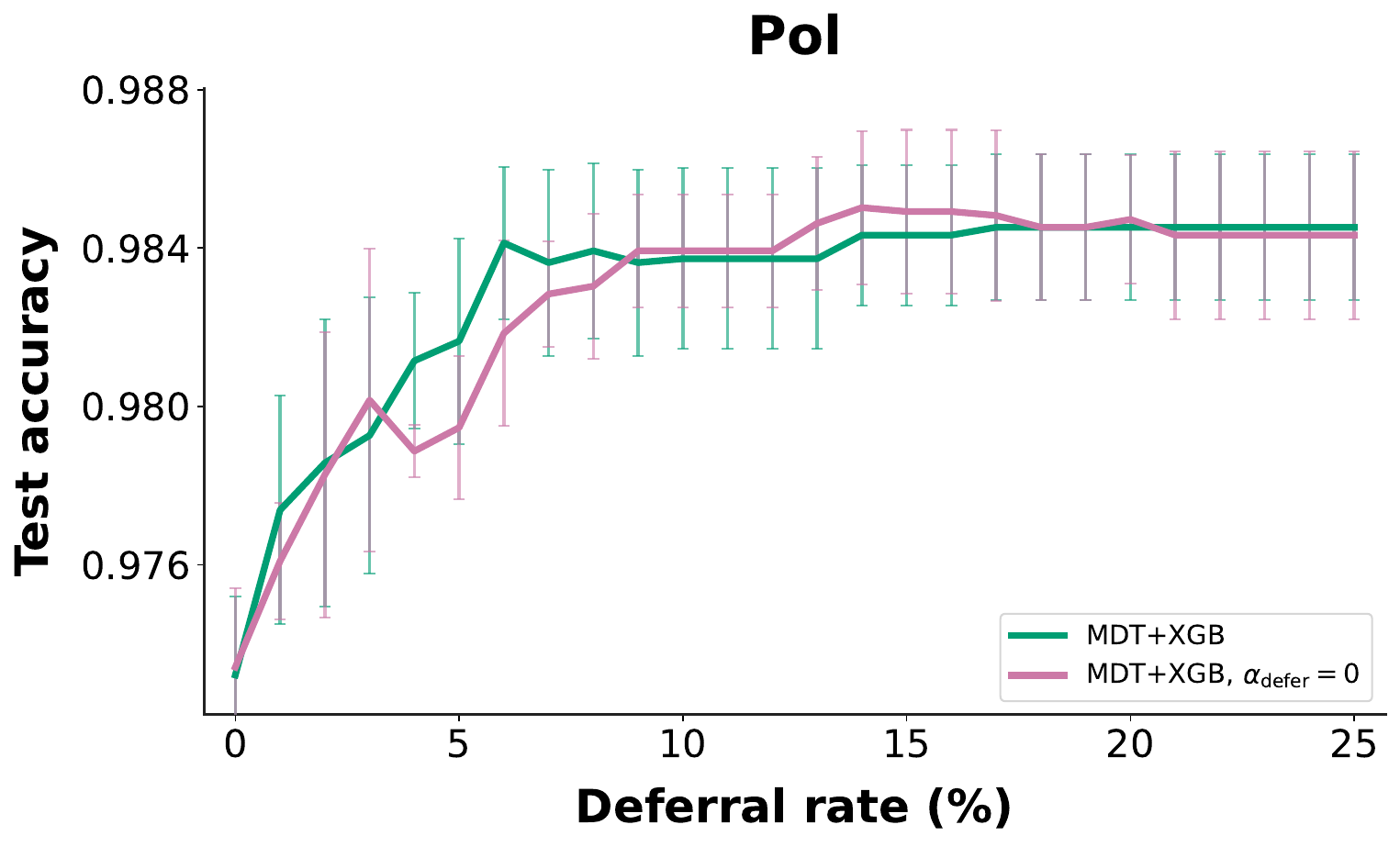}
\hfill
\includegraphics[width=0.49\textwidth,height=0.21\textheight,keepaspectratio]{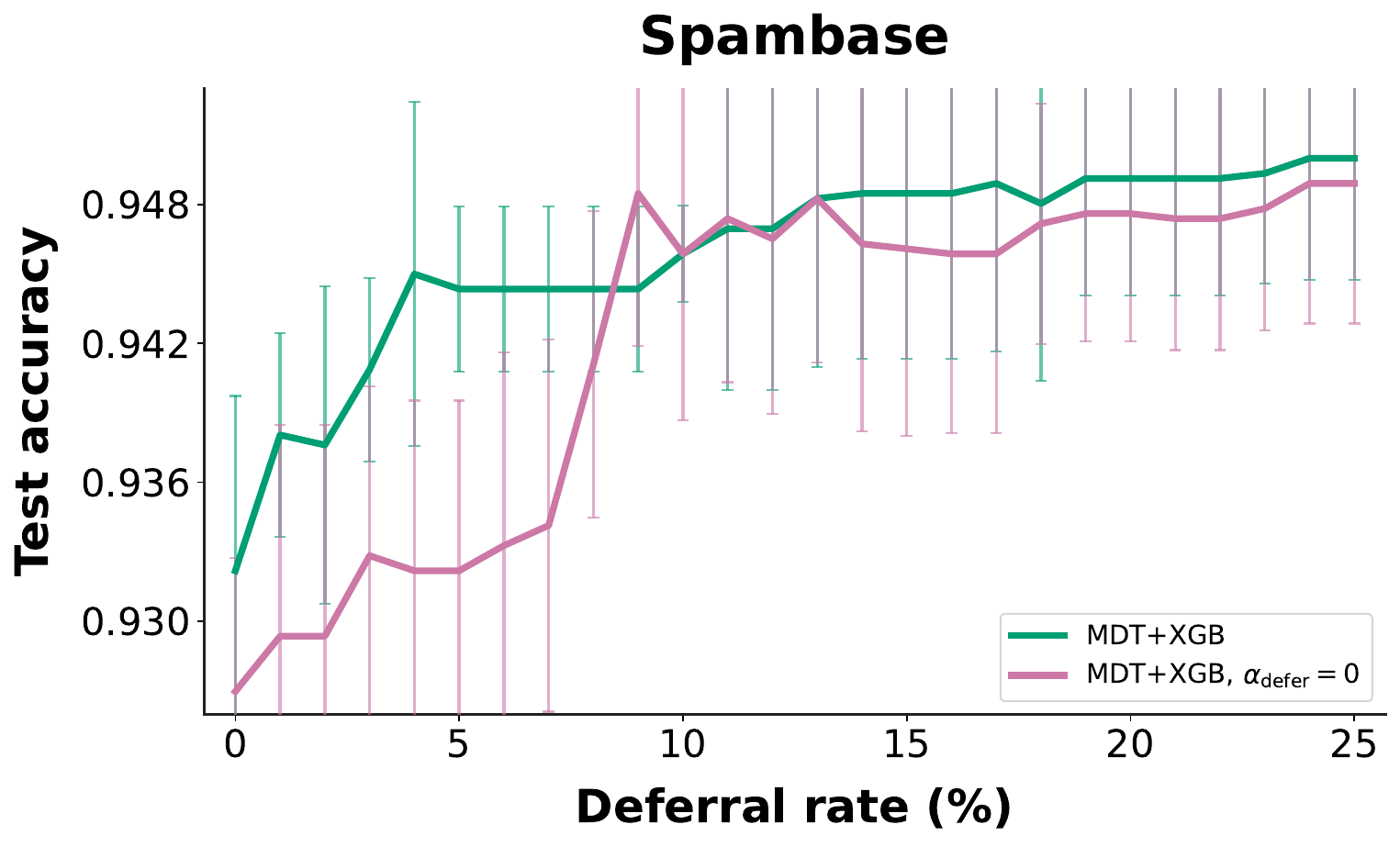}

\vspace{0.3em}

\includegraphics[width=0.62\textwidth,height=0.21\textheight,keepaspectratio]{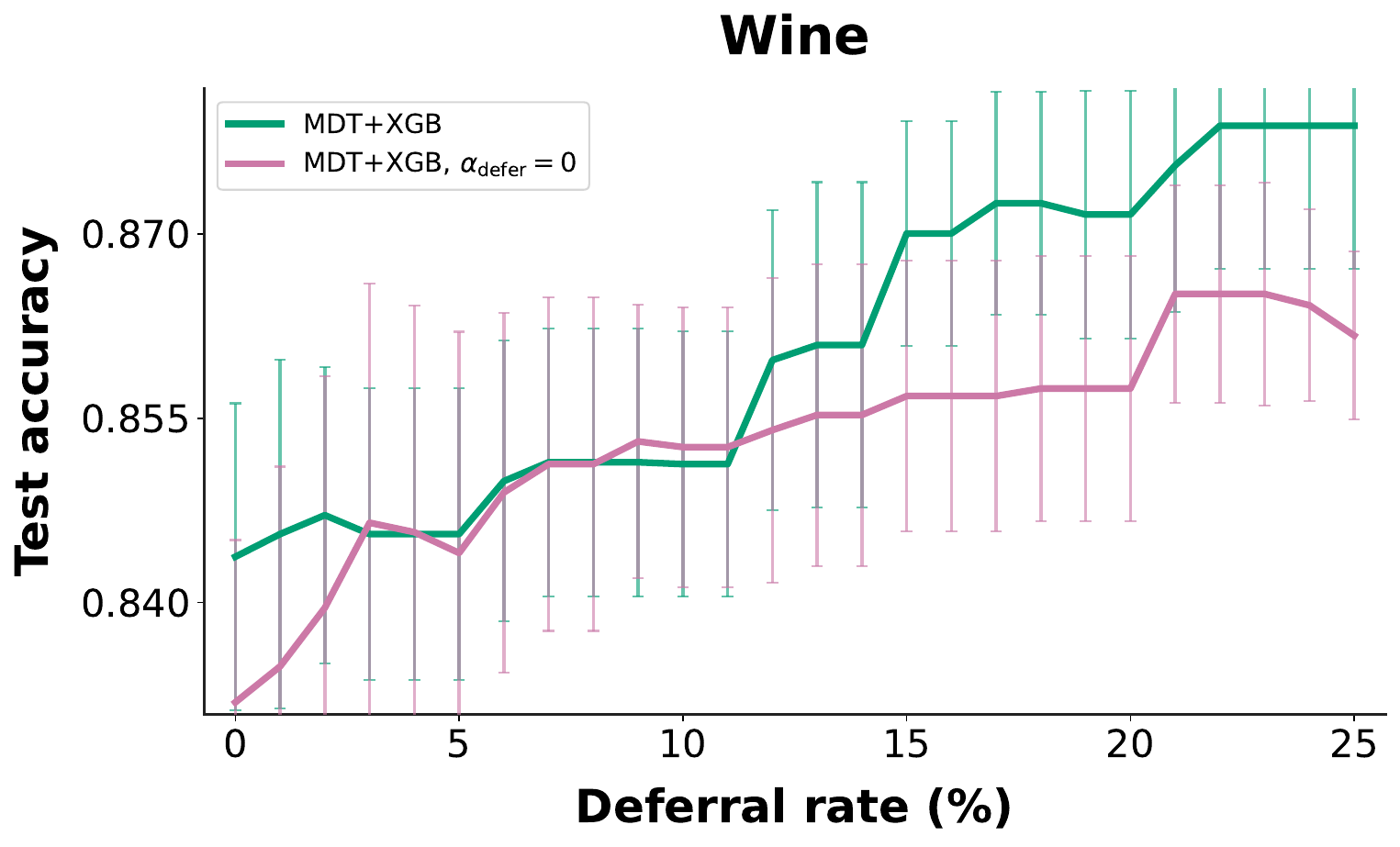}

\vspace{-0.2em}

\caption{Ablation on the impacts of rescaling $\tau$ for later stages of the MDT. The boolean $\texttt{rescale\_tau}$ is set to 0 for the $\alpha_{\text{defer}}=0$ method.}
\label{fig:appendix-strength}
\end{figure}

\clearpage

\clearpage
\thispagestyle{empty}

\begin{figure}[p]
\centering
\vspace*{-1.0cm}

\includegraphics[width=0.49\textwidth,height=0.225\textheight,keepaspectratio]{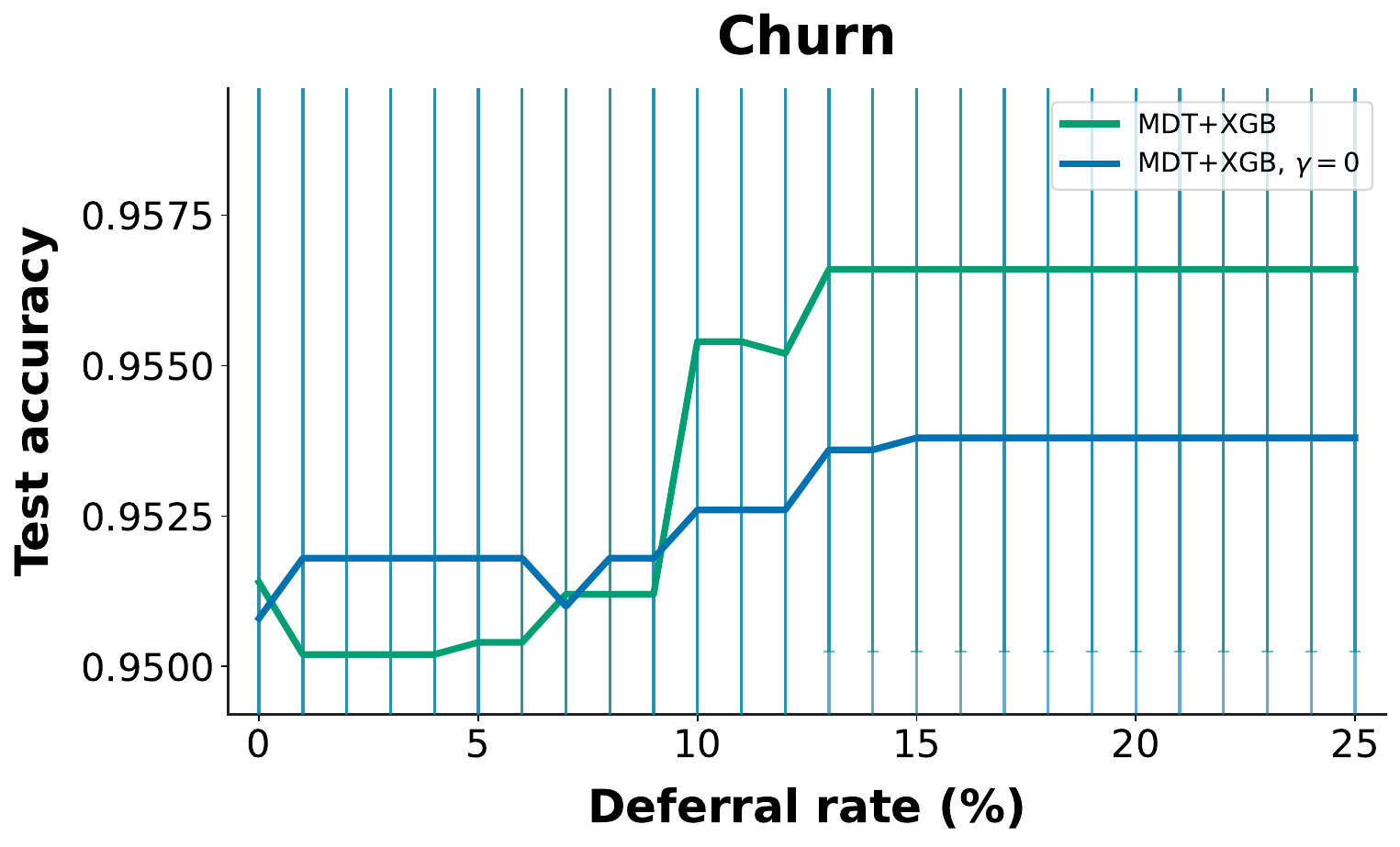}
\hfill
\includegraphics[width=0.49\textwidth,height=0.225\textheight,keepaspectratio]{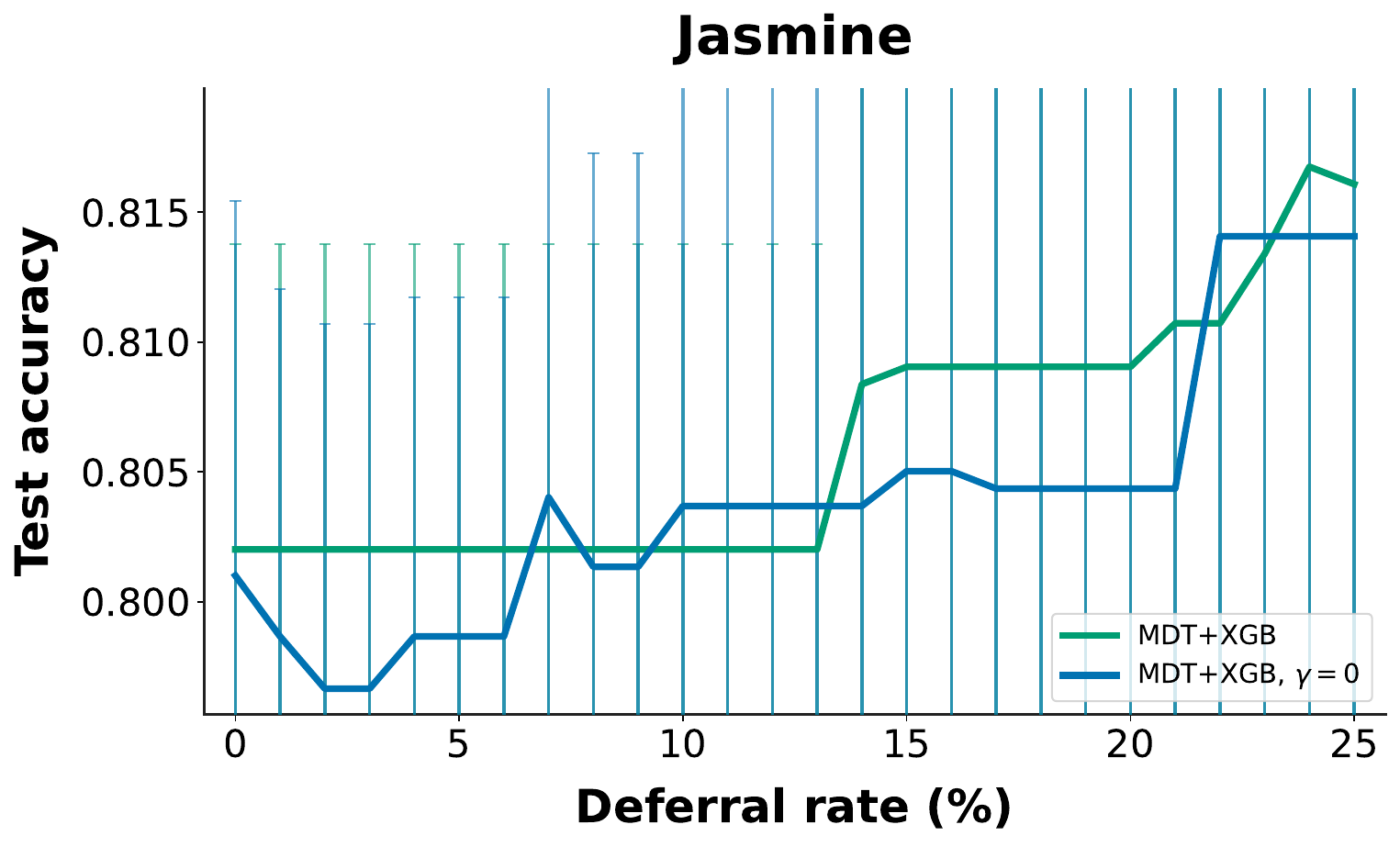}

\vspace{0.25em}

\includegraphics[width=0.49\textwidth,height=0.225\textheight,keepaspectratio]{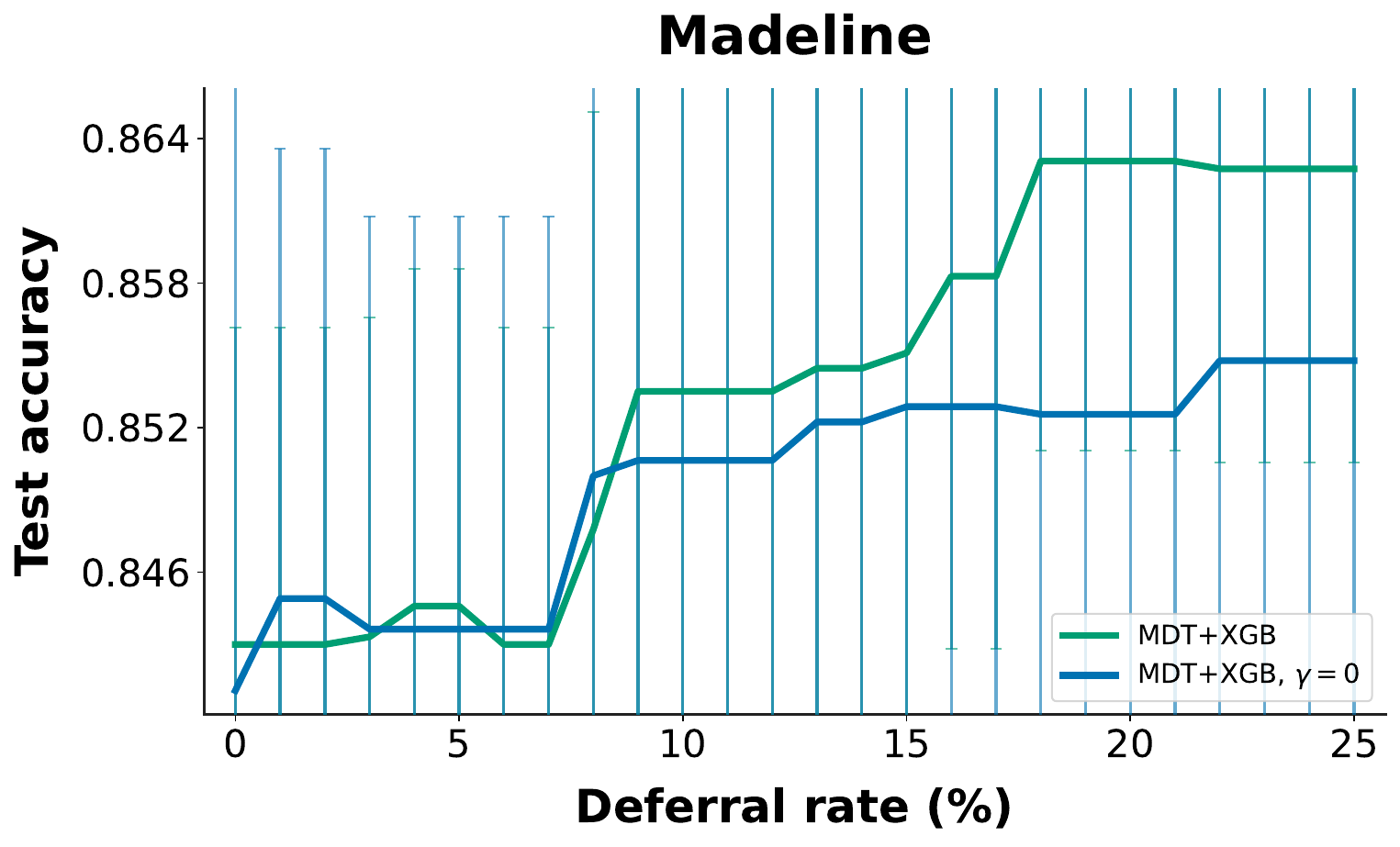}
\hfill
\includegraphics[width=0.49\textwidth,height=0.225\textheight,keepaspectratio]{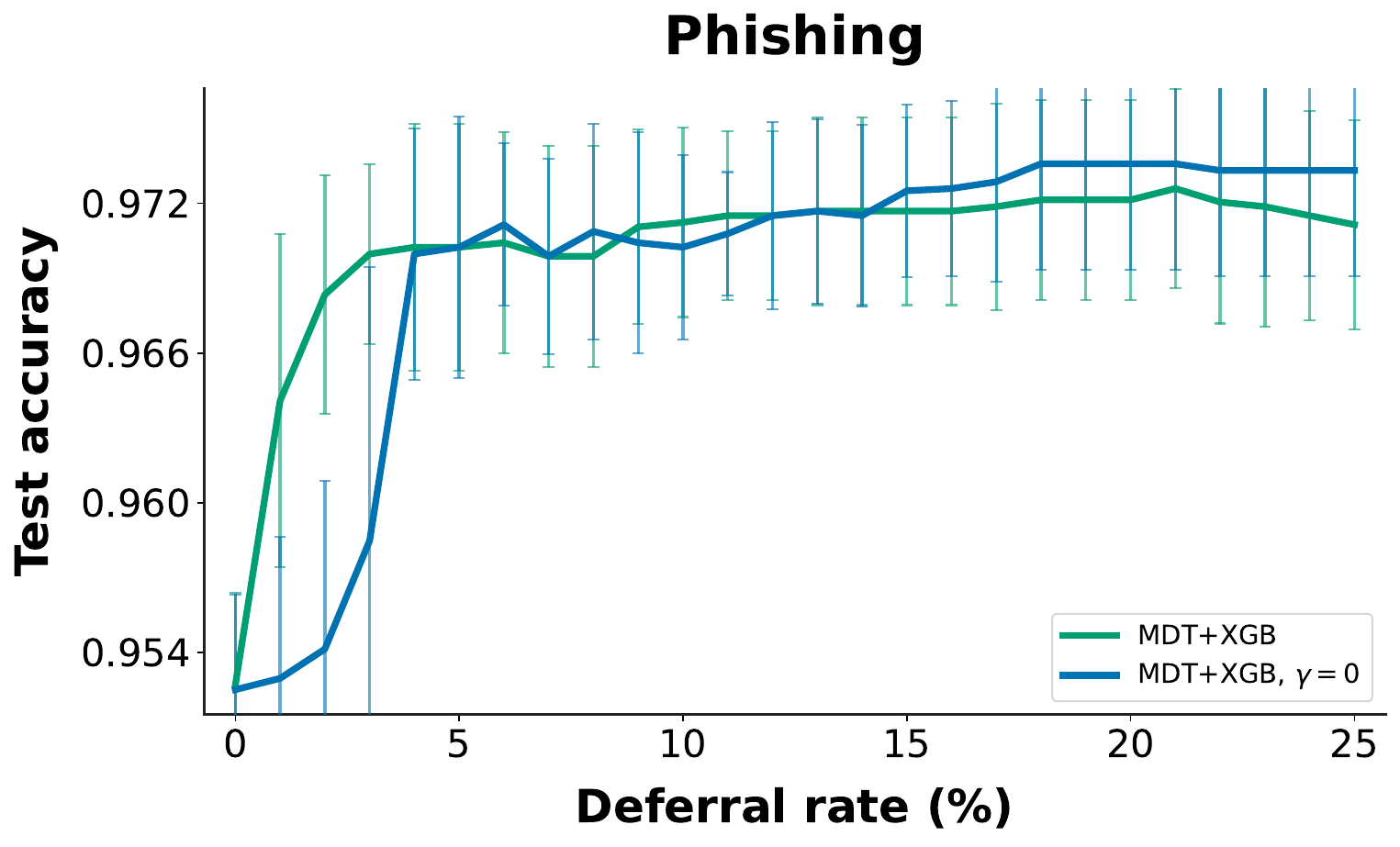}

\vspace{0.25em}

\includegraphics[width=0.49\textwidth,height=0.225\textheight,keepaspectratio]{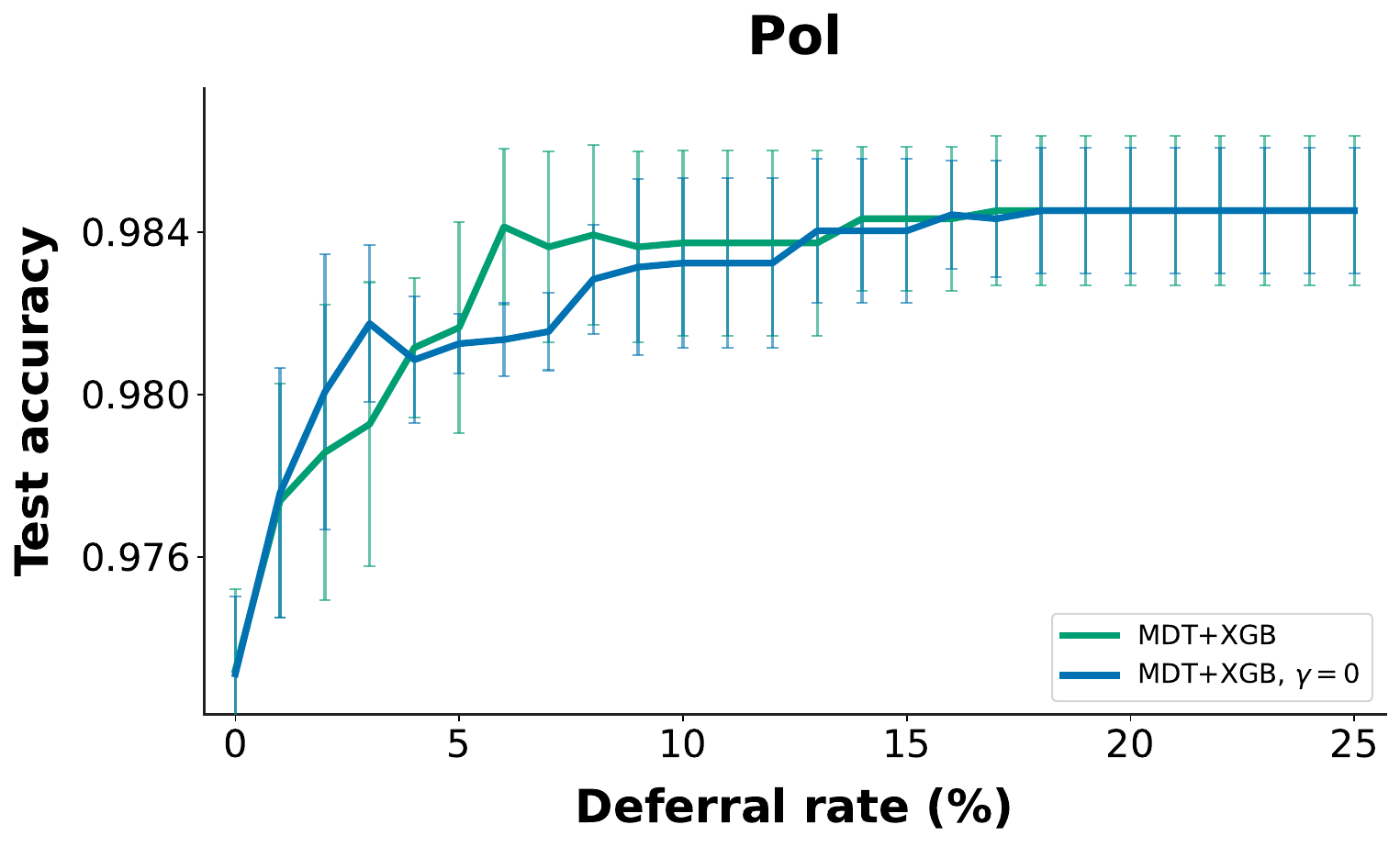}
\hfill
\includegraphics[width=0.49\textwidth,height=0.225\textheight,keepaspectratio]{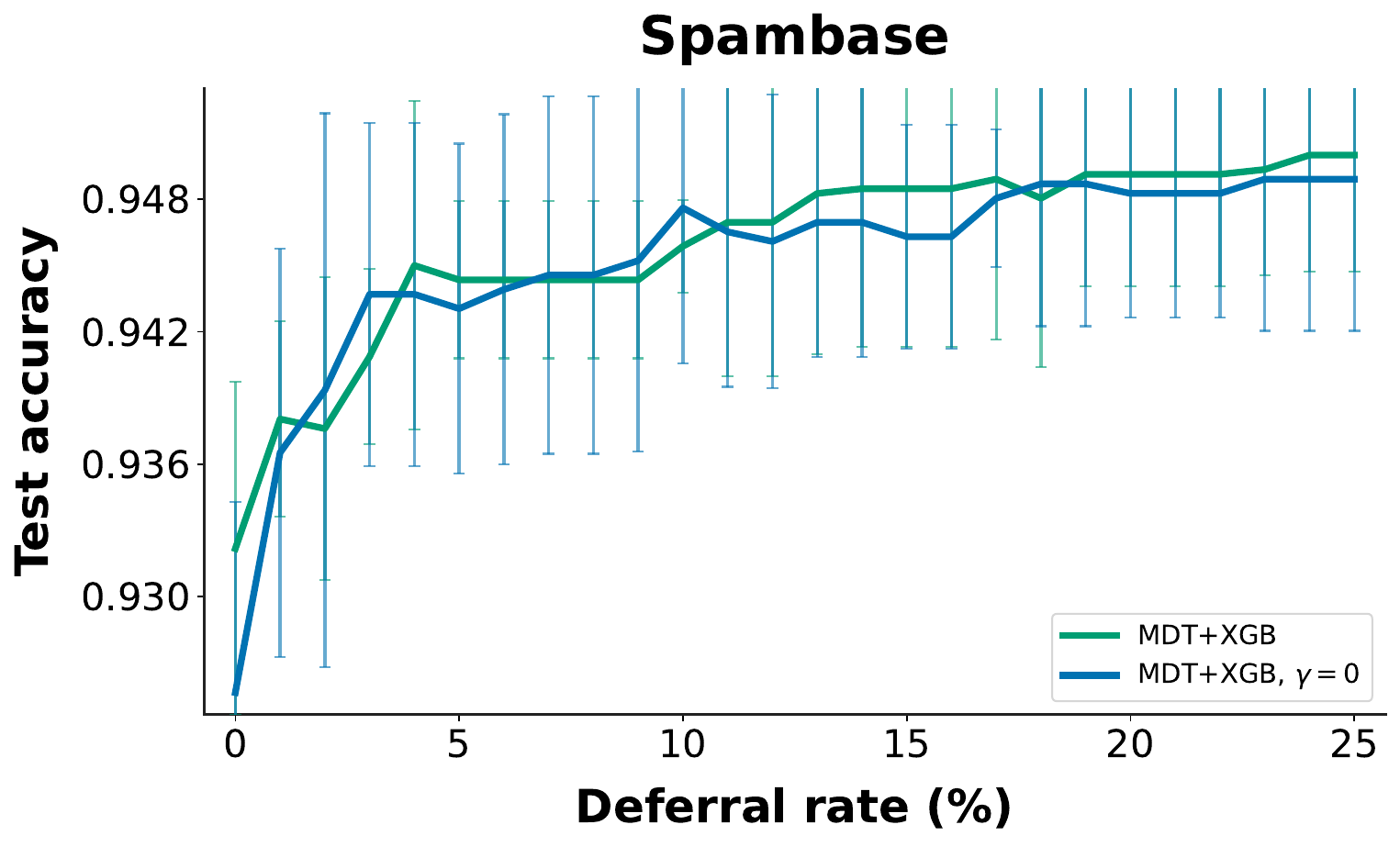}

\vspace{0.25em}

\includegraphics[width=0.68\textwidth,height=0.225\textheight,keepaspectratio]{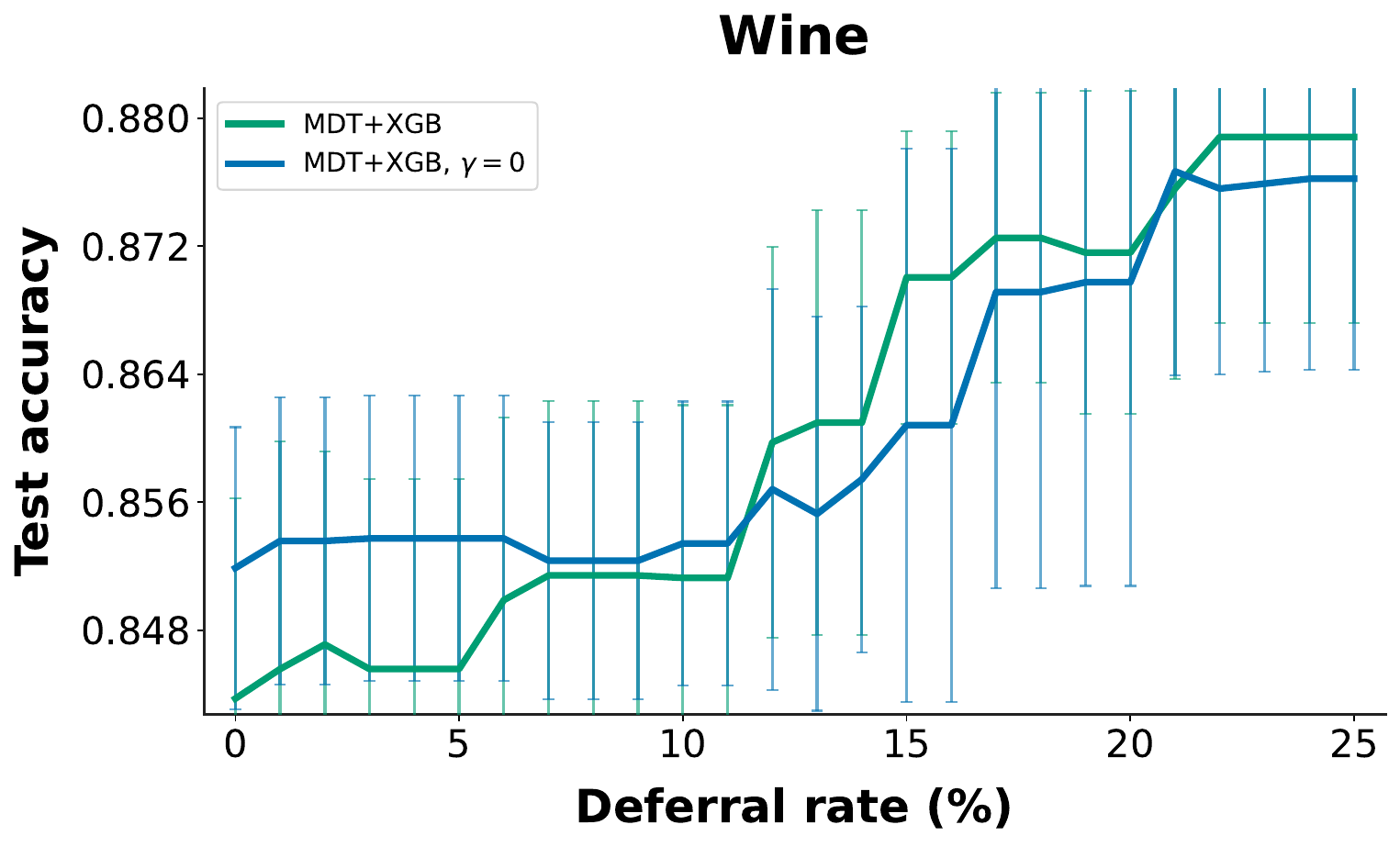}

\vspace{-0.2em}

\caption{Ablation on the distance-based reweighting via tuning $\gamma$ or setting $\gamma=0$.}
\label{fig:appendix-gamma}
\end{figure}

\clearpage

\begin{table}[!t]
\centering
\large
\setlength{\tabcolsep}{5pt}
\renewcommand{\arraystretch}{0.9}
\begin{tabular}{lcc}
\toprule
Dataset & MDT+XGB & MDT+XGB LeaveOneOut \\
\midrule
Abalone   & $0.6333 \pm 0.0155$ & $0.6323 \pm 0.0128$ \\
Adult     & $0.8681 \pm 0.0039$ & $0.8661 \pm 0.0026$ \\
Aging     & $0.8028 \pm 0.0077$ & $0.8028 \pm 0.0077$ \\
Bank      & $0.9090 \pm 0.0013$ & $0.9077 \pm 0.0021$ \\
Bike      & $0.9493 \pm 0.0054$ & $0.9427 \pm 0.0029$ \\
California & $0.9099 \pm 0.0049$ & $0.8997 \pm 0.0075$ \\
\bottomrule
\end{tabular}
\caption{Ablation of using in-sample black-box predictions versus out-of-fold predictions when training the defer model. Using leave-one-out style out-of-fold predictions does not improve performance and slightly worsens accuracy on most datasets. Results show average test accuracy ($\pm$ standard deviation) under a 50\% test deferral constraint during hyperparameter selection.}
\label{tab:mdt_fast_vs_loo_50}
\end{table}

\clearpage
\thispagestyle{empty}

\begin{figure*}[p]
\centering
\vspace*{-1.0cm}

\includegraphics[width=0.49\textwidth,height=0.22\textheight,keepaspectratio]{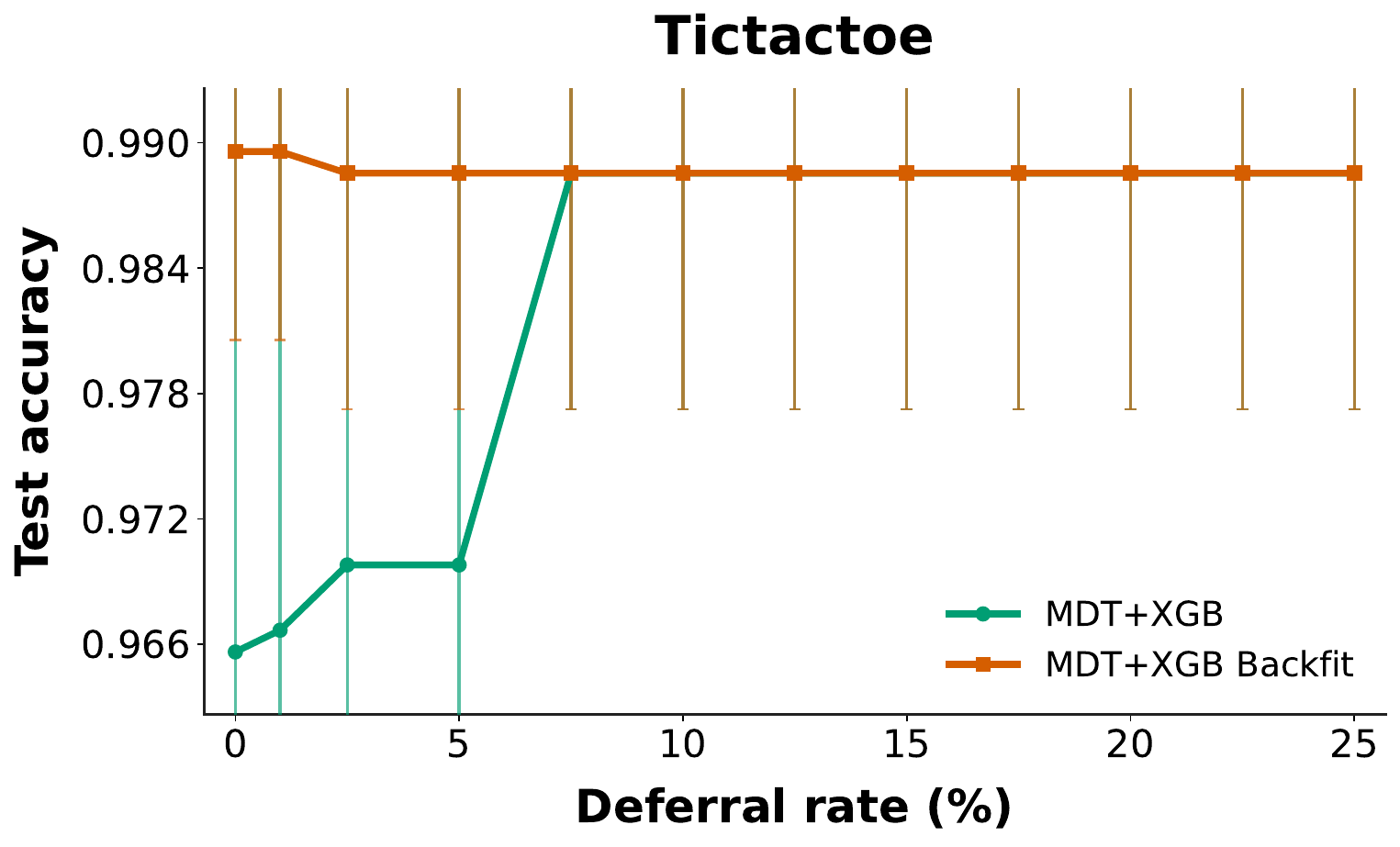}
\hfill
\includegraphics[width=0.49\textwidth,height=0.22\textheight,keepaspectratio]{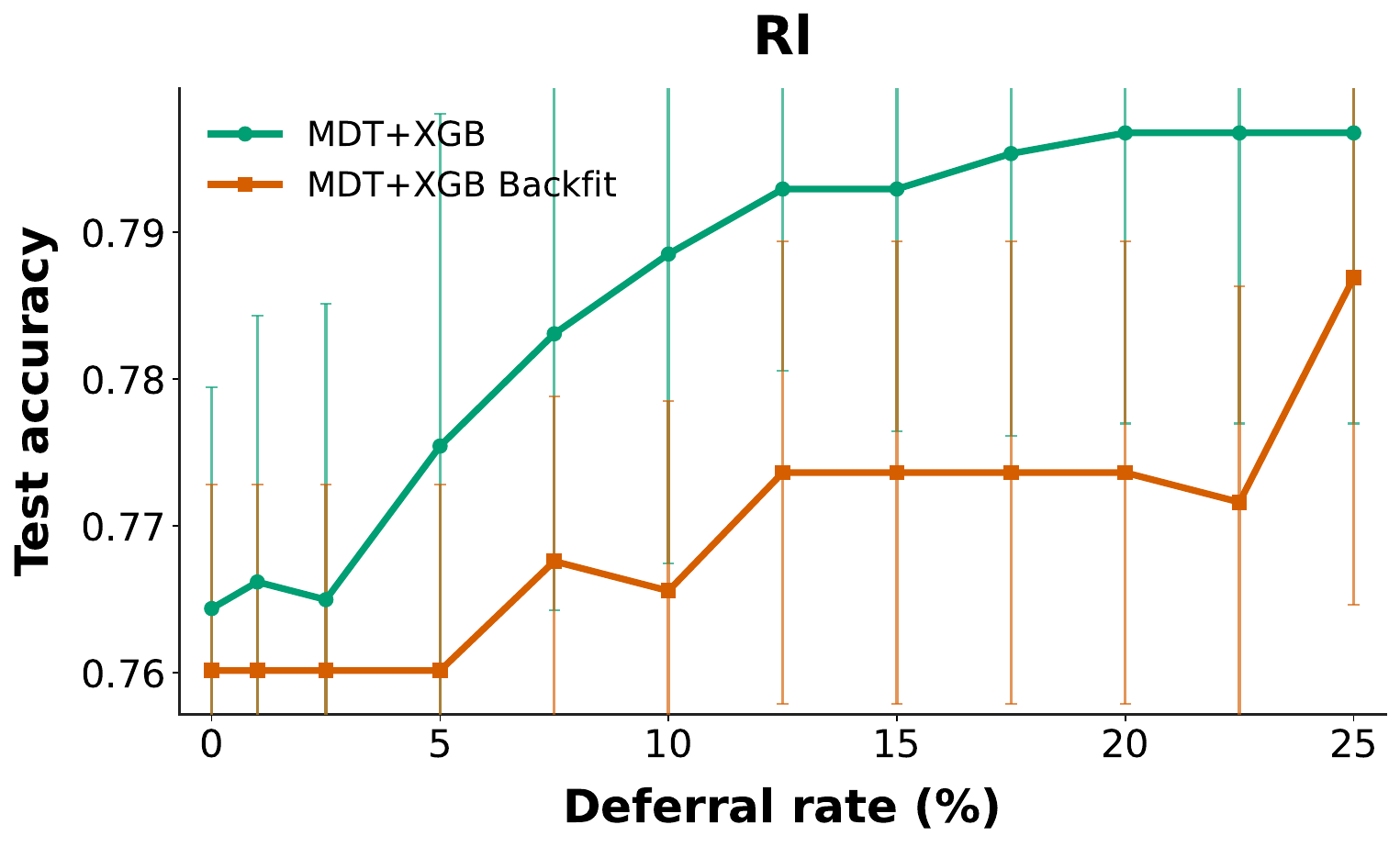}

\vspace{0.25em}

\includegraphics[width=0.49\textwidth,height=0.22\textheight,keepaspectratio]{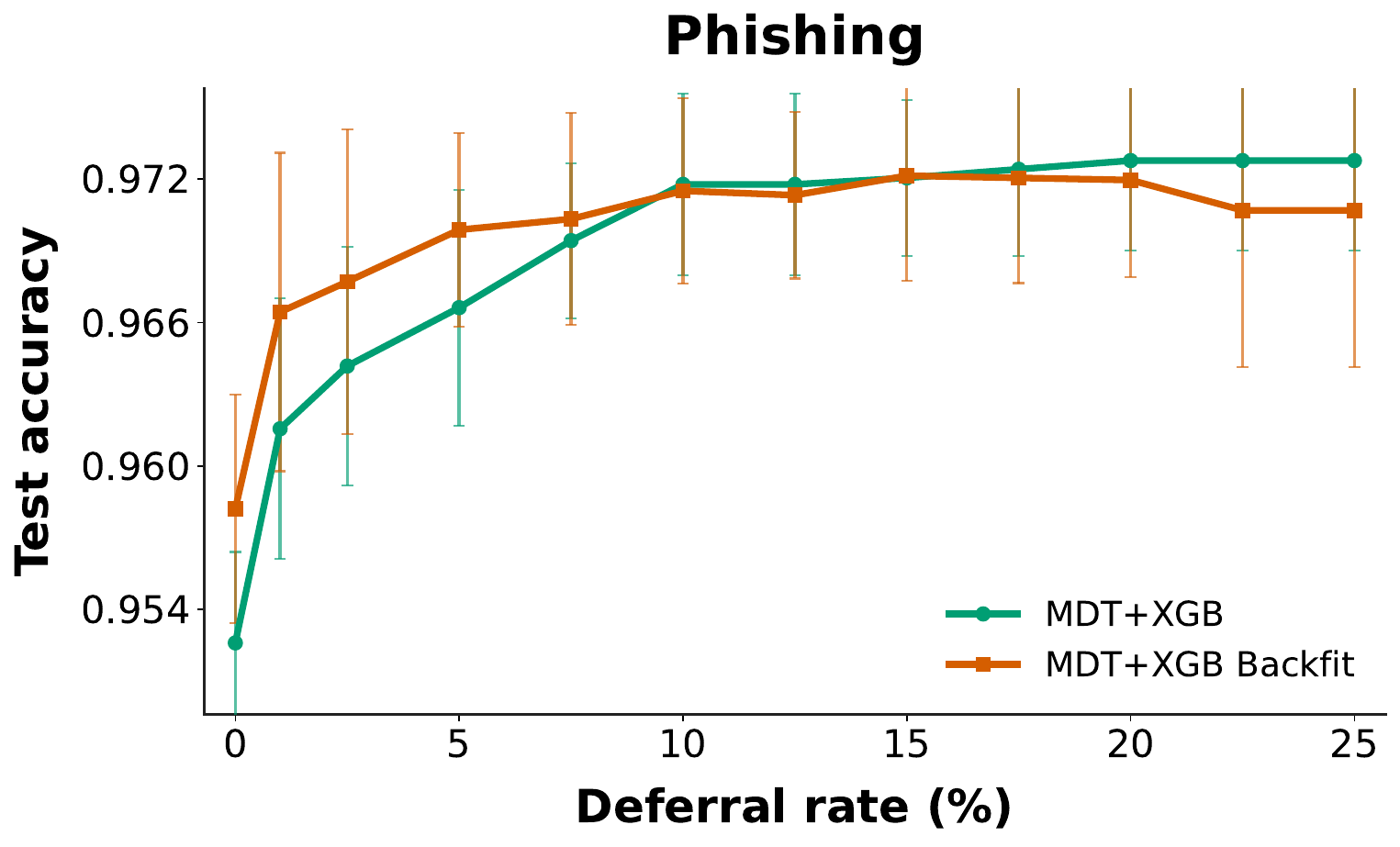}
\hfill
\includegraphics[width=0.49\textwidth,height=0.22\textheight,keepaspectratio]{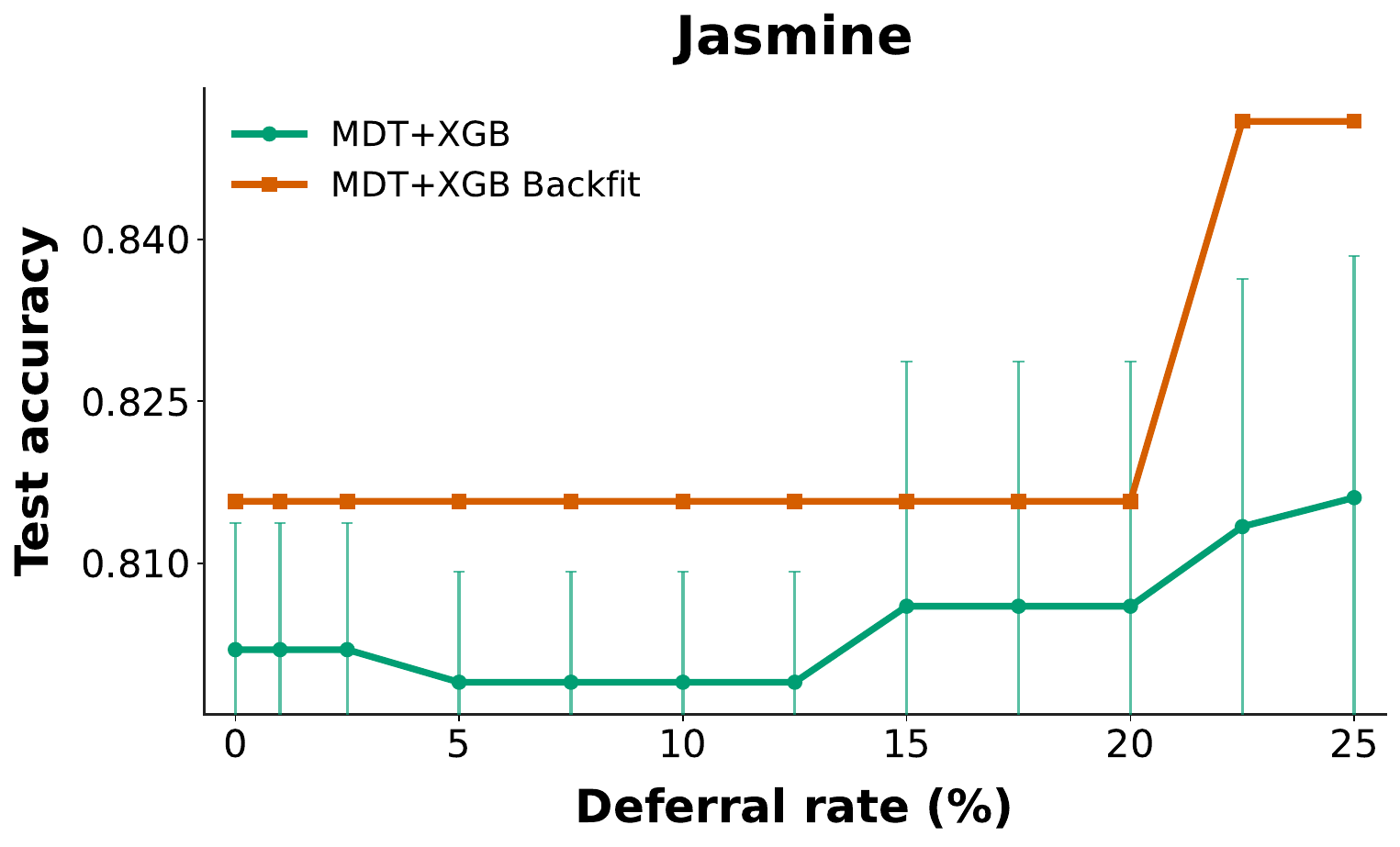}

\vspace{0.25em}

\includegraphics[width=0.49\textwidth,height=0.22\textheight,keepaspectratio]{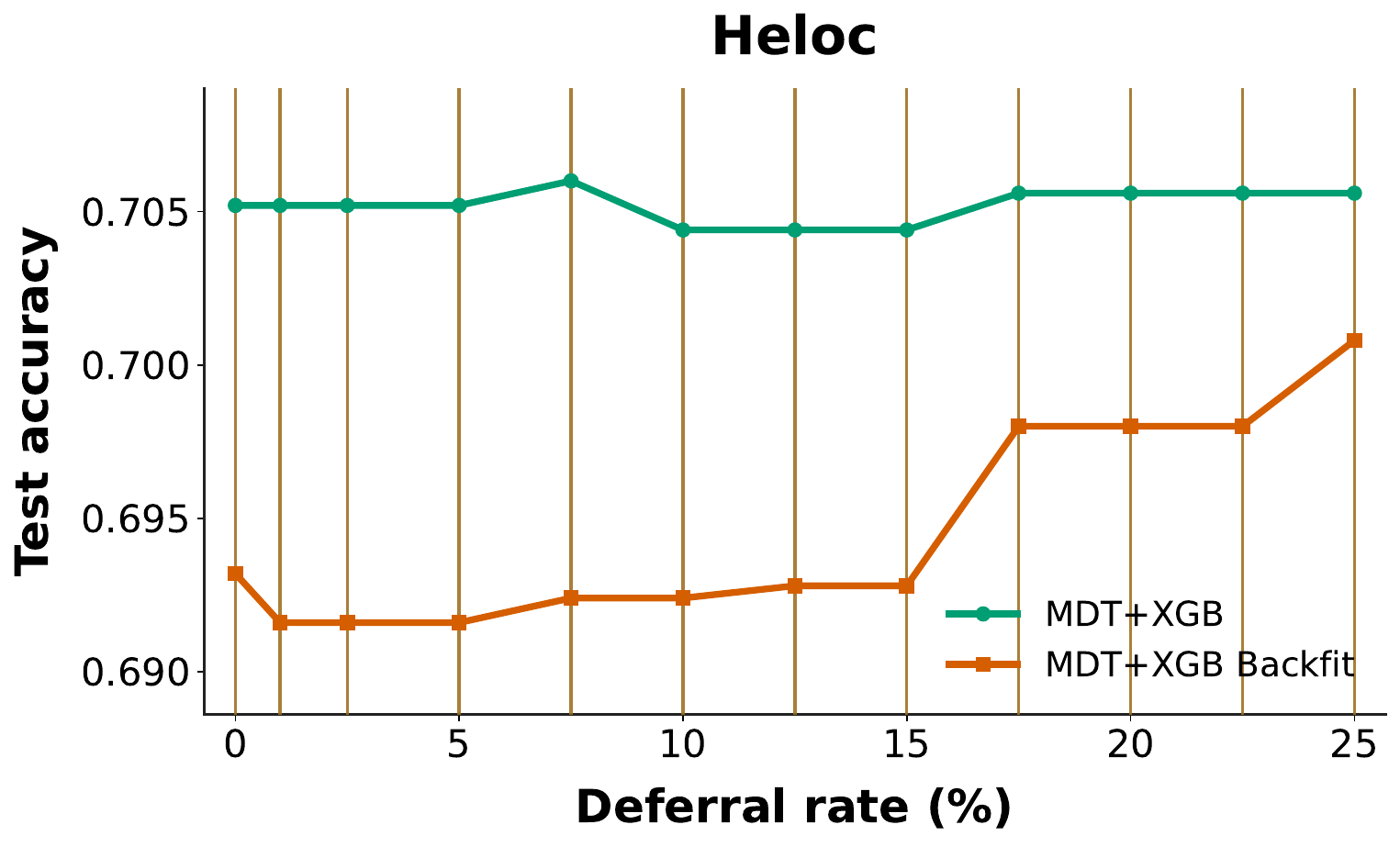}
\hfill
\includegraphics[width=0.49\textwidth,height=0.22\textheight,keepaspectratio]{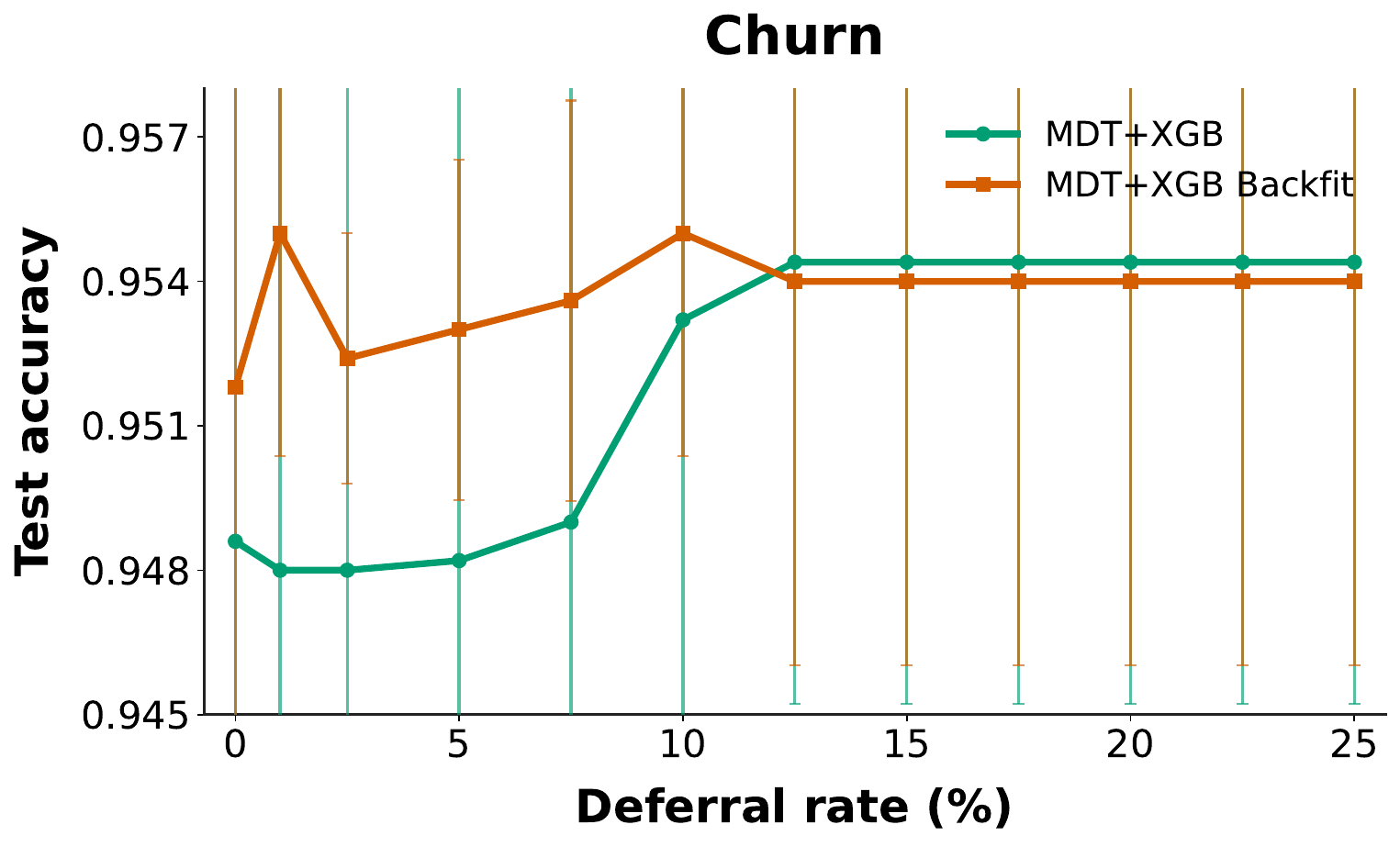}

\vspace{0.25em}

\includegraphics[width=0.62\textwidth,height=0.22\textheight,keepaspectratio]{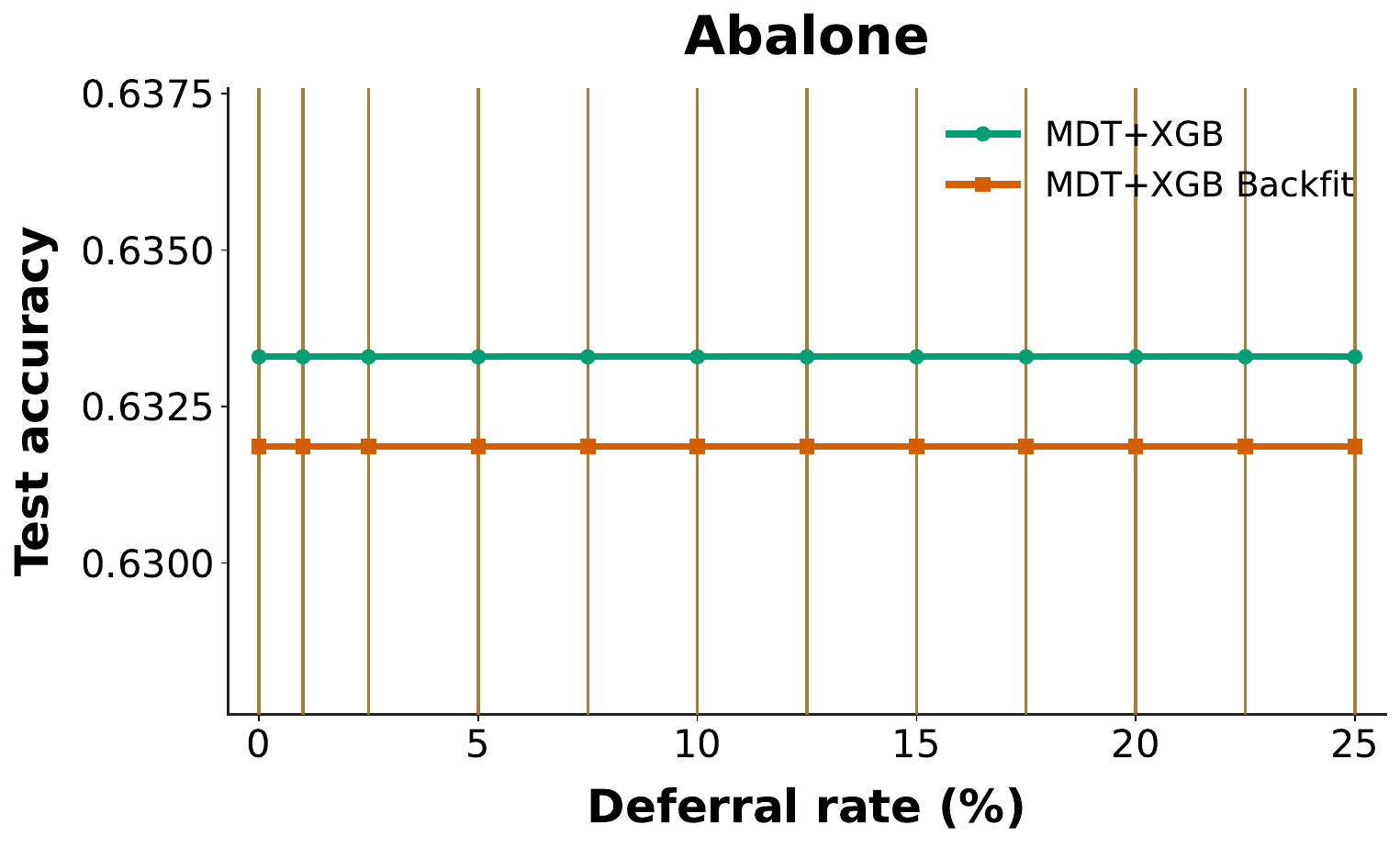}

\vspace{-0.2em}

\caption{Comparison of MDT+XGB and MDT+XGB with backfitting across selected datasets. Each plot shows test accuracy as a function of the deferral rate. Backfitting gives clear improvements on some datasets, such as Tictactoe, where revisiting stages may help capture higher-order interactions, but it degrades performance on several others.}
\label{fig:backfit_mdt_xgb_selected}
\end{figure*}

\clearpage

\clearpage
\thispagestyle{empty}

\begin{figure*}[p]
\centering
\vspace*{-0.9cm}

\includegraphics[width=0.49\textwidth,height=0.17\textheight,keepaspectratio]{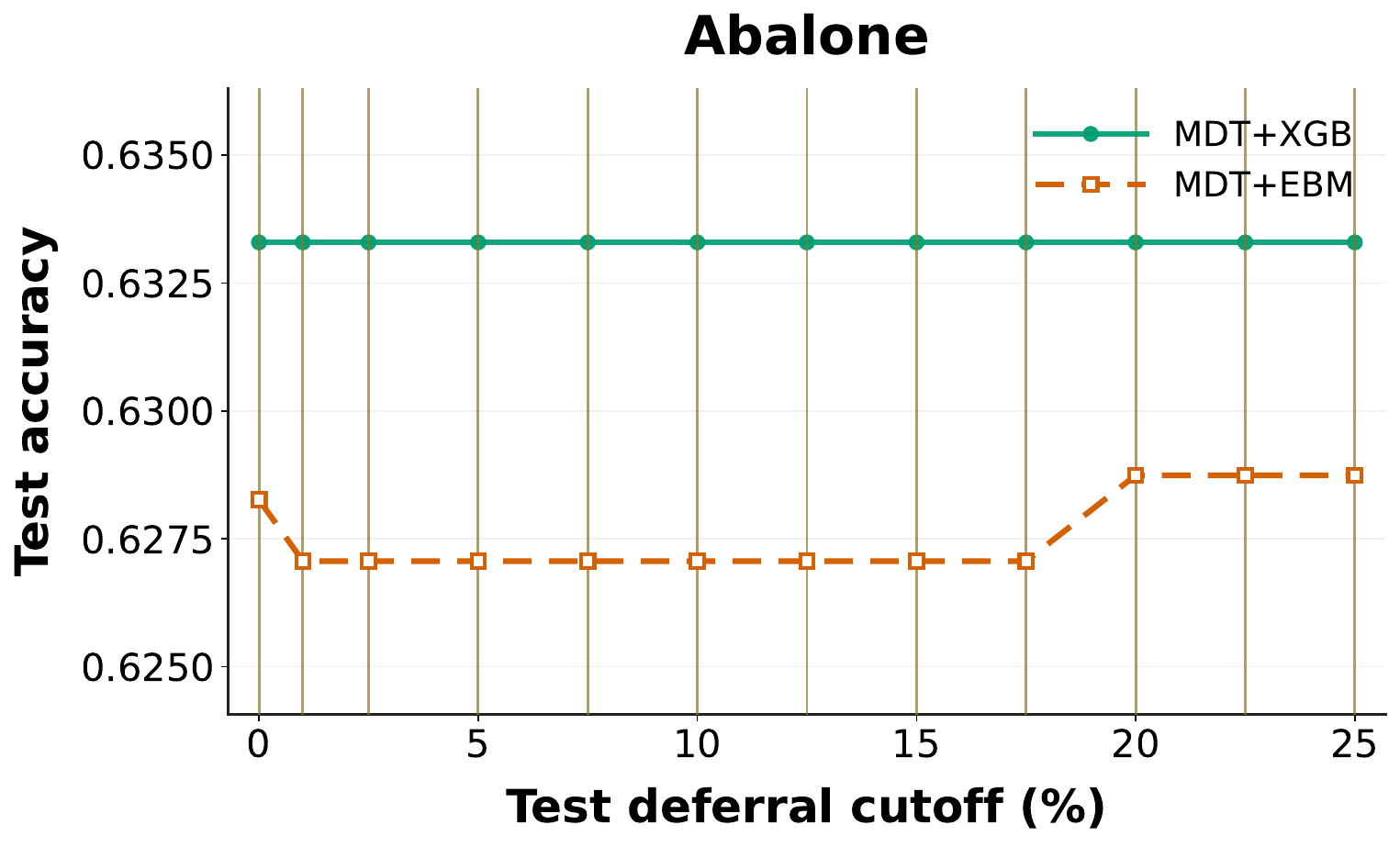}
\hfill
\includegraphics[width=0.49\textwidth,height=0.17\textheight,keepaspectratio]{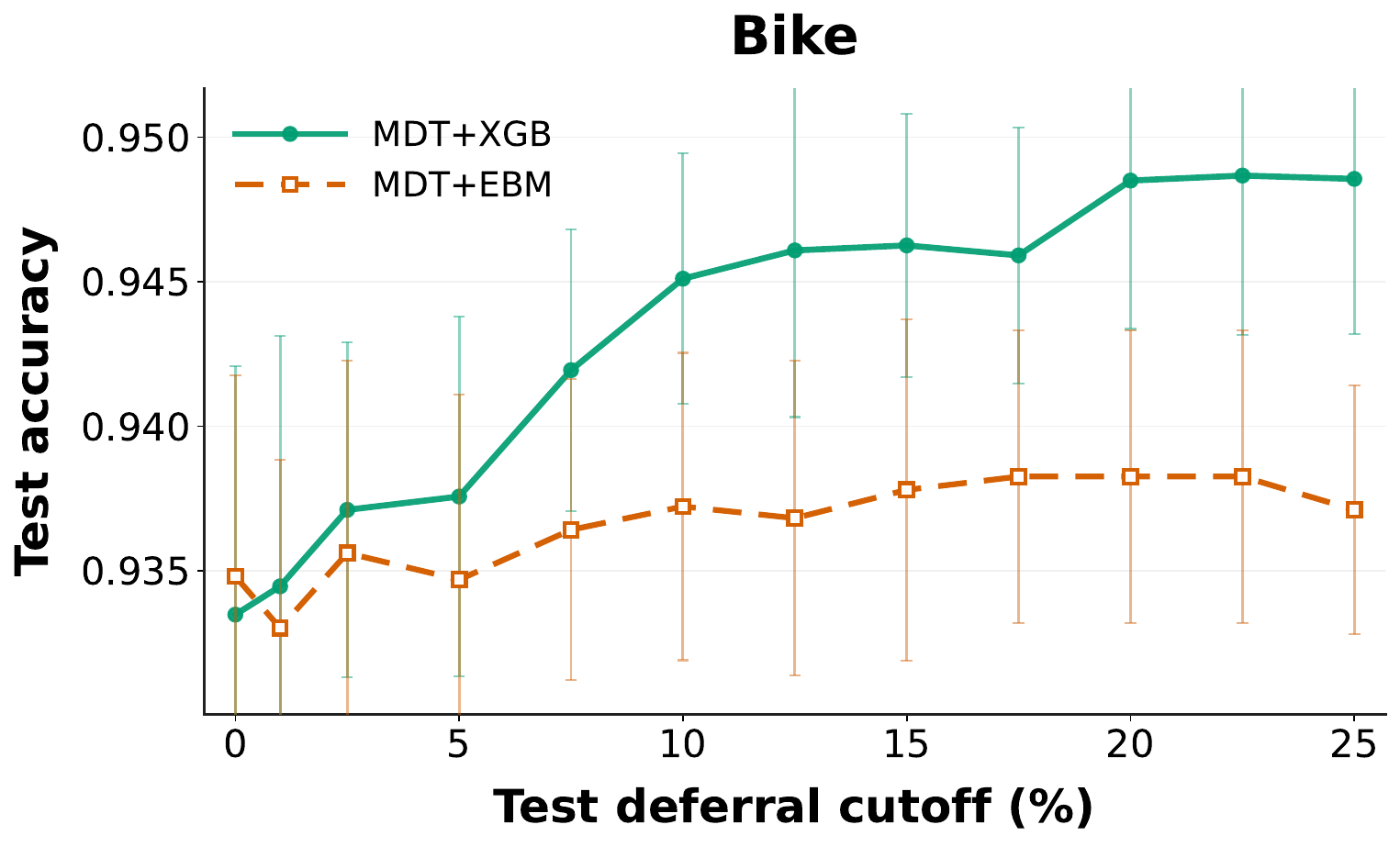}

\vspace{0.25em}

\includegraphics[width=0.49\textwidth,height=0.17\textheight,keepaspectratio]{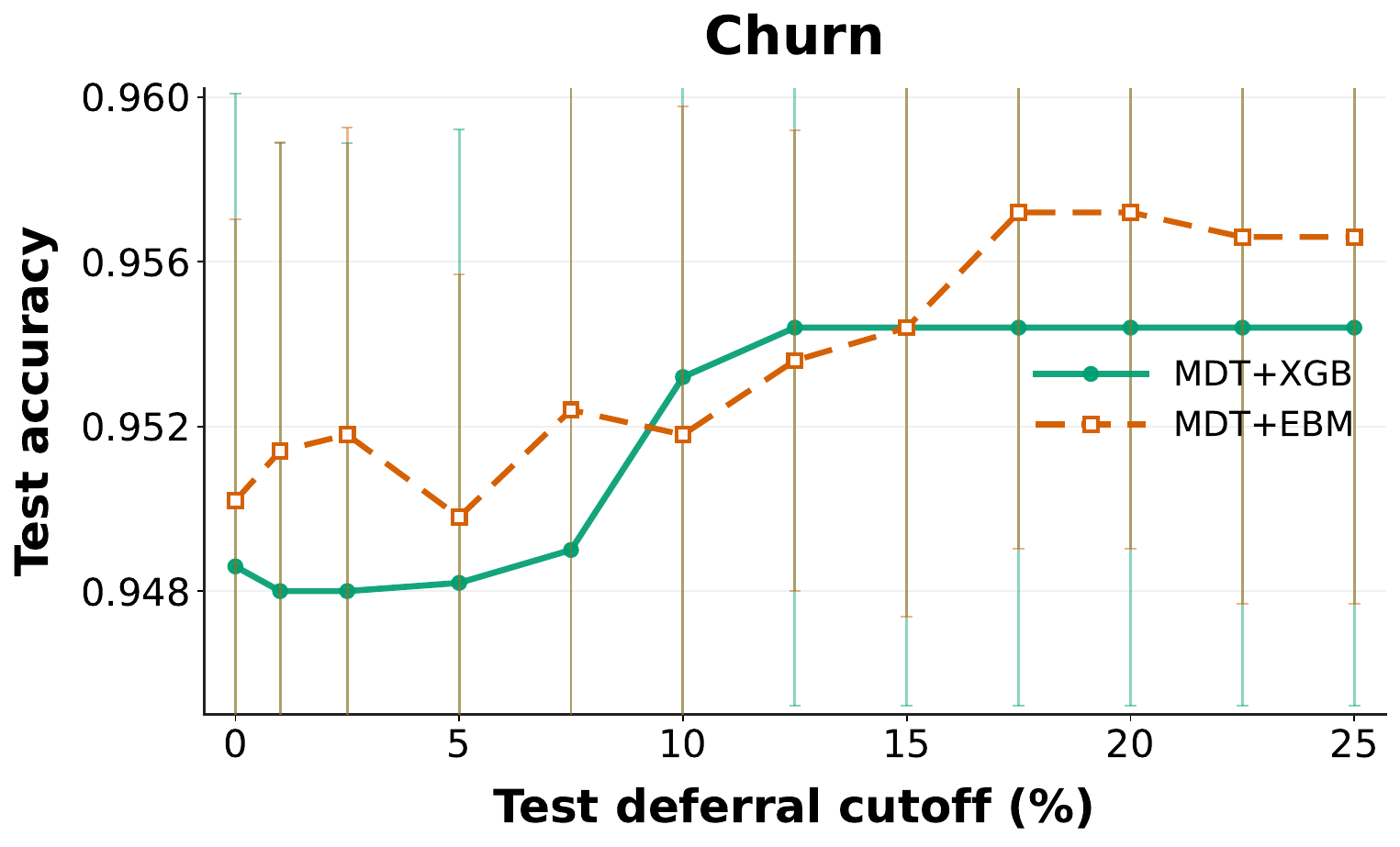}
\hfill
\includegraphics[width=0.49\textwidth,height=0.17\textheight,keepaspectratio]{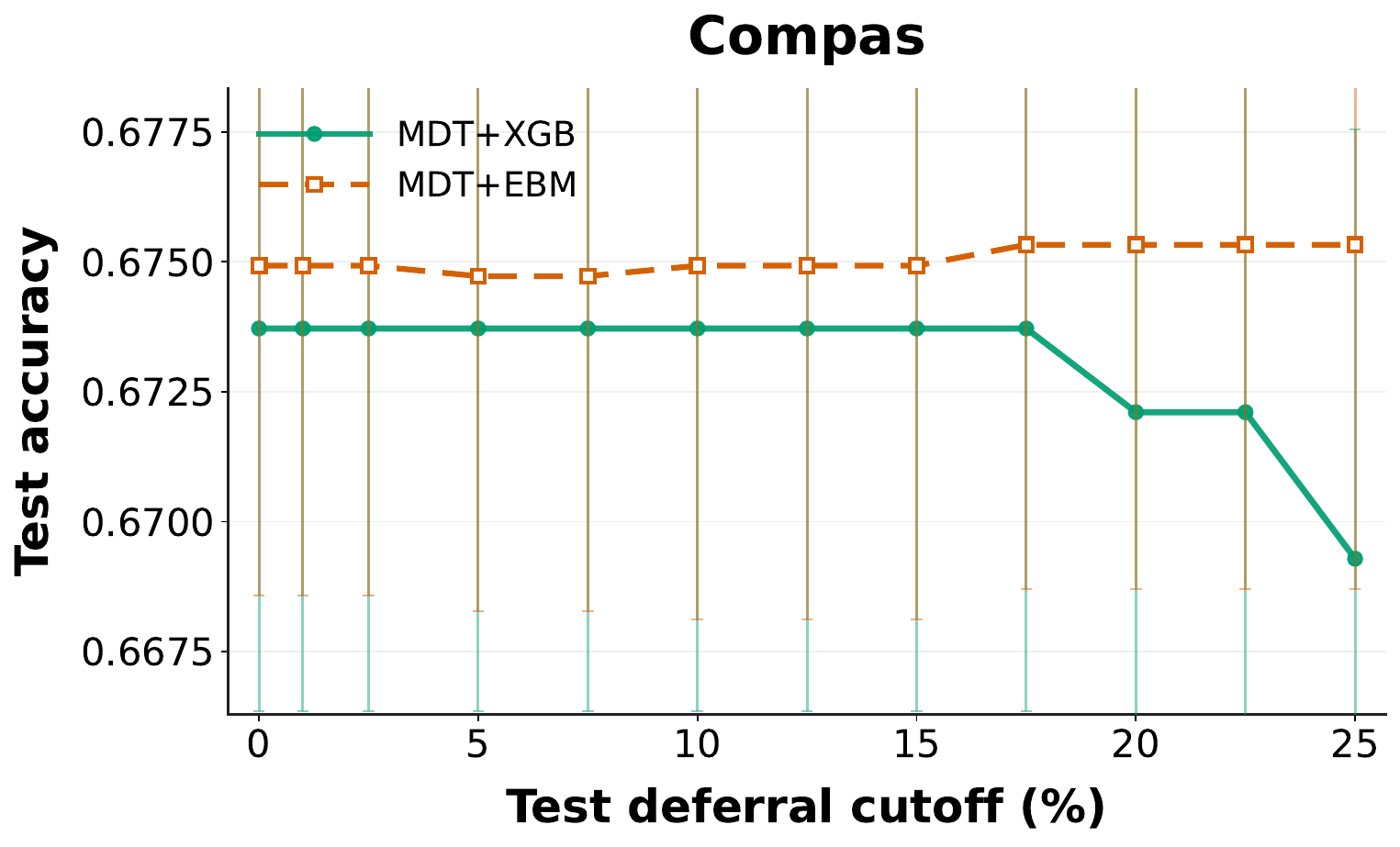}

\vspace{0.25em}

\includegraphics[width=0.49\textwidth,height=0.17\textheight,keepaspectratio]{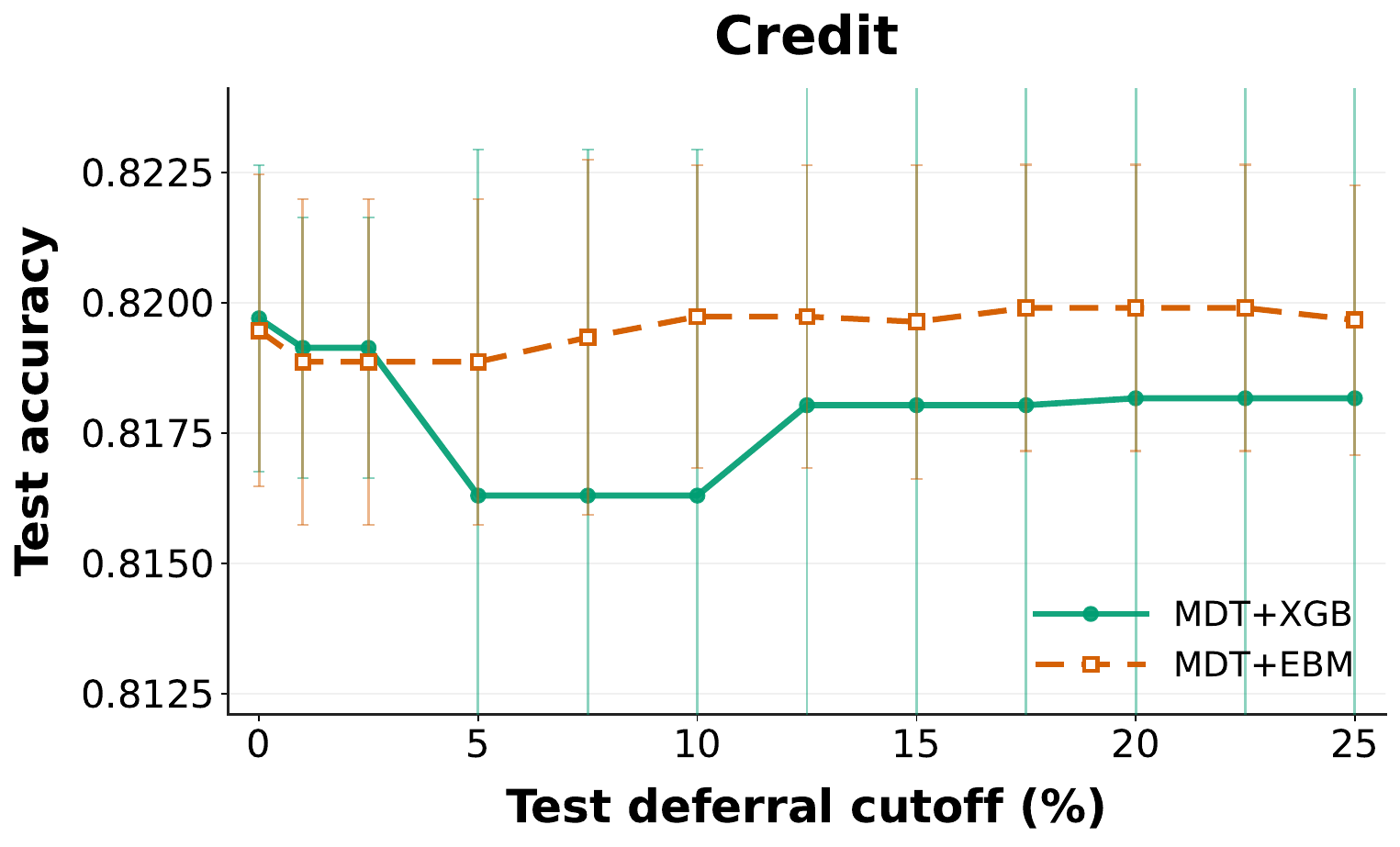}
\hfill
\includegraphics[width=0.49\textwidth,height=0.17\textheight,keepaspectratio]{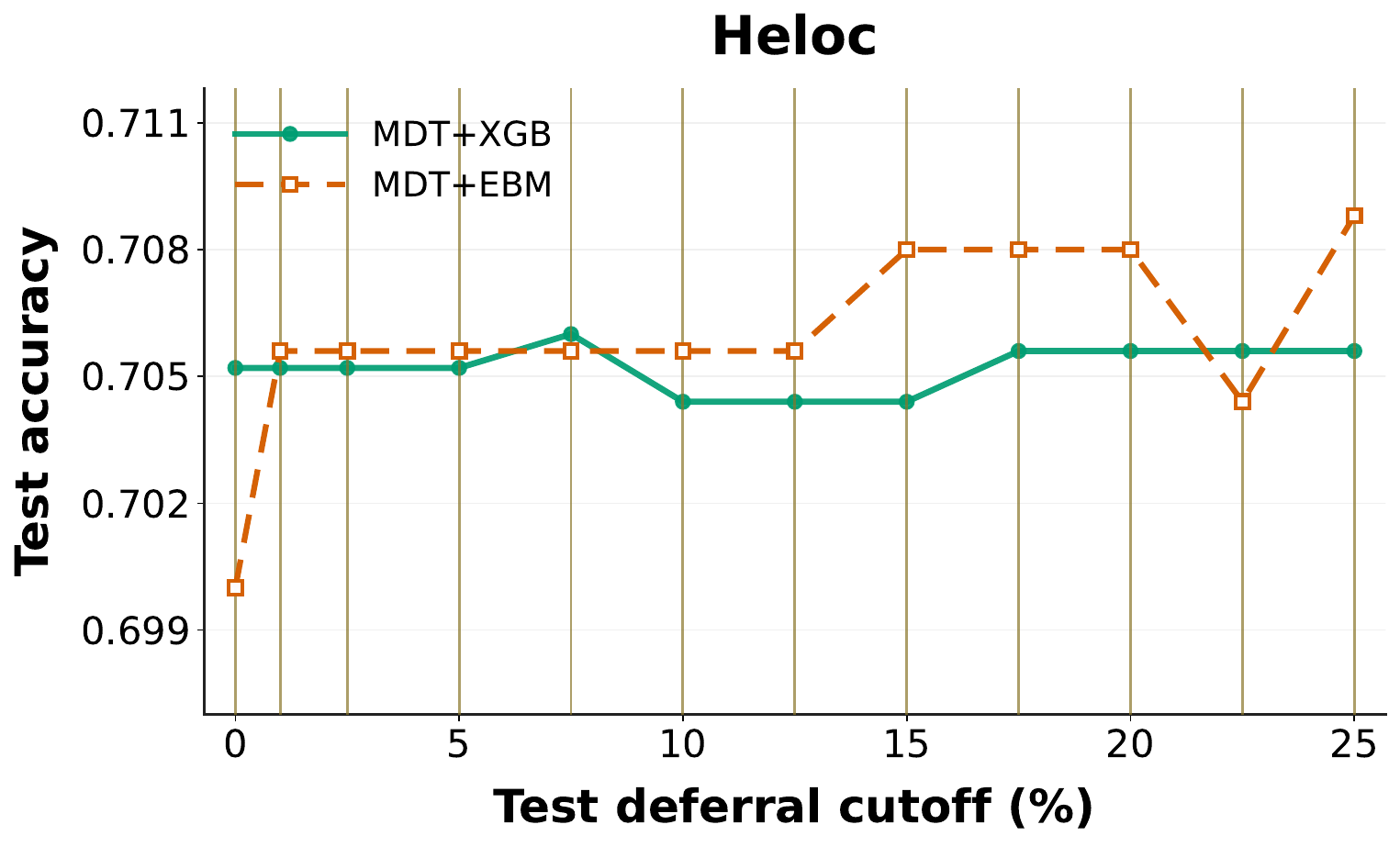}

\vspace{0.25em}

\includegraphics[width=0.49\textwidth,height=0.17\textheight,keepaspectratio]{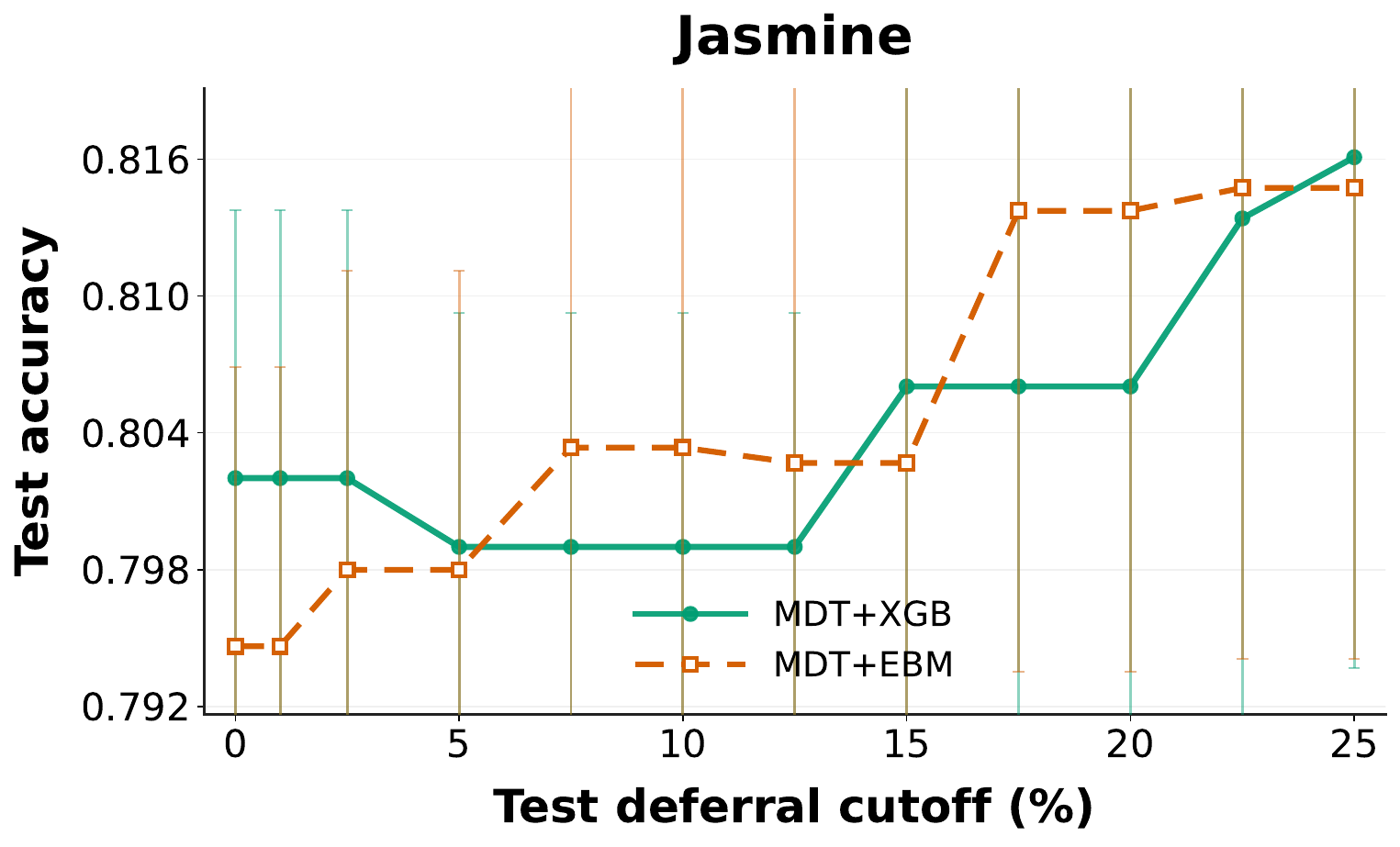}
\hfill
\includegraphics[width=0.49\textwidth,height=0.17\textheight,keepaspectratio]{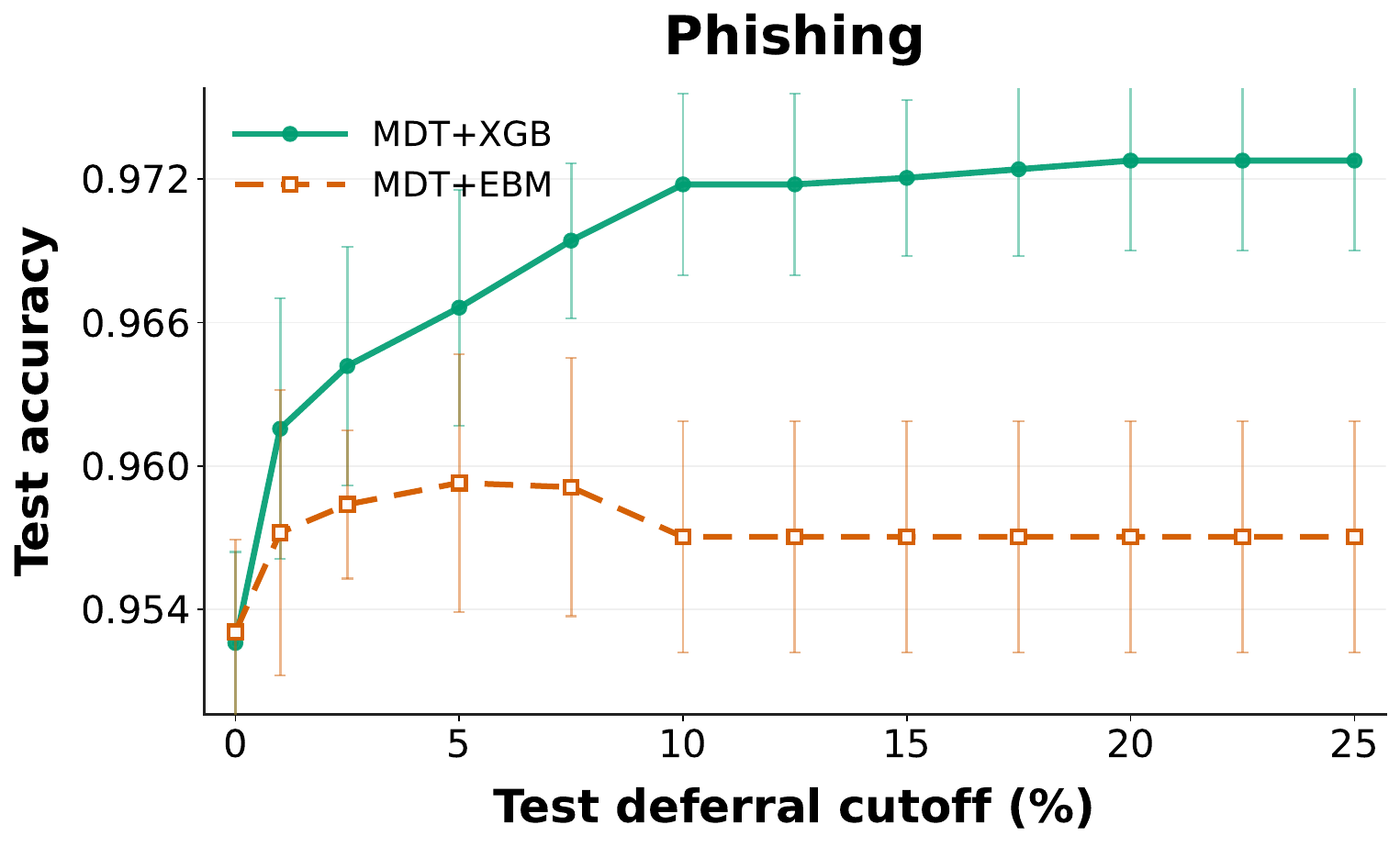}

\vspace{0.25em}

\includegraphics[width=0.58\textwidth,height=0.17\textheight,keepaspectratio]{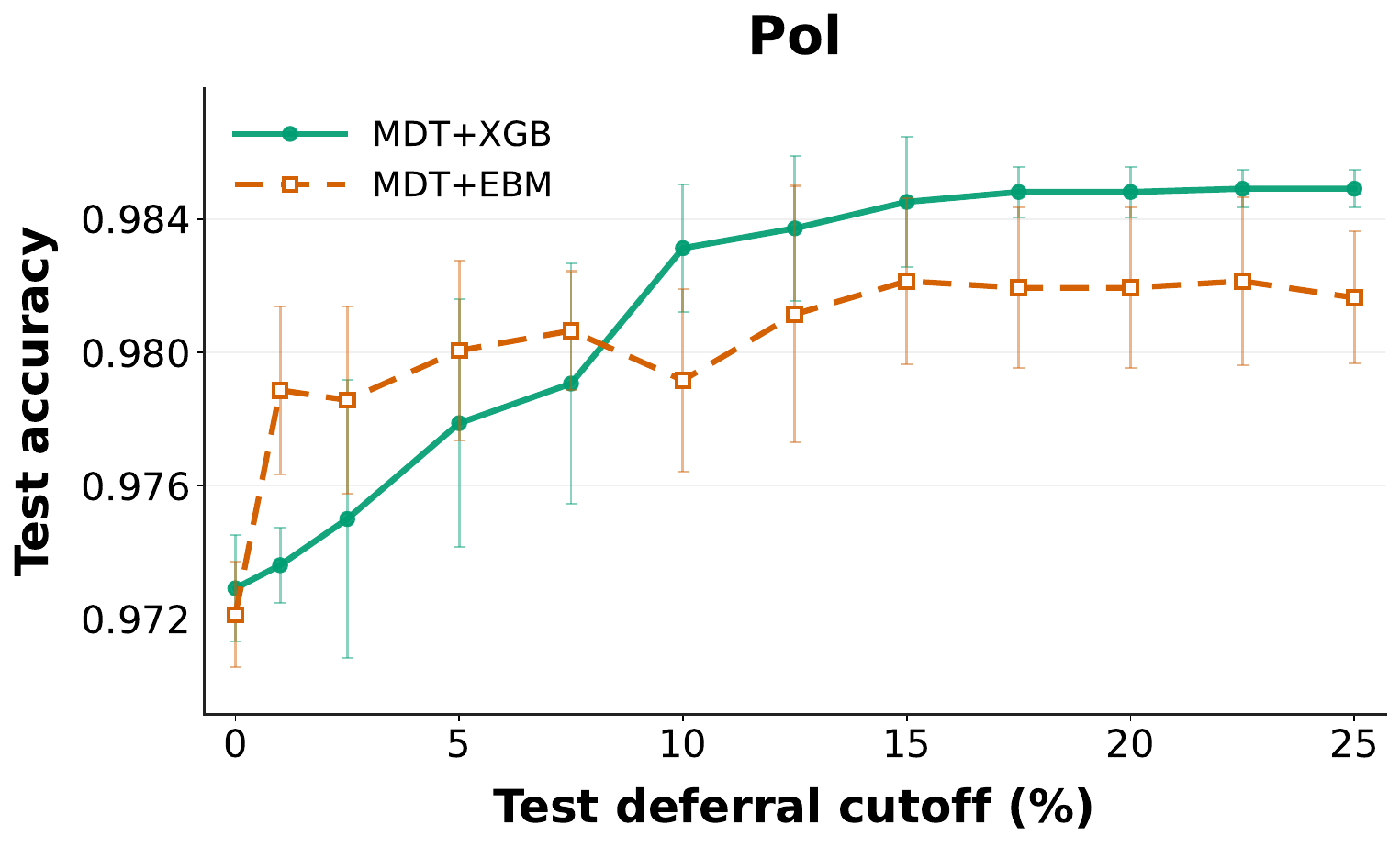}

\vspace{-0.25em}

\caption{
Comparison of MDT+XGB and MDT+EBM across selected datasets.
MDT+EBM uses an Explainable Boosting Machine \citep{Lou2013AccurateIM} as the fallback model, providing a more interpretable alternative to XGBoost.
For some datasets, this amount of fallback complexity is sufficient, and MDT+EBM closely matches MDT+XGB.
For other datasets, most notably Bike and Phishing, the EBM fallback is not expressive enough to recover the same accuracy--deferral tradeoff as MDT+XGB.
}
\label{fig:mdt-xgb-vs-ebm}
\end{figure*}

\clearpage
\clearpage

%% file: notes/old-methods.tex
\begin{figure}[!t]
    \centering

    \begin{subfigure}{0.92\linewidth}
        \centering
        \includegraphics[width=\linewidth, trim=0 0 0 100, clip]{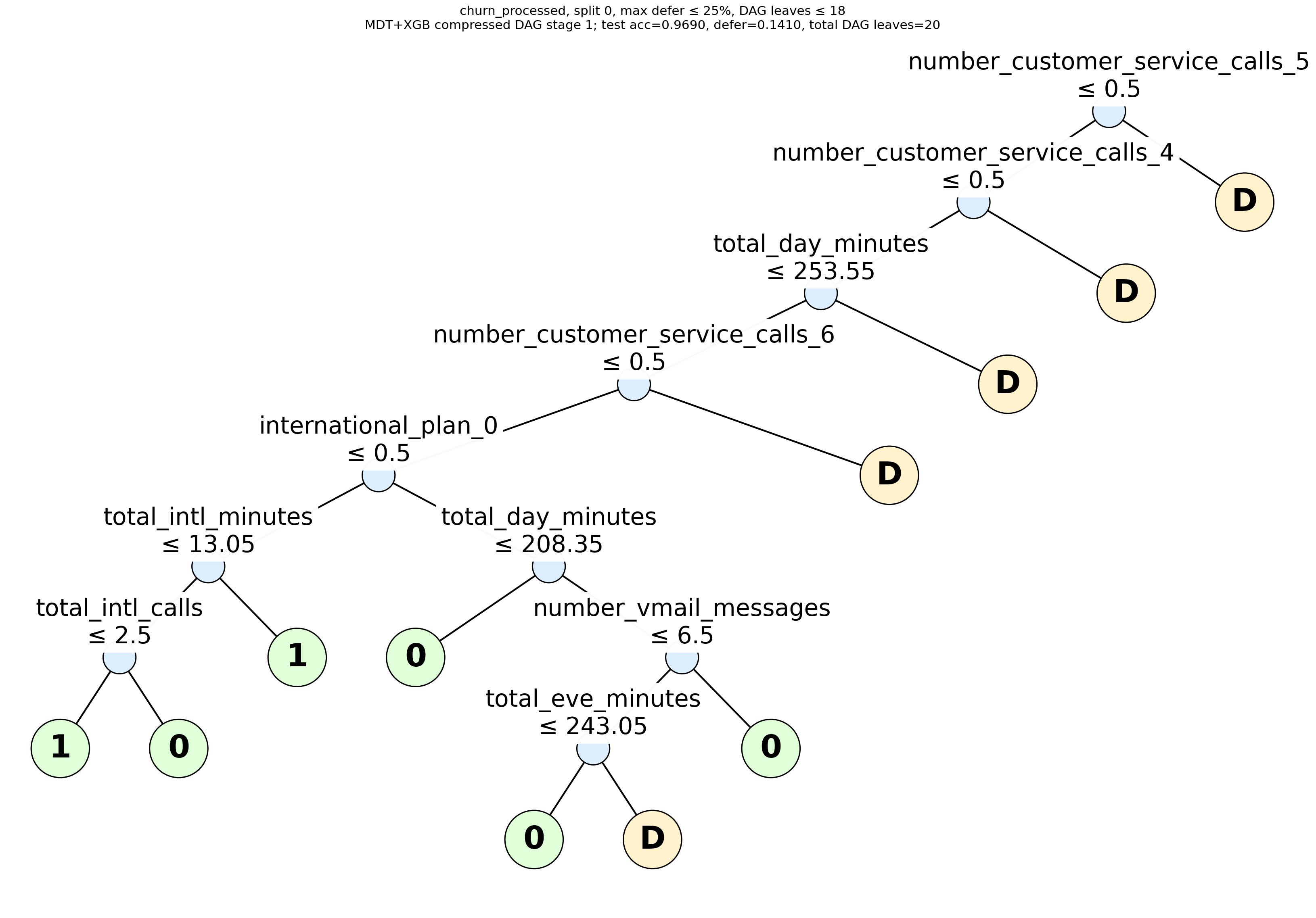}
        \caption{MDT  stage 1}
    \end{subfigure}

    \vspace{0.6em}

    \begin{subfigure}{0.92\linewidth}
        \centering
        \includegraphics[width=\linewidth, trim=0 0 0 100, clip]{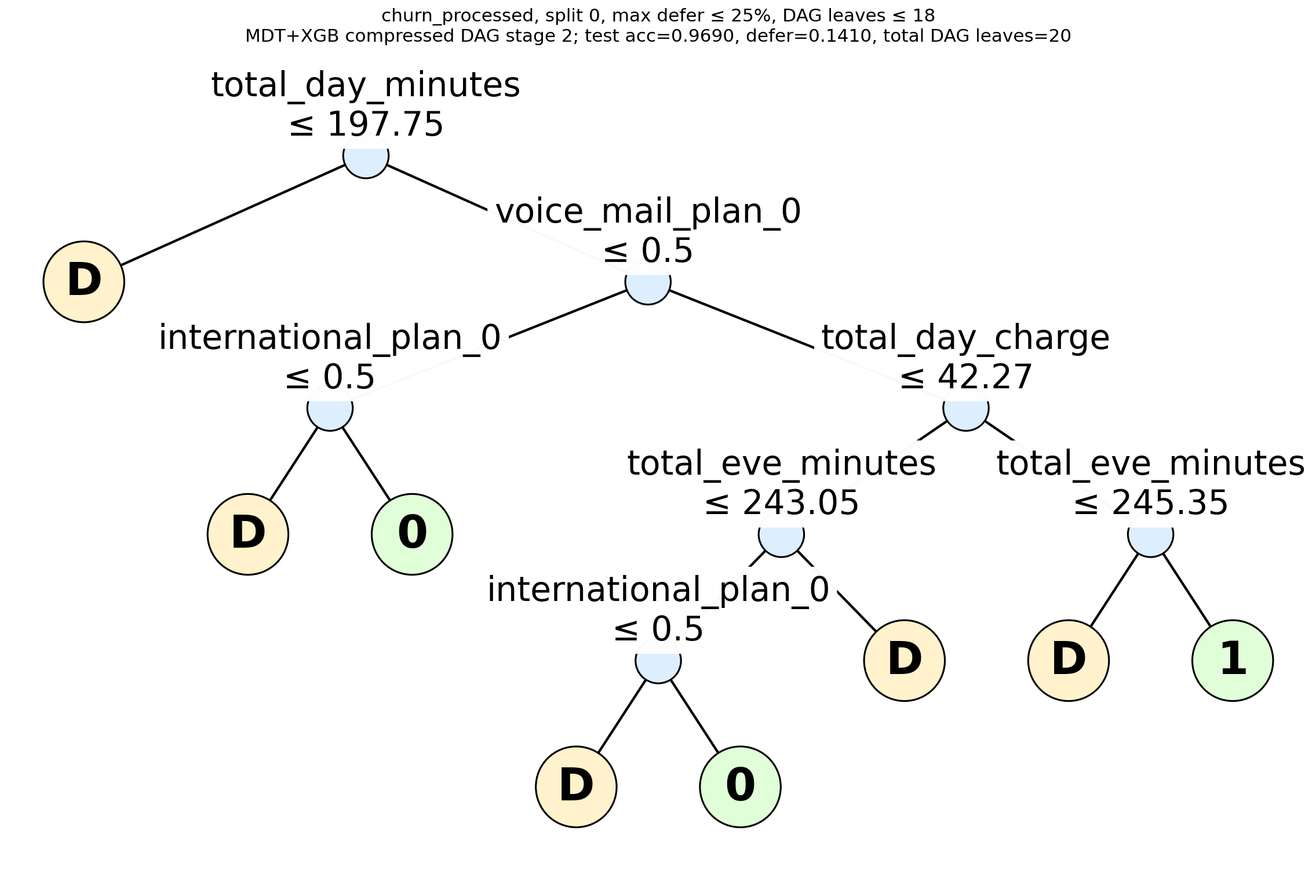}
        \caption{MDT  stage 2}
    \end{subfigure}

    \vspace{0.6em}

    \begin{subfigure}{0.98\linewidth}
        \centering
        \includegraphics[width=\linewidth, trim=0 0 0 100, clip]{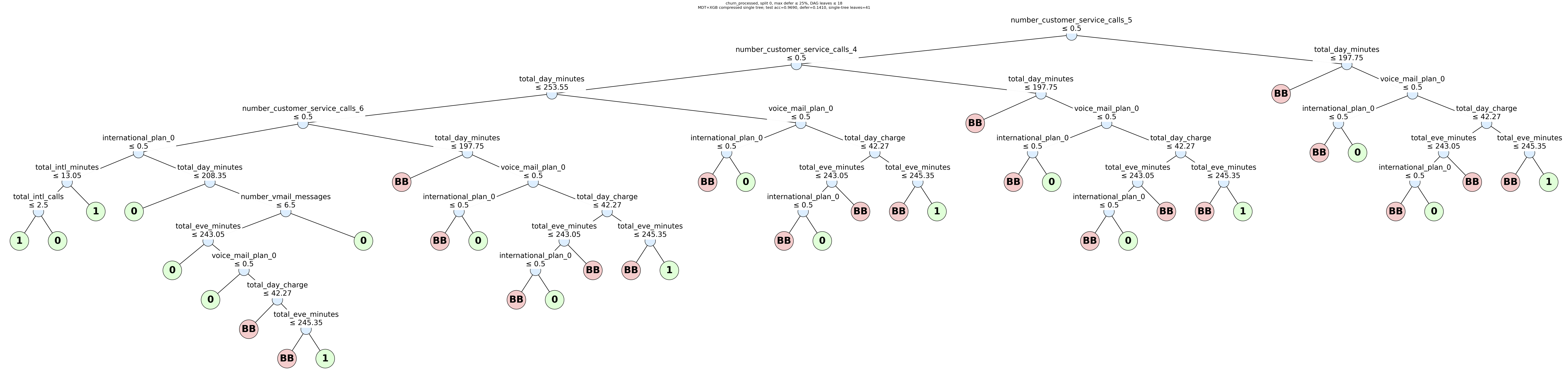}
        \caption{Compressed single-tree representation}
    \end{subfigure}

    \caption{Compressed MDT stages and corresponding compressed single-tree representation for Churn.}
    \label{fig:churn-compressed-mdt-single-tree}
\end{figure}

\begin{figure}[!t]
    \centering

    \begin{subfigure}{0.47\linewidth}
        \centering
        \includegraphics[width=\linewidth, trim=0 0 0 100, clip]{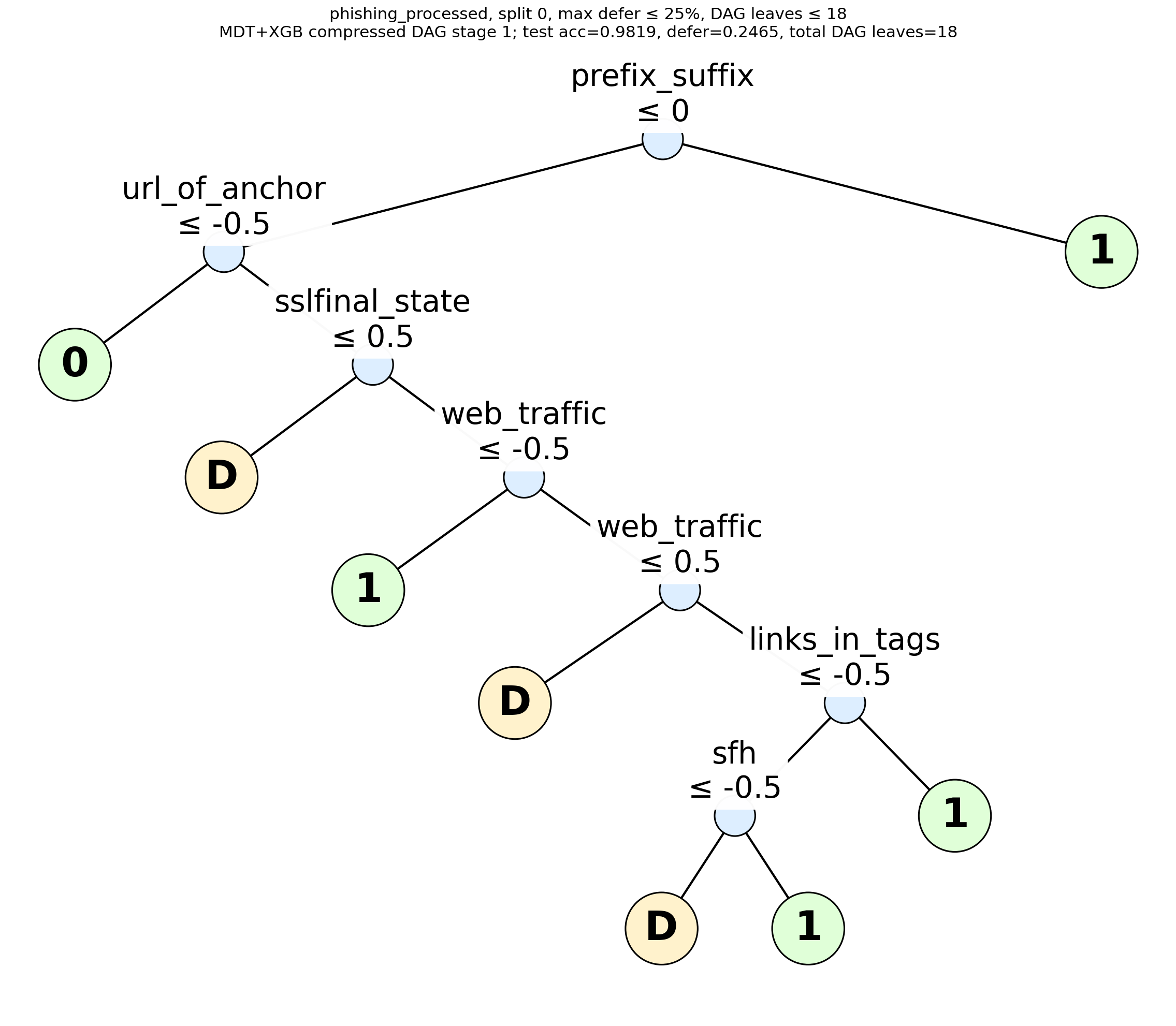}
        \caption{MDT stage 1}
    \end{subfigure}
    \hfill
    \begin{subfigure}{0.47\linewidth}
        \centering
        \includegraphics[width=\linewidth, trim=0 0 0 100, clip]{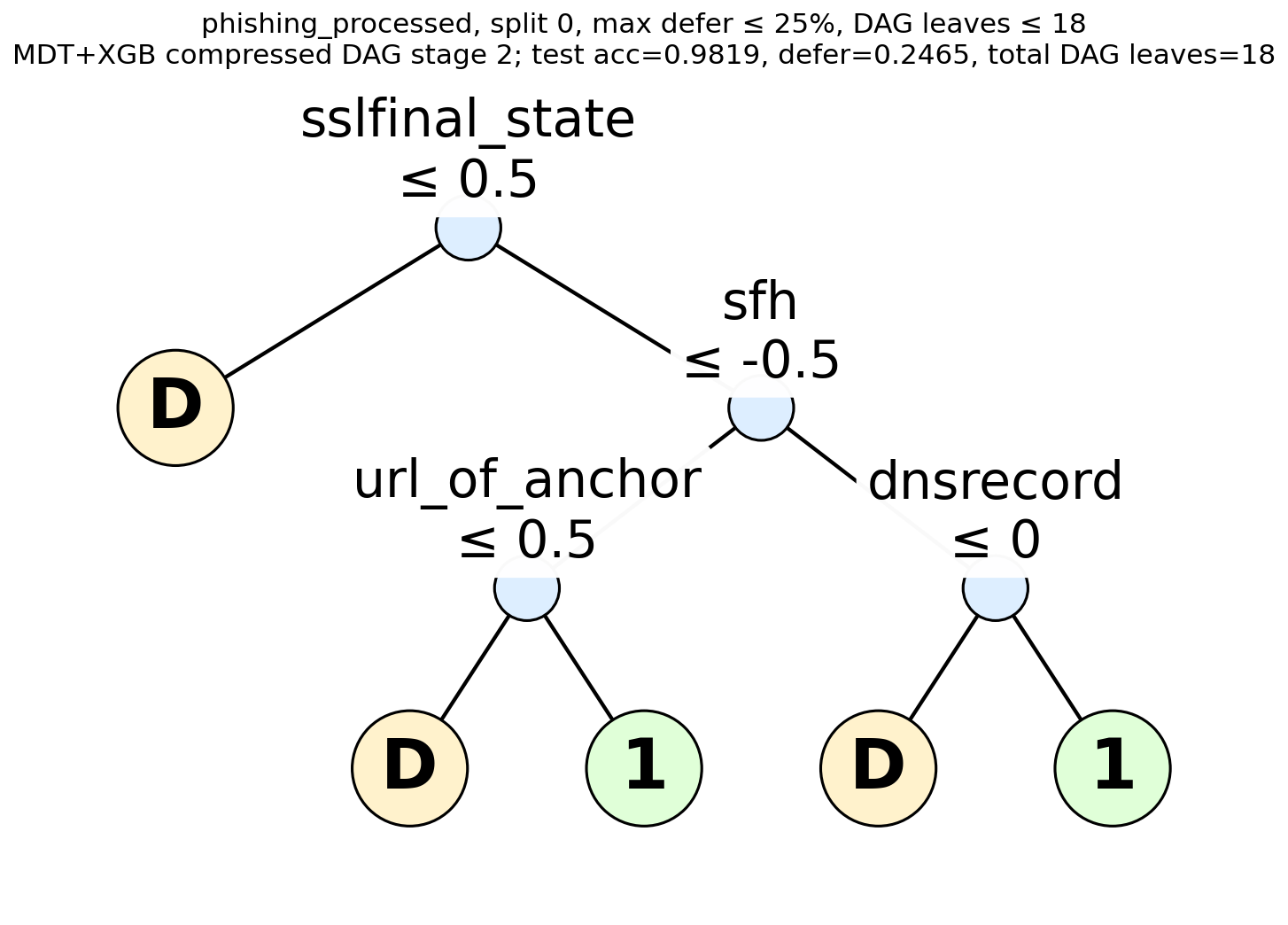}
        \caption{MDT  stage 2}
    \end{subfigure}

    \vspace{0.5em}

    \begin{subfigure}{0.72\linewidth}
        \centering
        \includegraphics[width=\linewidth, trim=0 0 0 100, clip]{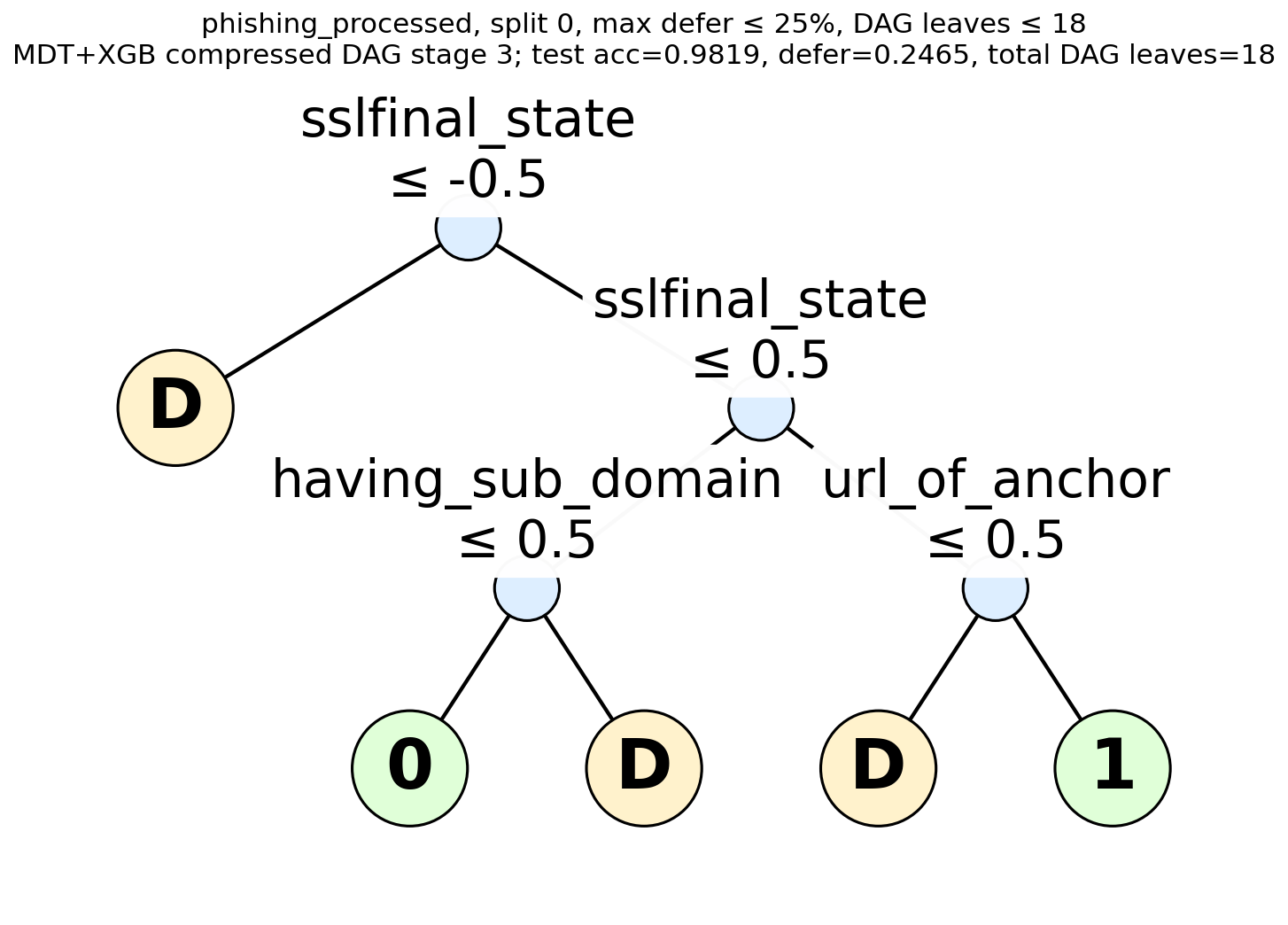}
        \caption{MDT  stage 3}
    \end{subfigure}

    \vspace{0.5em}

    \begin{subfigure}{0.92\linewidth}
        \centering
        \includegraphics[width=\linewidth, trim=0 0 0 100, clip]{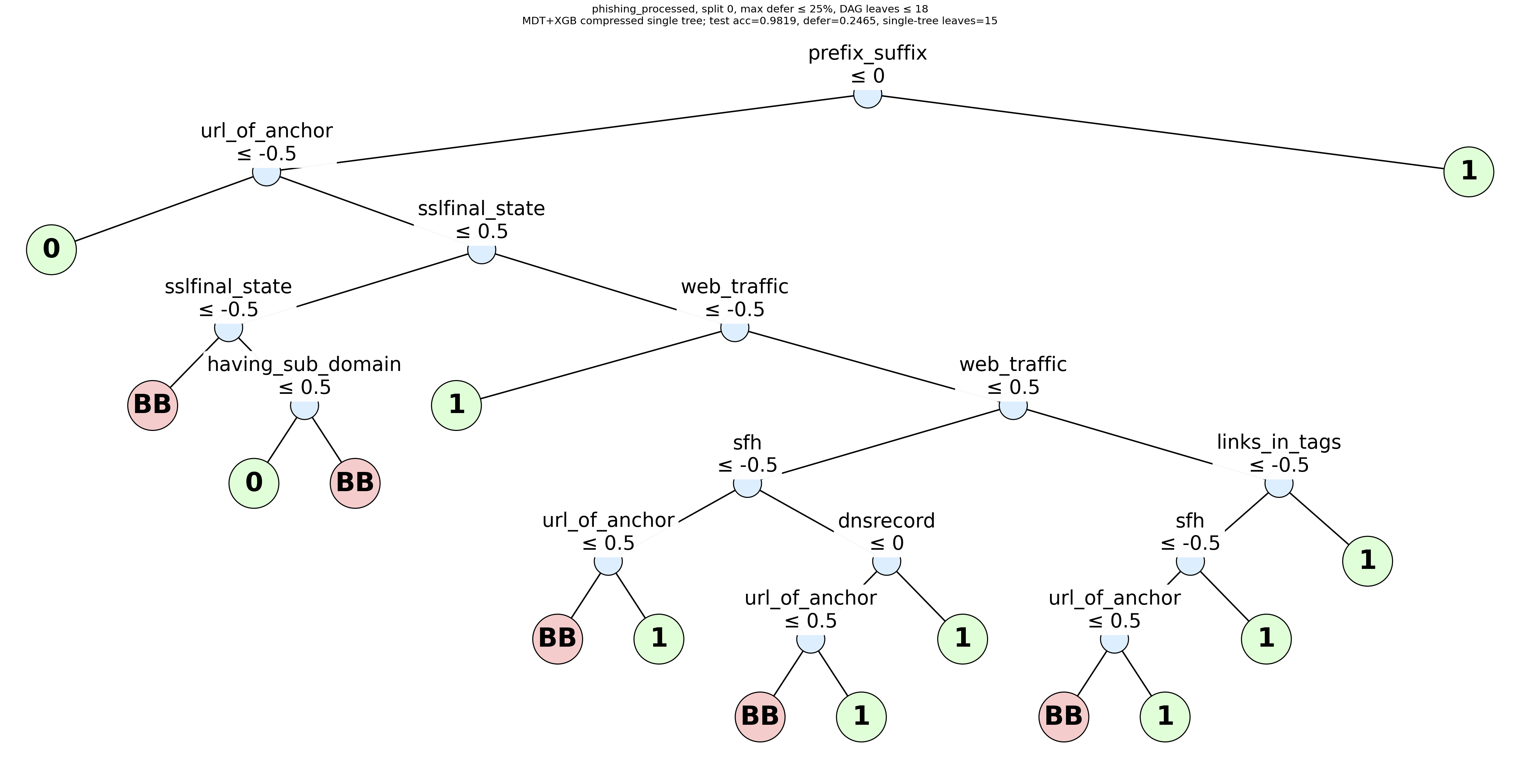}
        \caption{Compressed single-tree representation}
    \end{subfigure}

    \caption{Compressed MDT stages and corresponding compressed single-tree representation for Phishing.}
    \label{fig:phishing-compressed-mdt-single-tree}
\end{figure}

\begin{figure}[!t]\centering\begin{tcolorbox}[    enhanced,    colback=gray!3,    colframe=black!65,    boxrule=0.6pt,    arc=2mm,    left=1.5mm,    right=1.5mm,    top=1mm,    bottom=1mm,    width=\linewidth]\small\noindent\textbf{Rule Bank \quad}\vspace{0.4em}\centering\begin{tabular}{@{}r l c@{}}$r_1$ &\parbox[t]{0.74\linewidth}{$\texttt{prefix\_suffix} > 0$}& $\mapsto 1$ \\[0.6em]$r_2$ &\parbox[t]{0.74\linewidth}{$\texttt{url\_of\_anchor} \le -0.5$}& $\mapsto 0$ \\[0.6em]$r_3$ &\parbox[t]{0.74\linewidth}{$\texttt{sslfinal\_state} > 0.5 \wedge \texttt{web\_traffic} \le -0.5$}& $\mapsto 1$ \\[0.6em]$r_4$ &\parbox[t]{0.74\linewidth}{$\texttt{sslfinal\_state} > 0.5 \wedge \texttt{web\_traffic} > 0.5$\\$\wedge\ \texttt{links\_in\_tags} > -0.5$}& $\mapsto 1$ \\[0.6em]$r_5$ &\parbox[t]{0.74\linewidth}{$\texttt{sslfinal\_state} > 0.5 \wedge \texttt{web\_traffic} > 0.5$\\$\wedge\ \texttt{sfh} > -0.5$}& $\mapsto 1$ \\[0.6em]$r_6$ &\parbox[t]{0.74\linewidth}{$\texttt{sslfinal\_state} > 0.5 \wedge \texttt{sfh} \le -0.5$\\$\wedge\ \texttt{url\_of\_anchor} > 0.5$}& $\mapsto 1$ \\[0.6em]$r_7$ &\parbox[t]{0.74\linewidth}{$\texttt{sslfinal\_state} > 0.5 \wedge \texttt{sfh} > -0.5$\\$\wedge\ \texttt{dnsrecord} > 0$}& $\mapsto 1$ \\[0.6em]$r_8$ &\parbox[t]{0.74\linewidth}{$\texttt{sslfinal\_state} > -0.5 \wedge \texttt{sslfinal\_state} \le 0.5$\\$\wedge\ \texttt{having\_sub\_domain} \le 0.5$}& $\mapsto 0$ \\[0.6em]$r_9$ &\parbox[t]{0.74\linewidth}{$\texttt{sslfinal\_state} > 0.5 \wedge \texttt{url\_of\_anchor} > 0.5$}& $\mapsto 1$\end{tabular}\vspace{0.7em}\hrule\vspace{0.7em}\noindent\textbf{Sequential Rule List \quad}\vspace{0.4em}\centering\begin{tabular}{@{}l p{0.78\linewidth}@{}}\textbf{Stage 1:} &$r_1 \Rightarrow 1,\;  r_2 \Rightarrow 0,\;  r_3 \Rightarrow 1,\;  r_4 \Rightarrow 1,\;  r_5 \Rightarrow 1,$\textit{ (move to next stage)} \\[0.8em]\textbf{Stage 2:} &$r_6 \Rightarrow 1,\;  r_7 \Rightarrow 1,$\textit{ (move to next stage)} \\[0.8em]\textbf{Stage 3:} &$r_8 \Rightarrow 0,\;  r_9 \Rightarrow 1,$\textit{ else defer}\end{tabular}\end{tcolorbox}\caption{Phishing Rule List Representation.}\label{fig:phishing-rule-list-representation}\end{figure}

%% file: references.bib
@inproceedings{wang2019gaining,
  title={Gaining no or low-cost transparency with interpretable partial substitute},
  author={Wang, Tong},
  booktitle={International Conference on Machine Learning},
  year={2019}
}

@article{wang2021hybrid,
author = {Wang, Tong and Lin, Qihang},
title = {Hybrid predictive models: when an interpretable model collaborates with a black-box model},
year = {2021},
issue_date = {January 2021},
publisher = {JMLR.org},
volume = {22},
number = {1},
issn = {1532-4435},
journal = {J. Mach. Learn. Res.},
month = jan,
articleno = {137},
numpages = {38}
}

@inproceedings{pan2020interpretable,
  title={Interpretable companions for black-box models},
  author={Pan, Danqing and Wang, Tong and Hara, Satoshi},
  booktitle={International conference on artificial intelligence and statistics},
  pages={2444--2454},
  year={2020},
  organization={PMLR}
}

@article{ferry2023learning,
  title={Learning hybrid interpretable models: Theory, taxonomy, and methods},
  author={Ferry, Julien and Laberge, Gabriel and A{\"\i}vodji, Ulrich},
  journal={arXiv preprint arXiv:2303.04437},
  year={2023}
}

@inproceedings{vidal2020born,
  title={Born-again tree ensembles},
  author={Vidal, Thibaut and Schiffer, Maximilian},
  booktitle={International conference on machine learning},
  pages={9743--9753},
  year={2020},
  organization={PMLR}
}

@inproceedings{devos2025compressing,
  title={Compressing tree ensembles through Level-wise Optimization and Pruning},
  author={Devos, Laurens and Martens, Timo and Oruc, Deniz Can and Meert, Wannes and Blockeel, Hendrik and Davis, Jesse},
  booktitle={Forty-second International Conference on Machine Learning},
  year={2025}
}

@inproceedings{mctavish2022fast,
  title={Fast sparse decision tree optimization via reference ensembles},
  author={McTavish, Hayden and Zhong, Chudi and Achermann, Reto and Karimalis, Ilias and Chen, Jacques and Rudin, Cynthia and Seltzer, Margo},
  booktitle={Proceedings of the {AAAI} Conference on Artificial Intelligence},
  volume={36},
  pages={9604--9613},
  year={2022}
}

@article{sagi2021approximating,
  title={Approximating XGBoost with an interpretable decision tree},
  author={Sagi, Omer and Rokach, Lior},
  journal={Information Sciences},
  volume={572},
  pages={522--542},
  year={2021},
  publisher={Elsevier}
}

@inproceedings{chen2016xgboost,
  title={{Xgboost: A Scalable Tree Boosting System}},
  author={Chen, Tianqi and Guestrin, Carlos},
  booktitle={Proceedings of the 22nd {ACM SIGKDD} International Conference on Knowledge Discovery and Data Mining},
  pages={785--794},
  year={2016}
}

@article{madras2018predict,
  title={Predict responsibly: improving fairness and accuracy by learning to defer},
  author={Madras, David and Pitassi, Toni and Zemel, Richard},
  journal={Advances in neural information processing systems},
  volume={31},
  year={2018}
}

@inproceedings{mozannar2020consistent,
  title={Consistent estimators for learning to defer to an expert},
  author={Mozannar, Hussein and Sontag, David},
  booktitle={International conference on machine learning},
  pages={7076--7087},
  year={2020},
  organization={PMLR}
}

@article{chow2003optimum,
  title={On optimum recognition error and reject tradeoff},
  author={Chow, C},
  journal={IEEE Transactions on information theory},
  volume={16},
  number={1},
  pages={41--46},
  year={2003},
  publisher={IEEE}
}

@article{breiman2001random,
  title={Random Forests},
  author={Breiman, Leo},
  journal={Machine Learning},
  volume={45},
  pages={5--32},
  year={2001},
  publisher={Springer}
}

@misc{breiman1984classification,
  title={Classification and regression trees. Wadsworth \& Brooks},
  author={Breiman, Leo and Friedman, Jerome H and Olshen, Richard A and Stone, Charles J},
  year={1984},
  publisher={Cole Advanced books \& software Pacific Grove, CA}
}

@article{murtree,
  title={Murtree: Optimal decision trees via dynamic programming and search},
  author={Demirovi{\'c}, Emir and Lukina, Anna and Hebrard, Emmanuel and Chan, Jeffrey and Bailey, James and Leckie, Christopher and Ramamohanarao, Kotagiri and Stuckey, Peter J},
  journal={Journal of Machine Learning Research},
  volume={23},
  number={26},
  pages={1--47},
  year={2022}
}

@inproceedings{freund1997using,
  title={Using and combining predictors that specialize},
  author={Freund, Yoav and Schapire, Robert E and Singer, Yoram and Warmuth, Manfred K},
  booktitle={Proceedings of the twenty-ninth annual ACM symposium on Theory of computing},
  pages={334--343},
  year={1997}
}

@article{el2010foundations,
  title={On the Foundations of Noise-free Selective Classification.},
  author={El-Yaniv, Ran and others},
  journal={Journal of Machine Learning Research},
  volume={11},
  number={5},
  year={2010}
}

@article{bertsimas2017optimal,
  title={Optimal classification trees},
  author={Bertsimas, Dimitris and Dunn, Jack},
  journal={Machine Learning},
  volume={106},
  pages={1039--1082},
  year={2017},
  publisher={Springer}
}

@inproceedings{verwer2019learning,
  title={Learning optimal classification trees using a binary linear program formulation},
  author={Verwer, Sicco and Zhang, Yingqian},
  booktitle={Proceedings of the AAAI Conference on Artificial Intelligence},
  volume={33},
  pages={1625--1632},
  year={2019}
}

@inproceedings{dl85,
  title={Learning optimal decision trees using caching branch-and-bound search},
  author={Aglin, Ga{\"e}l and Nijssen, Siegfried and Schaus, Pierre},
  booktitle={Proceedings of the AAAI Conference on Artificial Intelligence},
  volume={34},
  pages={3146--3153},
  year={2020}
}

@article{quinlan1986induction,
  title={Induction of decision trees},
  author={Quinlan, J. Ross},
  journal={Machine learning},
  volume={1},
  number={1},
  pages={81--106},
  year={1986},
  publisher={Springer}
}

@inproceedings{kiossou2022time,
  title={Time constrained dl8.5 using limited discrepancy search},
  author={Kiossou, Harold and Schaus, Pierre and Nijssen, Siegfried and Houndji, Vinasetan Ratheil},
  booktitle={Joint European Conference on Machine Learning and Knowledge Discovery in Databases},
  pages={443--459},
  year={2022},
  organization={Springer}
}

@inproceedings{demirovic2023blossom,
  title={Blossom: an anytime algorithm for computing optimal decision trees},
  author={Demirovi{\'c}, Emir and Hebrard, Emmanuel and Jean, Louis},
  booktitle={International Conference on Machine Learning},
  pages={7533--7562},
  year={2023},
  organization={PMLR}
}

@article{topk,
  title={Harnessing the power of choices in decision tree learning},
  author={Blanc, Guy and Lange, Jane and Pabbaraju, Chirag and Sullivan, Colin and Tan, Li-Yang and Tiwari, Mo},
  journal={Advances in Neural Information Processing Systems},
  volume={36},
  year={2024}
}

@inproceedings{osdt,
 author = {Hu, Xiyang and Rudin, Cynthia and Seltzer, Margo},
 booktitle = {Advances in Neural Information Processing Systems},
 pages = {7265-7273},
 title = {Optimal Sparse Decision Trees},
 volume = {32},
 year = {2019}
}

@InProceedings{quantbnb,
  title = 	 {Quant-{B}n{B}: A Scalable Branch-and-Bound Method for Optimal Decision Trees with Continuous Features},
  author =       {Mazumder, Rahul and Meng, Xiang and Wang, Haoyue},
  booktitle = 	 {International Conference on Machine Learning},
  pages = 	 {15255--15277},
  year = 	 {2022},
  volume = 	 {162},
  month = 	 {17--23 Jul},
  publisher =    {PMLR}
}

@inproceedings{brița2025optimal,
  title={Optimal classification trees for continuous feature data using dynamic programming with branch-and-bound},
  author={Brița, C{\u{a}}t{\u{a}}lin E and van der Linden, Jacobus GM and Demirovi{\'c}, Emir},
  booktitle={Proceedings of the AAAI Conference on Artificial Intelligence},
  volume={39},
  pages={11131--11139},
  year={2025}
}

@inproceedings{
babbar2025near,
title={Near-Optimal Decision Trees in a {SPLIT} Second},
author={Varun Babbar and Hayden McTavish and Cynthia Rudin and Margo Seltzer},
booktitle={International Conference on Machine Learning},
year={2025},
}

@inproceedings{kiossou2024efficient,
  title={Efficient Lookahead Decision Trees},
  author={Kiossou, Harold and Schaus, Pierre and Nijssen, Siegfried and Aglin, Ga{\"e}l},
  booktitle={International Symposium on Intelligent Data Analysis},
  pages={133--144},
  year={2024},
  organization={Springer}
}

@article{kiossou2025generic,
  title={A generic complete anytime beam search for optimal decision tree},
  author={Kiossou, Harold Silvere and Nijssen, Siegfried and Schaus, Pierre},
  journal={arXiv preprint arXiv:2508.06064},
  year={2025}
}

@inproceedings{gosdt,
  title={Generalized and scalable optimal sparse decision trees},
  author={Lin, Jimmy and Zhong, Chudi and Hu, Diane and Rudin, Cynthia and Seltzer, Margo},
  booktitle={International Conference on Machine Learning},
  pages={6150--6160},
  year={2020},
  organization={PMLR}
}

@inproceedings{frost2024partially,
  title={Partially interpretable models with guarantees on coverage and accuracy},
  author={Frost, Nave and Lipton, Zachary and Mansour, Yishay and Moshkovitz, Michal},
  booktitle={International conference on algorithmic learning theory},
  pages={590--613},
  year={2024},
  organization={PMLR}
}

@article{semenova2023path,
  title={A Path to Simpler Models Starts With Noise},
  author={Semenova, Lesia and Chen, Harry and Parr, Ronald and Rudin, Cynthia},
  journal={Advances in Neural Information Processing Systems},
  volume={36},
  year={2023}
}

@inproceedings{semenova2022existence,
  title={On the existence of simpler machine learning models},
  author={Semenova, Lesia and Rudin, Cynthia and Parr, Ronald},
  booktitle={2022 ACM Conference on Fairness, Accountability, and Transparency},
  pages={1827--1858},
  year={2022}
}

@inproceedings{
boner2024using,
title={Using Noise to Infer Aspects of Simplicity Without Learning},
author={Zachery Boner and Harry Chen and Lesia Semenova and Ronald Parr and Cynthia Rudin},
booktitle={Advances In Neural Information Processing Systems},
year={2024}
}

@article{tan2025fast,
  author = {Yan Shuo Tan and Chandan Singh and Keyan Nasseri and Abhineet Agarwal and James Duncan and Omer Ronen and Matthew Epland and Aaron Kornblith and Bin Yu},
  title = {Fast Interpretable Greedy-Tree Sums},
  journal = {Proceedings of the National Academy of Sciences},
  volume = {122},
  number = {7},
  pages = {e2310151122},
  year = {2025},
  doi = {10.1073/pnas.2310151122}
}

@article{Li2026, 
author = {Yifan Li and Shuhan Qi and Lei Cui and Chao Xing and Lei Zhang and Xuan Wang},
title = {Interpret When Possible: A Tree-Based Hybrid Framework for Interpretable Classification},
year = {2026},
journal = {Big Data Mining and Analytics},
volume = {9},
number = {1},
pages = {263-283},
url = {https://www.sciopen.com/article/10.26599/BDMA.2025.9020055},
doi = {10.26599/BDMA.2025.9020055},
}

@article{SANTOSPEREIRA2005943,
title = {On optimal reject rules and ROC curves},
journal = {Pattern Recognition Letters},
volume = {26},
number = {7},
pages = {943-952},
year = {2005},
issn = {0167-8655},
doi = {https://doi.org/10.1016/j.patrec.2004.09.042},
url = {https://www.sciencedirect.com/science/article/pii/S0167865504002892},
author = {Carla M. Santos-Pereira and Ana M. Pires}
}

@inproceedings{schapire,
author = {Schapire, Robert E. and Singer, Yoram},
title = {Improved boosting algorithms using confidence-rated predictions},
year = {1998},
isbn = {1581130570},
publisher = {Association for Computing Machinery},
address = {New York, NY, USA},
url = {https://doi.org/10.1145/279943.279960},
doi = {10.1145/279943.279960},
booktitle = {Proceedings of the Eleventh Annual Conference on Computational Learning Theory},
pages = {80–91},
numpages = {12},
location = {Madison, Wisconsin, USA},
series = {COLT' 98}
}

@misc{adult_2,
  author       = {Becker, Barry and Kohavi, Ronny},
  title        = {{Adult}},
  year         = {1996},
  howpublished = {UCI Machine Learning Repository},
  note         = {{DOI}: https://doi.org/10.24432/C5XW20}
}

@misc{malani2017npha,
  author    = {Malani, Preeti N. and Kullgren, Jeffrey and Solway, Erica},
  title     = {National Poll on Healthy Aging (NPHA), United States, April 2017},
  year      = {2019},
  publisher = {Inter-university Consortium for Political and Social Research},
  doi       = {10.3886/ICPSR37305.v1},
  url       = {https://doi.org/10.3886/ICPSR37305.v1},
  note      = {Distributor}
}

@article{FanaeeT2013EventLC,
  title={Event labeling combining ensemble detectors and background knowledge},
  author={Hadi Fanaee-T and Jo{\~a}o Gama},
  journal={Progress in Artificial Intelligence},
  year={2013},
  volume={2},
  pages={113 - 127},
  url={https://api.semanticscholar.org/CorpusID:256282956}
}

@misc{bike_sharing_275,
  author       = {Fanaee-T, Hadi},
  title        = {{Bike Sharing}},
  year         = {2013},
  howpublished = {UCI Machine Learning Repository},
  note         = {{DOI}: https://doi.org/10.24432/C5W894}
}

@inproceedings{automlchallenges,

 author = {Isabelle Guyon and Lisheng Sun-Hosoya and Marc Boull\'e and Hugo Jair Escalante and Sergio Escalera and Zhengying Liu and Damir Jajetic and Bisakha Ray and Mehreen Saeed and Mich\'ele Sebag and Alexander Statnikov and WeiWei Tu and Evelyne Viegas},

 title = {Analysis of the AutoML Challenge series 2015-2018},

 booktitle = {AutoML},

 series = {Springer series on Challenges in Machine Learning},

 year = {2019},

url = {https://www.automl.org/wp-content/uploads/2018/09/chapter10-challenge.pdf}

}

@misc{erickson2025tabarenalivingbenchmarkmachine,
      title={TabArena: A Living Benchmark for Machine Learning on Tabular Data}, 
      author={Nick Erickson and Lennart Purucker and Andrej Tschalzev and David Holzmüller and Prateek Mutalik Desai and David Salinas and Frank Hutter},
      year={2025},
      eprint={2506.16791},
      archivePrefix={arXiv},
      primaryClass={cs.LG},
      url={https://arxiv.org/abs/2506.16791}, 
}

@book{marcoulides2005data,
  title={Discovering knowledge in data: an introduction to data mining},
  author={Larose, Daniel T and Larose, Chantal D},
  year={2014},
  publisher={John Wiley \& Sons}
}

@inproceedings{bao2021compaslicated,
  author    = {Bao, Michelle and Zhou, Angela and Zottola, A. S. and Brubach, Brian and Desmarais, Sarah and Horowitz, Seth A. and Lum, Kristian and Venkatasubramanian, Suresh},
  title     = {It's COMPASlicated: The Messy Relationship between RAI Datasets and Algorithmic Fairness Benchmarks},
  booktitle = {Proceedings of the Thirty-Fifth Conference on Neural Information Processing Systems (NeurIPS)},
  year      = {2021},
  note      = {Datasets and Benchmarks Track (Round 1)}
}

@misc{default_of_credit_card_clients_350,
  author       = {Yeh, I-Cheng},
  title        = {{Default of Credit Card Clients}},
  year         = {2009},
  howpublished = {UCI Machine Learning Repository},
  note         = {{DOI}: https://doi.org/10.24432/C55S3H}
}

@article{Yeh2009TheCO,
  title={The comparisons of data mining techniques for the predictive accuracy of probability of default of credit card clients},
  author={I-Cheng Yeh and Che-hui Lien},
  journal={Expert Syst. Appl.},
  year={2009},
  volume={36},
  pages={2473-2480},
  url={https://api.semanticscholar.org/CorpusID:15696161}
}

@inproceedings{Mathur2021PosterNA,
  title={Poster: NATICUSdroid: A malware detection framework for Android using native and custom},
  author={Akshay Mathur and Mounika Podila and Keyur Kulkarni and Quamar Niyaz and Ahmad Y. Javaid},
  year={2021},
  url={https://api.semanticscholar.org/CorpusID:232063483}
}

@misc{fico2018heloc,
  author       = {{FICO}},
  title        = {Home Equity Line of Credit (HELOC) Dataset},
  year         = {2018},
  publisher    = {FICO},
  url          = {https://community.fico.com/s/explainable-machine-learning-challenge},
  note         = {FICO Explainable Machine Learning Challenge}
}

@misc{openml_jasmine_41143,
  author       = {{OpenML}},
  title        = {Jasmine Dataset},
  year         = {2018},
  version      = {1},
  publisher    = {OpenML},
  url          = {https://www.openml.org/d/41143},
  note         = {OpenML Dataset ID 41143; dataset from the ChaLearn Automatic Machine Learning (AutoML) Challenge}
}

@misc{openml_madeline_41144,
  author       = {{OpenML}},
  title        = {Madeline Dataset},
  year         = {2018},
  version      = {1},
  publisher    = {OpenML},
  url          = {https://www.openml.org/d/41144},
  note         = {OpenML Dataset ID 41144; dataset from the ChaLearn Automatic Machine Learning (AutoML) Challenge}
}

@misc{magic_gamma_telescope_159,
  author       = {Bock, R.},
  title        = {{MAGIC Gamma Telescope}},
  year         = {2004},
  howpublished = {UCI Machine Learning Repository},
  note         = {{DOI}: https://doi.org/10.24432/C52C8B}
}

@inproceedings{Thrun1991TheMP,
  title={The MONK''s Problems-A Performance Comparison of Different Learning Algorithms, CMU-CS-91-197, Sch},
  author={Sebastian Thrun},
  year={1991},
  url={https://api.semanticscholar.org/CorpusID:59699060}
}

@misc{phishing_websites_327,
  author       = {Mohammad, Rami and McCluskey, Lee},
  title        = {{Phishing Websites}},
  year         = {2012},
  howpublished = {UCI Machine Learning Repository},
  note         = {{DOI}: https://doi.org/10.24432/C51W2X}
}

@article{Mohammad2012AnAO,
  title={An assessment of features related to phishing websites using an automated technique},
  author={Rami Mustafa A. Mohammad and Fadi A. Thabtah and Lee Mccluskey},
  journal={2012 International Conference for Internet Technology and Secured Transactions},
  year={2012},
  pages={492-497},
  url={https://api.semanticscholar.org/CorpusID:5716727}
}

@misc{online_shoppers_purchasing_intention_dataset_468,
  author       = {Sakar, C. and Kastro, Yomi},
  title        = {{Online Shoppers Purchasing Intention Dataset}},
  year         = {2018},
  howpublished = {UCI Machine Learning Repository},
  note         = {{DOI}: https://doi.org/10.24432/C5F88Q}
}

@article{Sakar2018RealtimePO,
  title={Real-time prediction of online shoppers’ purchasing intention using multilayer perceptron and LSTM recurrent neural networks},
  author={Cemal Okan Sakar and Suleyman Olcay Polat and Mete Katircioglu and Yomi Kastro},
  journal={Neural Computing and Applications},
  year={2018},
  volume={31},
  pages={6893 - 6908},
  url={https://api.semanticscholar.org/CorpusID:13682776}
}

@misc{spambase_94,
  author       = {Hopkins, Mark and Reeber, Erik and Forman, George and Suermondt, Jaap},
  title        = {{Spambase}},
  year         = {1999},
  howpublished = {UCI Machine Learning Repository},
  note         = {{DOI}: https://doi.org/10.24432/C53G6X}
}

@misc{student_performance_320,
  author       = {Cortez, Paulo},
  title        = {{Student Performance}},
  year         = {2008},
  howpublished = {UCI Machine Learning Repository},
  note         = {{DOI}: https://doi.org/10.24432/C5TG7T}
}

@inproceedings{Cortez2008UsingDM,
  title={Using data mining to predict secondary school student performance},
  author={P. Cortez and A. M. Gonçalves Silva},
  year={2008},
  url={https://api.semanticscholar.org/CorpusID:16621299}
}

@misc{tic-tac-toe_endgame_101,
  author       = {Aha, David},
  title        = {{Tic-Tac-Toe Endgame}},
  year         = {1991},
  howpublished = {UCI Machine Learning Repository},
  note         = {{DOI}: https://doi.org/10.24432/C5688J}
}

@misc{openml_wine_47041,
  author       = {{OpenML}},
  title        = {Wine Dataset},
  year         = {2025},
  version      = {9},
  publisher    = {OpenML},
  url          = {https://www.openml.org/d/47041},
  note         = {OpenML Dataset ID 47041}
}

@misc{openml_rl_43949,
  author       = {{OpenML}},
  title        = {RL Dataset},
  year         = {2022},
  version      = {2},
  publisher    = {OpenML},
  url          = {https://www.openml.org/d/43949},
  note         = {OpenML Dataset ID 43949; dataset used in the tabular data benchmark and derived from the ChaLearn AutoML Challenge}
}

@misc{openml_pol_44082,
  author       = {{OpenML}},
  title        = {Pol Dataset},
  year         = {2022},
  version      = {8},
  publisher    = {OpenML},
  url          = {https://www.openml.org/d/44082},
  note         = {OpenML Dataset ID 44082; dataset used in the tabular data benchmark and derived from a binarized version of the original regression dataset}
}

@misc{openml_diamonds_42225,
  author       = {{OpenML}},
  title        = {Diamonds Dataset},
  year         = {2019},
  version      = {1},
  publisher    = {OpenML},
  url          = {https://www.openml.org/d/42225},
  note         = {OpenML Dataset ID 42225; dataset containing prices and attributes of nearly 54,000 diamonds}
}

@misc{in-vehicle_coupon_recommendation_603,
  title        = {{In-Vehicle Coupon Recommendation}},
  year         = {2017},
  howpublished = {UCI Machine Learning Repository},
  note         = {{DOI}: https://doi.org/10.24432/C5GS4P}
}

@misc{openml_california_44090,
  author       = {{OpenML}},
  title        = {California Dataset},
  year         = {2022},
  version      = {5},
  publisher    = {OpenML},
  url          = {https://www.openml.org/d/44090},
  note         = {OpenML Dataset ID 44090; dataset used in the tabular data benchmark and derived from the California Housing dataset}
}

@misc{openml_abalone_44956,
  author       = {{OpenML}},
  title        = {Abalone Dataset},
  year         = {2022},
  version      = {15},
  publisher    = {OpenML},
  url          = {https://www.openml.org/d/44956},
  note         = {OpenML Dataset ID 44956; dataset for predicting abalone age from physical measurements}
}

@article{Lou2013AccurateIM,
  title={Accurate intelligible models with pairwise interactions},
  author={Yin Lou and Rich Caruana and Johannes Gehrke and Giles Hooker},
  journal={Proceedings of the 19th ACM SIGKDD international conference on Knowledge discovery and data mining},
  year={2013},
  url={https://api.semanticscholar.org/CorpusID:11246170}
}

@inproceedings{
heile2026,
title={From Rashomon Theory to {PRAXIS}: Efficient Decision Tree Rashomon Sets},
author={Zakk Heile and Hayden McTavish and Varun Babbar and Margo Seltzer and Cynthia Rudin},
booktitle={Forty-third International Conference on Machine Learning},
year={2026},
url={https://openreview.net/forum?id=Sgwd0l1u2V}
}
